\documentclass{article}

\usepackage[preprint]{neurips_2026}

\usepackage[utf8]{inputenc} 
\usepackage[T1]{fontenc} 
\usepackage{hyperref} 
\usepackage{url} 
\usepackage{booktabs}
\usepackage{wrapfig}
\usepackage{amsfonts} 
\usepackage{nicefrac} 
\usepackage{microtype} 
\usepackage[table]{xcolor}

\usepackage{array}
\newcolumntype{C}[1]{>{\centering\arraybackslash}p{#1}}
\newcolumntype{G}[1]{>{\columncolor{gray!10}\centering\arraybackslash}p{#1}}

\usepackage{amsmath, amssymb}
\usepackage{algorithm}
\usepackage{algpseudocode}

\usepackage{graphicx}

\usepackage{enumitem}

\usepackage{amsthm}
\newtheorem{assumption}{Assumption}

\newtheorem{definition}{Definition}
\newtheorem{proposition}{Proposition}
\newtheorem{lemma}{Lemma}

\newtheorem{remark}{Remark}
\newtheorem{theorem}{Theorem}
\newtheorem{principle}{Rule}

\usepackage{multirow} 

\usepackage{titletoc}

\usepackage[most]{tcolorbox}

\usepackage[most]{tcolorbox}
\usepackage{amsmath}

\newlength{\rulelabelwidth}
\newtcolorbox{leftrulebox}{
 colback=gray!12,
 colframe=gray!50,
 boxrule=0.35pt,
 arc=2pt,
 left=4pt,right=4pt,top=2pt,bottom=2pt,
 boxsep=1pt
}
\definecolor{PyBlue}{HTML}{1f77b4}
\definecolor{PyRed}{HTML}{d62728}

\newcommand{\Oc}{\mathcal{O}}
\newcommand{\bx}{\mathbf{x}}
\newcommand{\bw}{\mathbf{w}}

\newcommand{\bu}{\mathbf{u}}
\newcommand{\be}{\mathbf{e}}
\newcommand{\bb}{\mathbf{b}}
\newcommand{\bd}{\mathbf{d}}
\newcommand{\R}{\mathbb{R}}
\newcommand{\X}{\mathcal{X}}
\newcommand{\U}{\mathcal{U}}
\newcommand{\XPF}{{\mathcal{X}_{\rm PF}}}
\newcommand{\LS}{\operatorname{LS}}
\newcommand{\f}{\mathbf{f}}
\newcommand{\h}{\mathbf{h}}
\newcommand{\fPF}{{\mathbf{f}_{\rm PF}}}
\newcommand{\Farc}{\Phi}
\newcommand{\hatFarc}{\hat{\Phi}}
\newcommand{\Diag}{\operatorname{Diag}}

\newcommand{\SoftMax}{\operatorname{SoftMax}}
\newcommand{\Proj}{\operatorname{Proj}}
\newcommand{\dist}{\operatorname{dist}}
\newcommand{\E}{\mathbb{E}}

\title{SURF: Steering the Scalarization Weight to Uniformly Traverse the Pareto Front}

\author{%
 Liuyuan Jiang, Chentong Huang, Lisha Chen
 \\
 Department of Electrical and Computer Engineering\\
 University of Rochester, Rochester, NY 14627 \\
 \texttt{\{ljiang24, chuang80\}@ur.rochester.edu, lisha.chen@rochester.edu}
}

\begin{document}
\maketitle

\vspace{-0.1cm}
\begin{abstract}
\vspace{-0.2cm}

Scalarization is widely used in multi-objective optimization owing to its simplicity and scalability. In many applications, the goal is to generate  solutions that represent diverse user preferences, ideally with uniform coverage of the Pareto front (PF).
However, uniformly sampling scalarization weights usually induces non-uniform coverage of the PF. 
We explain this mismatch through a geometric analysis of the scalarization path. As the scalarization weight varies, the corresponding solutions trace the PF with a generally non-uniform traversal speed. This speed induces an arc-length cumulative distribution function (CDF); inverting this CDF map yields a principled rule for selecting weights that produce uniform PF coverage.
Building on this
insight, we propose \textbf{SURF} (\textbf{S}ampling \textbf{U}niformly along the
Pa\textbf{r}eto \textbf{F}ront). 
For structured problems, including bi-objective bandits, we derive closed-form expressions for this CDF map and the resulting PF-aware weight sampling rule. For general problems, SURF alternates between CDF reconstruction and weight sampling. 
Theoretically, we show that under provable conditions, SURF converges linearly to an unavoidable finite-sampling floor.
Empirically, experiments on bandits, multi-objective-gymnasium, and multi-objective LLM alignment demonstrate that SURF efficiently achieves more uniform PF coverage than baselines.
The code is available at \url{https://github.com/Liuyuan999/MOO_Uniform_PF}.

\end{abstract}

\vspace{-0.2cm}
\section{Introduction}
\label{sec:intro}
\vspace{-0.25cm}

Multi-objective optimization (MOO) is increasingly central to modern machine learning and large language model (LLM) alignment, where one must balance competing criteria such as accuracy, robustness, fairness, and safety rather than optimize a single scalar reward~\citep{zhang2024libmoon,shi2024decoding,yang2024metaaligner,yang2024rewards}. 
\begin{wrapfigure}{r}{0.33\textwidth}
\vspace{-0.35cm}
\centering
\includegraphics[width=0.33\textwidth]{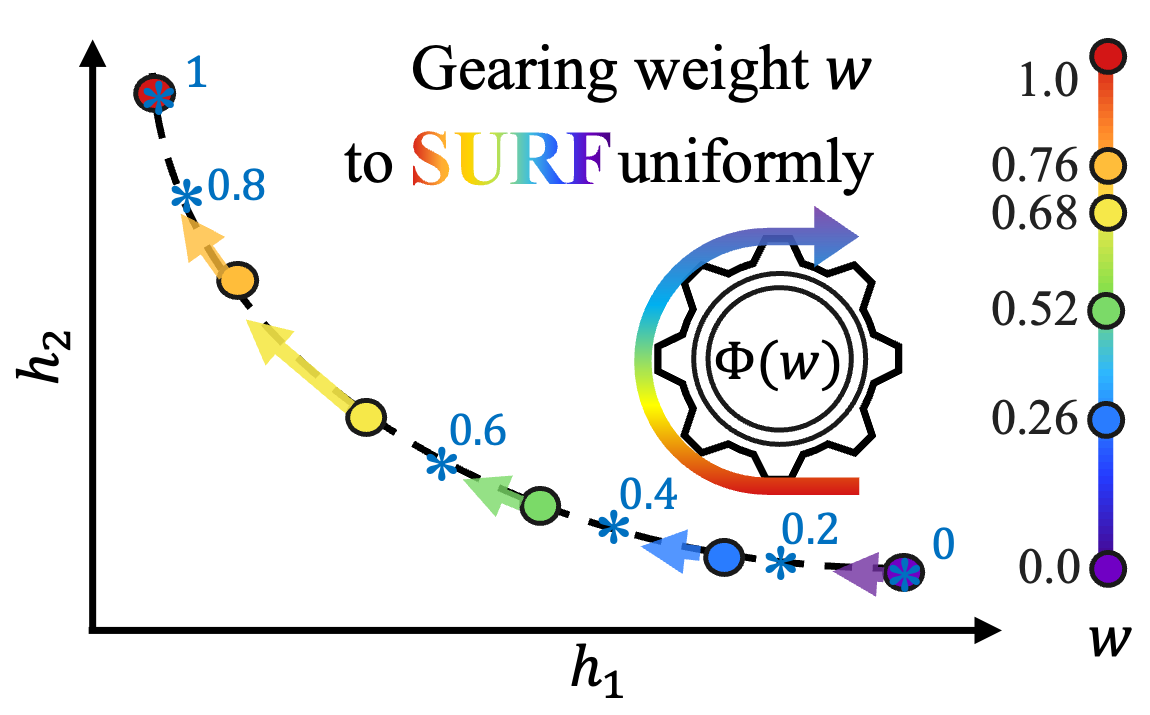}
\vspace{-0.7cm}
\caption{LS results for multi-arm bandit (see Appendix~\ref{app:toy} for the detailed settings). Uniform weights produce uneven points on PF (\textcolor{PyBlue}{stars}), whereas PF-aware sampling yields a uniform placement (colored dots).}
\label{fig:density_mismatch}
\vspace{0.5cm}
\end{wrapfigure}
Formally, let $\bu$ denote the decision variable in the feasible set $\U\subseteq\mathbb{R}^d$, 
MOO seeks to minimize $M$ objectives $h_m:\mathbb{R}^d \rightarrow \mathbb{R}$:

\vspace{-0.6cm}
\begin{align}
\min_{\bu \in \U} \left\{\h(\bu)
:= \begin{bmatrix} h_1(\bu), \cdots, h_M(\bu)\end{bmatrix} ^\top \right\}.
\label{eq:BOO}
\end{align}

\vspace{-0.3cm}
We consider optimizing the above vector-valued objective to Pareto optimality.
Intuitively, a solution $\bu^*$ is \emph{Pareto optimal} if no feasible point improves one objective without worsening another. The set of all Pareto-optimal solutions is the \emph{Pareto set} $\U_{\rm PF}$, and its image $\mathcal{F}_{\rm PF}$ under $\h$ is the Pareto front (PF). Formal definitions are provided in Section~\ref{sec:arc-length-geometry}.

A common practice to solve MOO is via \emph{scalarization}. For example, in the bi-objective case ($M=2$), given $w\in[0,1]$, linear scalarization (LS) solves
\vspace{-0.1cm}
\begin{align}
 \bu_w^*
 \in
 \arg\min_{\bu \in \U}
 \big\{\LS_{\mathbf h}(\bu;w)
 :=
 w h_1(\bu) + (1-w) h_2(\bu)
 \big\}.
 \label{eq:LS h}\vspace{-0.2cm}
\end{align}
Under certain conditions (see~Section~\ref{sec:arc-length-geometry}), sweeping $w$ over $[0,1]$ traces out the entire PF, so the standard baseline uses evenly spaced weights~\cite{rame2023rewarded}. Yet, as Figure~\ref{fig:density_mismatch} shows, this baseline rarely yields evenly spaced PF points: solutions could clump in some regions of the front and leave others almost untouched. Existing remedies are either substantially more involved than scalarization~\citep{das1998normal, lovison2011singular, martin2016continuation, vieira2012multicriteria, wang2024newton, zhang2024gliding} or offer no coverage guarantee in limited-computation-budget regimes~\citep{barrett2008learning, navon2020learning, zhang2024gliding, chen2024ferero}. 
This motivates the question:

\begin{center}
\vspace{-0.2cm}
\emph{Can we \textbf{efficiently} find solutions with objectives \textbf{uniformly distributed along the PF}?}
\vspace{-0.15cm}
\end{center}

\begin{wrapfigure}{r}{0.65\textwidth}
\vspace{-0.3cm}
\centering
\includegraphics[width = 0.645\textwidth]{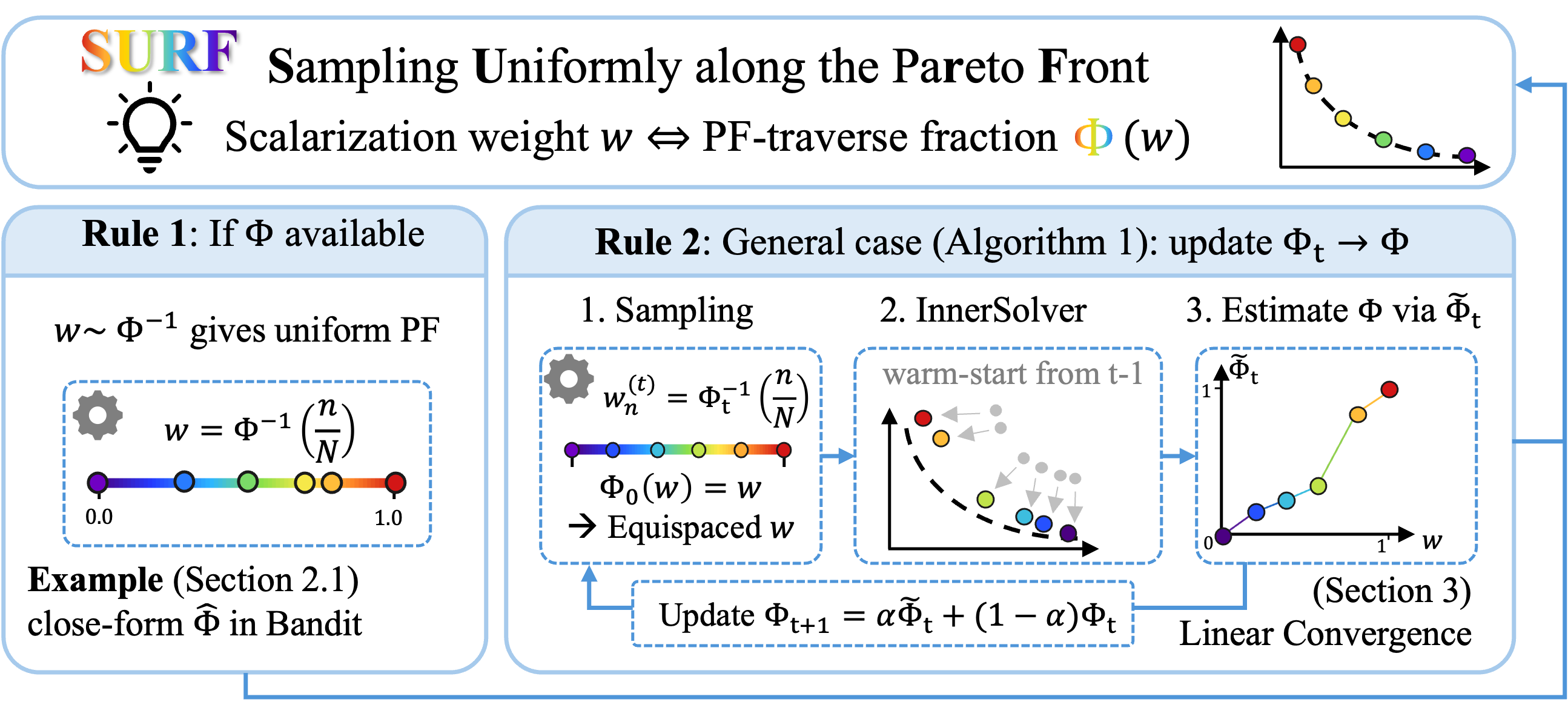}
\vspace{-0.6cm}
\caption{
SURF overview: bridging $w$ and PF points via CDF $\Farc$ yields two rules.
Rule~\ref{prin:pf_aware_ls_weighting} directly samples $\{w_n\}_{n=0}^N$ when $\Farc$ is known (Section~\ref{sec:arc-length-geometry}).
Rule~\ref{prin:surf_general} handles the general case by iteratively approximating $\Farc$ (Algorithm~\ref{alg:SURF}).
}
\vspace{-0.3cm}
\label{fig:SURFoverview}
\end{wrapfigure}

We answer this by proposing \textbf{S}ampling \textbf{U}niformly along the Pa\textbf{r}eto \textbf{F}ront (SURF; Figure~\ref{fig:SURFoverview}), a plug-in wrapper around linear scalarization (LS) that leaves the inner solver untouched with preserved efficiency, and only adjusts \emph{how the weights are sampled}. The key insight is the relation between the weight $w$ and the PF geometry. Picture the scalarization weight $w$ as a dial that drags a point along the PF: as $w$ slides from $0$ to $1$, the LS solution $\bu_w^*$ traces the entire front via $\fPF(w):=\h(\bu_w^*)$. The catch is that the dial turns the front at uneven speed, i.e., in some regions a small change in $w$ sweeps a long stretch of PF, while in others a large change barely budges the point (cf. Figure~\ref{fig:density_mismatch}). 
To precisely characterize this relation, define the \emph{traversal speed} $v(w):=\bigl\|\tfrac{\partial}{\partial w}\fPF(w)\bigr\|$ that captures how fast the PF geometry changes at $w$, the \emph{PF arc length} $s(w):=\int_0^w v(p)\,dp$ that captures the traversal distance along the PF when turning the weight from $0$ to $w$, and the normalized arc-length cumulative distribution function (CDF) $\Farc(w):=s(w)/s(1)$ that specifies the fraction of the PF reached at $w$. The cure mirrors the diagnosis: instead of turning the dial uniformly, SURF turns it through $\Farc^{-1}$. This yields the following PF-geometry-aware weight sampling rule:

\vspace{-0.1cm}
\begin{leftrulebox}
\begin{principle}
\label{prin:pf_aware_ls_weighting}
If $\Farc$ is available in closed form, SURF chooses $\{w_n\}_{n=0}^N$ given by $w_n=\Farc^{-1} \left(\frac{n}{N}\right)$.
\end{principle}
\end{leftrulebox}
Section~\ref{sec:MDP} carries this out in structured settings: for bi-objective reinforcement learning, 
we derive a closed-form $v(w)$ in terms of $\bu_w^*$, and a fully closed-form $\Farc$ in the bandit specialization. For general problems where $\Farc$ has no closed form, Section~\ref{sec:main_result} introduces SURF (Algorithm~\ref{alg:SURF}), which alternates between solving the scalarized problem at the current set of weights and refining an estimate of $\Farc$ that drives the next set of refined weights. This yields the second joint estimation and sampling rule:
\vspace{-0.1cm}
\begin{leftrulebox} 
\begin{principle}
\label{prin:surf_general}
If $\Farc$ is not available, SURF estimates $\Farc$ and the weights $\{w_n\}_{n=0}^N$ jointly.
\end{principle}
\end{leftrulebox}

Our contributions are summarized as follows:
\vspace{-0.2cm}
\begin{enumerate}[leftmargin=*, align=left]
\item[\textbf{C1) PF geometry characterization.}] In Section~\ref{sec:arc-length-geometry}, we characterize the scalarization-induced PF geometry through the traversal speed $v(w)$ and CDF $\Farc$, leading to the PF-aware weight sampling principle in Rule~\ref{prin:pf_aware_ls_weighting}, and use the bandit setting as a concrete instantiation of this principle with closed-form expression (Section~\ref{sec:MDP}).

\item[\textbf{C2) SURF algorithm with convergence guarantee.}] In Section~\ref{sec:main_result}, we propose SURF (Algorithm~\ref{alg:SURF}), which realizes Rule~\ref{prin:surf_general} by alternating an off-the-shelf scalarized solver with a monotone reconstruction or estimation of the CDF $\Farc$. Theorem~\ref{thm:cdf} shows that the CDF reconstruction error converges to an unavoidable $\Oc(N^{-2})$ finite-sampling floor.

\item[\textbf{C3) Empirical verification.}] In Section~\ref{sec:exp}, we empirically verify SURF on diverse tasks, including structured bandits, MO-Gymnasium benchmarks, and LLM alignment tasks, using different solvers for scalarization problems. 
SURF consistently improves both uniform PF coverage and PF approximation quality over representative baselines, with consistent gains under fixed total computation budgets.
\end{enumerate}

\vspace{-0.2cm}
\paragraph{Related work.}
Many practical applications involving multiple criteria decision making or multi-task learning, such as multi-objective reinforcement learning and LLM alignment, require adaptation to or representation of different user preferences or objective trade-offs rather than a single preferred solution on the PF~\citep{reymond2022pareto,felten2024multi,lin2024policy,shi2024decoding,guo2024controllable}. A classical approach to solving MOO is scalarization, which converts the original vector-valued objective into a scalar-valued objective through scalarization. This includes LS, Tchebycheff scalarization~\citep{jahn1985scalarization,lin2024smooth}, $\epsilon$-constraint method~\citep{chankong2008multiobjective}, etc. Among these, LS is a widely used baseline~\citep{zhou2024beyond,yang2024learning,li2025gradient}. A separate line of work studies how to obtain diverse or well-distributed solutions along the PF. Representative approaches include adaptive scalarization methods~\citep{barrett2008learning,alegre2022optimistic}, which adjust scalarization weights to discover diverse trade-offs; preference-guided methods~\citep{das1998normal,lin2019pareto,mahapatra2020multi,chen2024pareto,zhang2024gliding}, which steer the search using prescribed reference directions or preference vectors; and continuation or equal-spacing methods~\citep{zhang2006minmax,pereyra2013equispaced,lovison2011singular}, which explicitly trace the PF and attempt to maintain approximately even spacing between neighboring solutions. These methods typically enforce uniformity indirectly through weight selection, reference-point design, or local spacing control, as presented in Figure~\ref{fig:uniformity_baselines}.
Different from existing works, we provide a novel perspective to address this through distribution matching. The map from scalarization weight to Pareto point induces a pushforward measure on the front. If we sample scalarization weights from a given distribution, the resulting Pareto points follow a distribution on the PF~\citep{gonzalez2009digital,huang2010adaptive,weller2016mesh}. Uniform coverage can then be interpreted as matching this induced measure to a uniform arc‑length target. An extended discussion of related work is deferred to Appendix~\ref{app:related_work}.
\begin{figure}
    \centering
    \includegraphics[width=0.99\linewidth]{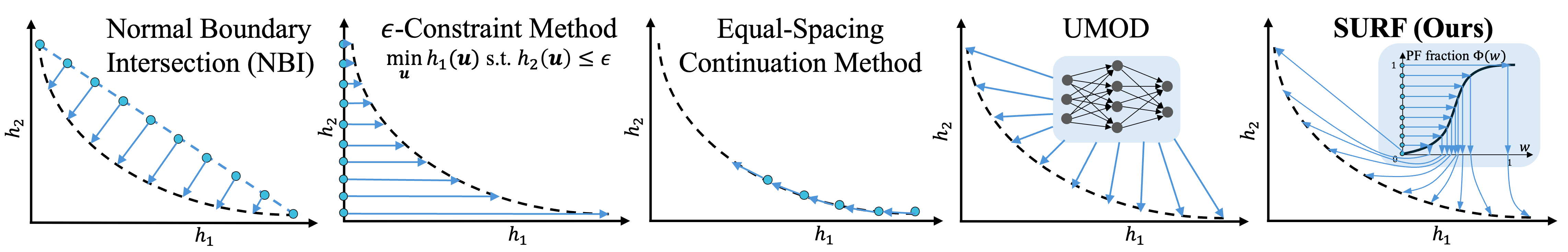}
    \caption{
    Comparison of approaches for obtaining diverse PF solutions, including NBI method~\citep{das1998normal}, $\epsilon$-constraint method~\citep{chankong2008multiobjective},  continuation/equal-spacing methods~\citep{pereyra2013equispaced,lovison2011singular}, and UMOD~\citep{zhang2024gliding}. 
    These methods promote coverage through auxiliary constraints, directions, path tracing, or learned mappings, whereas SURF directly matches the scalarization weights to the uniform PF distribution.
    }
    \label{fig:uniformity_baselines}
    \vspace{-0.4cm}
\end{figure}
\vspace{-0.2cm}
\paragraph{Notation.} We denote $\mathbb R_+=[0,\infty)$ and $\mathbb R_-=(-\infty,0]$. 
For a set $\mathcal X$, $\operatorname{relint}(\mathcal X)$ is its relative interior and $\Proj_{\mathcal X}(\bx)$ is the Euclidean projection. 
For $n\in\mathbb N$, write $[n]=\{1,\ldots,n\}$, $\Delta_n=\{\bw\in\mathbb R_+^n:\sum_{i=1}^n w_i=1\}$, and $\Delta_n^\circ=\operatorname{relint}(\Delta_n)$. 
For $\mathbf p,\mathbf q\in\Delta_n$, we write $\mathrm{KL}(\mathbf p\|\mathbf q):=\sum_i p_i\log(p_i/q_i)$ for the Kullback-Leibler (KL) divergence, with the analogous integral form for continuous distributions.
For vector $\bx$, $\|\bx\|$ is the Euclidean norm, $\Diag(\bx)$ is its diagonal matrix, and $\SoftMax(\bx)_i={\exp(x_i)}/{\sum_j\exp(x_j)}$, with binary special case $\sigma(x)=(1+e^{-x})^{-1}$. 
For matrix $H$, $\|H\|$ is the spectral norm, 
$\mathrm{vec}(H)$ is its row-wise vectorization, and $\ker(H)$ is its null space. 
Operations $1/(\cdot)$, $\log(\cdot)$, and $\exp(\cdot)$ are applied elementwise. 
Detailed problem-specific notation is summarized in Appendix~\ref{app:notation}.
\vspace{-0.2cm}
\section{Pareto Front Geometry Analysis and Geometry-Aware Scalarization}
\label{sec:arc-length-geometry}
\vspace{-0.2cm}
To make the dial-speed picture precise, we now formalize the regularity conditions under which $\bu_w^*$ is well-defined and the PF traversal speed is smooth.
We first provide basic definitions for MOO, then discuss additional assumptions for our problem setup. 
\begin{definition}[Pareto optimality and Pareto front]
\label{def:po_pf}
A feasible point $\bu^*\in\U$ is \emph{Pareto optimal} if there does not exist $\bu\in\U$ such that $h_m(\bu)\le h_m(\bu^*)$ for all $m\in[M]$ and $h_j(\bu)<h_j(\bu^*)$ for at least one $j\in[M]$. 
The collection of all Pareto-optimal solutions forms the \emph{Pareto set} (PS), denoted as $\U_{\rm PF}$, and its image in objective space forms the \emph{Pareto front} (PF), $\mathcal F_{\rm PF}:=\{\h(\bu^*):\bu^*\in\U_{\rm PF}\}$.    
\end{definition}

\begin{wrapfigure}{r}{0.36\textwidth}
 \centering
 \vspace{-0.4cm}
 \includegraphics[width=0.355\textwidth]{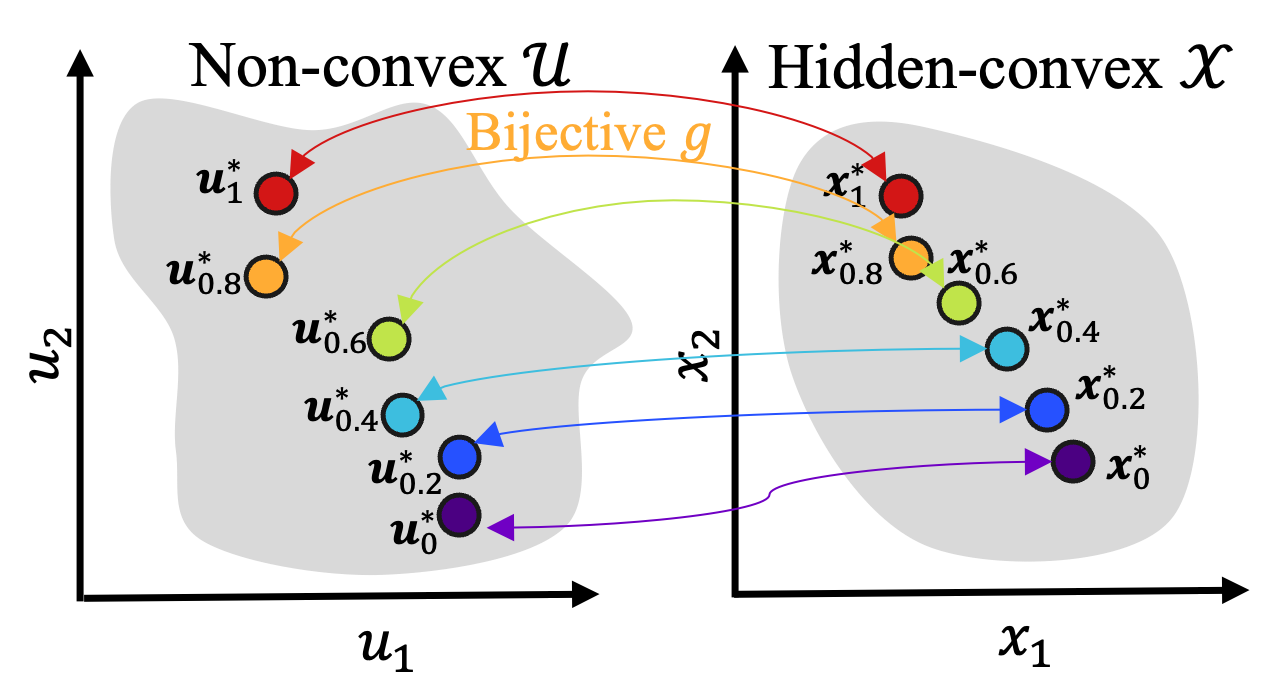}
 \vspace{-0.7cm}
 \caption{
 Original parameter space $\mathcal U$ and hidden space $\mathcal X=g(\mathcal U)$.
 Bijection $g$ maps non-convex structures in $\mathcal U$ to a convex space $\mathcal X$.
 }
\label{fig: parameter_space}
\vspace{-0.5cm}
\end{wrapfigure}

Modern machine-learning objectives are highly non-convex in their original parameterization. Nevertheless, many problems admit an equivalent hidden representation where the objective becomes convex or strongly convex~\citep{petrulionyte2024functional,fatkhullin2025stochastic}. Based on this viewpoint, we impose the following assumption.

\begin{assumption}[Hidden-space regularity] 
\label{assump:hidden_regular} 
Assume there exists a bijection $g:\mathcal U\to\mathcal X$ such that $h_m(u)=f_m(g(u))$ for each $m\in[M]$. Each $f_m$ is twice continuously differentiable on an open set containing $\X$ and is $\mu$-strongly convex on $\mathcal X$. The hidden PS $\XPF =g(\U_{\rm PF})$ lies in a compact subset $\mathcal K\subset \mathcal X$ such that each $\nabla f_m$, is $l_{f,1}$-Lipschitz-continuous on $\mathcal K$. 
\end{assumption}
This assumption is provable in a range of settings, including nonlinear least squares~\citep{nesterov2006cubic,drusvyatskiy2019efficiency}, power control and optimal doping profile problems~\citep{boyd2007tutorial}, system-level synthesis in control~\citep{anderson2019system}, revenue management and inventory control~\citep{chen2025efficient,chen2023network}, and reinforcement learning~\citep{zhang2020variational,schlaginhaufen2024towards}, which we revisit as a concrete example in Section~\ref{sec:MDP}.

Denote $\bx=g(\bu)$, $\X=g(\U)$, and $\f(\bx)
:=[f_1(\bx),\dots,f_M(\bx)]^\top$, then MOO in \eqref{eq:BOO} admits the hidden-space form $\min_{\bx \in \X}~ \f(\bx)$.
Hidden strong convexity~\citep{fatkhullin2025stochastic} yield that for any weight $\bw\in \Delta_M$, there is $\bx^*_\bw=g(\bu^*_\bw)$ where
\begin{align}
\bx_{\bw}^*\in\arg\min_{\bx \in \X}\big\{\LS_{\mathbf f}(\bx;\bw)
:=\textstyle\sum_{m=1}^M w_m f_m(\bx)\big\}.
\label{eq:LS_multi}
\end{align}
Thus, the original problem~\eqref{eq:BOO} and the hidden formulation~$\min_{\bx \in \X}~ \f(\bx)$ are equivalent in the sense of PS and PF. Formally, $\X_{\rm PF} = g(\U_{\rm PF})$ and $\mathcal F_{\rm PF}=\{\f(\bx): \bx\in\mathcal X_{\rm PF}\}$.
Furthermore, the hidden (strong) convexity in Assumption~\ref{assump:hidden_regular} 
ensures unique scalarized solutions, full exploration of the Pareto front, and a well-defined Pareto-front path.
This is summarized in Proposition~\ref{prop:hidden_cvx_imply}.
\begin{proposition}[{\citep[Theorem~3.5,~Proposition~3.9]{ehrgott2005multicriteria},~\citep[Theorem 3.1.3]{miettinen_nonlinear_1998}}]
\label{prop:hidden_cvx_imply}
Under Assumption~\ref{assump:hidden_regular},given $\mathbf{w}\in \Delta_M$, solution~\eqref{eq:LS_multi} is unique. {Every such minimizer $\bx_{\bw}^*$ given $\mathbf{w}\in \Delta_M$ is Pareto optimal for $\mathbf{f}$, and every Pareto-optimal point can be obtained by \eqref{eq:LS_multi}} with some $\mathbf{w}\in \Delta_M$.
\end{proposition}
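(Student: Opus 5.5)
Three claims need proof: uniqueness of the scalarized minimizer, Pareto optimality of each minimizer, and completeness (every Pareto point arises from some weight). I will work entirely in the hidden space $\X$ and transfer back through the bijection $g$. I will use that $\X$ is convex, which is implicit in Assumption~\ref{assump:hidden_regular} and Figure~\ref{fig: parameter_space}. Since $h_m=f_m\circ g$ with $g$ bijective, $\bu^*$ is Pareto optimal for $\h$ on $\U$ iff $g(\bu^*)$ is Pareto optimal for $\f$ on $\X$. Likewise $\bu$ minimizes $\LS_{\h}(\cdot;\bw)$ iff $g(\bu)$ minimizes $\LS_{\f}(\cdot;\bw)$. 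So it suffices to prove everything for $\f$ on $\X$.

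\textbf{Uniqueness.} For $\bw\in\Delta_M$, the function $\LS_{\f}(\cdot;\bw)=\sum_m w_m f_m$ is a convex combination of $\mu$-strongly convex functions. Since $\sum_m w_m=1$, it is itself $\mu$-strongly convex on the convex set $\X$, and a strongly convex function has at most one minimizer. Existence also has to be settled. I would obtain it from coercivity of a strongly convex function, provided $\X$ is closed. If closedness is not available, I would simply read the statement as concerning the minimizer whenever it exists, which is how \eqref{eq:LS_multi} is posed.

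\textbf{Pareto optimality.} Suppose $\bx^*_\bw$ is dominated by some $\bx\in\X$. Then $\LS_{\f}(\bx;\bw)\le \LS_{\f}(\bx^*_\bw;\bw)$. If the inequality is strict, minimality is contradicted. If it is an equality, then $\bx$ is a second minimizer, and uniqueness forces $\bx=\bx^*_\bw$, contradicting strict improvement in some coordinate. This argument covers boundary weights with some $w_m=0$, since uniqueness still holds there. This is exactly why strong convexity of every $f_m$, not merely convexity, is needed.

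\textbf{Completeness.} This is the main step. Let $\bx^*$ be Pareto optimal and consider the set $A=\{\f(\bx)+\mathbf r:\bx\in\X,\ \mathbf r\in\R_+^M\}$. Convexity of each $f_m$ and of $\X$ makes $A$ convex. Pareto optimality of $\bx^*$ means that $\f(\bx^*)-\mathrm{int}(\R_+^M)$ does not meet $A$. A separating hyperplane theorem for disjoint convex sets then gives a nonzero $\bw$ with $\bw^\top \mathbf a\ge \bw^\top \f(\bx^*)$ for all $\mathbf a\in A$. Since $A$ is unbounded in each direction $\be_m$, every $w_m\ge0$, and after normalizing, $\bw\in\Delta_M$. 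Hence $\bx^*$ minimizes $\LS_{\f}(\cdot;\bw)$, and by uniqueness $\bx^*=\bx^*_\bw$. The obstacle here is technical: choosing a separation statement that applies without closedness or interiority assumptions. Using an open convex set (the negative orthant shifted) against a convex set avoids that issue. Pulling back with $g^{-1}$ finishes the proof.
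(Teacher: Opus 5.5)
Your proof is correct and follows the same line as the textbook results the paper cites in place of a proof. Uniqueness comes from $\mu$-strong convexity of $\sum_m w_m f_m$; Pareto optimality of the unique minimizer, including for boundary weights, is the argument behind Miettinen's Theorem~3.1.3 and Ehrgott's Proposition~3.9; and completeness comes from separating the convex set $\f(\X)+\R_+^M$ from the open set $\f(\bx^*)-\mathrm{int}(\R_+^M)$. Your existence caveat is warranted, since Assumption~\ref{assump:hidden_regular} never makes $\X$ closed (for instance, $\X=(0,1]$ with $f_1=f_2=x^2$ has an empty Pareto set and no weighted-sum minimizer), so either reading the claim as ``at most one minimizer'' or adding closedness of $\X$ is the right fix.
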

Proposition~\ref{prop:hidden_cvx_imply} implies that, under Assumption~\ref{assump:hidden_regular}, \emph{steering the scalarization weight $\bw$ is equivalent to steering the point to traverse along the PF}.
In the bi-objective case $M=2$, this reduces to the scalar notation in \eqref{eq:LS h} by setting $\bw=[w,1-w]^\top$ with $w\in[0,1]$. Moreover, although the original problem in $\bu$ can be nonconvex, the bijection $\bx=g(\bu)$ gives an equivalent strongly convex formulation in $\bx$. Hence solving the scalarized problem in the original space is equivalent to solving the convex problem in the hidden space, and global optimality remains attainable in the original parameterization~\citep{fatkhullin2025stochastic}. Therefore, varying $\bw$ traces the PF via
\begin{align}
\fPF(\bw):=\h(\bu_\bw^*)=\f(\bx_\bw^*)\in\mathbb R^M,\quad \bw\in \Delta_M.
\end{align}
Below we provide a concrete example in the bi-objective case.

\subsection{Instantiation: bi-objective reinforcement learning and bandits}
\label{sec:MDP}

In the bi-objective case, we identify $\bw=[w,1-w]^\top\in\Delta_2$ with the scalar weight $w\in[0,1]$, and write $\fPF(w):=\fPF([w,1-w]^\top)$, which is a one-dimensional path. To ensure a differentiable and smooth PF path, we impose a standard condition in PF-tracing analysis~\citep{schtze2005continuation,beltran2020pareto}.
\begin{assumption}[Constraint Regularity]
\label{assump:LICQ}
Assume $\X=\{\bx:\mathbf c_1(\bx)\le 0,\mathbf c_2(\bx)= 0\}$, where $\mathbf c_1(\bx)$ is twice differentiable, Lipschitz-smooth and convex and $\mathbf c_2(\bx)$ is linear. For $\min_{\bx\in\X}\LS_f(\bx;w)$ with any $w\in[0,1]$, assume that the active constraints are fixed, satisfy the Linear Independence Constraint Qualification (LICQ), and the Lagrangian Hessian is Lipschitz-continuous along $\{\bx_w^*:w\in[0,1]\}$.
\end{assumption}
Under Assumptions~\ref{assump:hidden_regular} and~\ref{assump:LICQ}, the weight-to-PF map $w\mapsto\fPF(w)$ is differentiable and has no reverse direction (see Appendix~\ref{app:fPFdifferentiable}). This allows us to quantify how the PF is traversed as $w$ varies. Define
\begin{align}
v(w):=\Big\|\frac{\partial }{\partial w}\fPF(w)\Big\|,\quad
s(w):=\int_0^w v(p) dp,\quad \Farc(w):=\frac{s(w)}{s(1)},\quad w\in[0,1].
\label{eq:arc_cdf_def}
\end{align}
Here, $\frac{\partial}{\partial w}\fPF(w)$ represents the velocity of the PF traversal with respect to $w$, and $v(w)$ is the corresponding speed, which generally varies with $w$, as illustrated by the arrows in Figure~\ref{fig:density_mismatch}.
The function $s(w)$ accumulates the distance traveled along the PF as the weight moves from $0$ to $w$.
The normalized $\Farc(w)$ is the fraction of the total PF arc length traversed up to weight $w$ and is $l_{\Farc,1}$-Lipschitz-smooth (see Appendix~\ref{app:fPFdifferentiable}). Moreover, for $w$ drawn uniformly from $[0,1]$, $\Farc(w)$ is the CDF of the normalized distribution of the induced solutions on the PF.

To ensure non-vanishing PF traversal speed, we impose the following nondegeneracy assumption.
\begin{assumption}[Nonzero trade-off]
\label{assump:tangent_gap}
For each $w\in[0,1]$, let $J_w$ denote the Jacobian matrix of the active constraints of the scalarized problem $\min_{\bx\in\X}\LS_f(\bx;w)$ at $\bx_w^*$. Assume there exists some constant $c>0$ such that $\big\| \Proj_{\ker(J_w)}\big(\nabla f_1(\bx_w^*)-\nabla f_2(\bx_w^*)\big)\big\|\ge c$ for all $w\in[0,1]$.
\end{assumption}
Assumption~\ref{assump:tangent_gap} rules out zero trade-offs along feasible directions, which would prevent changes in the scalarization weight from moving the solution along the PF. Such nondegeneracy-type conditions are standard in trade-off analysis and PF continuation analyses~\citep{eskelinen2012trade,bolten2021tracing}. This condition ensures that $\Farc$ is invertible. More discussion is deferred to Appendix~\ref{appendix:v_bound_from_tangent_gap}. 
\begin{lemma}[Bounded PF traversal speed]
\label{lem:v_bound_from_tangent_gap}
Suppose Assumptions~\ref{assump:hidden_regular},~\ref{assump:LICQ},~\ref{assump:tangent_gap} hold. Then, there exist constants $0<v_{\min}\le v_{\max}<\infty$ such that $v(w)\in [v_{\min},v_{\max}]$ for all $w\in[0,1]$. Consequently, $s(1)=\int_0^1 v(w) dw\in [v_{\min},v_{\max}]$, and $s(w)$ and $\Farc(w)$ are strictly increasing on $[0,1]$.
\end{lemma}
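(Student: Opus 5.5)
The plan is to differentiate the KKT system of the scalarized problem with respect to $w$ and obtain an explicit formula for $\frac{d}{dw}\bx_w^*$. From it we derive two-sided bounds on $v(w)$.

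Under Assumptions~\ref{assump:hidden_regular} and~\ref{assump:LICQ}, the active set is fixed and LICQ holds. So $\bx_w^*$ and its multipliers $\lambda_w$ are characterized by
\[
w\nabla f_1(\bx_w^*)+(1-w)\nabla f_2(\bx_w^*)+J_w^\top\lambda_w=0, \qquad \mathbf c_A(\bx_w^*)=0 .
\]
The Lagrangian Hessian $H_w=w\nabla^2f_1+(1-w)\nabla^2f_2+\sum_i\lambda_{w,i}\nabla^2 c_{A,i}$ satisfies $H_w\succeq\mu I$. This uses $\mu$-strong convexity of the $f_m$, convexity of the inequality constraints with nonnegative multipliers, and linearity of $\mathbf c_2$. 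The KKT matrix is therefore nonsingular, and the implicit function theorem (as in Appendix~\ref{app:fPFdifferentiable}) gives differentiability.

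Write $\bd_w:=\nabla f_1(\bx_w^*)-\nabla f_2(\bx_w^*)$ and $P:=\Proj_{\ker(J_w)}$. Differentiating and eliminating the multiplier derivative gives
\[
\dot\bx_w=-Z\,(Z^\top H_wZ)^{-1}Z^\top\bd_w,
\]
where $Z$ is an orthonormal basis of $\ker(J_w)$. Thus $\dot\bx_w\in\ker(J_w)$.

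Since $\fPF(w)=\f(\bx_w^*)$, we have $\frac{d}{dw}\fPF=[\nabla f_1^\top\dot\bx_w,\ \nabla f_2^\top\dot\bx_w]^\top$. The key identity comes from the stationarity condition: projecting onto $\ker(J_w)$ gives $wP\nabla f_1+(1-w)P\nabla f_2=0$. Combined with $\dot\bx_w\in\ker(J_w)$, this yields
\[
\nabla f_1^\top\dot\bx_w=(1-w)\,\bd_w^\top\dot\bx_w,\qquad \nabla f_2^\top\dot\bx_w=-w\,\bd_w^\top\dot\bx_w .
\]
Hence
\[
v(w)=\sqrt{(1-w)^2+w^2}\;\bigl|\bd_w^\top Z(Z^\top H_wZ)^{-1}Z^\top\bd_w\bigr|.
\]
This is a quadratic form in $Z^\top\bd_w$, whose norm equals $\|P\bd_w\|$. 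The prefactor lies in $[1/\sqrt2,1]$, and the quadratic form lies between $\|P\bd_w\|^2/\lambda_{\max}(H_w)$ and $\|P\bd_w\|^2/\mu$.

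For the lower bound, Assumption~\ref{assump:tangent_gap} gives $\|P\bd_w\|\ge c$. We also need $\lambda_{\max}(H_w)\le L_H<\infty$ uniformly in $w$. This follows because $\bx_w^*\in\mathcal K$, where $\nabla^2 f_m$ is bounded by $l_{f,1}$ and the constraints are Lipschitz-smooth. The multipliers must additionally be bounded: $\lambda_w$ is a continuous function of $w$ on the compact $[0,1]$ (from the implicit function theorem), or it can be bounded explicitly via LICQ as $\lambda_w=-(J_wJ_w^\top)^{-1}J_w(w\nabla f_1+(1-w)\nabla f_2)$. The result is $v_{\min}=c^2/(\sqrt2 L_H)$.

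For the upper bound, $\|P\bd_w\|\le\|\bd_w\|\le G$, where $G$ bounds the gradients on the compact set $\mathcal K$ by continuity. This gives $v_{\max}=G^2/\mu$.

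The main obstacle is making the uniform bounds rigorous, namely a uniform bound on the multipliers and on $\sigma_{\min}(J_w)$. I would handle this by continuity of $w\mapsto(\bx_w^*,\lambda_w)$ on compact $[0,1]$.

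The remaining claims then follow directly. Integrating over $[0,1]$ gives $s(1)\in[v_{\min},v_{\max}]$. Since $s'(w)=v(w)\ge v_{\min}>0$, $s$ is strictly increasing, and $\Farc=s/s(1)$ is strictly increasing as a positive rescaling.
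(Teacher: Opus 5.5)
Your proposal is correct and follows essentially the same route as the paper: differentiate the KKT system, write $\dot\bx_w=-Z(Z^\top H_wZ)^{-1}Z^\top\bd_w$ with the reduced Hessian bounded between $\mu$ and $L_H$, and use the tangent gap to get $|\bd_w^\top\dot\bx_w|\ge c^2/L_H$, hence $v_{\min}=c^2/(\sqrt{2}\,L_H)$. The differences are minor. You use the projected stationarity identity to obtain the exact formula $v(w)=\sqrt{(1-w)^2+w^2}\,|\bd_w^\top\dot\bx_w|$, which also gives an explicit $v_{\max}$. The paper instead applies $\sqrt{a^2+b^2}\ge|a-b|/\sqrt{2}$ to $\partial_w(f_1-f_2)$ and gets $v_{\max}$ from continuity on $[0,1]$.
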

The proof of Lemma~\ref{lem:v_bound_from_tangent_gap} is given in Appendix~\ref{appendix:v_bound_from_tangent_gap}. Therefore, given a budget of $(N+1)$ PF points, we can choose weights $w_n=\Farc^{-1}\left(\frac{n}{N}\right)$ for $n=0,\ldots,N$, so that the induced points $\{\f(\bx_{w_n}^*)\}_{n=0}^N$ are uniformly spaced along the PF, i.e., $s(w_{n+1})-s(w_n)=\frac{s(1)}{N}$. 
This enables Rule~\ref{prin:pf_aware_ls_weighting} when the PF geometry is explicit, covering analytic examples such as quadratic problems in Appendix~\ref{app:quadratic}, as well as practical multi-objective reinforcement learning (MORL) problems~\citep{roijers2013survey,zhou2024beyond}. 

In bi-objective RL, decision problems are commonly modeled as Markov Decision Processes (MDPs) $\mathcal M=(\mathcal S,\mathcal A,\mathcal P,\mathbf r,\gamma,\rho,\tau)$, where $\mathcal S$ and $\mathcal A$ are finite state and action spaces, $\mathcal P(s'|s,a)$ is the transition kernel, $\gamma\in(0,1)$ is the discount factor, $\rho\in\Delta_{|\mathcal S|}$ is the initial-state distribution, $\mathbf r(s,a)=[r_1(s,a),r_2(s,a)]^\top$ is the bi-objective reward, and $\tau$ is a regularization functional. We instantiate KL-regularized MDP with $\tau(\pi(\cdot| s)) = \beta \mathrm{KL}( \pi(\cdot| s) \| \pi_{\rm ref}(\cdot| s))$ applied to the policy $\pi$ with $\beta>0$ and some reference policy $\pi_{\rm ref}(\cdot|s)$. This yields a generally non-convex instance of~\eqref{eq:BOO}, with $\bu=\pi\in\U=\Delta_{|\mathcal A|}^{|\mathcal S|}$ and $h_m(\bu) = -\mathbb E_{\pi} \big[ \sum_{t=0}^{\infty}\gamma^t\bigl(r_m(s_t,a_t)-\tau(\pi(\cdot|s_t))\bigr) \big]$, covering LLM alignment settings with competing rewards~\citep{dai2023safe,shi2024decoding} and RLHF/DPO-style objectives~\citep{rafailov2023direct,dai2023safe,xiong2025projection}. 

Nevertheless, it admits a hidden convex formulation through normalized occupancy measures. Let
\begin{equation}
\bx=\mathrm{vec}(\mu_\pi) \in \Delta_{|\mathcal{S}||\mathcal{A}|},\quad \text{where}\quad 
\mu_\pi(s,a):=(1-\gamma)\,\mathbb{E}_{\rho,\pi}\big[\textstyle\sum_{t=0}^{\infty}\gamma^t\mathbf{1}\{s_t=s,a_t=a\}\big].
\label{eq:occupancy_measure}
\end{equation}
Denote by $E$ the per-state summation matrix with $E_{(s,a),s'}=\mathbf{1}\{s'=s\}$, and by $P$ the transition matrix with $P_{(s,a),s'}=\mathcal P(s'|s,a)$. 
Then the occupancy space is a convex feasible set~\citep{zhang2020variational,fatkhullin2025stochastic}:
\begin{equation}
\X =
\{\mathrm{vec}(\mu_\pi):\pi\in\Delta_{|\mathcal A|}^{|\mathcal S|}\} = \big\{ \bx\in\mathbb R_+^{|\mathcal S||\mathcal A|}: (E-\gamma P)^\top \bx=(1-\gamma)\rho \big\} .
\label{eq:occupancy_measure_space}
\end{equation}
Although illustrated on finite tabular MDPs, this viewpoint extends to continuous state or action spaces, where the finite vector $\bx$ is replaced by an infinite-dimensional density~\citep{bhatt1996occupation,altman2021constrained}.
Moreover, when $E^\top \bx \geq d_{\min}>0$, a standard sufficient exploration condition, the policy-to-occupancy map~\eqref{eq:occupancy_measure} is bijective~\citep{schlaginhaufen2024towards,zhang2020variational}. Hence, we can write $h_m(\bu)=f_m(g(\bu))$ for bijection $g$. Moreover, denote $R_m=\mathrm{vec}(r_m(\mathcal S,\mathcal A))$ and $R_0=\mathrm{vec}(\log \pi_{\rm ref}(\mathcal A|\mathcal S))$, it yields strongly convex formulation~\citep{schlaginhaufen2024towards}
\begin{equation}
 f_m(\bx) =
 \beta\left(\langle \bx,\log \bx\rangle
 -\langle E^\top \bx,\log(E^\top \bx)\rangle -\langle R_0,\bx\rangle \right)-\langle R_m,\bx\rangle,\quad m \in \{1,2\}.
 \label{eq: MDP}
\end{equation}
For any $w\in[0,1]$, the LS problem associated with \eqref{eq: MDP} admits a unique interior solution $\bx_w^*\in\Delta_{|\mathcal{S}|\times |\mathcal{A}|}^\circ$~\citep{muller2026optimal} that is Pareto optimal, and $w\mapsto \bx_w^*$ is Lipschitz-continuous~\citep[Theorem 2.16]{ito2008lagrange}. Hence $\{\bx_w^*:w\in[0,1]\}\subset\operatorname{relint}(\mathcal X)$ is compact, so the objective in \eqref{eq: MDP} has uniformly bounded and Lipschitz-continuous Hessian on this set. Therefore, the MDP instance satisfies Assumption~\ref{assump:hidden_regular}. Moreover, since the LS solution is interior, only the equality constraint in \eqref{eq:occupancy_measure_space} is active. Its Jacobian is $J=(E-\gamma P)^\top$, which has full row rank as detailed in Appendix~\ref{app:rank_condition}, and hence Assumption~\ref{assump:LICQ} holds. Additionally, Assumption~\ref{assump:tangent_gap} reduces to $\|\Proj_{\ker((E-\gamma P)^\top)}(R_1-R_2)\|\ge c$, a mild condition that typically holds unless the two objectives coincide along feasible directions.
Moreover, the MDP admits an explicit formula for the PF traversal speed $v$, which determines $\Farc$ for implementing Rule~\ref{prin:pf_aware_ls_weighting}.
\begin{proposition}[PF speed in bi-objective RL]
\label{prop: speed for negative conditional entropy}
Under the sufficient exploration condition~\citep{schlaginhaufen2024towards}, the bi-objective RL in \eqref{eq: MDP} with constraint in \eqref{eq:occupancy_measure_space} yields
\begin{align*}
v(w)
= & \sqrt{(1-w)^2+w^2} \Big|(R_1-R_2)^\top \big(\widetilde H_w^{-1}-\widetilde H_w^{-1}J^\top (J\widetilde H_w^{-1}J^\top)^{-1}J\widetilde H_w^{-1} \big)(R_1-R_2) \Big|.
\end{align*}
where $\widetilde H_w =\beta\big(\Diag(\tfrac{1}{\mathbf{x}^*_w})-E\Diag(\tfrac{1}{E^\top\mathbf{x}^*_w})E^\top\big)+cJ^\top J, c>0$ is constraint augmented hessian.
When $|\mathcal S|=1$, this reduces to the entropic bandit case where the LS solution admits the closed form $\bu_w^*=\bx_w^*=\SoftMax\left(R_0+\beta^{-1}(wR_1+(1-w)R_2)\right)\in\Delta_A^\circ$, 
and the speed simplifies to
\begin{align}
v(w) = & \beta^{-1}\sqrt{(1-w)^2+w^2} (R_1-R_2)^\top\big(\Diag(\bu_w^*)-\bu_w^* (\bu_w^*)^\top\big)(R_1-R_2).
\label{eq:bandit_speed}
\end{align}
\end{proposition}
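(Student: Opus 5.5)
Differentiate the KKT system of the scalarized problem with respect to $w$, then read off the speed from the chain rule applied to $\fPF(w)=\f(\bx_w^*)$.

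\textbf{KKT system.} Under the sufficient exploration condition, $\bx_w^*$ is interior, so only the equality constraint $J\bx=(1-\gamma)\rho$ with $J=(E-\gamma P)^\top$ is active. The optimality conditions are
\[
w\nabla f_1(\bx_w^*)+(1-w)\nabla f_2(\bx_w^*)+J^\top\lambda_w=0,\qquad J\bx_w^*=(1-\gamma)\rho .
\]
Since $\nabla f_1-\nabla f_2=-(R_1-R_2)$, the $\beta$-terms are common to both objectives. Write $g(\bx)=\beta(\langle \bx,\log\bx\rangle-\langle E^\top\bx,\log E^\top\bx\rangle-\langle R_0,\bx\rangle)$. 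Its Hessian is
\[
H_w=\beta\big(\Diag(1/\bx_w^*)-E\Diag(1/E^\top\bx_w^*)E^\top\big),
\]
and it is the same for both objectives.

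\textbf{Differentiating in $w$.} The implicit function theorem applies by Assumption~\ref{assump:LICQ}, with $J$ of full row rank (Appendix~\ref{app:rank_condition}). Differentiating gives
\[
H_w\dot\bx-(R_1-R_2)+J^\top\dot\lambda=0,\qquad J\dot\bx=0 .
\]
$H_w$ is only PSD on the whole space: conditional entropy is degenerate along per-state scalings. It is positive definite on $\ker J$ by hidden strong convexity, so I replace $H_w$ by $\widetilde H_w=H_w+cJ^\top J$. This does not change the system, since $J\dot\bx=0$, and it makes $\widetilde H_w$ invertible. This replacement is the main technical point: one must check that $\widetilde H_w\succ0$, which holds because $\widetilde H_w$ is PSD and any null vector $\mathbf z$ would need $J\mathbf z=0$ and $\mathbf z^\top H_w\mathbf z=0$, contradicting strong convexity on $\ker J$. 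Block elimination then gives
\[
\dot\bx=K_w(R_1-R_2),\qquad K_w=\widetilde H_w^{-1}-\widetilde H_w^{-1}J^\top(J\widetilde H_w^{-1}J^\top)^{-1}J\widetilde H_w^{-1}.
\]
$K_w$ is symmetric PSD.

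\textbf{Speed.} By the chain rule, $\tfrac{d}{dw}f_m(\bx_w^*)=\nabla f_m^\top\dot\bx$. Stationarity gives $\nabla f_1=-(1-w)(\nabla f_1-\nabla f_2)-J^\top\lambda_w$. Since $J\dot\bx=0$ and $\nabla f_1-\nabla f_2=-(R_1-R_2)$,
\[
\tfrac{d}{dw}f_1=(1-w)(R_1-R_2)^\top K_w(R_1-R_2),
\]
and similarly $\tfrac{d}{dw}f_2=-w(R_1-R_2)^\top K_w(R_1-R_2)$. Taking the Euclidean norm of the velocity yields $v(w)=\sqrt{(1-w)^2+w^2}\,\big|(R_1-R_2)^\top K_w(R_1-R_2)\big|$, which is the claimed formula.

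\textbf{Bandit case ($|\mathcal S|=1$).} Now $E=\mathbf 1$ and $J=(1-\gamma)\mathbf 1^\top$, and the problem is entropy-regularized maximization over the simplex. The KKT conditions give the softmax closed form directly: $\log\bu+\text{const}=R_0+\beta^{-1}(wR_1+(1-w)R_2)$. Rather than simplifying $K_w$ algebraically, I differentiate the softmax directly, using the Jacobian $\Diag(\bu)-\bu\bu^\top$:
\[
\dot\bu=\beta^{-1}\big(\Diag(\bu_w^*)-\bu_w^*(\bu_w^*)^\top\big)(R_1-R_2).
\]
Plugging this into the previous step (here $J\dot\bu=0$ automatically) gives \eqref{eq:bandit_speed}. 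The absolute value drops because the softmax covariance matrix is PSD. As a consistency check, $K_w$ equals this $\beta^{-1}$-scaled covariance, since both represent the inverse of the Hessian restricted to $\ker J$.
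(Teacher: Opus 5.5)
Your proposal is correct and takes essentially the same route as the paper: differentiate the KKT system, replace $H_w$ by $\widetilde H_w=H_w+cJ^\top J$ (valid since $J\dot\bx=0$), solve the system by the Schur complement, apply the chain rule using stationarity, and in the bandit case differentiate the softmax directly. One harmless slip: stationarity actually gives $\nabla f_1=(1-w)(\nabla f_1-\nabla f_2)-J^\top\lambda_w$, so $\tfrac{d}{dw}f_1=-(1-w)(R_1-R_2)^\top K_w(R_1-R_2)\le 0$ and $\tfrac{d}{dw}f_2=w(R_1-R_2)^\top K_w(R_1-R_2)\ge 0$, consistent with the monotonicity in Lemma~\ref{lem:fPFdifferentiable}; both of your signs are flipped, but this does not affect $v(w)$, which depends only on the norm.
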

The proof of Proposition~\ref{prop: speed for negative conditional entropy} is deferred to Appendix~\ref{appendix: proof of speed for negative conditional entropy}. 
Proposition~\ref{prop: speed for negative conditional entropy} shows that $\Farc$ can be established once the solutions $\{\bx_w^*\}_{w\in[0,1]}$ are known, while constructing $\Farc$ is meant to identify a finite set of weights $\{w_n\}_{n=0}^N$ to save the budget for finding the corresponding solutions. 
This creates a \textit{coupled dependence} between geometry estimation on the PF and scalarized optimization over $\bx_w^*$.
Fortunately, this issue disappears when $|\mathcal S|=1$, where $\bx_w^*$, and hence $v(w)$ and $\Farc(w)$, are available in closed form. Examples include the best-arm identification (BAI) problem in the bandit setting~\citep{bubeck2009pure,kaufmann2016complexity}, 
where each arm $a\in\mathcal A$ has an unknown mean reward vector $[R_1(a),R_2(a)]^\top$. While BAI for scalarized rewards is well studied~\citep{auer2002finite,yahyaa2015thompson,cheng2025multi}, PF coverage remains less explored~\citep{garivier2024sequential}. Proposition~\ref{prop: speed for negative conditional entropy} then yields the following instantiation of the plug-in PF-aware weighting Rule~\ref{prin:pf_aware_ls_weighting}.
\renewcommand{\theprinciple}{1 example} 
\begin{principle}
\label{prin:bandit}
 For the bandit problem, estimate reward vector $\hat R_1,\hat R_2$ to construct $\hat v(\cdot;\hat R_1,\hat R_2)$ by replacing $(R_1,R_2)$ with $(\hat R_1,\hat R_2)$ in \eqref{eq:bandit_speed} and similarly 
\vspace{-0.1cm}
\begin{align}
 \hat s(w;\hat R_1,\hat R_2) = \int_0^w \hat v(u;\hat R_1,\hat R_2) du, \quad \hatFarc(w;\hat R_1,\hat R_2) = \frac{\hat s(w;\hat R_1,\hat R_2)}{\hat s(1;\hat R_1,\hat R_2)}.
\end{align}
Then, apply Rule~\ref{prin:pf_aware_ls_weighting} with $\Farc$ replaced by $\hatFarc$ when choosing $\{w_n\}_{n=0}^N$.
\end{principle}
\addtocounter{principle}{-1} 
\renewcommand{\theprinciple}{\arabic{principle}}

\begin{wrapfigure}{r}{0.348\textwidth}
 \centering
 \vspace{-0.2cm}
 \includegraphics[width=0.315\textwidth]{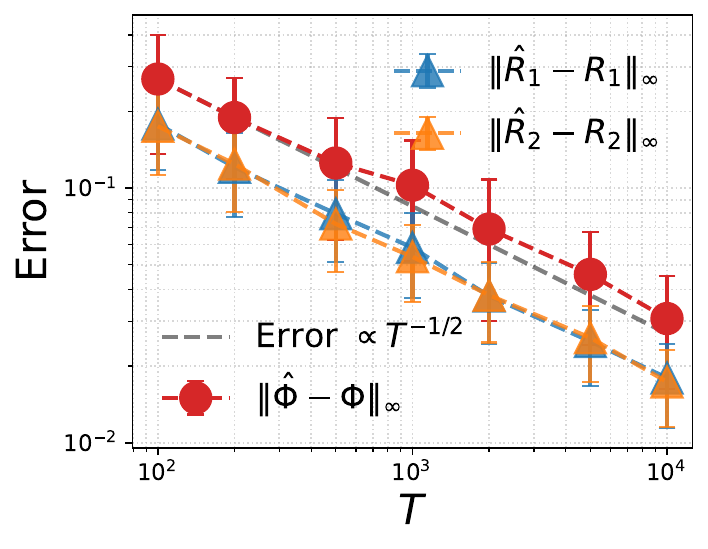}
 \vspace{-0.5cm}
 \caption{Error vs. number of pulls $T$ in bandit. Error bars show standard deviations across $100$ trials.
 }
\label{fig:error_conv}
\vspace{-0.5cm}
\end{wrapfigure}
This closed form makes the PF geometry explicit and provides a tractable testbed for the geometry behind SURF, as visualized in Figure~\ref{fig:density_mismatch}. Since the reward vector $(R_1, R_2)$ is unknown in practice, we estimate it from samples and plug the estimate into the closed-form $v$. The resulting $\hat\Phi$ is a consistent estimator of $\Phi$. The estimation error of $\hatFarc$ is driven solely by reward-estimation error and is therefore controlled by standard statistical bounds~\citep{auer2002finite,drugan2014scalarization,xu2023pareto}. Specifically, $\|\hatFarc-\Farc\|_\infty$ decays at rate $\Oc(T^{-1/2})$, where $T$ is the number of pulls used to estimate the reward vectors. This result is formalized in Appendix~\ref{appendix:bandit_R_and_Farc_err} and validated empirically in Figure~\ref{fig:error_conv}. This example also suggests a natural extension to scalarization-based MO-bandit frameworks~\citep{drugan2013designing,yahyaa2015thompson,cheng2025multi}: rather than drawing weights $w$ uniformly, one can sample $w$ according to Rule~\ref{prin:pf_aware_ls_weighting} with the estimated $\hatFarc$ in place of $\Farc$.
\vspace{-0.2cm}
\section{SURF: Algorithm for Sampling Uniformly along the Pareto Front}
\label{sec:main_result}
\vspace{-0.2cm}
The previous section shows that Rule~\ref{prin:pf_aware_ls_weighting} achieves uniform PF coverage when $\Farc$ or its estimate (cf. Rule~\ref{prin:bandit}) is available. In general, however, $\Farc$ is not explicitly known. To apply Rule~\ref{prin:surf_general} in this setting, we propose SURF, an iterative refinement algorithm for \textbf{S}ampling \textbf{U}niformly along the Pa\textbf{r}eto \textbf{F}ront (SURF). 
At a high level, SURF refines the estimate of $\Farc$ while improving the scalarized solutions associated with the target weights $w_n=\Farc^{-1}(n/N)$.

Specifically, at outer iteration $t$, SURF uses the current CDF estimate $\Farc_t$ to allocate weights $w_n^{(t)} = \Farc_t^{-1}\left(\frac{n}{N}\right)$ with initialization $\Farc_0(w)=w$ to start with the standard uniform weighting baseline. If $\Farc_t=\Farc$, then $w_n^{(t)}$ would produce uniform PF coverage. For each sampled weight $w_n^{(t)}$, SURF then approximately solves the corresponding scalarized subproblem
\begin{align}
\bu_n^{(t)}
\approx
\arg\min_{\bu\in\U}\LS_h(\bu;w_n^{(t)})
\quad
\textcolor{gray}{\text{or equivalently }
\bx_n^{(t)}
\approx
\arg\min_{\bx\in\X}\LS_f(\bx;w_n^{(t)})}. \label{eq:inner_objective}
\end{align}
Here, any off-the-shelf \textsc{InnerSolver} can be used, depending on the application, e.g., Adam for loss minimization~\citep{kingma2014adam} and PPO for RL~\citep{schulman2017proximal}. Importantly, \textsc{InnerSolver} does not need to achieve high precision. Rather, finite $K$-step updates with warm-start from the previous iterate $\bu_n^{(t-1)}$ suffice to achieve fast convergence. The resulting PF estimates $\{\h(\bu_n^{(t)})\}_{n=0}^N$ \textcolor{gray}{or equivalently $\{\f(\bx_n^{(t)})\}_{n=0}^N$} are then used to build estimates for $s$ at the sampled weights $\{w_n^{(t)}\}_{n=0}^N$ via cumulative chord length
\begin{align}
\tilde s^{(t)}(w_0^{(t)})&:=0,~
\tilde s^{(t)}(w_{n+1}^{(t)}):=
\tilde s^{(t)}(w_n^{(t)})
+
\big\|
\h(\bu_{n+1}^{(t)})-\h(\bu_n^{(t)})
\big\|,
\quad n=0,\cdots,N-1.
\label{eq:sw_tilde}
\end{align}
Then, SURF interpolates these values to construct $\tilde s^{(t)}$ on $[0,1]$, for example, Piecewise Cubic Hermite Interpolating Polynomial (PCHIP)~\citep{fritsch1980monotone}. Compared with naive piecewise-linear interpolation, PCHIP remains simple to implement, is available in standard numerical libraries, preserves monotonicity, and provides more favorable Lipschitz-smoothness properties, as discussed in the next section. These features are useful both in practice and in the convergence analysis.
In this way, an empirical estimate of $\Farc$ is established via $\tilde \Farc_t(w):=\frac{\tilde s^{(t)}(w)}{\tilde s^{(t)}(1)}$ and 
SURF then updates the CDF estimate via
\begin{equation}
\Farc_{t+1}(w)=\alpha \tilde \Farc_t(w)+(1-\alpha)\Farc_t(w),
\label{eq:surf_update}
\end{equation}
where a damping factor $\alpha$ is especially useful for inexact \textsc{InnerSolver} to stabilize the update.

\begin{algorithm}[t]
\caption{SURF: \textbf{S}ampling \textbf{U}niformly along the Pa\textbf{r}eto \textbf{F}ront
}
\label{alg:SURF}
\begin{algorithmic}[1]
\Require Number of segments $N$, initial CDF $\Farc_0(w)=w$, target quantiles $\{n/N\}_{n=0}^N$, damping factor $\alpha$,
a finite $K$-step inner solver \textsc{InnerSolver} for $\min_{\bu\in \U} \LS_h(\bu;w)$ with initial iterates $\{\bu_n^{(-1)}\}_{n=0}^N$
\textcolor{gray}{or for $\min_{\bx\in \X} \LS_f(\bx;w)$ with initial iterates $\{\bx_n^{(-1)}\}_{n=0}^N$}.
\For{$t = 0,1,2,\cdots, T$}
 \State Find $w_n^{(t)} = \Farc_t^{-1}(\frac{n}{N})$, for all $n=0,1,\cdots,N$ \label{algline: outerloop begin}
 \For{$n=0,1,\cdots,N$} \label{algline: innerloop begin}
 \State Update $\bu_n^{(t)} = \textsc{InnerSolver}\left(w_n^{(t)}; \bu_{\text{init}}=\bu_n^{(t-1)}\right)$ \textcolor{gray}{ or $\bx_n^{(t)}$ accordingly.}
 \State Save $\mathbf{h}(\bu_n^{(t)})$ \textcolor{gray}{ or $\mathbf{f}(\bx_n^{(t)})$ equivalently}. 
 \EndFor \label{algline: innerloop end}

 \State Build $\tilde{s}^{(t)}(w_n)$ via \eqref{eq:sw_tilde}
 and $\tilde s^{(t)}:[0,1]\rightarrow \mathbb{R}_+$ via interpolation (e.g., Algorithm~\ref{alg:PCHIP}). 
 \State Construct $\tilde \Farc_{t}(w)=\frac{\tilde s^{(t)}(w)}{\tilde s^{(t)}(1)}$ and update $\Farc_{t+1}(w) = \alpha \tilde{\Farc}_{t}(w) + (1-\alpha)\Farc_t(w),\forall w\in [0,1]$.
 \label{algline: outerloop end}
\EndFor
\Ensure Refined CDF $\Farc_{T+1}$ and PF samples $\{(\bu_n^{(T)},\h(\bu_n^{(T)}))\}_{n=0}^N$.
\end{algorithmic}
\end{algorithm}

\paragraph{Convergence analysis.} 
We analyze Algorithm~\ref{alg:SURF} by separating two sources of error: the \emph{discretization error} from reconstructing $s$ using only $(N+1)$ PF samples, and the \emph{optimization error} from solving the scalarized subproblems \eqref{eq:inner_objective} only approximately.
We start with the exact-\textsc{InnerSolver} case that isolates the discretization effect: even when the scalarized solutions are exact, $\bu_n^{(t)}=\bu^*_{w_n^{(t)}}$ \textcolor{gray}{or equivalently $\bx_n^{(t)}=\bx^*_{w_n^{(t)}}$}, estimating the PF geometry from only $(N+1)$ sampled points introduces an inevitable finite-sampling floor. 
We then consider the optimization error induced by running \textsc{InnerSolver} for only $K$ steps. To quantify this perturbation, we characterize the convergence rate of \textsc{InnerSolver} via $\eta_K$, a $K$-step contraction factor such that, for any weight $w\in[0,1]$ and initialization $\tilde\bx_0$, its output $\tilde\bx_K$ satisfies $\|\tilde\bx_K-\bx_w^*\|\le \eta_K\|\tilde\bx_0-\bx_w^*\|$, where $\eta_K\downarrow0$ as $K\to\infty$. 
For linearly convergent \textsc{InnerSolver}, e.g., $K$-step projected gradient descent~\citep{fatkhullin2025stochastic}, one may take $\eta_K=\bigl(1-{\mu}/{l_{f,1}}\bigr)^K$, where $l_{f,1}$ is valid along the entire projected gradient descent trajectory; see Appendix~\ref{app:PGD_trajectory}.
For sublinearly convergent methods, hidden strong convexity converts objective-gap or stationarity guarantees into hidden-space distance bounds~\citep{fatkhullin2025stochastic}, so for sufficiently large $K$ the same characterization can be instantiated with $\eta_K=\Oc(K^{-p})$ for some $p>0$. The following theorem shows the convergence of SURF to the finite-sampling floor, with an additional coupled term accounting for inner optimization error, where we
let $C(N)$ to be a scalar depending on $N$, denote $\|\Farc_t-\Farc\|_\infty=\sup_{w\in[0,1]}
\left|\Farc_t(w)-\Farc(w)\right|$ and define the convergence measure to be
\begin{equation}
\mathcal E_t \big(\Phi_t, \{w_n^{(t)}\}, \{\mathbf{x}_n^{(t)}\} \big) := 
\underbrace{\|\Farc_t-\Farc\|_\infty}_{\text{CDF estimation error}}
+C(N)\underbrace{\max_{0 \leq n\leq N}\|\bx_n^{(t)}-\bx_{w_n^{(t)}}^*\|}_{\text{inner optimization error}}.     
\end{equation}
\vspace{-0.5cm}
\begin{leftrulebox}
\begin{theorem}[Convergence of SURF]
\label{thm:cdf}
Suppose Assumptions~\ref{assump:hidden_regular},~\ref{assump:LICQ},~\ref{assump:tangent_gap} hold. Then, Algorithm~\ref{alg:SURF} generates $\{\Farc_t\}$ and $\{\tilde\Farc_t\}$ that are uniformly $l_{\Farc_{\rm iter},1}$-Lipschitz-smooth for all $t\geq 0$. Let $\kappa := \frac{4\sqrt{2}v_{\max}}{v_{\min}}\sqrt{l_{\Farc_{\rm iter},1}+l_{\Farc,1}}$, and $N=\Omega(\kappa^{1.5})$.
For general \textsc{InnerSolver} with budget specified in Remark~\ref{remark:Kchoice}, SURF with $\alpha \leq 0.5$ admits   $C(N)=\Theta(N)$ such that, for all $t\ge0$,
\begin{align}\label{eq:inexact_SURFconvergence}
& \mathcal E_t\big(\Phi_t, \{w_n^{(t)}\}, \{\mathbf{x}_n^{(t)}\}\big) 
\leq  \left(1-\frac{\alpha}{4}\right)^t 
\mathcal E_0\big(\Phi_0, \{w_n^{(0)}\}, \{\mathbf{x}_n^{(0)}\}\big)
+\underbrace{\Oc(N^{-2})}_{\text{discretization floor}}. 
\end{align}
\end{theorem}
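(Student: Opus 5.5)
We first establish the exact-solver contraction of the CDF estimation error, then treat the inner solver's error as a perturbation and close a coupled two-term recursion.

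\textbf{Step 1: uniform regularity of the iterates.} We argue by induction on $t$ that every $\Farc_t$ and $\tilde\Farc_t$ is strictly increasing and $l_{\Farc_{\rm iter},1}$-Lipschitz-smooth. Monotonicity comes from two facts. First, the cumulative chord lengths in \eqref{eq:sw_tilde} are nondecreasing. Second, PCHIP preserves monotone data \citep{fritsch1980monotone}. The slopes of the chords are ratios $\|\fPF(w_{n+1})-\fPF(w_n)\|/(w_{n+1}-w_n)$. By Lemma~\ref{lem:v_bound_from_tangent_gap} these lie in $[v_{\min},v_{\max}]$ up to an $\Oc(N^{-1})$ correction from the smoothness of $\fPF$. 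PCHIP derivatives are bounded by a constant multiple of the neighboring secant slopes, and its second derivative is controlled by differences of consecutive slopes over the local mesh width. Together these give a $t$-independent smoothness constant $l_{\Farc_{\rm iter},1}$ once the mesh is quasi-uniform. Quasi-uniformity holds because $w_{n+1}^{(t)}-w_n^{(t)}\asymp 1/N$ whenever $\Farc_t'$ is bounded above and below. The convex combination \eqref{eq:surf_update} preserves all of these bounds, which closes the induction.

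\textbf{Step 2: exact-solver contraction.} Write $e_t=\|\Farc_t-\Farc\|_\infty$. Split $\tilde\Farc_t-\Farc$ into a discretization part and a placement part. The discretization part is the chord-versus-arc error plus the PCHIP interpolation error. With mesh width $\Oc(1/N)$ and the smoothness from Step 1, we expect this to be $\Oc(N^{-2})$. The placement part arises because the nodes $w_n^{(t)}=\Farc_t^{-1}(n/N)$ are not the ideal nodes. Here we use that the reconstruction $\tilde\Farc_t$ targets $\Farc$ itself, not $\Farc_t$, so the node locations only affect the size of the interpolation error. The aim is the bound $\|\tilde\Farc_t-\Farc\|_\infty\le \tfrac12 e_t+\Oc(N^{-2})$. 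The factor $\tfrac12$ should come from the condition $N=\Omega(\kappa^{1.5})$: the node-dependent error scales like $\kappa^2 N^{-4/3}$-type terms times $e_t$, and this condition makes that coefficient at most $\tfrac12$. Then \eqref{eq:surf_update} gives
\begin{align*}
e_{t+1}\le (1-\alpha)e_t+\alpha\bigl(\tfrac12 e_t+\Oc(N^{-2})\bigr)=\bigl(1-\tfrac{\alpha}{2}\bigr)e_t+\Oc(N^{-2}).
\end{align*}

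\textbf{Step 3: inexact solver.} Let $\delta_t=\max_n\|\bx_n^{(t)}-\bx^*_{w_n^{(t)}}\|$. Since $\nabla f_m$ is Lipschitz on $\mathcal K$, each perturbed chord changes by $\Oc(\delta_t)$. The normalized chord sums therefore change by $\Oc(N\delta_t)$ in the worst case, which explains the weight $C(N)=\Theta(N)$. This adds $\alpha\,\Oc(N\delta_t)$ to the bound for $e_{t+1}$.

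For the inner error, the warm start gives
\begin{align*}
\delta_{t+1}\le \eta_K\bigl(\delta_t+\|\bx^*_{w_n^{(t)}}-\bx^*_{w_n^{(t+1)}}\|\bigr).
\end{align*}
The weight shift satisfies $|w_n^{(t+1)}-w_n^{(t)}|\le \|\Farc_{t+1}-\Farc_t\|_\infty/v_{\min}'$, where $v_{\min}'$ is the uniform lower bound on the slopes of the $\Farc_t$. Moreover $\|\Farc_{t+1}-\Farc_t\|_\infty\le \alpha\bigl(\|\tilde\Farc_t-\Farc\|_\infty+e_t\bigr)$, and $w\mapsto\bx_w^*$ is Lipschitz by the implicit function theorem under Assumption~\ref{assump:LICQ}. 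Choosing $K$ as in Remark~\ref{remark:Kchoice} makes $\eta_K N$ small enough that the combined quantity $\mathcal E_t$ contracts by the factor $1-\alpha/4$, up to an $\Oc(N^{-2})$ additive floor. The condition $\alpha\le 0.5$ keeps the cross terms dominated. Unrolling this recursion yields \eqref{eq:inexact_SURFconvergence}.

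\textbf{Main obstacle.} The hardest part is Step 2: showing the placement error is a strict fraction of $e_t$ rather than only bounded. Our first approach is to express the interpolation error through local second derivatives on a mesh whose widths are controlled by $\Farc_t'$. We would then relate mesh nonuniformity to $e_t$ through the smoothness constants $l_{\Farc_{\rm iter},1}+l_{\Farc,1}$, which enter $\kappa$.
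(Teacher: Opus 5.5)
Your Steps~1 and~3 follow the paper's structure closely: inductive PCHIP regularity on a quasi-uniform mesh, an $\Oc(N\delta_t)$ chord perturbation motivating $C(N)=\Theta(N)$, and a coupled recursion through the weight shift and the Lipschitz map $w\mapsto\bx_w^*$. The gap is Step~2, the only place where a contraction is produced. It is announced rather than proved. The mechanism you sketch, interpolation error on a mesh controlled by $\Farc_t'$, is also not where the $\kappa$-dependent multiple of $e_t$ comes from.

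In the paper, the node-placement effect that must be made proportional to $e_t$ lives in the chord--arc defect. On each segment, $0\le \Delta s_n-\|\fPF(w_{n+1}^{(t)})-\fPF(w_n^{(t)})\|\le \frac16 l_{\tilde{\f}_{\rm PF},1}^2(\Delta s_n)^3$, where $\Delta s_n=s(w_{n+1}^{(t)})-s(w_n^{(t)})$. Write $\Delta s_n=s(1)\bigl(\frac1N+d_n\bigr)$ with $d_n=\psi_t(\frac{n+1}{N})-\psi_t(\frac nN)$ and $\psi_t(q):=\Farc(\Farc_t^{-1}(q))-q$. The linear term in $\sum_n(\Delta s_n)^3$ then cancels, because $\sum_n d_n=\psi_t(1)-\psi_t(0)=0$. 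So the nodal error is at most $\frac16 l_{\tilde{\f}_{\rm PF},1}^2 s(1)^3\bigl(\frac1{N^2}+3\max_n d_n^2+N\max_n|d_n|^3\bigr)$, and the excess over equal spacing is second order in the spacing defect.

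The paper then needs $d_n$ to be both $\Oc(1/N)$ and vanishing with $e_t$. It uses Landau's inequality $\|\Farc'-\Farc_t'\|_\infty\le 2\sqrt2\sqrt{(l_{\Farc,1}+l_{\Farc_{\rm iter},1})\,e_t}$, which is where Step~1's uniform smoothness is genuinely needed. Together with $\Farc_t'\ge\frac{v_{\min}}{2\sqrt2\,v_{\max}}$ this gives $|d_n|\lesssim\frac{\kappa}{N}\sqrt{e_t}$. The placement term is therefore $\Oc\bigl((3\kappa^2+\kappa^3)e_t/N^2\bigr)$ (using $e_t\le1$), and $N=\Omega(\kappa^{1.5})$ is exactly what makes its coefficient at most $\frac14$. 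The $\sqrt{e_t}$-dependence of the interpolation constants is absorbed by Young's inequality to reach $\frac12 e_t$. Your $\kappa^2N^{-4/3}$ coefficient does not arise anywhere in this computation.

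Your framing of Step~2 is also internally inconsistent. If node positions only affect constants, then Step~1's $t$-uniform mesh bounds $w_{n+1}^{(t)}-w_n^{(t)}\asymp 1/N$ already give $\|\tilde\Farc_t-\Farc\|_\infty\le CN^{-2}$ uniformly in $t$; this is the paper's bound with $e_t\le 1$ inserted. That yields $e_{t+1}\le(1-\alpha)e_t+\alpha CN^{-2}$ with no $\frac12 e_t$ needed. This cheap route does give a bound of the stated form. However, $C$ then carries the full $\Oc(\kappa^3)$ first-reconstruction constant, and the condition $N=\Omega(\kappa^{1.5})$ is never used. The cancellation-plus-Landau argument is what moves that $\kappa$-dependence into a term that vanishes as $e_t\to0$.

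Two smaller points remain. First, PCHIP's monotonicity preservation only gives $\tilde\Farc_t'\ge0$. The uniform lower bound on $\tilde\Farc_t'$ that your quasi-uniformity argument needs comes from the slope-variation estimate plus a largeness condition on $N$; the paper uses $N\ge16\frac{v_{\max}^2}{v_{\min}^2}\frac{1}{\tilde s^{(t)}(1)}(l_{v,0}+\frac12 l_{\fPF,1})$. Second, in Step~3 that regularity was derived from exact chord data. You must keep $\delta_t=\Oc\bigl(\frac{v_{\min}^2}{v_{\max}}N^{-1}\bigr)$ at every $t$ so the perturbed slopes stay in the positive-slope PCHIP regime. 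That regime is also what gives the $\Oc(N\delta_t)$ sup-norm stability of the interpolant. The paper enforces this by taking $C(N)\ge\frac{v_{\max}}{v_{\min}^2}N$, so that the bound on $\mathcal E_t$ itself keeps $\delta_t$ inside that tube.
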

\end{leftrulebox}
\begin{remark}[Inner-solver budget]
\label{remark:Kchoice}
If \textsc{InnerSolver} returns exact solutions, then the inner optimization error times $C(N)$ is zero.
Otherwise, consider an \textsc{InnerSolver} with $K$-step contraction factor $\eta_K$. We choose $K$ so that $\eta_K=\Oc\big({s(1)}/{N}\big)$, that is $K=\Omega\big(\log ({Nv_{\max}}{v_{\min}^{-2}}) \big)$ for linear convergent solver and $K=\Omega\big(\big( {N}/{s(1)}\big)^{1/p}\big)$ for sublinearly convergent one with $\eta_K=\Oc(K^{-p})$ for some $p>0$.
Then the inner-solver perturbs the CDF reconstruction by a summation of $N$ chord length error, so $C(N) = \Theta(N)$ is the natural amplification factor.
Alternatively, one may use a small outer stepsize $\alpha=\Oc(N^{-1})$, in which $\eta_K=\Oc(1)$ is enough after a warmup that ensures $\max_n\|\bx_n^{(0)}-\bx_{w_n^{(0)}}^*\|=\Oc\left({v_{\min}^2}v_{\max}^{-1}N^{-1}\right)$, as detailed in Theorem~\ref{thm:warm_started_two_regimes} in Appendix~\ref{appendix:cdf_inexact}.
\end{remark}
The proof of Theorem~\ref{thm:cdf} and the constants in $\Oc(\cdot)$ and $\Omega(\cdot)$ are made explicit in Appendix~\ref{app:cdf_refinement}. 
Theorem~\ref{thm:cdf} guarantees that SURF drives the CDF estimation error to an $\Oc(N^{-2})$ finite-sampling discretization floor, matching the standard second-order accuracy of polyline-based reconstruction of smooth curves~\citep{suli2003introduction}. In contrast, even with an exact \textsc{InnerSolver}, the first-step reconstruction $\tilde \Farc_0$ can carry an $\Oc(\kappa^3N^{-2})$ error, where the $\kappa$ factor comes from uneven clustering on the PF, as theoretically discussed in Lemma~\ref{lem:s_tilde_combined_bound} in Appendix~\ref{appendix:cdf_exact} and empirically verified in Appendix~\ref{app:toy}. In Appendix~\ref{app:SURV_CV}, we further relate CDF reconstruction error to a PF uniformity metric measuring the relative deviation of sampled PF segments from equal spacing, \textit{Coefficient of Variation} (CV)~\citep{everitt2002cambridge}, as
\vspace{-0.1cm}
\begin{leftrulebox}
\vspace{-0.1cm}
\begin{equation}\label{eq:cv_e_t}
\text{CV}:=\tfrac{\mathrm{std}\big(\{\|\h(\bu_{n+1}^{(t)})-\h(\bu_{n}^{(t)})\|\}_{n=0}^{N-1} \big)}{\mathrm{mean}\big(\{\|\h(\bu_{n+1}^{(t)})-\h(\bu_{n}^{(t)})\|\}_{n=0}^{N-1} \big)}
=\Oc\big(N^{-1}+N\|\Farc_t-\Farc\|_\infty\big)
=\Oc(N^{-1}),
\end{equation}
\end{leftrulebox}
at sufficiently large outer iteration $t=\Omega(\log N)$. Equation~\eqref{eq:cv_e_t} links CDF refinement accuracy to PF spacing uniformity, showing that SURF improves uniform PF coverage.

Remark~\ref{remark:Kchoice} illustrates that SURF does not require each scalarized subproblem to be solved to high precision at every outer iteration. To achieve $\|\Farc_T-\Farc\|_\infty=\Oc(N^{-2})$ with a linearly convergent inner solver~\cite{fatkhullin2025stochastic}, SURF requires $T=\Oc(\log N)$ outer iterations and per-iteration inner complexity $\Oc(N\log N)$, giving total complexity $\Oc(N(\log N)^2)$. At the same time, SURF yields $\max_n\|\bx_n^{(T)}-\bx_{w_n^{(T)}}^*\|=\Oc(N^{-3})$. In contrast, uniform weighting requires $\Oc(N\log(N^3))=\Oc(N\log N)$ total complexity for $N$ to reach the same inner accuracy, while providing no uniform PF-coverage guarantee. Alternatively, the small-stepsize regime $\alpha=\Oc(N^{-1})$ in Remark~\ref{remark:Kchoice} relaxes the required inner-solver contraction, making it especially suitable for sublinearly convergent \textsc{InnerSolver}.
In practice, with a moderate choice of $N=5$ and $\alpha=0.3$ in the LLM alignment task in Section~\ref{sec:exp}, SURF achieves efficiency comparable to the uniform scalarization baseline under the same training budget, while significantly improving PF uniformity.

\vspace{-0.2cm}
\section{Experiments}
\label{sec:exp}
\vspace{-0.2cm}
In this section, we empirically verify the effectiveness of SURF against \textbf{baselines} including \textit{uniform weighting} with $w_n=\frac{n}{N}$, \textit{Optimistic Linear Support (OLS)}~\citep{barrett2008learning} which aims to identify supported PF solutions, \textit{Uniform Multi-Objective optimization based on Decomposition (UMOD)}~\citep{zhang2024gliding} Algorithm that targets uniform PF coverage, and inference-time weighted model merging named \textit{reward soup (Soup)}~\citep{rame2023rewarded}. For budget fairness, all optimization-based methods are given the same total number of inner iterations.
We use \textbf{evaluation metrics} including \textit{Hypervolume} (HV)~\citep{zitzler1999multiobjective} for PF approximation quality, and \textit{Inverted Generational Distance} (IGD)~\citep{bogoya2019averaged}, 
\textit{Gap Ratio}~$=\frac{\max_n \|\h(\bu_{n+1}^{(t)})-\h(\bu_{n}^{(t)})\|}{\min_n \|\h(\bu_{n+1}^{(t)})-\h(\bu_{n}^{(t)})\|}$~\citep{roijers2013survey}, and CV~\citep{everitt2002cambridge} for PF coverage and uniformity. 
We conduct experiments on MO Gymnasium benchmarks~\citep{felten_toolkit_2023} in Section~\ref{sec:mogym} and an LLM alignment task for Reddit summarization~\citep{volske2017tl} in Section~\ref{sec:llm}. 
Additional details for the experiments  and toy examples are provided in Appendix~\ref{app:exp_details}. 

\vspace{-0.2cm}
\subsection{MO-Gymnasium problems}
\label{sec:mogym}
\vspace{-0.2cm}

\begin{table*}[t]
\centering
\renewcommand{\arraystretch}{1.15}
\resizebox{0.98\textwidth}{!}{%
\begin{tabular}{ll|cC{2.329cm}C{2.329cm}C{2.329cm}G{2.329cm}}
\toprule
\textbf{Tasks} & \textbf{Metric} & \textbf{Soup} & \textbf{Uniform-$w$} & \textbf{UMOD} & \textbf{OLS} & \textbf{Ours} \\
\midrule

\multirow{4}{*}{DST}
& HV ($\times 10^{-3}$) $\uparrow$
& 3.10 $\pm$ 0
& 1.83 $\pm$ 0
& 3.85 $\pm$ 0.00
& 3.69 $\pm$ 0
& \textbf{3.87 $\pm$ 0} \\

& IGD ($\times 10^{-2}$) $\downarrow$
& 0.68 $\pm$ 0
& 4.51 $\pm$ 0
& 0.23 $\pm$ 0.00
& 0.41 $\pm$ 0
& \textbf{0.21 $\pm$ 0} \\

& CV ($\times 10^{-1}$) $\downarrow$
& \textit{0.09 $\pm$ 0}
& 10.91 $\pm$ 0
& 3.17 $\pm$ 0.05
& 9.87 $\pm$ 0
& \textbf{0.81 $\pm$ 0} \\

& Gap Ratio $\downarrow$
& \textit{1.04 $\pm$ 0}
& 17.16 $\pm$ 0
& 2.41 $\pm$ 0.13
& 26.85 $\pm$ 0
& \textbf{1.45 $\pm$ 0} \\
\midrule

\multirow{4}{*}{Fishwood}
& HV ($\times 10^{-1}$) $\uparrow$ 
& 2.39 $\pm$ 0
& \textbf{2.40 $\pm$ 0}
& 2.39 $\pm$ 0.00
& 2.39 $\pm$ 0
& 2.39 $\pm$ 0 \\

& IGD ($\times 10^{-2}$) $\downarrow$
& 0.91 $\pm$ 0
& 0.91 $\pm$ 0
& 0.92 $\pm$ 0.00
& 0.96 $\pm$ 0
& \textbf{0.90 $\pm$ 0} \\

& CV ($\times 10^{-1}$) $\downarrow$
& 2.00 $\pm$ 0
& 1.94 $\pm$ 0
& 2.33 $\pm$ 0.17
& 3.57 $\pm$ 0
& \textbf{0.26 $\pm$ 0} \\

& Gap Ratio $\downarrow$
& 1.90 $\pm$ 0
& 1.75 $\pm$ 0
& 2.14 $\pm$ 0.14
& 2.49 $\pm$ 0
& \textbf{1.12 $\pm$ 0} \\
\midrule

\multirow{4}{*}{MO-Mountaincar}

& HV ($\times 10^{3}$) $\uparrow$

& 0.33 $\pm$ 0.20

& 0.89 $\pm$ 0.27

& 0.99 $\pm$ 0.17

& 0.35 $\pm$ 0.30

& \textbf{1.11 $\pm$ 0.23} \\

& IGD $\downarrow$

& 9.45 $\pm$ 1.02

& 6.41 $\pm$ 0.88

& 17.28 $\pm$ 10.56

& 23.77 $\pm$ 6.63

& \textbf{6.00 $\pm$ 0.72} \\

& CV ($\times 10^{-1}$) $\downarrow$

& 12.32 $\pm$ 5.72

& 7.25 $\pm$ 1.33

& 10.06 $\pm$ 1.57

& 6.54 $\pm$ 3.61

& \textbf{4.41 $\pm$ 2.03} \\

& Gap Ratio $\downarrow$

& 489.94 $\pm$ 401.29

& 74.96 $\pm$ 97.16

& 10.28 $\pm$ 3.89

& 9.47 $\pm$ 10.66

& \textbf{3.75 $\pm$ 3.75} \\

\bottomrule
\end{tabular}%
}
\caption{MO-Gymnasium~\citep{felten_toolkit_2023} results comparing PF approximation quality and uniform-coverage metrics. We use $N=15$ for DST, $N=11$ for Fishwood, and $N=5$ MO-Mountaincar. Results are reported as mean$\pm$standard deviation over 8 realizations.
Zero standard deviation represents deterministic experiments without randomness.}
\label{tab:main_comparison}
\vspace{-0.3cm}
\end{table*}

Table~\ref{tab:main_comparison} summarizes the results on three MO-Gymnasium~\citep{felten_toolkit_2023} tasks. 
Appendix~\ref{app:modgym} provides details on the environment, condition checks of the occupancy-measure, and implementation details.

\begin{wrapfigure}{r}{0.33\textwidth}
 \centering
 \includegraphics[width=0.325\textwidth]{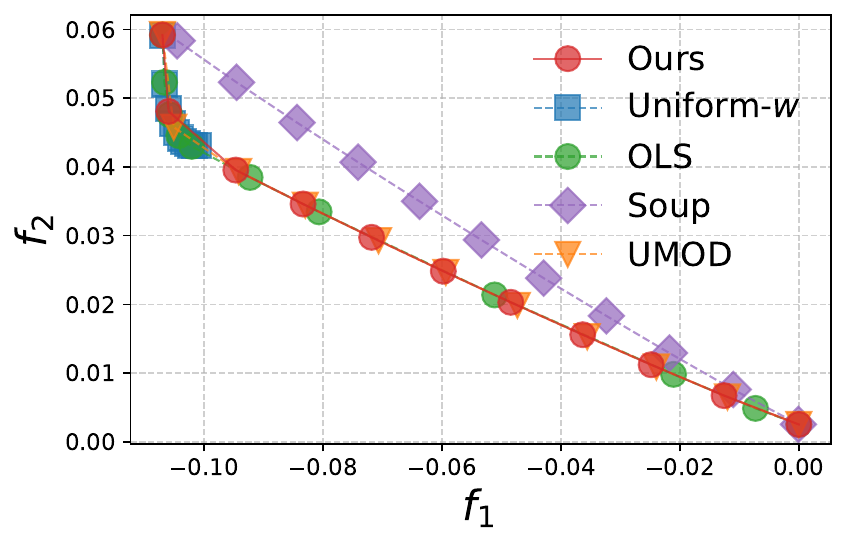}
 \vspace{-0.8cm}
 \caption{PFs obtained by SURF and baselines on DST.}
 \vspace{-0.3cm}
 \label{fig:dst_main_result}
\end{wrapfigure}

\textbf{DST.}
DST~\citep{vamplew2011empirical,yahyaa2015thompson} is a classical bi-objective RL benchmark, where the agent trades off treasure value and travel time. 
We evaluate SURF Algorithm~\ref{alg:SURF} with $30$ outer iterations and $\alpha=0.3$. 
Figure~\ref{fig:dst_main_result} shows that SURF produces more uniform PF coverage than the optimization-based baselines. 
Compared with uniform weighting, SURF improves HV by $2.11\times$ and reduces IGD, CV, and Gap Ratio by $21.5\times$, $13.5\times$, and $11.8\times$, respectively. 
SURF also outperforms UMOD, another baseline explicitly designed for uniform PF coverage, with slightly higher HV, lower IGD, and substantially better spacing, reducing the Gap Ratio from $2.41$ to $1.45$. 
Although the Soup method yields uniform spacing,  the solutions are suboptimal and not on the PF, as indicated by the visualization in Figure~\ref{fig:dst_main_result}.
In Appendix~\ref{app:dst_setup}, we additionally visualize the DST policies learned by different methods through their induced trajectories, and provide ablations on $N$, $\alpha$, and $K$, together with convergence-time analysis showing that UMOD is slower than SURF.

\textbf{Fishwood.} The Fishwood task trades off fishing and wood-collection rewards~\citep{roijers2018multi}. We apply SURF with $\alpha=0.3$ for $15$ outer iterations. Additional setup details are provided in Appendix~\ref{app:fishwood_setup}.
As shown in Table~\ref{tab:main_comparison}, all methods achieve nearly identical HV and IGD, indicating similar PF approximation quality. 
However, SURF substantially improves spacing uniformity, reducing CV from the best baseline value $1.94$ to $0.26$ and Gap Ratio from $1.75$ to $1.12$, corresponding to $7.5\times$ and $6.3\times$ improvements, respectively.

\textbf{MO-Mountaincar.}
The MO-Mountaincar problem is a multi-objective variant of the classical Mountaincar control task, where an underpowered car must build momentum by moving back and forth in a valley to reach the goal near the top of the right hill~\citep{Moore90efficient}. In this setting, the agent faces a trade-off between fast completion and action efficiency, induced by a per-step time penalty and an action penalty. Additional details are provided in Appendix~\ref{app:mountaincar}.
As shown in Table~\ref{tab:main_comparison}, SURF achieves the best overall performance, improving HV from the best baseline value $0.99$ to $1.11$ and reducing IGD from $6.41$ to $6.00$.
It also improves spacing uniformity, reducing CV from the best baseline value $6.54$ to $4.41$ and Gap Ratio from $9.47$ to $3.75$.
These results indicate that SURF improves both PF approximation quality and uniform coverage in this task.

\vspace{-0.2cm}
\subsection{LLM alignment on Reddit summarization task} 
\label{sec:llm}
\vspace{-0.2cm}
To test SURF beyond standard MORL benchmarks, we apply it to a stochastic LLM alignment problem on Reddit summarization~\citep{volske2017tl,stiennon2020learning}. 
We fine-tune \texttt{Qwen/Qwen2.5-0.5B-Instruct}~\citep{qwen2025qwen25technicalreport} with PPO~\citep{schulman2017proximal}, while keeping the pretrained backbone frozen and updating only parameter-efficient finetuning (PEFT) adapter parameters via LoRA~\citep{hu2022lora}. 
The two objectives are induced by a summarization-quality reward model and a factual-faithfulness reward model, together with a fixed KL penalty to the reference policy with $\beta = 0.05$. 
Full training details are deferred to Appendix~\ref{app:llm_ft}.

\begin{wraptable}{r}{0.538\textwidth}
\vspace{-0.525cm}
\centering
\renewcommand{\arraystretch}{1.15}
\resizebox{0.52\textwidth}{!}{%
\begin{tabular}{l|C{2.1cm}C{2.1cm}G{2.1cm}}
\toprule
\textbf{Metric} 
& \textbf{Soup} 
& \textbf{Uniform-$w$} 
& \textbf{Ours} \\
\midrule

HV $\uparrow$
& \begin{tabular}{@{}c@{}}0.75 $\pm$ 0.30\\ (0.95 $\pm$ 0.30)\end{tabular}
& \begin{tabular}{@{}c@{}}3.76 $\pm$ 0.31\\ (4.00 $\pm$ 0.30)\end{tabular}
& \begin{tabular}{@{}c@{}}\textbf{4.03 $\pm$ 0.35}\\ (\textbf{4.24 $\pm$ 0.35})\end{tabular}
\\

CV $\downarrow$
& \begin{tabular}{@{}c@{}}1.69 $\pm$ 0.26\\ (1.69 $\pm$ 0.26)\end{tabular}
& \begin{tabular}{@{}c@{}}0.65 $\pm$ 0.12\\ (0.65 $\pm$ 0.12)\end{tabular}
& \begin{tabular}{@{}c@{}}\textbf{0.30 $\pm$ 0.17}\\ (\textbf{0.29 $\pm$ 0.17})\end{tabular}
\\

Gap Ratio $\downarrow$
& \begin{tabular}{@{}c@{}}161.20 $\pm$ 15.27\\ (159.83 $\pm$ 15.16)\end{tabular}
& \begin{tabular}{@{}c@{}}5.89 $\pm$ 1.15\\ (6.01 $\pm$ 1.21)\end{tabular}
& \begin{tabular}{@{}c@{}}\textbf{2.93 $\pm$ 5.35}\\ (\textbf{2.89 $\pm$ 5.19})\end{tabular}
\\

\bottomrule
\end{tabular}%
}
\caption{
LLM alignment results on Reddit summarization. 
Main entries are evaluated using KL-regularized objectives, while values in parentheses are evaluated using reward-only objectives without the KL term. 
All metrics are evaluated over $10$ batches with batch size $64$, reported as mean $\pm$ standard deviation.
}
\label{tab:llm_alignment}
\vspace{-0.35cm}
\end{wraptable}

Table~\ref{tab:llm_alignment} shows that SURF improves both performance and PF coverage over Uniform-$w$ under the same per-slot PPO budget. 
Compared with Uniform-$w$, SURF achieves $1.07\times$ higher HV, $2.2\times$ lower CV, and $2.0\times$ lower Gap Ratio. 
The relatively large standard deviation of Gap Ratio reflects the stochasticity of LLM evaluation: each estimate is computed on batches of only $64$ prompts, so a small number of unusually short or long objective-space segments can substantially affect the max/min spacing ratio. 
Soup is less effective in our setting, suggesting that endpoint-only interpolation between PEFT adapters may not reliably yield well-spaced PF coverage for these objectives, especially with a smaller backbone and limited training budget.
We omit UMOD and OLS because they require substantially heavier computation in this LLM setting, and IGD because constructing a sufficiently dense reference PF is beyond our available computational budget.

\vspace{-0.3cm}
\section{Conclusion}
\label{sec:conclusion}
\vspace{-0.4cm}
This paper introduced SURF, a practical framework for sampling uniformly along the Pareto front while retaining the simplicity and scalability of linear scalarization. 
One of our key contributions is a geometric view of scalarization, where scalarization not only produces individual Pareto points but also induces a traversal of the PF with generally non-uniform speed. 
By characterizing this weight-to-PF map through the arc-length CDF $\Farc$, we obtain a principled correction from uniform weights to uniform PF coverage. 
For structured bandit settings, we derive explicit forms of this geometry, and for general problems, we propose an iterative CDF-refinement algorithm whose outer iterations converge to a discretization floor while remaining compatible with standard \textsc{InnerSolver}. 
Empirically, SURF improves PF coverage across optimization, reinforcement learning, and LLM alignment settings,
suggesting that arc-length-aware scalarization is a simple and broadly applicable tool for constructing uniformly-distributed Pareto front approximations.

{\small \bibliography{ref}}

\newpage

\appendix
\startcontents[appendix]

\section*{Appendix Contents}
\label{app:contents}
\vspace{-0.5em}

\printcontents[appendix]{}{1}{%
 \setcounter{tocdepth}{3}%
}

\vspace{0.5em}

\section{Additional Related Work}
\label{app:related_work}
\paragraph{MORL and LLM alignment.}
Multi-objective reinforcement learning provides a natural application domain because one often seeks not a single policy, but a family of policies spanning different trade-offs among objectives or diverse preferences of users~\citep{reymond2022pareto,felten2024multi,lin2024policy}. In language-model alignment, the same need arises when balancing criteria such as helpfulness, faithfulness, harmlessness, or style control~\citep{shi2024decoding,yang2024metaaligner,guo2024controllable,shen2025simultaneous}. Recent work on multi-objective reinforcement learning from AI feedback and multi-objective decoding highlights both the importance of this setting and the practical appeal of scalarization-based~\citep{qiu2024traversing,zhou2024beyond,williams2024multi,wang2024conditional,li2025gradient} or interpolation-based pipelines~\citep{rame2023rewarded,shi2024decoding,li2025multi,lin2025parm}. Our goal is complementary: rather than replacing scalarization, we characterize and correct its geometric sampling bias so that repeated scalarized solves yield a uniformly representative set of Pareto solutions.

\paragraph{Gradient-based MOO and scalarization.}
Scalarization methods for reducing MOO to single-objective subproblems are widely used in practice~\cite{marler2010weighted}, including LS, Tchebycheff scalarization~\citep{jahn1985scalarization,lin2024smooth}, $\epsilon$-constraint method and its variants~\citep{chankong2008multiobjective,momma2022multi,mahapatra2020multi,ye2022pareto,chen2024pareto,roy2023optimization,chen2025foops},  and reference-point methods~\citep{wierzbicki1998reference}. Among them, LS is a commonly used baseline~\citep{zhou2024beyond,yang2024learning,zhang2024optimal,chen2025gradient_moo_review}. Additionally, gradient-based multi-objective methods such as MGDA~\citep{sener2018multi,liu2021conflict,liu2021stochastic,zhou2022_SMOO,fernando2022mitigating,chen2023three,xiao2023direction} typically return a single Pareto-stationary solution per run, rather than a preference-specified or uniformly distributed set of Pareto points.

\paragraph{PF approximation.}
A variety of methods have been proposed to achieve some coverage of the PF. Classical Normal Boundary Intersection (NBI) method~\citep{das1998normal} generates well-distributed points via structured subproblems. Optimistic Linear Support (OLS) adaptively selects scalarization weights to construct the convex coverage set using repeated single-objective solves, making it a standard adaptive scalarization baseline in MORL~\citep{barrett2008learning,alegre2022optimistic}. Adaptive weighting strategies, such as min–max formulations with local arc-length control, promote uniform spacing but typically rely on sequential PF tracing and differential-geometry ingredients~\citep{zhang2006minmax}. Another line of work enforces equal-spacing constraints explicitly within an optimization framework, solved sequentially using off-the-shelf optimizers~\citep{gill2006snopt} or in parallel via second-order techniques~\citep{pereyra1999asynchronous,pereyra2009fast,pereyra2013equispaced}, which can be memory intensive and less attractive when only first-order or noisy oracle access is available. 
Another line of work employs continuation or PF-tracking methods, tracing the Pareto manifold via differential equations and predictor-corrector schemes~\citep{lovison2011singular,martin2016continuation,vieira2012multicriteria,bolten2021tracing}. Recent studies assess uniformity via the averaged Hausdorff distance~\citep{bogoya2019averaged}, and develop second order method to promote uniform PF spacing~\citep{wang2024newton,zhang2024gliding}, which are expensive due to their second-order nature and the need for additional operations, e.g., sorting or clustering.

\paragraph{Pushforward measures and distribution matching.}
Histogram equalization provides a classical example of distribution matching by transforming an empirical distribution to a uniform target~\citep{gonzalez2009digital,billingsley2017probability}. In our setting, treating the scalarization weight $w$ as a random variable induces a pushforward measure on the PF through the map $\mathbf{f}_{\rm PF}(w)$. Uniform arc-length sampling can therefore be viewed as matching this induced measure to a uniform target on the front. Related perspectives arise in monotone calibration~\citep{zadrozny2002transforming}, adaptive moving-mesh methods~\citep{huang2010adaptive,xu2011convergence}, and optimal-transport-based mesh adaptation~\citep{budd2009moving,weller2016mesh}.

\begin{figure}[ht]
 \centering
 \includegraphics[width=0.99\linewidth]{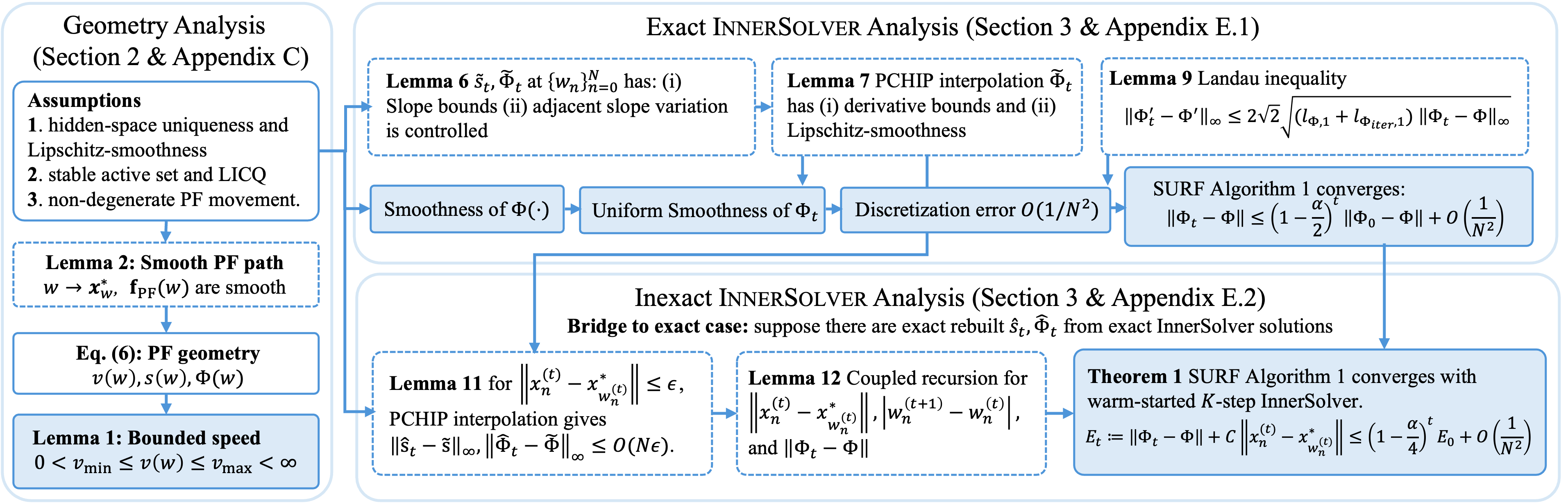}
\caption{
Theoretical analysis roadmap for SURF. The solid-line blocks summarize the main results presented in the main paper, while the dashed-line blocks indicate supporting intermediate results developed in the appendix. 
}
\label{fig:proof_roadmap}
\end{figure}

\section{Notation Table}
\label{app:notation}

Table~\ref{tab:notation_moo} summarizes the main problem-specific notation used throughout the paper, grouped by original-space quantities, hidden-space quantities, and PF geometry.
\begin{table}[tbh]
\centering
\begin{tabular*}{0.98\textwidth}{@{\extracolsep{\fill}} l p{0.68\textwidth}}
\toprule
Symbol & Description \\
\midrule
$\bu \in \U$
& Decision variable and feasible set in the original space \\

$\h(\bu) = [h_1(\bu),\cdots,h_M(\bu)]^\top$
& Objective vector in the original space \\

$\min_{\bu \in \U} \h(\bu)$
& Original multi-objective problem \\

$\U_{\mathrm{PF}}$
& Pareto set in the original space \\

$\bx = g(\bu) \in \X = g(\U)$
& Hidden-space variable and feasible space induced by $g$ \\

$\mathbf{f}(\bx) = [f_1(\bx),\cdots,f_M(\bx)]^\top$
& Objectives in the hidden space, with $\f(\bx)=\h(g^{-1}(\bx))$ \\

$\min_{\bx \in \X} \mathbf{f}(\bx)$
& Hidden-space multi-objective problem \\

$\X_{\mathrm{PF}}=g(\U_{\mathrm{PF}})$
& Pareto set in the hidden space \\

$\mathcal{F}_{\mathrm{PF}}=\h(\U_{\rm PF})=f(\X_{\rm PF})$
& Pareto front \\
\midrule

$\mathbf{w} \in \Delta_M$
& LS weight vector \\

$\LS_h(\bu;\bw)=\sum_{m=1}^M w_m h_m(\bu)$
& LS objective in the original space \\

$\bu_{\bw}^* \in \arg\min_{\bu\in\U}\LS_h(\bu;\bw)$
& LS optimizer in the original space \\

$\LS_f(\bx;\bw)=\sum_{m=1}^M w_m f_m(\bx)$
& LS objective in the hidden space \\

$\bx_{\bw}^* \in \arg\min_{\bx\in\X}\LS_f(\bx;\bw)$
& LS optimizer in the hidden space \\

$\fPF(\bw)=\f(\bx_\bw^*)=\h(\bu_\bw^*)$
& Multi-objective PF map induced by LS \\

\midrule
$w \in [0,1]$
& Scalar LS weight for the $M=2$, with $\bw=[w,1-w]$ \\

$\LS_h(\bu;w),\bu_w^*,\LS_f(\bx;w),\bx_w^*, \fPF(w)$
& Shorthand under $M=2$ case with $\bw=[w,1-w]$ \\

$v(w)=\left\|\frac{\partial}{\partial w}\fPF(w)\right\|$
& Bi-objective PF traversal speed \\

$s(w)=\int_0^w v(p) dp$
& Bi-objective PF arc length up to weight $w$ \\

$\Farc(w)=\frac{s(w)}{s(1)}$
& Normalized arc-length CDF \\

\bottomrule
\end{tabular*}
\caption{Summary of notation. The first block lists the original-space MOO problem and hidden-space reformulation through the bijection $\bx=g(\bu)$, the second block defines LS weights, scalarized objectives, optimizers, and PF maps, and the final block summarizes the induced PF geometry for $M=2$, where the scalar weight $w\in[0,1]$ induces the path $\fPF(w)$, whose traversal speed, arc length, and normalized arc-length CDF are given by $v$, $s$, and $\Farc$, respectively.}
\label{tab:notation_moo}
\end{table}

\section{Preliminaries and Geometry Analysis Details}
\label{app:prelim_theory}

In this appendix, we develop the geometric ingredients shown in the left part of Figure~\ref{fig:proof_roadmap}. Specifically, Appendix~\ref{app:fPFdifferentiable} proves smoothness of the scalarization-induced PF path, Appendix~\ref{appendix:v_bound_from_tangent_gap} establishes bounds on the traversal speed, Appendix~\ref{app:multi_objective_extension} discusses the extension to the multiple-objective setting, and Appendix~\ref{app:quadratic} provides a closed-form quadratic example.

\subsection{Smoothness of the PF}
\label{app:fPFdifferentiable}

The following lemma shows that LS induces a Lipschitz-smooth PF path, and that the PF point moves in a monotone direction as the scalarization weight varies.

\begin{lemma}[Smoothness and monotonicity of the scalarization path]
\label{lem:fPFdifferentiable}
Suppose Assumptions~\ref{assump:hidden_regular},~\ref{assump:LICQ} hold.
Then, for every $w\in[0,1]$, the scalarized problem \eqref{eq:LS_multi} has a unique minimizer $\bx_w^*$.
Moreover, the mapping $w\mapsto \bx_w^*$ is Lipschitz-continuous on $[0,1]$, continuously differentiable on $(0,1)$, and admits one-sided derivatives at $w=0$ and $w=1$ that are the one-sided limits of $\partial_w\bx_w^*$,
\begin{align}
w\mapsto \fPF(w)=\f(\bx_w^*)=[{f_{\rm PF}}_1(w),{f_{\rm PF}}_2(w)]^\top
\end{align}
is continuously differentiable on $(0,1)$ and admits one-sided derivatives at $w=0$ and $w=1$ that are the one-sided limits of $\partial_w\fPF(w)$.
Additionally, ${f_{\rm PF}}_1$ is nonincreasing and ${f_{\rm PF}}_2$ is nondecreasing on $[0,1]$. In particular,
\begin{align}
\frac{\partial}{\partial w}\fPF(w)\in \R_- \times \R_+,
\quad \forall w\in[0,1],
\end{align}
that is, $\frac{\partial}{\partial w}{f_{\rm PF}}_1(w)\le 0$ and $\frac{\partial}{\partial w}{f_{\rm PF}}_2(w)\ge 0$ for all $w\in [0,1]$.

Additionally if Assumption~\ref{assump:tangent_gap} holds, $\fPF(\cdot)$, $\Farc(\cdot)$, and $\fPF(s^{-1}(\cdot))$ are respectively $l_{\fPF,1}$-, $l_{\Farc,1}$- and $l_{\tilde{\mathbf{f}}_{\rm PF},1}$-Lipschitz-smooth, and $v$ is $l_{v,0}$-Lipschitz-continuous.
\end{lemma}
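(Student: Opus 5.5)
Uniqueness of $\bx_w^*$ for each $w\in[0,1]$ is immediate from Proposition~\ref{prop:hidden_cvx_imply}: for $\bw=[w,1-w]^\top$, the map $\LS_f(\cdot;w)$ is $\mu$-strongly convex on the convex set $\X$. For regularity I would apply the implicit function theorem to the KKT system of $\min_{\bx\in\X}\LS_f(\bx;w)$. By Assumption~\ref{assump:LICQ}, the active set $\mathcal A$ is the same for all $w$ and satisfies LICQ, so multipliers $\boldsymbol\lambda_w$ exist and are unique. Writing $J$ for the Jacobian of the active constraints, the KKT conditions read
\begin{align*}
w\nabla f_1(\bx)+(1-w)\nabla f_2(\bx)+J(\bx)^\top\boldsymbol\lambda=0,\qquad \mathbf c_{\mathcal A}(\bx)=0 .
\end{align*}
The Jacobian of this system in $(\bx,\boldsymbol\lambda)$ is the saddle matrix
\begin{align*}
\begin{bmatrix} H_w & J^\top\\ J & 0\end{bmatrix},
\end{align*}
where $H_w$ is the Lagrangian Hessian. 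Because the constraints are convex with nonnegative multipliers, $H_w\succeq \mu I$; combined with LICQ this makes the saddle matrix nonsingular. The implicit function theorem then gives $C^1$ dependence of $(\bx_w^*,\boldsymbol\lambda_w)$ on $w$, with
\begin{align*}
\partial_w\bx_w^*=-\bigl(H_w^{-1}-H_w^{-1}J^\top(JH_w^{-1}J^\top)^{-1}JH_w^{-1}\bigr)\bd_w,\qquad \bd_w:=\nabla f_1(\bx_w^*)-\nabla f_2(\bx_w^*).
\end{align*}
The same formula works for one-sided derivatives at $w=0,1$: the KKT functions are defined on an open neighborhood (the $f_m$ are $C^2$ on an open set containing $\X$), so the solution branch extends slightly past the endpoints, and continuity of the derivative yields the one-sided limits. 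Lipschitz continuity of $w\mapsto\bx_w^*$ on $[0,1]$ follows from a uniform bound on $\|\partial_w\bx_w^*\|$. That bound holds because $\|H_w^{-1}\|\le 1/\mu$, the projected-inverse matrix $P_w:=H_w^{-1}-H_w^{-1}J^\top(JH_w^{-1}J^\top)^{-1}JH_w^{-1}$ satisfies $\|P_w\|\le\|H_w^{-1}\|$, and $\|\bd_w\|$ is bounded on the compact set $\mathcal K$. The chain rule then gives $\partial_w\fPF(w)=[\nabla f_1^\top,\nabla f_2^\top]^\top\partial_w\bx_w^*$, which transfers all of these regularity statements to $\fPF$.

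For monotonicity I would avoid computing signs directly and use an exchange argument. Optimality of $\bx_w^*$ and $\bx_{w'}^*$ for their respective weights gives
\begin{align*}
wf_1(\bx_w^*)+(1-w)f_2(\bx_w^*)&\le wf_1(\bx_{w'}^*)+(1-w)f_2(\bx_{w'}^*),\\
w'f_1(\bx_{w'}^*)+(1-w')f_2(\bx_{w'}^*)&\le w'f_1(\bx_w^*)+(1-w')f_2(\bx_w^*).
\end{align*}
Adding the two inequalities and simplifying yields
\begin{align*}
(w-w')\bigl(f_1(\bx_w^*)-f_1(\bx_{w'}^*)\bigr)-(w-w')\bigl(f_2(\bx_w^*)-f_2(\bx_{w'}^*)\bigr)\le 0 .
\end{align*}
This alone only controls $f_1-f_2$, so a second relation is needed. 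The first-order stationarity condition $w\,\partial_w{f_{\rm PF}}_1+(1-w)\,\partial_w{f_{\rm PF}}_2=0$ follows because $\partial_w\bx_w^*$ is a feasible tangent direction orthogonal to the Lagrangian gradient; at $w=0,1$ it holds by taking one-sided limits. This stationarity relation forces the two derivatives to have opposite signs. Combining it with the exchange inequality pins down $\partial_w{f_{\rm PF}}_1\le0$ and $\partial_w{f_{\rm PF}}_2\ge0$. Equivalently, one can check directly that $\partial_w{f_{\rm PF}}_1-\partial_w{f_{\rm PF}}_2=-\bd_w^\top P_w\bd_w\le0$, since $P_w\succeq0$.

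For the smoothness claims under Assumption~\ref{assump:tangent_gap}, I would first establish the lower bound $v\ge v_{\min}>0$ by the argument of Lemma~\ref{lem:v_bound_from_tangent_gap}. The matrix $P_w$ is positive definite on $\ker J$ with smallest eigenvalue at least $1/\|H_w\|\ge 1/l_{f,1}'$, where $l_{f,1}'$ bounds the Lagrangian Hessian; hence $\bd_w^\top P_w\bd_w\ge c^2/l_{f,1}'$. The stationarity relation gives $\partial_w{f_{\rm PF}}_1=-(1-w)\,\bd_w^\top P_w\bd_w$ and $\partial_w{f_{\rm PF}}_2=w\,\bd_w^\top P_w\bd_w$, so $v(w)=\sqrt{w^2+(1-w)^2}\,\bd_w^\top P_w\bd_w\ge c^2/(\sqrt2\,l_{f,1}')$. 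Lipschitz continuity of $\partial_w\fPF$ (which also gives $l_{v,0}$) then comes from differentiating this formula, or by bounding its variation. The variation is controlled because $\bx_w^*$ is Lipschitz, $\nabla f_m$ and the Lagrangian Hessian are Lipschitz along the path by Assumptions~\ref{assump:hidden_regular} and~\ref{assump:LICQ}, $\boldsymbol\lambda_w$ is Lipschitz, and inversion is Lipschitz on the set of matrices with uniformly bounded inverses. Each factor in the formula is therefore Lipschitz and bounded, and so is their product.

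The remaining smoothness claims follow from these. Since $\Farc'=v/s(1)$ and $s(1)\ge v_{\min}$, the function $\Farc$ is Lipschitz-smooth. Next, $s$ is $C^1$ with $s'=v\ge v_{\min}$, so $s^{-1}$ is $C^1$ with $(s^{-1})'=1/v\circ s^{-1}$, and this derivative is Lipschitz. Consequently the derivative of $\fPF\circ s^{-1}$, namely $(\partial_w\fPF/v)\circ s^{-1}$, is Lipschitz as a composition and quotient of Lipschitz functions bounded away from zero. I expect the main obstacle to be the Lipschitz continuity of $H_w^{-1}$ and of $(JH_w^{-1}J^\top)^{-1}$ along the path, which requires verifying that the constraint Jacobian varies Lipschitz-ly. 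That property follows from the Lipschitz smoothness of $\mathbf c_1$ and the linearity of $\mathbf c_2$.
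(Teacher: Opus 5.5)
Your proposal is correct. For the regularity and smoothness parts it follows the paper's route: sensitivity of the fixed-active-set KKT system, a Schur-complement formula for $\partial_w\bx_w^*$, the chain rule, Lipschitz bounds on each factor, and then $\Farc'=v/s(1)$ and $(\partial_w\fPF/v)\circ s^{-1}$.

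The genuine difference is the monotonicity step, and there your version is the sound one. The paper uses a zeroth-order exchange argument and claims that adding the two optimality inequalities yields $(w_2-w_1)\bigl(f_1(\bx_{w_2}^*)-f_1(\bx_{w_1}^*)\bigr)\le 0$. As you observed, the sum only gives $(w_2-w_1)\bigl[(f_1(\bx_{w_2}^*)-f_1(\bx_{w_1}^*))-(f_2(\bx_{w_2}^*)-f_2(\bx_{w_1}^*))\bigr]\le 0$, so that step does not follow as written.

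You supply the missing relation at the derivative level. You can use the stationarity identity $w\,\partial_w{f_{\rm PF}}_1+(1-w)\,\partial_w{f_{\rm PF}}_2=0$, or go directly through $\partial_w{f_{\rm PF}}_1=-(1-w)\,\bd_w^\top P_w\bd_w$ and $\partial_w{f_{\rm PF}}_2=w\,\bd_w^\top P_w\bd_w$ with $P_w\succeq 0$.
\begin{itemize}
\item This needs the $C^1$ path and the fixed active set.
\item In return it gives explicit derivative formulas. You reuse them to get $v(w)=\sqrt{w^2+(1-w)^2}\,\bd_w^\top P_w\bd_w\ge v_{\min}$ inside the proof, instead of invoking Lemma~\ref{lem:v_bound_from_tangent_gap}, whose proof in the paper itself relies on the differentiability part of this lemma.
\item You should state explicitly that monotonicity of the coordinates on $[0,1]$ then follows from the derivative signs by the mean value theorem.
\end{itemize}

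For comparison, the paper's zeroth-order route can be repaired with no regularity at all. For $w_1<w_2$, let $a:=f_1(\bx_{w_2}^*)-f_1(\bx_{w_1}^*)$ and $b:=f_2(\bx_{w_2}^*)-f_2(\bx_{w_1}^*)$. The two inequalities read $w_1a+(1-w_1)b\ge 0$ and $w_2a+(1-w_2)b\le 0$. Their difference gives $a\le b$. Then $a>0$ would contradict the second inequality, and $b<0$ would contradict the first.

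One technical correction: you assert $H_w\succeq\mu I$ on the whole space and use $H_w^{-1}$, $\|H_w^{-1}\|\le 1/\mu$ and $\|P_w\|\le\|H_w^{-1}\|$.
\begin{itemize}
\item Assumption~\ref{assump:hidden_regular} only gives strong convexity on $\X$, which lies in the affine set cut out by the linear constraint $\mathbf c_2$.
\item So the Lagrangian Hessian is only guaranteed to be positive definite on $\ker J_w$. In the paper's RL instance it is genuinely singular on the full space.
\item Nonsingularity of your saddle matrix, which is all the implicit function theorem needs, is unaffected.
\item The explicit formula, however, should use $\widetilde H_w=H_w+cJ_w^\top J_w$, as the paper does. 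This leaves $\partial_w\bx_w^*$ unchanged because $J_w\partial_w\bx_w^*=0$.
\item Alternatively, write it in null-space form $P_w=Q_w(Q_w^\top H_wQ_w)^{-1}Q_w^\top$ with $Q_w^\top H_wQ_w\succeq\mu I$, which gives $P_w\succeq 0$ and $\|P_w\|\le 1/\mu$.
\end{itemize}
With that change, your lower bound $\bd_w^\top P_w\bd_w\ge c^2/\|H_w\|$ and the subsequent Lipschitz estimates go through as you describe.
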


\begin{proof}
Fix any $w\in[0,1]$. By Assumption~\ref{assump:hidden_regular}, the scalarized objective
\begin{align}
\LS_f(\bx;w)=w f_1(\bx)+(1-w)f_2(\bx)
\end{align}
is $\mu$-strongly convex on the convex set $\X$. Therefore, the minimizer
$\bx_w^*=\arg\min_{\bx\in\X}\LS_f(\bx;w)$ is unique.

sufficiency implied here by strong convexity. Hence, 
Under Assumptions~\ref{assump:hidden_regular},~\ref{assump:LICQ}, let $\boldsymbol{\nu}_w^*$ denote the corresponding optimal Lagrange multiplier vector for the scalarized problem \eqref{eq:LS_multi}. By the standard sensitivity result for nonlinear programming, the primal-dual solution mapping
$
w\mapsto (\bx_w^*,\boldsymbol{\nu}_w^*)
$
is locally continuously differentiable in $w$~\cite[Theorem 2.1]{fiacco1976sensitivity}.
Moreover, the feasible set $\X$ does not depend on $w$, and the scalarized objective
$\LS_f(\bx;w)$ is affine, hence Lipschitz-continuous, in $w$. Therefore,
$
w\mapsto (\bx_w^*,\boldsymbol{\nu}_w^*)
$
is also Lipschitz-continuous in $w$~\citep[Theorem 2.16]{ito2008lagrange}. In particular,
$w\mapsto \bx_w^*$ is Lipschitz-continuous on $[0,1]$ and continuously differentiable on $(0,1)$, with one-sided derivatives at $w=0$ and $w=1$.
Since $f_1$ and $f_2$ are continuously differentiable, the composition
$\fPF(w)=\f(\bx_w^*)$ is continuously differentiable on $(0,1)$, with one-sided derivatives at the endpoints. 

It remains to show the monotonicity of the PF coordinates. Fix $0\le w_1<w_2\le 1$. By optimality of $\bx_{w_1}^*$ for weight $w_1$,
\begin{align}
w_1 f_1(\bx_{w_1}^*)+(1-w_1)f_2(\bx_{w_1}^*)
\le
w_1 f_1(\bx_{w_2}^*)+(1-w_1)f_2(\bx_{w_2}^*).
\label{eq:opt_w1_pfmono_re_merge}
\end{align}
Similarly, by optimality of $\bx_{w_2}^*$ for weight $w_2$,
\begin{align}
w_2 f_1(\bx_{w_2}^*)+(1-w_2)f_2(\bx_{w_2}^*)
\le
w_2 f_1(\bx_{w_1}^*)+(1-w_2)f_2(\bx_{w_1}^*).
\label{eq:opt_w2_pfmono_re_merge}
\end{align}
Adding \eqref{eq:opt_w1_pfmono_re_merge} and \eqref{eq:opt_w2_pfmono_re_merge} gives
\begin{align}
(w_2-w_1)\big(f_1(\bx_{w_2}^*)-f_1(\bx_{w_1}^*)\big)\le 0.
\end{align}
Since $w_2-w_1>0$, it follows that
\begin{align}
{f_{\rm PF}}_1(w_2)=f_1(\bx_{w_2}^*)\le f_1(\bx_{w_1}^*)={f_{\rm PF}}_1(w_1),
\end{align}
so ${f_{\rm PF}}_1$ is nonincreasing on $[0,1]$.

Substituting this back into \eqref{eq:opt_w1_pfmono_re_merge}, we obtain
\begin{align}
(1-w_1)\big(f_2(\bx_{w_1}^*)-f_2(\bx_{w_2}^*)\big)
\le
w_1\big(f_1(\bx_{w_2}^*)-f_1(\bx_{w_1}^*)\big)\le 0,
\end{align}
hence
\begin{align}
{f_{\rm PF}}_2(w_1)=f_2(\bx_{w_1}^*)\le f_2(\bx_{w_2}^*)={f_{\rm PF}}_2(w_2),
\end{align}
so ${f_{\rm PF}}_2$ is nondecreasing on $[0,1]$.

Since $\fPF$ is continuously differentiable on $[0,1]$, the above monotonicity implies
\begin{align}
\frac{\partial}{\partial w}{f_{\rm PF}}_1(w)\le 0,
\quad
\frac{\partial}{\partial w}{f_{\rm PF}}_2(w)\ge 0,
\quad \forall w\in[0,1],
\end{align}
equivalently,
\begin{align}
\frac{\partial}{\partial w}\fPF(w)\in \R_- \times \R_+, \quad \forall w\in[0,1].
\end{align}

Let $\mathbf c(\bx)$ denote the fixed active constraint vector along the solution path, and let $\boldsymbol{\nu}_w^*$ be the associated Lagrange multiplier vector at $\bx_w^*$, so 
\begin{align}
\mathcal L(\bx,\boldsymbol{\nu};w):=wf_1(\bx)+(1-w)f_2(\bx)+\boldsymbol{\nu}^\top\mathbf c(\bx).
\end{align}
By Assumption~\ref{assump:LICQ}, The local KKT system is
\begin{align}
w\nabla f_1(\bx_w^*)+(1-w)\nabla f_2(\bx_w^*)+J_w^\top\boldsymbol{\nu}_w^*=0,\quad \mathbf c(\bx_w^*)=0,
\label{eq:local_kkt_smooth}
\end{align}
where $J_w:=\nabla\mathbf c(\bx_w^*)$. As discussed above, $w\mapsto(\bx_w^*,\boldsymbol{\nu}_w^*)$ is Lipschitz-continuous on $[0,1]$, continuously differentiable on $(0,1)$, and admits one-sided derivatives at the endpoints.

Differentiating \eqref{eq:local_kkt_smooth} with respect to $w$ gives
\begin{align}
\underbrace{\begin{bmatrix}\nabla_{\bx\bx}^2\mathcal L(\bx_w^*,\boldsymbol{\nu}_w^*;w)&J_w^\top\\J_w&0\end{bmatrix}}_{K(w):=}\begin{bmatrix}\frac{\partial}{\partial w}\bx_w^*\\\frac{\partial}{\partial w}\boldsymbol{\nu}_w^*\end{bmatrix}=-\begin{bmatrix}\nabla f_1(\bx_w^*)-\nabla f_2(\bx_w^*)\\0\end{bmatrix}.
\label{eq:diff_kkt_zprime_consistent}
\end{align}

By Assumption~\ref{assump:hidden_regular}, strong convexity on $\X$ implies that $\nabla_{\bx\bx}^2\mathcal L(\bx_w^*,\boldsymbol{\nu}_w^*;w)$ is positive definite on the active feasible tangent space $\ker(J_w)$; together with LICQ, this implies that $K(w)$ is nonsingular for every $w\in[0,1]$.
To use a full-space inverse, choose a sufficiently large fixed $c>0$ and define
\begin{align}
\widetilde \nabla_{\bx\bx}^2\mathcal L(\bx_w^*,\boldsymbol{\nu}_w^*;w):=\nabla_{\bx\bx}^2\mathcal L(\bx_w^*,\boldsymbol{\nu}_w^*;w)+cJ_w^\top J_w.
\end{align}
Since $\nabla_{\bx\bx}^2\mathcal L(\bx_w^*,\boldsymbol{\nu}_w^*;w)$ is uniformly positive definite on $\ker(J_w)$ and $J_w$ has full row rank by LICQ, this choice gives $\widetilde \nabla_{\bx\bx}^2\mathcal L(\bx_w^*,\boldsymbol{\nu}_w^*;w)\succ0$ uniformly along the compact path.
Moreover, $J_w\frac{\partial}{\partial w}\bx_w^*=0$, so replacing $\nabla_{\bx\bx}^2\mathcal L(\bx_w^*,\boldsymbol{\nu}_w^*;w)$ by $\widetilde \nabla_{\bx\bx}^2\mathcal L(\bx_w^*,\boldsymbol{\nu}_w^*;w)$ does not change the primal sensitivity equation.
Since $\widetilde \nabla_{\bx\bx}^2\mathcal L(\bx_w^*,\boldsymbol{\nu}_w^*;w)\succ0$ and $J_w$ has full row rank, the Schur-complement inverse formula applies to the augmented KKT system. On the compact path $\{\bx^*_w:w\in[0,1]\}$, the uniform restricted strong convexity and LICQ constants imply that $\widetilde \nabla_{\bx\bx}^2\mathcal L(\bx_w^*,\boldsymbol{\nu}_w^*;w)^{-1}$ and $\left(J_w\widetilde \nabla_{\bx\bx}^2\mathcal L(\bx_w^*,\boldsymbol{\nu}_w^*;w)^{-1}J_w^\top\right)^{-1}$ are uniformly bounded.
Moreover, by Assumption~\ref{assump:LICQ}, the Lagrangian Hessian is Lipschitz-continuous along the solution path; together with the fixed active set and LICQ, this implies that the Schur-complement factor mapping the right-hand side of \eqref{eq:diff_kkt_zprime_consistent} to $\frac{\partial}{\partial w}\bx_w^*$ is Lipschitz-continuous in $w$.
Since $w\mapsto\bx_w^*$ is Lipschitz-continuous and $\nabla f_m$ is Lipschitz-continuous on $\mathcal K$, the vector $\nabla f_1(\bx_w^*)-\nabla f_2(\bx_w^*)$ is Lipschitz-continuous in $w$.
Therefore, there exists $L_x<\infty$ such that
\begin{align}
\left\|\frac{\partial}{\partial w}\bx_{w_1}^*-\frac{\partial}{\partial w}\bx_{w_2}^*\right\|\le L_x|w_1-w_2|.
\label{eq:xprime_Lipschitz_consistent}
\end{align}

By the chain rule,
\begin{align}
\frac{\partial}{\partial w}\fPF_m(w)
=
\nabla f_m(\bx_w^*)^\top \frac{\partial}{\partial w}\bx_w^*,
\quad m\in\{1,2\}.
\end{align}
Since $\nabla f_m$ is Lipschitz-continuous and bounded on the compact set containing
$\{\bx_w^*:w\in[0,1]\}$, while $w\mapsto \bx_w^*$ is Lipschitz-continuous and
$\frac{\partial}{\partial w}\bx_w^*$ is bounded and Lipschitz-continuous by \eqref{eq:xprime_Lipschitz_consistent}, we have
\begin{align}
& \left\|
\frac{\partial}{\partial w}\fPF(w_1)
-
\frac{\partial}{\partial w}\fPF(w_2)
\right\|
\nonumber\\
\leq &
\sum_{m=1}^2
\left|
\nabla f_m(\bx_{w_1}^*)^\top \frac{\partial}{\partial w}\bx_{w_1}^*
-
\nabla f_m(\bx_{w_2}^*)^\top \frac{\partial}{\partial w}\bx_{w_2}^*
\right|
\nonumber\\
\leq &
\sum_{m=1}^2
\left\|
\nabla f_m(\bx_{w_1}^*)-\nabla f_m(\bx_{w_2}^*)
\right\|
\left\|
\frac{\partial}{\partial w}\bx_{w_1}^*
\right\|
\nonumber\\
&\quad+
\sum_{m=1}^2
\left\|
\nabla f_m(\bx_{w_2}^*)
\right\|
\left\|
\frac{\partial}{\partial w}\bx_{w_1}^*
-
\frac{\partial}{\partial w}\bx_{w_2}^*
\right\|
\nonumber\\
\leq &
l_{\fPF,1}|w_1-w_2|
\end{align}
for some constant $l_{\fPF,1}<\infty$.

Now, since $v(w)=\left\|\frac{\partial}{\partial w}\fPF(w)\right\|$, the reverse triangle inequality yields
\begin{align}
|v(w_1)-v(w_2)|
\le
\left\|
\frac{\partial}{\partial w}\fPF(w_1)
-
\frac{\partial}{\partial w}\fPF(w_2)
\right\|
\le
l_{\fPF,1}|w_1-w_2|.
\end{align}
Thus $v$ is Lipschitz-continuous on $[0,1]$ with modulus $l_{v,0}\le l_{\fPF,1}$.

For ${\fPF}_s(s):=\fPF(s^{-1}(s))$, by Lemma~\ref{lem:v_bound_from_tangent_gap},
\begin{align}
v(w)=s'(w)\ge v_{\min}>0,
\quad
\forall w\in[0,1].
\end{align}
Hence $s$ is strictly increasing, so $s^{-1}$ is well defined and Lipschitz-continuous with constant at most $1/v_{\min}$. By the chain rule,
\begin{align}
{\fPF}_s'(s)
=
\frac{\frac{\partial}{\partial w}\fPF(w)}{v(w)}
\Bigg|_{w=s^{-1}(s)}.
\end{align}
For any $w_1,w_2\in[0,1]$,
\begin{align}
0 \le &\left\| \frac{\frac{\partial}{\partial w}\fPF(w_1)}{v(w_1)} - \frac{\frac{\partial}{\partial w}\fPF(w_2)}{v(w_2)} \right\| \nonumber\\
\leq & \frac{ \left\| \frac{\partial}{\partial w}\fPF(w_1) - \frac{\partial}{\partial w}\fPF(w_2) \right\| }{v_{\min}} + \left\| \frac{\partial}{\partial w}\fPF(w_2) \right\| \left| \frac{1}{v(w_1)}-\frac{1}{v(w_2)} \right| \nonumber\\
\leq & \frac{l_{\fPF,1}}{v_{\min}}|w_1-w_2| + \frac{v_{\max}l_{v,0}}{v_{\min}^2}|w_1-w_2|.
\end{align}
Now let $s_1,s_2\in[0,s(1)]$, and write $w_i=s^{-1}(s_i)$. Then there exists
$l_{{\fPF}_s,1}<\infty$ such that
\begin{align}
\|{\fPF}_s'(s_1)-{\fPF}_s'(s_2)\|
\le
l_{{\fPF}_s,1}|s_1-s_2|.
\end{align}

Finally, by the fundamental theorem of calculus, $s$ is differentiable with $s'(w)=v(w)$. Since $\Farc=s/s(1)$,
\begin{align}
\Farc'(w)=\frac{s'(w)}{s(1)}=\frac{v(w)}{s(1)}.
\end{align}
Therefore, for any $w_1,w_2\in[0,1]$,
\begin{align}
|\Farc'(w_1)-\Farc'(w_2)|
=
\frac{|v(w_1)-v(w_2)|}{s(1)}
\le
\frac{l_{v,0}}{s(1)}|w_1-w_2|.
\end{align}
Thus $\Farc'$ is Lipschitz-continuous on $[0,1]$ with modulus
\begin{align}
l_{\Farc,1}\le \frac{l_{v,0}}{s(1)}.
\end{align}
This completes the proof.
\end{proof}

Lemma~\ref{lem:fPFdifferentiable} shows that LS induces a Lipschitz-continuous and Lipschitz-smooth PF path.
A one-to-one parametrization by $w$ requires an additional nondegeneracy condition. This is established later in Lemma~\ref{lem:v_bound_from_tangent_gap} under Assumption~\ref{assump:tangent_gap}, which implies strictly positive PF speed.

\subsection{Proof of Lemma~\ref{lem:v_bound_from_tangent_gap}}
\label{appendix:v_bound_from_tangent_gap}

Before proving Lemma~\ref{lem:v_bound_from_tangent_gap}, we first clarify the geometric meaning of Assumption~\ref{assump:tangent_gap}.
\begin{remark}
\label{remark: vbound}
Under Assumption~\ref{assump:LICQ}, the active set is fixed for all $w\in[0,1]$ so $J_w$ is continuous with respect to $w$. Moreover, by KKT, there exists Lagrangian multiplier $\boldsymbol{\nu}_w^*$ such that $\Proj_{\ker(J_w)}\left( w \nabla f_1(\bx_w^*) + (1-w) \nabla f_2(\bx_w^*) + J_w^\top \boldsymbol{\nu}_w^* \right)= 0$.
Here, $\Proj_{\ker(J_w)}$ is a linear operation as $\ker(J_w)$ is a linear subspace. Then, the range of $J_w^\top$ is orthogonal to $\ker(J_w)$ and hence, for $w\in(0,1)$,
\begin{align*}
\Proj_{\ker(J_w)}\big(\nabla f_1(\bx_w^*)-\nabla f_2(\bx_w^*)\big)
= & \frac{1}{1-w}\Proj_{\ker(J_w)}\nabla f_1(\bx_w^*)
= -\frac{1}{w}\Proj_{\ker(J_w)}\nabla f_2(\bx_w^*).
\end{align*}
Therefore, Assumption~\ref{assump:tangent_gap} is equivalent to requiring $\|\Proj_{\ker(J_w)}\nabla f_1(\bx_w^*)\| \ge c(1-w)$, or $\|\Proj_{\ker(J_w)}\nabla f_2(\bx_w^*)\| \ge c w$, with endpoint cases interpreted by one-sided limits.

Geometrically, Assumption~\ref{assump:tangent_gap} rules out points where the two objectives are first-order identical along feasible directions, which would make the PF traversal speed vanish. Such degeneracy is non-generic and typically arises only from special symmetry or gradient alignment. Similar regularity conditions are standard in sensitivity and geometric analyses of PSs/fronts, where Lipschitz-smooth structure is derived from multiplier regularity, rank conditions, or Lipschitz-smooth path-following assumptions~\citep{hamada2020topology,bolten2021tracing}. Our condition serves the same purpose, but is tailored to arc-length analysis by directly enforcing a uniformly nonzero tangent trade-off.
\end{remark}

We now use this tangent-space separation to prove the uniform lower and upper bounds on the PF traversal speed.

\begin{proof}[Proof of Lemma~\ref{lem:v_bound_from_tangent_gap}]
Under Assumptions~\ref{assump:hidden_regular},~\ref{assump:LICQ}, Lemma~\ref{lem:fPFdifferentiable} gives that $w\mapsto \bx_w^*$ and $\fPF(w)$ are differentiable on $w\in[0,1]$. By Assumption~\ref{assump:hidden_regular}, the solution path $\{\bx_w^*:w\in[0,1]\}$ lies in the compact set $\mathcal K\subset \operatorname{relint}(\mathcal X)$, and $\nabla f_1,\nabla f_2$ are $l_{f,1}$-Lipschitz-continuous on $\mathcal K$. Together with the fixed active set and LICQ in Assumption~\ref{assump:LICQ}, standard sensitivity results imply that $w\mapsto \bx_w^*$ is continuously differentiable on $[0,1]$ up to one-sided derivatives at the endpoints. Therefore
\begin{align}
\frac{\partial}{\partial w}\fPF(w)
= & J_f(\bx_w^*) \frac{\partial \bx_w^*}{\partial w}
\end{align}
is continuous on $[0,1]$. Since $[0,1]$ is compact, $v_{\max}:=\sup_{w\in[0,1]}\left\|\frac{\partial}{\partial w}\fPF(w)\right\|<\infty$.

To derive the lower bound, fix any $w\in[0,1]$. Let $\boldsymbol{\nu}_w^*$ denote the optimal Lagrange multipliers of the scalarized problem $\min_{x\in\X}\LS_f(x;w)$, and let $\mathbf c$ denote the active constraint functions associated with the locally fixed active index set under Assumption~\ref{assump:LICQ}. The local KKT system at $\bx_w^*$ is
\begin{align}
\underbrace{w \nabla f_1(\bx_w^*) + (1-w)\nabla f_2(\bx_w^*)}_{\mathrm{I}(w)} + \underbrace{J_w^\top \boldsymbol{\nu}_w^*}_{\mathrm{II}(w)} = 0,\quad \underbrace{\mathbf c(\bx_w^*)}_{\mathrm{III}(w)} = 0,
\label{eq:local_kkt}
\end{align}
where $J_w := \nabla \mathbf c(\bx_w^*)$ is the Jacobian of the active constraints at $\bx_w^*$.

Next, differentiate the stationarity condition in \eqref{eq:local_kkt}. For the first term $\mathrm{I}(w)$, there is
\begin{align}
\frac{\partial}{\partial w} \mathrm{I}(w)
= & \nabla f_1(\bx_w^*)-\nabla f_2(\bx_w^*)+\big(w\nabla^2 f_1(\bx_w^*) + (1-w)\nabla^2 f_2(\bx_w^*)\big)\frac{\partial}{\partial w}\bx_w^*.
\end{align}
For the second term $\mathrm{II}(w)$, using $J_w=\nabla \mathbf c(\bx_w^*)$, there is
\begin{align}
\frac{\partial}{\partial w}\mathrm{II}(w)
= & \frac{\partial}{\partial w}\big(\nabla \mathbf c(\bx_w^*)^\top \boldsymbol{\nu}_w^*\big) \nonumber\\
= & \nabla_{xx}^2 \big((\boldsymbol{\nu}_w^*)^\top \mathbf c(\bx_w^*)\big)\frac{\partial}{\partial w}\bx_w^*+J_w^\top \frac{\partial}{\partial w}\boldsymbol{\nu}_w^*.
\end{align}

Since the active index set is locally fixed, the constraint map $\mathbf c$ itself does not depend on $w$ explicitly, but only through $\bx_w^*$. Differentiating the feasibility condition $\mathbf c(\bx_w^*)=0$ with respect to $w$ therefore gives
\begin{align}
\frac{\partial}{\partial w}\mathrm{III}(w)
= & J_w \frac{\partial}{\partial w}\bx_w^*
= 0.
\end{align}
Hence, $\frac{\partial}{\partial w}\bx_w^* \in \ker(J_w)$. Combining the above yields
\begin{align}
\mathcal H_w \frac{\partial}{\partial w}\bx_w^*+(\nabla f_1(\bx_w^*)-\nabla f_2(\bx_w^*))+J_w^\top \frac{\partial}{\partial w}\boldsymbol{\nu}_w^*=0,\quad J_w \frac{\partial}{\partial w}\bx_w^*=0,
\label{eq:diff_kkt_clean}
\end{align}
where
\begin{align}
\mathcal H_w = \nabla_{xx}^2 \mathcal L_w(\bx_w^*,\boldsymbol{\nu}_w^*)
= w\nabla^2 f_1(\bx_w^*)+(1-w)\nabla^2 f_2(\bx_w^*)+\nabla_{xx}^2\big((\boldsymbol{\nu}_w^*)^\top \mathbf c(\bx_w^*)\big) .
\end{align}

Let $\Pi_w\in\mathbb R^{d\times d}$ denote the orthogonal projection matrix onto the tangent space $\ker(J_w)$. That is, for every $u\in\mathbb R^d$, $\Proj_{\ker(J_w)}(u)=\Pi_w u$. Since $\ker(J_w)$ is a linear subspace, such matrix $\Pi_w$ exists and is uniquely defined. Moreover, by the nature of projection, $\|\Pi_w u\| \le \|u\|$ for all $u\in\mathbb{R}^d$. Additionally, let $Q_w\in\mathbb R^{d\times r_w}$ be a matrix whose columns form an orthonormal basis of the tangent space $\ker(J_w)$, i.e., $\Pi_w = Q_w Q_w^\top$. Here, denote $r_w=\dim(\ker(J_w))$, there is $Q_w^\top Q_w = I_{r_w}$.

Since \eqref{eq:diff_kkt_clean} implies $J_w \frac{\partial}{\partial w}\bx_w^* = 0$, we have $\frac{\partial}{\partial w}\bx_w^* \in \ker(J_w)$. Therefore, there exists a vector $p_w\in\mathbb R^{r_w}$ such that $\frac{\partial}{\partial w}\bx_w^* = Q_w p_w$.

Premultiplying the first equation in \eqref{eq:diff_kkt_clean} by $Q_w^\top$ gives
\begin{align}
Q_w^\top \mathcal H_w \frac{\partial}{\partial w}\bx_w^*+Q_w^\top (\nabla f_1(\bx_w^*)-\nabla f_2(\bx_w^*))+Q_w^\top J_w^\top \frac{\partial}{\partial w}\boldsymbol{\nu}_w^*=0.
\end{align}
Since the columns of $Q_w$ span $\ker(J_w)$, we have $J_w Q_w = 0$, and hence $Q_w^\top J_w^\top = (J_w Q_w)^\top = 0$. Substituting $\frac{\partial}{\partial w}\bx_w^* = Q_w p_w$ yields
\begin{align}
Q_w^\top \mathcal H_w Q_w p_w
= & -Q_w^\top (\nabla f_1(\bx_w^*)-\nabla f_2(\bx_w^*)).
\end{align}
Denote the reduced Hessian $R_w := Q_w^\top \mathcal H_w Q_w$. Then $p_w = - R_w^{-1} Q_w^\top (\nabla f_1(\bx_w^*)-\nabla f_2(\bx_w^*))$, and
\begin{align}
\frac{\partial}{\partial w}\bx_w^*
= & -Q_w R_w^{-1} Q_w^\top (\nabla f_1(\bx_w^*)-\nabla f_2(\bx_w^*)).
\end{align}

The inverse above is well defined. Indeed, by Assumption~\ref{assump:hidden_regular}, both $f_1$ and $f_2$ are $\mu$-strongly convex, and by Assumption~\ref{assump:LICQ} the active inequality constraints are convex with nonnegative multipliers. Hence the Lagrangian Hessian satisfies $\mathcal H_w \succeq \mu I$. Therefore, for every $q\in\mathbb R^{r_w}$,
\begin{align}
q^\top R_w q
= & q^\top Q_w^\top \mathcal H_w Q_w q =(Q_w q)^\top \mathcal H_w (Q_w q) \ge \mu \|Q_w q\|^2 =\mu \|q\|^2.
\end{align}
So $R_w \succeq \mu I$, and in particular $R_w$ is positive definite and invertible.

Now let $g_w := \nabla f_1(\bx_w^*)-\nabla f_2(\bx_w^*)$. Then
\begin{align}
\left|\frac{\partial}{\partial w}\big(f_1(\bx_w^*)-f_2(\bx_w^*)\big)\right|
=\left|g_w^\top \frac{\partial}{\partial w}\bx_w^*\right|= \left|-g_w^\top Q_w R_w^{-1} Q_w^\top g_w\right| =(Q_w^\top g_w)^\top R_w^{-1}(Q_w^\top g_w).
\label{eq:diff_gap_reduced}
\end{align}

Since the path lies in the compact set $\mathcal K$ and $\nabla f_m$ is $l_{f,1}$-Lipschitz-continuous on $\mathcal K$, the objective Hessians are uniformly bounded along the path. Together with the fixed active set and Lipschitz-smooth active constraints, there exists $L_H<\infty$ such that $\lambda_{\max}(R_w)\le L_H$ for all $w\in[0,1]$. Hence, $R_w^{-1} \succeq \frac{1}{L_H} I$, and \eqref{eq:diff_gap_reduced} implies
\begin{align}
\left|\frac{\partial}{\partial w}\big(f_1(\bx_w^*)-f_2(\bx_w^*)\big)\right|
\ge & \frac{1}{L_H}\|Q_w^\top g_w\|^2.
\label{eq:gap_lb_mid}
\end{align}

Finally, since $Q_w Q_w^\top=\Pi_w$ is the orthogonal projection matrix onto $\ker(J_w)$,
\begin{align}
\|Q_w^\top g_w\|^2
= & g_w^\top Q_w Q_w^\top g_w = g_w^\top \Pi_w g_w =\|\Pi_w g_w\|^2 =\|\Proj_{\ker(J_w)}(g_w)\|^2.
\end{align}
By Assumption~\ref{assump:tangent_gap}, $\|\Proj_{\ker(J_w)}(g_w)\| \ge c$. Combining this with \eqref{eq:gap_lb_mid} gives
\begin{align}
\left|\frac{\partial}{\partial w}\big(f_1(\bx_w^*)-f_2(\bx_w^*)\big)\right|
\ge & \frac{c^2}{L_H}.
\end{align}

Therefore, using $\sqrt{a^2+b^2}\ge |a-b|/\sqrt2$,
\begin{align}
\left\|\frac{\partial}{\partial w}\fPF(w)\right\|
= & \sqrt{\left(\frac{\partial}{\partial w}f_1(\bx_w^*)\right)^2+\left(\frac{\partial}{\partial w}f_2(\bx_w^*)\right)^2} \nonumber\\
\ge & \frac{1}{\sqrt2}\left|\frac{\partial}{\partial w}\big(f_1(\bx_w^*)-f_2(\bx_w^*)\big)\right| \nonumber\\
\ge & \frac{c^2}{\sqrt2 L_H}.
\end{align}
In this way, the invertibility part follows directly from the fundamental theorem of calculus.
\end{proof}

\subsection{Extension to the Multi-objective Setting}
\label{app:multi_objective_extension}

\paragraph{Smoothness of the PF.}
We next note that the Lipschitz-smoothness result extends naturally to the general $M$-objective setting, where the scalarization weight lies on the simplex $\Delta_M$. The following lemma shows that LS induces a Lipschitz-smooth PF map, and that the PF point moves in a monotone direction along feasible pairwise directions of the simplex.

\begin{lemma}[Generalized version of Lemma~\ref{lem:fPFdifferentiable}]
\label{lem:fPFdifferentiable_multi}
Suppose Assumptions~\ref{assump:hidden_regular},~\ref{assump:LICQ} hold. Then, for every $\bw\in\Delta_M$, the scalarized problem \eqref{eq:LS_multi} has a unique minimizer $\bx_\bw^*$. Moreover, the mappings
\begin{align}
\bw\mapsto \bx_\bw^*, \quad \bw\mapsto \fPF(\bw)=\f(\bx_\bw^*)=[{f_{\rm PF}}_1(\bw),\dots,{f_{\rm PF}}_M(\bw)]^\top
\end{align}
are continuously differentiable on $\Delta_M$ in the relative sense, with one-sided directional derivatives on the boundary. In addition, $\bw\mapsto \bx_\bw^*$ is Lipschitz-continuous, and $\fPF$ is $l_{\fPF,1}$-Lipschitz-smooth on $\Delta_M$.
Additionally, fix any $i,j\in\{1,\dots,M\}$ with $i\neq j$, and any $\bw\in\Delta_M$ such that $\bw+t(\be_i-\be_j)\in\Delta_M$ for all sufficiently small $t\ge 0$. Then, for all such $t\ge 0$,
\begin{align}
{f_{\rm PF}}_i\bigl(\bw+t(\be_i-\be_j)\bigr)-{f_{\rm PF}}_i(\bw)\le {f_{\rm PF}}_j\bigl(\bw+t(\be_i-\be_j)\bigr)-{f_{\rm PF}}_j(\bw).
\end{align}
Consequently, $\bigl[D\fPF(\bw)(\be_i-\be_j)\bigr]_i\le \bigl[D\fPF(\bw)(\be_i-\be_j)\bigr]_j$.
\end{lemma}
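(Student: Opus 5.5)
We follow the bi-objective argument of Lemma~\ref{lem:fPFdifferentiable}, now parametrizing by $\bw\in\Delta_M$. Uniqueness of $\bx_\bw^*$ is immediate. For each $\bw$, the objective $\LS_f(\bx;\bw)=\sum_m w_m f_m(\bx)$ is a convex combination of $\mu$-strongly convex functions, so it is itself $\mu$-strongly convex on the convex set $\X$.

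For regularity, we write the local KKT system with the fixed active constraints $\mathbf c$ from Assumption~\ref{assump:LICQ}:
\begin{align*}
\textstyle\sum_m w_m\nabla f_m(\bx_\bw^*)+J_\bw^\top\boldsymbol\nu_\bw^*=0,\qquad \mathbf c(\bx_\bw^*)=0.
\end{align*}
The data depend affinely on $\bw$. The KKT matrix $K(\bw)$ is nonsingular: the Lagrangian Hessian is at least $\mu I$ on $\ker(J_\bw)$ (convex active inequalities with nonnegative multipliers), and LICQ holds. The implicit function theorem / Fiacco sensitivity \cite[Theorem 2.1]{fiacco1976sensitivity} then gives continuous differentiability of $\bw\mapsto(\bx_\bw^*,\boldsymbol\nu_\bw^*)$ in the relative interior, with derivative along any tangent direction $\bd$ of the simplex ($\sum_m d_m=0$) given by
\begin{align*}
D\bx_\bw^*\,\bd=-Q_\bw R_\bw^{-1}Q_\bw^\top\textstyle\sum_m d_m\nabla f_m(\bx_\bw^*).
\end{align*}
At boundary points we argue exactly as at $w\in\{0,1\}$, using one-sided limits; \cite[Theorem 2.16]{ito2008lagrange} gives Lipschitz continuity in $\bw$.

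Lipschitz smoothness of $\fPF$ follows from the chain rule $D\fPF(\bw)=J_f(\bx_\bw^*)D\bx_\bw^*$. The ingredients are the same as before:
\begin{itemize}
\item Compactness of $\Delta_M$ and of the path inside $\mathcal K$, where $\nabla f_m$ is bounded and $l_{f,1}$-Lipschitz.
\item Uniform bounds on $R_\bw^{-1}$, namely $\|R_\bw^{-1}\|\le1/\mu$.
\item Lipschitz continuity of the Lagrangian Hessian along the path (Assumption~\ref{assump:LICQ}), which makes the Schur-complement factor Lipschitz in $\bw$.
\end{itemize}
The product-rule splitting then yields a constant $l_{\fPF,1}$.

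For the monotonicity claim we use the optimality exchange argument, now with $\bw'=\bw+t(\be_i-\be_j)$. Optimality of $\bx_\bw^*$ for $\bw$ and of $\bx_{\bw'}^*$ for $\bw'$ gives
\begin{align*}
\textstyle\sum_m w_m\big(f_m(\bx_\bw^*)-f_m(\bx_{\bw'}^*)\big)\le0,\qquad \textstyle\sum_m w'_m\big(f_m(\bx_{\bw'}^*)-f_m(\bx_\bw^*)\big)\le0.
\end{align*}
Adding the two inequalities, the common weights cancel because $\bw'-\bw=t(\be_i-\be_j)$. We are left with
\begin{align*}
t\big[(f_i(\bx_{\bw'}^*)-f_i(\bx_\bw^*))-(f_j(\bx_{\bw'}^*)-f_j(\bx_\bw^*))\big]\le0.
\end{align*}
Dividing by $t>0$ gives the stated inequality between ${f_{\rm PF}}_i$ and ${f_{\rm PF}}_j$ increments. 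Dividing by $t$ again and letting $t\downarrow0$, using the one-sided directional differentiability from the first part, yields $[D\fPF(\bw)(\be_i-\be_j)]_i\le[D\fPF(\bw)(\be_i-\be_j)]_j$.

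The main obstacle is the boundary of $\Delta_M$. There, differentiability must be interpreted as one-sided directional derivatives along feasible directions, and we must show these equal limits of interior derivatives. We handle this by noting that the KKT system and its sensitivity formula make sense for $\bw$ in an open neighborhood of $\Delta_M$ in the affine hull $\{\sum_m w_m=1\}$. Strong convexity persists for slightly negative weights via continuity of Hessians on $\mathcal K$, provided the path stays in $\mathcal K$. If that fails, we fall back on the Lipschitz-plus-one-sided-limit argument used in Lemma~\ref{lem:fPFdifferentiable}.
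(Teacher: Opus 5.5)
Your proposal is correct and follows essentially the same route as the paper. The regularity and Lipschitz-smoothness claims come from rerunning the KKT-sensitivity argument of Lemma~\ref{lem:fPFdifferentiable}, which the paper simply cites. The trade-off inequality comes from the same exchange argument: add the two optimality inequalities for $\bw$ and $\bw+t(\be_i-\be_j)$, cancel the common weights, then divide by $t$ and let $t\downarrow 0$. Your boundary treatment via an implicit-function branch on the affine hull gives more detail than the paper does. Note that it only needs nonsingularity of the KKT matrix at the boundary point, which is an open condition, so your caveat about strong convexity for slightly negative weights and the path staying in $\mathcal K$ is unnecessary.
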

\begin{proof}
The differentiability of $\bw\mapsto \bx_\bw^*$, $\bw\mapsto \fPF(\bw)=\f(\bx_\bw^*)=[{f_{\rm PF}}_1(\bw),\dots,{f_{\rm PF}}_M(\bw)]^\top$, and the Lipschitz-smoothness of $\fPF$ follows the same arguments as in the proof of Lemma~\ref{lem:fPFdifferentiable}.
It remains to show the directional trade-off inequality. Fix $i\neq j$, $\bw\in\Delta_M$, and $t>0$ such that $\tilde{\bw}:=\bw+t(\be_i-\be_j)\in\Delta_M$. Then $\tilde w_i=w_i+t$, $\tilde w_j=w_j-t$, and $\tilde w_k=w_k$ for $k\neq i,j$. By optimality of $\bx_\bw^*$ for weight $\bw$,
\begin{align}
\sum_{m=1}^M w_m f_m(\bx_\bw^*)\le \sum_{m=1}^M w_m f_m(\bx_{\tilde{\bw}}^*). \label{eq:multi_opt_w_merge}
\end{align}
Similarly, by optimality of $\bx_{\tilde{\bw}}^*$ for weight $\tilde{\bw}$,
\begin{align}
\sum_{m=1}^M \tilde w_m f_m(\bx_{\tilde{\bw}}^*)\le \sum_{m=1}^M \tilde w_m f_m(\bx_\bw^*). \label{eq:multi_opt_wtilde_merge}
\end{align}
Adding \eqref{eq:multi_opt_w_merge} and \eqref{eq:multi_opt_wtilde_merge} and canceling common terms yields
\begin{align}
t\Big(f_i(\bx_{\tilde{\bw}}^*)-f_i(\bx_\bw^*)\Big)-t\Big(f_j(\bx_{\tilde{\bw}}^*)-f_j(\bx_\bw^*)\Big)\le 0. \label{eq:pairwise_tradeoff_sum_merge}
\end{align}
Since $t>0$, this gives
\begin{align}
{f_{\rm PF}}_i(\tilde{\bw})-{f_{\rm PF}}_i(\bw)\le {f_{\rm PF}}_j(\tilde{\bw})-{f_{\rm PF}}_j(\bw).
\end{align}
Finally, dividing by $t$ and letting $t\downarrow 0$ gives
\begin{align}
\bigl[D\fPF(\bw)(\be_i-\be_j)\bigr]_i\le \bigl[D\fPF(\bw)(\be_i-\be_j)\bigr]_j.
\end{align}
This completes the proof.
\end{proof}
Lemma~\ref{lem:fPFdifferentiable_multi} shows that the scalarization-induced PF map remains smooth in the multiple-objective case. We next discuss how the bi-objective speed and CDF generalize to a surface-area density on the simplex.
\paragraph{Generalization of $v$, $s$, and $\Farc$.}
In the general $M$-objective setting, the weight $\bw$ lies on the $(M-1)$-dimensional simplex $\Delta_M$, so LS induces an $(M-1)$-dimensional surface on the PF rather than a curve. Let $D\fPF(\bw)$ denote the Jacobian of $\fPF$ with respect to local coordinates on $\Delta_M$. The natural geometric analogue of the bi-objective speed is the Jacobian-induced surface-area density
\begin{align}
v(\bw):=\sqrt{\det\!\big(D\fPF(\bw)^\top D\fPF(\bw)\big)}. \label{eq:density_v}
\end{align}
When $M=2$, $\Delta_2$ is one-dimensional and \eqref{eq:density_v} reduces to the usual PF traversal speed in \eqref{eq:arc_cdf_def}. For $M>2$, $v(\bw)$ plays the role of a surface-area density on the simplex, and its normalization induces uniform surface-area coverage of the PF.

This also gives a natural multivariate analogue of the arc-length CDF. Since $\bw\in\Delta_M$ has only $M-1$ degrees of freedom, we use the first $M-1$ coordinates and write $w_M=1-\sum_{m=1}^{M-1}w_m$. For $(w_1,\dots,w_{M-1})$ satisfying $w_m\ge 0$ and $\sum_{m=1}^{M-1}w_m\le 1$, define
\begin{align}
\Farc(w_1,\dots,w_{M-1}):=\frac{\int_{\{\mathbf{z}\in\Delta_M:0\le z_m\le w_m,\ m=1,\dots,M-1\}} v(\mathbf{z})\,d\mathbf{z}}{\int_{\Delta_M}v(\mathbf{z})\,d\mathbf{z}}.
\end{align}
Equivalently, $\Farc$ is the multivariate CDF associated with the surface-area density $v(\bw)$ under the coordinate chart $(w_1,\dots,w_{M-1})$. Thus, the bi-objective CDF refinement principle extends conceptually to the multi-objective case by replacing arc-length density with surface-area density on the simplex. Developing an efficient multi-dimensional refinement procedure, for example, through multivariate CDF estimation, transport maps, or resampling on $\Delta_M$, is beyond the scope of this paper and left for future work.

\subsection{Closed-form $\Farc$ for Quadratic Problems}
\label{app:quadratic}
We next give a simple quadratic example where the scalarization path and the corresponding arc-length CDF can be computed explicitly. This example illustrates the geometric mechanism behind SURF in a setting where all quantities are available in closed form.
\paragraph{$n$-dimensional quadratic bi-objective problem.}
Consider the unconstrained bi-objective problem on $\bx\in \mathbb{R}^n$:
\begin{align}
\f(\bx)
=
\begin{bmatrix}
f_1(\bx)\\
f_2(\bx)
\end{bmatrix}
=
\begin{bmatrix}
(\bx-\bb_1)^\top Q_1 (\bx-\bb_1)\\
(\bx-\bb_2)^\top Q_2 (\bx-\bb_2)
\end{bmatrix},
\quad Q_1,Q_2\succ 0.
\end{align}
For each scalarization weight $w\in[0,1]$, $\LS_f(\bx;w)=w f_1(\bx)+(1-w)f_2(\bx)$ is strongly convex and it admits the unique minimizer
\begin{align}
\bx_w^*
=
\bigl(wQ_1+(1-w)Q_2\bigr)^{-1}
\bigl(wQ_1\bb_1+(1-w)Q_2\bb_2\bigr),
\label{eq:quad_nd_xw}
\end{align}
and the PS is $\XPF
=
\{\bx_w^*:w\in[0,1]\}$.
Let
\begin{align}
\Sigma(w):=wQ_1+(1-w)Q_2,
\quad
P(w):=\Sigma(w)^{-1}wQ_1,
\quad
\bd:=\bb_1-\bb_2.
\end{align}
Then \eqref{eq:quad_nd_xw} can be written as
\begin{align}
\bx_w^*
=
P(w)\bb_1+\bigl(I-P(w)\bigr)\bb_2
=
\bb_2+P(w)\bd.
\end{align}
Therefore,
\begin{align}
\bx_w^*-\bb_1
=
-\bigl(I-P(w)\bigr)\bd,
\quad
\bx_w^*-\bb_2
=
P(w)\bd,
\end{align}
and the induced PF manifold is
\begin{align}
\f_{\rm PF}(w)
:=
\f(\bx_w^*)
=
\begin{bmatrix}
\bd^\top \bigl(I-P(w)\bigr)^\top Q_1 \bigl(I-P(w)\bigr)\bd\\
\bd^\top P(w)^\top Q_2 P(w)\bd
\end{bmatrix},
\quad w\in[0,1].
\end{align}

Differentiating $P(w)=\Sigma(w)^{-1}wQ_1$ gives
\begin{align}
P'(w)
=
\Sigma(w)^{-1}\Bigl(Q_1\bigl(I-P(w)\bigr)+Q_2P(w)\Bigr).
\label{eq:quad_nd_Pprime}
\end{align}
Hence
\begin{align}
\frac{\partial}{\partial w}\f_{\rm PF}(w)
=
\begin{bmatrix}
-2 \bd^\top \bigl(I-P(w)\bigr)^\top Q_1 P'(w)\bd\\
\phantom{-}2 \bd^\top P(w)^\top Q_2 P'(w)\bd
\end{bmatrix}.
\end{align}
Using \eqref{eq:quad_nd_Pprime}, the PF traversal speed satisfies
\begin{align}
v(w)
&:=
\left\|
\frac{\partial}{\partial w}\f_{\rm PF}(w)
\right\| =
2\left\|
\begin{bmatrix}
-\bd^\top \bigl(I-P(w)\bigr)^\top Q_1 \Sigma(w)^{-1}\bigl(Q_1(I-P(w))+Q_2P(w)\bigr)\bd\\
\phantom{-}\bd^\top P(w)^\top Q_2 \Sigma(w)^{-1}\bigl(Q_1(I-P(w))+Q_2P(w)\bigr)\bd
\end{bmatrix}
\right\| \nonumber\\
&\propto
\sqrt{(m_1(w)+m_3(w))^2+(m_2(w)+m_3(w))^2},
\label{eq:quad_nd_speed}
\end{align}
where
\begin{align*}
m_1(w)&:=\bd^\top (I-P(w))^\top Q_1\Sigma(w)^{-1}Q_1(I-P(w))\bd,\\
m_2(w)&:=\bd^\top P(w)^\top Q_2\Sigma(w)^{-1}Q_2P(w)\bd,\\
m_3(w)&:=
\bd^\top P(w)^\top Q_2\Sigma(w)^{-1}Q_1(I-P(w))\bd .
\end{align*}
Thus, in general, $v(w)$ is non-constant, so uniform sampling of $w$ does not produce uniform PF coverage. Accordingly,
\begin{align}
\Farc(w)=
\frac{\int_0^w \sqrt{(m_1(u)+m_3(u))^2+(m_2(u)+m_3(u))^2} du}
{\int_0^1 \sqrt{(m_1(u)+m_3(u))^2+(m_2(u)+m_3(u))^2} du}.
\end{align}
Thus, Rule~\ref{prin:pf_aware_ls_weighting} applies directly in this example. $\Farc$ and its inverse can be computed numerically in a straightforward manner, and the PF-aware weights are then given by $w_n=\Farc^{-1}(n/N)$.

\paragraph{1-dimensional quadratic bi-objective problem.}
When $\bx\in\X$ is reduced to the one-dimensional case $x\in\mathbb{R}$,
\begin{align}
\f(x)=
\begin{bmatrix}
f_1(x)\\
f_2(x)
\end{bmatrix}
=
\begin{bmatrix}
q_1(x-b_1)^2\\
q_2(x-b_2)^2
\end{bmatrix},
\quad q_1,q_2>0,\quad b_1\neq b_2.
\end{align}
For each $w\in[0,1]$, the scalarized objective
$
\LS_f(x;w)=w f_1(x)+(1-w)f_2(x)
$
is strongly convex and admits the unique minimizer
\begin{align}
x_w^*
=
\frac{w q_1 b_1+(1-w)q_2 b_2}{w q_1+(1-w)q_2},
\end{align}
so the PS is
$
\X_{\rm PF}
=
\{x_w^*:w\in[0,1]\}
=
[\min\{b_1,b_2\}, \max\{b_1,b_2\}].
$
The induced PF manifold is
\begin{align}
\f_{\rm PF}(w)
=
\f(x_w^*)
=
\begin{bmatrix}
q_1 q_2^2 \dfrac{(1-w)^2 (b_2-b_1)^2}{(w q_1+(1-w)q_2)^2}\\
q_2 q_1^2 \dfrac{w^2 (b_2-b_1)^2}{(w q_1+(1-w)q_2)^2}
\end{bmatrix},
\quad w\in[0,1].
\end{align}
Differentiating gives
\begin{align}
\frac{\partial}{\partial w}\f_{\rm PF}(w)
=
\frac{2q_1^2q_2^2 (b_2-b_1)^2}{(w q_1+(1-w)q_2)^3}
\begin{bmatrix}
-(1-w)\\
w
\end{bmatrix},
\end{align}
and hence
\begin{align}
v(w)
:=
\left\|
\frac{\partial}{\partial w}\f_{\rm PF}(w)
\right\|
\propto
\frac{\sqrt{(1-w)^2+w^2}}{(w q_1+(1-w)q_2)^3}.
\label{eq:quadratic_vw}
\end{align}
Therefore, uniform sampling of $w$ does not in general produce uniform PF coverage.

Let $q_2/q_1>0$. Since the multiplicative constant in \eqref{eq:quadratic_vw} cancels after normalization, $\Farc$ depends on $(q_1,q_2)$ only through $\tfrac{q_2}{q_1}$:
\begin{align}
\Farc(w)
=
\frac{I(w;\tfrac{q_2}{q_1})}{I(1;\tfrac{q_2}{q_1})},
\quad
I(w;\tfrac{q_2}{q_1}):=\int_0^w \frac{\sqrt{(1-u)^2+u^2}}{(u+(1-u)\tfrac{q_2}{q_1})^3} du.
\label{eq:Iwr_def}
\end{align}

We now derive a closed form for $I(w;\tfrac{q_2}{q_1})$. First, center the interval by setting
$
t=2u-1,\quad u=\frac{t+1}{2},\quad du=\frac{dt}{2}.
$
Then
$
(1-u)^2+u^2=\frac{1+t^2}{2},
\quad
u+(1-u)\tfrac{q_2}{q_1}=\frac{(\tfrac{q_2}{q_1}+1)-(\tfrac{q_2}{q_1}-1)t}{2},
$
so
\begin{align}
I(w;\tfrac{q_2}{q_1})
=
2\sqrt2
\int_{-1}^{ 2w-1}
\frac{\sqrt{1+t^2}}{\bigl((\tfrac{q_2}{q_1}+1)-(\tfrac{q_2}{q_1}-1)t\bigr)^3} dt.
\label{eq:Iwr_t}
\end{align}

When $\tfrac{q_2}{q_1}=1$, the denominator in \eqref{eq:Iwr_t} is constant, so
\begin{align}
I(w;1)
= &
\frac{1}{\sqrt2}\int_{-1}^{ 2w-1}\sqrt{1+t^2} dt =
\frac{1}{2\sqrt2}
\Bigl[
t\sqrt{1+t^2}+\operatorname{asinh}(t)
\Bigr]_{t=-1}^{ 2w-1}.
\end{align}
Hence
\begin{align}
\Farc(w)
=
\frac{
(2w-1)\sqrt{2w^2-2w+1}
+\operatorname{asinh}(2w-1)
+\sqrt2+\operatorname{asinh}(1)
}{
2\sqrt2+2\operatorname{asinh}(1)
},
\quad \tfrac{q_2}{q_1}=1.
\label{eq:Farc_r1}
\end{align}

When $\tfrac{q_2}{q_1}\neq 1$.
To remove the square root in \eqref{eq:Iwr_t}, use the standard rationalizing substitution
\begin{align}
z=z(t):=\frac{t}{1+\sqrt{1+t^2}},
\quad\Longleftrightarrow\quad
t=\frac{2z}{1-z^2}.
\label{eq:t_to_z}
\end{align}
Under \eqref{eq:t_to_z},
\begin{align}
\sqrt{1+t^2}=\frac{1+z^2}{1-z^2},
\quad
dt=\frac{2(1+z^2)}{(1-z^2)^2} dz.
\end{align}
Moreover,
\begin{align}
(\tfrac{q_2}{q_1}+1)-(\tfrac{q_2}{q_1}-1)t
=
\frac{(\tfrac{q_2}{q_1}+1)(1-z^2)-2(\tfrac{q_2}{q_1}-1)z}{1-z^2}
=
-\frac{\ell_r(z)}{1-z^2},
\end{align}
where
\begin{align}
\ell_r(z):=(\tfrac{q_2}{q_1}+1)z^2+2(\tfrac{q_2}{q_1}-1)z-(\tfrac{q_2}{q_1}+1).
\label{eq:ellr_def}
\end{align}
Substituting these identities into \eqref{eq:Iwr_t} yields
\begin{align}
I(w;\tfrac{q_2}{q_1})
=
-4\sqrt2
\int_{z(-1)}^{ z(2w-1)}
\frac{(1+z^2)^2}{\ell_r(z)^3} dz.
\label{eq:Iwr_z}
\end{align}
Thus the integral becomes a rational integral. Define $G_r$ as any antiderivative of
$
-4\sqrt2 \frac{(1+z^2)^2}{\ell_r(z)^3},
$
so that
\begin{align}
I(w;\tfrac{q_2}{q_1})=G_r(z(w))-G_r(z_0),
\quad
z(w):=\frac{2w-1}{1+\sqrt{1+(2w-1)^2}},
\quad
z_0:=z(0)=-(\sqrt2-1).
\label{eq:Iwr_Gr}
\end{align}
Therefore,
\begin{align}
\Farc(w)=\frac{G_r(z(w))-G_r(z_0)}{G_r(z_1)-G_r(z_0)},
\quad
z_1:=z(1)=\sqrt2-1,
\quad \tfrac{q_2}{q_1}\neq 1.
\label{eq:Farc_rneq1}
\end{align}

Finally, by symmetry, swapping the two objectives sends $(\tfrac{q_2}{q_1},w)$ to $(1/\tfrac{q_2}{q_1},1-w)$. Hence
\begin{align}
\Farc(w;\tfrac{q_2}{q_1})=1-\Farc(1-w;1/\tfrac{q_2}{q_1}).
\label{eq:Farc_symmetry}
\end{align}
Therefore, without loss of generality, one may assume $\tfrac{q_2}{q_1}>1$ when deriving the closed form.

Collecting the three cases,
\begin{align}
\Farc(w)=
\begin{cases}
\displaystyle
1-\frac{G_{1/\tfrac{q_2}{q_1}}(z(1-w))-G_{1/\tfrac{q_2}{q_1}}(z_0)}{G_{1/\tfrac{q_2}{q_1}}(z_1)-G_{1/\tfrac{q_2}{q_1}}(z_0)},
& 0<\tfrac{q_2}{q_1}<1,\\
\displaystyle
\frac{
(2w-1)\sqrt{2w^2-2w+1}
+\operatorname{asinh}(2w-1)
+\sqrt2+\operatorname{asinh}(1)
}{
2\sqrt2+2\operatorname{asinh}(1)
},
& \tfrac{q_2}{q_1}=1,\\
\displaystyle
\frac{G_r(z(w))-G_r(z_0)}{G_r(z_1)-G_r(z_0)},
& \tfrac{q_2}{q_1}>1,
\end{cases}
\label{eq:Farc_cases}
\end{align}
where $z(w)$, $z_0$, $z_1$, and $G_r$ are defined in \eqref{eq:Iwr_Gr}. Thus, Rule~\ref{prin:pf_aware_ls_weighting} applies directly to this example.

\section{PF Geometry in Bi-Objective Reinforcement Learning}
\label{app:morl}
This appendix specializes the PF-geometry analysis to multi-objective reinforcement learning. We first verify the regularity of the discounted occupancy-measure constraints in Appendix~\ref{app:rank_condition}, then derive the PF speed formula used in the main paper in Appendix~\ref{appendix: proof of speed for negative conditional entropy}, and finally analyze the arc-length CDF estimation error in the singleton-state bandit case in Appendix~\ref{appendix:bandit_R_and_Farc_err}.

\subsection{Provable Regularity Condition for Discounted Occupancy Constraints}
\label{app:rank_condition}
\begin{lemma}[Provable verification of Assumption~\ref{assump:LICQ} for RL]
\label{lem:rank_E_minus_gamma_P}
Let $E_{(s,a),s'}=\mathbf 1\{s'=s\}$, $P_{(s,a),s'}=\mathcal P(s' \mid s,a)$, and suppose $\gamma\in(0,1)$. Then $E-\gamma P$ has full column rank. Equivalently, $(E-\gamma P)^\top$ has full row rank.
\end{lemma}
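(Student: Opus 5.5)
Our plan is to show that $E-\gamma P\in\mathbb R^{|\mathcal S||\mathcal A|\times|\mathcal S|}$ has trivial kernel. Take any $\mathbf y\in\mathbb R^{|\mathcal S|}$ with $(E-\gamma P)\mathbf y=0$. Reading off the row indexed by $(s,a)$ gives
\begin{align*}
y(s)=\gamma\sum_{s'}\mathcal P(s'\mid s,a)\,y(s'),\quad\forall (s,a)\in\mathcal S\times\mathcal A,
\end{align*}
since $(E\mathbf y)_{(s,a)}=\sum_{s'}\mathbf 1\{s'=s\}y(s')=y(s)$ and $(P\mathbf y)_{(s,a)}=\sum_{s'}\mathcal P(s'\mid s,a)y(s')$.

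The key step is a maximum-modulus (contraction) argument. Choose $s^\star\in\arg\max_s|y(s)|$ and any action $a$. Because $\mathcal P(\cdot\mid s^\star,a)$ is a probability vector, the displayed identity gives
\begin{align*}
|y(s^\star)|\le\gamma\sum_{s'}\mathcal P(s'\mid s^\star,a)\,|y(s')|\le\gamma\|\mathbf y\|_\infty=\gamma|y(s^\star)|.
\end{align*}
Since $\gamma<1$, this forces $|y(s^\star)|=0$, hence $\mathbf y=0$. Therefore $\ker(E-\gamma P)=\{0\}$, so $E-\gamma P$ has full column rank $|\mathcal S|$. Full row rank of $(E-\gamma P)^\top$ follows by transposition, because row rank equals column rank.

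An alternative route, if a more structural statement is preferred, is to fix any deterministic policy $\pi$ and let $\Pi_\pi\in\mathbb R^{|\mathcal S|\times|\mathcal S||\mathcal A|}$ select the rows $(s,\pi(s))$. Then $\Pi_\pi(E-\gamma P)=I-\gamma P_\pi$, where $P_\pi$ is row-stochastic. This matrix is invertible because its Neumann series $\sum_k\gamma^kP_\pi^k$ converges, which again gives full column rank.

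There is no real obstacle here. The only care needed is to keep the index convention $E_{(s,a),s'}$ straight, so that $E\mathbf y$ replicates $y(s)$ across actions, and to use that $\mathcal A$ is nonempty so that at least one row exists for each state.
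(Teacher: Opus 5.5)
Your proposal is correct and follows the paper's proof essentially line for line: both reduce the claim to $\ker(E-\gamma P)=\{0\}$ and apply the maximum-modulus bound $\|\mathbf y\|_\infty\le\gamma\|\mathbf y\|_\infty$ at a state maximizing $|y(s)|$. Your alternative via a deterministic-policy row selector, $\Pi_\pi(E-\gamma P)=I-\gamma P_\pi$, is also valid but not needed.
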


\begin{proof}
It suffices to show that
$\ker(E-\gamma P)=\{0\}$.
Let $y\in\mathbb R^{|\mathcal S|}$ satisfy
$
(E-\gamma P)y=0.
$
Then, for every $(s,a)\in\mathcal S\times\mathcal A$,
$
y(s)=\gamma\sum_{s'\in\mathcal S}\mathcal P(s'\mid s,a) y(s').
$
Choose $\bar s\in\arg\max_{s\in\mathcal S}|y(s)|$. Since
$\mathcal P(\cdot\mid \bar s,a)$ is a probability distribution, for any
$a\in\mathcal A$,
$
\|y\|_\infty
=
|y(\bar s)|
\le
\gamma\sum_{s'\in\mathcal S}\mathcal P(s'\mid \bar s,a) |y(s')|
\le
\gamma \|y\|_\infty .
$
Because $0<\gamma<1$, this implies $\|y\|_\infty=0$, hence $y=0$.
Therefore $\ker(E-\gamma P)=\{0\}$, so $E-\gamma P$ has full column rank.
\end{proof}

\subsection{Inner Optimization Trajectories
Stay in the Subset $\cal K$ in the Relative Interior of $\cal X$}
\label{app:PGD_trajectory}
For the negative conditional-entropy formulation~\eqref{eq: MDP}, the gradient magnitude diverges as any element of $\bx$ goes to $0$. Fix some $\bx^{(0)}\in \operatorname{relint}(\X)=\{\bx\in\mathbb R_{++}^{|\mathcal S||\mathcal A|}:(E-\gamma P)^\top\bx=(1-\gamma)\rho\}$ as the initialization. Then, for each $w\in[0,1]$, the sublevel set $\{\bx\in\X:\LS_{\f}(\bx;w)\le \LS_{\f}(\bx^{(0)};w)\}$ remains strictly bounded away from the boundary of $\X$. Therefore, we can directly adopt
\begin{align}
\mathcal K:=\bigcup_{w\in[0,1]}\{\bx\in\X:\LS_{\f}(\bx;w)\le \LS_{\f}(\bx^{(0)};w)\} \label{eq:level_set}
\end{align}
Since $[0,1]$ is compact and $\LS_{\f}(\bx;w)$ is continuous in $(\bx,w)$, it follows that $\mathcal K$ is a compact subset of $\operatorname{relint}(\X)$. Moreover, for any fixed $w\in[0,1]$, projected gradient descent {on the policy or occupancy simplex} with stepsize $l_{f,1}^{-1}$ on the $\mu$-strongly convex and {locally} $l_{f,1}$-smooth objective  at $\bx^{(t)}$ satisfies
\begin{align}
\LS_{\f}(\bx^{(t+1)};w)-\LS_{\f}(\bx_w^*;w)
\le
\big(1-\mu l_{f,1}^{-1}\big)
\big(\LS_{\f}(\bx^{(t)};w)-\LS_{\f}(\bx_w^*;w)\big), \label{eq:PGD_contraction}
\end{align}
where $0<1-\mu l_{f,1}^{-1}<1$~\citep{fatkhullin2025stochastic}. Hence $\LS_{\f}(\bx^{(t+1)};w)\le \LS_{\f}(\bx^{(t)};w)\le \LS_{\f}(\bx^{(0)};w)$, so $\bx^{(t)}\in \mathcal K$ for all $t$. By the bijection $h_m(\bu)=f_m(g(\bu))$ and the equivalence between policy-space and occupancy-space optimization~\citep{schlaginhaufen2024towards}, the same sublevel-set invariance also holds for the corresponding policy-space update~\citep{fatkhullin2025stochastic}.

For general problems under Assumption~\ref{assump:hidden_regular}, fix $\bx^{(0)}\in {\cal K}$, and we can similarly obtain $\cup_{w\in[0,1]}\{\bx\in\X:\LS_{\f}(\bx;w)\le \LS_{\f}(\bx^{(0)};w)\} \subseteq {\cal K}$. Since $\LS_{\f}(\bx;w)$ is continuous in $(\bx,w)$ and $[0,1]$ is compact, this level-set union is closed. Moreover, $\LS_{\f}(\cdot;w)$ is $\mu$-strongly convex for every $w\in[0,1]$, so its sublevel sets are bounded. By compactness of $[0,1]$, the bound is uniform over $w$. Hence $\cup_{w\in[0,1]}\{\bx\in\X:\LS_{\f}(\bx;w)\le \LS_{\f}(\bx^{(0)};w)\}$ is compact. Therefore, we can obtain $l_{f,1}=\sup_{\bx\in\mathcal K,\;w\in[0,1]}\|\nabla^2\LS_{\f}(\bx;w)\|_2<\infty$. Following a similar analysis to~\eqref{eq:PGD_contraction}, we can show that $\bx^{(t)}$ stays in the level-set union that belongs to $\cal K$, where each $f_m$ remains $l_{f,1}$-smooth on $\cal K$, for all $t$ and all $w\in [0,1]$.

\subsection{Proof of Proposition~\ref{prop: speed for negative conditional entropy}: PF Speed in Bi-Objective RL}
\label{appendix: proof of speed for negative conditional entropy}

\begin{proof}
The unique minimizer satisfies $\bx_w^*\in\Delta_{S\times A}^{\circ}$~\citep{muller2026optimal}, so only the equality constraint is active. Hence the Lagrangian for $\min_{\bx\in\X}\LS_f(\bx;w)$ is
\begin{align}
\mathcal L(\bx,\boldsymbol{\nu};w)=\LS_f(\bx;w)+\left\langle\boldsymbol{\nu},(E-\gamma P)^\top \bx-(1-\gamma)\rho\right\rangle .
\end{align}
The KKT conditions are
\begin{align}
0
=& \nabla_{\boldsymbol{\nu}}\mathcal L(\bx_w^*,\boldsymbol{\nu}_w^*;w)
=(E-\gamma P)^\top \bx_w^*-(1-\gamma)\rho,
\label{eq:KKT_nu_rl}\\
0
=& \nabla_{\bx}\mathcal L(\bx_w^*,\boldsymbol{\nu}_w^*;w)
=\underbrace{\beta\log\bx_w^*-\beta E\log(E^\top\bx_w^*)-\beta R_0}_{=\nabla_{\bx}\bar\tau(\bx_w^*)}-\bigl(wR_1+(1-w)R_2\bigr)+(E-\gamma P)\boldsymbol{\nu}_w^* .
\label{eq:KKT_x_rl}
\end{align}

Since $H_w$ is positive definite on $\ker(J)$ and $J=(E-\gamma P)^\top$ has full row rank by Lemma~\ref{lem:rank_E_minus_gamma_P}, standard sensitivity results imply that $w\mapsto(\bx_w^*,\boldsymbol{\nu}_w^*)$ is Lipschitz-continuous in $w$~\citep[Theorem 2.16]{ito2008lagrange} and locally continuously differentiable~\cite[Theorem 2.1]{fiacco1976sensitivity}.

Differentiating \eqref{eq:KKT_nu_rl}-\eqref{eq:KKT_x_rl} with respect to $w$ yields
\begin{align}
J\nabla_w\bx_w^*
=& 0,
\label{eq:diff_kkt_eq_rl}\\
H_w\nabla_w\bx_w^*+J^\top\nabla_w\boldsymbol{\nu}_w^*
=& R_1-R_2,
\label{eq:diff_kkt_stat_rl}
\end{align}
where $J=(E-\gamma P)^\top$ and $H_w=\beta\bigl(\Diag(1/\bx_w^*)-E\Diag(1/(E^\top\bx_w^*))E^\top\bigr)$.

Although $H_w$ is singular on the full space, it is positive definite on the feasible tangent space $\ker(J)$; we therefore use the augmented Hessian $\widetilde H_w:=H_w+cJ^\top J$ with any fixed $c>0$.

For any $z$, the conditional-entropy Hessian is positive semidefinite:
\begin{align}
z^\top H_w z
=& \beta\sum_s\left(\sum_a\frac{z(s,a)^2}{\bx_w^*(s,a)}-\frac{\left(\sum_a z(s,a)\right)^2}{\sum_a\bx_w^*(s,a)}\right)
\ge 0 .
\end{align}
Equality in the Cauchy-Schwarz bound requires $z(s,a)=\lambda_s\bx_w^*(s,a)$ for all $a$, i.e., only a state-marginal perturbation; by sufficient exploration and the bijective policy--occupancy correspondence, no nonzero such direction lies in $\ker(J)$, so $H_w$ is positive definite on $\ker(J)$. Therefore, for every $z\ne0$,
\begin{align}
z^\top \widetilde H_w z
=& z^\top H_w z+c\|Jz\|^2
>0,
\end{align}
where the case $Jz\ne0$ follows from $c\|Jz\|^2>0$, and the case $Jz=0$ follows from positive definiteness of $H_w$ on $\ker(J)$. 

Thus $\widetilde H_w\succ0$, and since $J$ has full row rank, $\widetilde S_w:=J\widetilde H_w^{-1}J^\top$ is positive definite and invertible.

Because $J\nabla_w\bx_w^*=0$, replacing $H_w$ by $\widetilde H_w$ does not change the primal sensitivity equation:
\begin{align}
\widetilde H_w\nabla_w\bx_w^*+J^\top\nabla_w\boldsymbol{\nu}_w^*
=& H_w\nabla_w\bx_w^*+cJ^\top J\nabla_w\bx_w^*+J^\top\nabla_w\boldsymbol{\nu}_w^* \nonumber\\
=& H_w\nabla_w\bx_w^*+J^\top\nabla_w\boldsymbol{\nu}_w^* \nonumber\\
=& R_1-R_2.
\end{align}
Therefore, the same $\nabla_w\bx_w^*$ solves the augmented KKT system
\begin{align}
\begin{bmatrix}\widetilde H_w&J^\top\\J&0\end{bmatrix}
\begin{bmatrix}\nabla_w\bx_w^*\\\nabla_w\boldsymbol{\nu}_w^*\end{bmatrix}
=
\begin{bmatrix}R_1-R_2\\0\end{bmatrix}.
\label{eq:aug_KKT_condition_on_derive}
\end{align}

Since $\widetilde H_w\succ0$ and $\widetilde S_w=J\widetilde H_w^{-1}J^\top\succ0$, the standard Schur-complement inverse formula gives
\begin{align}
\begin{bmatrix}\widetilde H_w&J^\top\\J&0\end{bmatrix}^{-1}
=
\begin{bmatrix}
\widetilde H_w^{-1}-\widetilde H_w^{-1}J^\top\widetilde S_w^{-1}J\widetilde H_w^{-1}
&
\widetilde H_w^{-1}J^\top\widetilde S_w^{-1}
\\
\widetilde S_w^{-1}J\widetilde H_w^{-1}
&
-\widetilde S_w^{-1}
\end{bmatrix}.
\end{align}
Hence,
\begin{align}
\nabla_w\bx_w^*
=
\left(\widetilde H_w^{-1}-\widetilde H_w^{-1}J^\top\widetilde S_w^{-1}J\widetilde H_w^{-1}\right)(R_1-R_2).
\label{eq:nabla_xw_rl}
\end{align}

Next, for the bi-objective PF map $\fPF(w)=[{f_{\rm PF}}_1(w),{f_{\rm PF}}_2(w)]^\top$, we have
\begin{align}
\frac{\partial}{\partial w}{f_{\rm PF}}_1(w)
=& \left\langle\nabla_{\bx} f_1(\bx_w^*),\nabla_w\bx_w^*\right\rangle \nonumber\\
=& \left\langle\nabla_{\bx}\bar\tau(\bx_w^*)-R_1,\nabla_w\bx_w^*\right\rangle \nonumber\\
=& \left\langle(1-w)(R_2-R_1)-(E-\gamma P)\boldsymbol{\nu}_w^*,\nabla_w\bx_w^*\right\rangle \nonumber\\
=& (1-w)\left\langle R_2-R_1,\nabla_w\bx_w^*\right\rangle,
\label{eq:df1dw_rl}
\end{align}
where the third equality uses \eqref{eq:KKT_x_rl}, and the last equality uses \eqref{eq:diff_kkt_eq_rl}. Similarly,
\begin{align}
\frac{\partial}{\partial w}{f_{\rm PF}}_2(w)
=
w\left\langle R_1-R_2,\nabla_w\bx_w^*\right\rangle .
\end{align}
Hence
\begin{align}
v(w)
=& \left\|\frac{\partial}{\partial w}\fPF(w)\right\| =\sqrt{\left(\frac{\partial}{\partial w}{f_{\rm PF}}_1(w)\right)^2+\left(\frac{\partial}{\partial w}{f_{\rm PF}}_2(w)\right)^2} \nonumber\\
=& \sqrt{(1-w)^2+w^2}\left|(R_1-R_2)^\top\nabla_w\bx_w^*\right|.
\end{align}
Substituting \eqref{eq:nabla_xw_rl} yields the PF speed for bi-objective RL.

Now consider the case $|\mathcal S|=1$. Then the discounted occupancy measure coincides with the action distribution, so the feasible set reduces to $\Delta_A=\{\bx\in\mathbb R_+^A:\langle\mathbf 1,\bx\rangle=1\}$.
Since $\bx\mapsto\beta\langle\bx,\log\bx\rangle$ is strictly convex on $\Delta_A^\circ$ and the remaining terms are linear, $\LS_f(\cdot;w)$ is strictly convex on $\Delta_A$ and has a unique minimizer. 
The Lagrangian is
\begin{align*}
\mathcal L(\bx,\nu;w)=\LS_f(\bx;w)+\nu(\langle\mathbf 1,\bx\rangle-1).
\end{align*}
Stationarity gives $\beta\log\bx-\beta R_0-\bigl(wR_1+(1-w)R_2\bigr)+\nu\mathbf 1=0$, so $\bx$ is proportional to $\exp\left(R_0+\beta^{-1}\bigl(wR_1+(1-w)R_2\bigr)\right)$. Enforcing $\langle\mathbf 1,\bx\rangle=1$ yields
\begin{align}
\bu_w^*=\bx_w^*=\SoftMax\left(R_0+\beta^{-1}\bigl(wR_1+(1-w)R_2\bigr)\right)\in\Delta_A^\circ .
\end{align}

Differentiating $\bx_w^*=\SoftMax(z(w))$ with $z(w)=R_0+\beta^{-1}\bigl(wR_1+(1-w)R_2\bigr)$ gives $z'(w)=\beta^{-1}(R_1-R_2)$. Using the standard SoftMax Jacobian $D\SoftMax(z)=\Diag(\bx_w^*)-\bx_w^*(\bx_w^*)^\top$, 
\begin{align}
\frac{\partial}{\partial w}\bx_w^*
=
\beta^{-1}\Bigl(\Diag(\bx_w^*)-\bx_w^*(\bx_w^*)^\top\Bigr)(R_1-R_2).
\end{align}
Substituting this into $v(w)=\sqrt{(1-w)^2+w^2}\left|(R_1-R_2)^\top\frac{\partial}{\partial w}\bx_w^*\right|$ and using symmetry of $\Diag(\bx_w^*)-\bx_w^*(\bx_w^*)^\top$ gives \eqref{eq:bandit_speed}.
\end{proof}

\subsection{CDF Estimation Error in Bandit Problems}
\label{appendix:bandit_R_and_Farc_err}

\begin{proposition}[Arc-length CDF estimation error in bandits]
\label{prop:bandit_R_and_Farc_err}
Consider MDP with singleton state space $|\mathcal{S}|=1$ and $|\mathcal{A}|=A\ge 2$ actions. Suppose the bounded speed condition in Lemma~\ref{lem:v_bound_from_tangent_gap} holds.
Given a budget of $T$ pulls, suppose that each pull of arm $a$ returns a bounded bi-objective reward vector
$[r_1(a),r_2(a)]$ with $\E[r_m(a)]=R_m(a)$ for $m\in\{1,2\}$.
We estimate $R_m(a)$ by the empirical mean $\hat R_m(a)$.
Then, with probability at least $1-\delta$, there is
\begin{align}
\|\hatFarc-\Farc\|_\infty
 = \Oc\left(s(1)^{-1}\|\hat R_m-R_m\|_\infty \right) = 
\Oc \left(s(1)^{-1}\sqrt{A\log(A/\delta)T^{-1}}\right).
\end{align}
\end{proposition}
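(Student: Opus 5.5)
The proof splits into two steps: a deterministic perturbation bound showing that $\hatFarc$ is Lipschitz in $(\hat R_1,\hat R_2)$ in sup-norm, and a concentration bound for the empirical means.

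\emph{Concentration.} Allocate the $T$ pulls evenly, so each arm receives $T/A$ pulls. Apply Hoeffding's inequality to each of the $2A$ bounded empirical means and take a union bound. With probability at least $1-\delta$,
\[
\max_m\|\hat R_m-R_m\|_\infty=\Oc\big(\sqrt{A\log(A/\delta)/T}\big).
\]
This gives the second equality in the statement.

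\emph{Perturbation of the speed.} By Proposition~\ref{prop: speed for negative conditional entropy}, $v(w;R_1,R_2)=\beta^{-1}\sqrt{(1-w)^2+w^2}\,(R_1-R_2)^\top\big(\Diag(\bu_w^*)-\bu_w^*(\bu_w^*)^\top\big)(R_1-R_2)$, where $\bu_w^*=\SoftMax(R_0+\beta^{-1}(wR_1+(1-w)R_2))$. Write $\varepsilon:=\max_m\|\hat R_m-R_m\|_\infty$ and work on the event where $\varepsilon\le 1$, which holds once $T$ is large relative to $A\log(A/\delta)$. Three facts give the bound.
\begin{itemize}
\item The SoftMax is $1$-Lipschitz from $\ell_\infty$ to $\ell_1$, and the argument changes by at most $\beta^{-1}\varepsilon$. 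Hence $\|\hat\bu_w-\bu_w^*\|_1\le 2\beta^{-1}\varepsilon$ uniformly in $w$.
\item The matrix $\Diag(\bu)-\bu\bu^\top$ is Lipschitz in $\bu$ with respect to $\ell_1$, with a constant depending only on the fact that $\bu\in\Delta_A$.
\item Write $d=R_1-R_2$. The quadratic form satisfies $\hat d^\top \hat M\hat d-d^\top M d=(\hat d-d)^\top\hat M(\hat d+d)+d^\top(\hat M-M)d$, where $\|\hat d-d\|_\infty\le 2\varepsilon$ and the rewards are bounded.
\end{itemize}
Together these give $\sup_{w}|\hat v(w)-v(w)|\le L\varepsilon$, where $L$ depends only on $\beta$ and the reward bound.

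\emph{Perturbation of the normalized CDF.} Integrating the speed bound gives $\sup_w|\hat s(w)-s(w)|\le L\varepsilon$, and in particular $|\hat s(1)-s(1)|\le L\varepsilon$. For each $w$,
\[
\hatFarc(w)-\Farc(w)=\frac{\hat s(w)-s(w)}{\hat s(1)}+s(w)\Big(\frac{1}{\hat s(1)}-\frac{1}{s(1)}\Big).
\]
Since $s(w)\le s(1)$, the absolute value is at most $2L\varepsilon/\hat s(1)$.

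\emph{Main obstacle: the denominator.} The remaining task is to lower-bound $\hat s(1)$, since a small estimated arc length would blow up the bound. Lemma~\ref{lem:v_bound_from_tangent_gap} gives $s(1)\ge v_{\min}>0$. Whenever $L\varepsilon\le s(1)/2$, we have $\hat s(1)\ge s(1)/2$, and therefore
\[
\|\hatFarc-\Farc\|_\infty\le 4L\varepsilon/s(1)=\Oc\big(s(1)^{-1}\varepsilon\big).
\]
The condition $L\varepsilon\le s(1)/2$ holds for $T$ large enough, namely $T\gtrsim A\log(A/\delta)L^2/v_{\min}^2$. In the opposite regime the claim is trivial: $\|\hatFarc-\Farc\|_\infty\le 1$, while $s(1)^{-1}\varepsilon\ge 1/(2L)$, so the bound holds after enlarging the constant in $\Oc$.
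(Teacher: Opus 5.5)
Your proof is correct and follows the same route as the paper. Both use Hoeffding with a union bound over the $2A$ empirical means, a uniform bound $\sup_w|\hat v(w)-v(w)|\le L\varepsilon$ from the closed-form bandit speed, integration to control $\hat s$, and the denominator bound $\hat s(1)\ge s(1)/2$ once $T$ is large. Your version is slightly sharper in two places: the $\ell_\infty\to\ell_1$ Lipschitz property of $\SoftMax$ gives an explicit $L$ independent of $A$ and $R_0$, where the paper only asserts some $L_R$ depending on $(A,\beta,R_{\max})$ via compactness; and you cover the small-$T$ regime with the trivial bound $\|\hatFarc-\Farc\|_\infty\le 1$ (valid whenever $\hat s(1)>0$, so that $\hatFarc$ is defined), which the paper excludes by assuming $T$ large enough.
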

\begin{proof}
We first control the reward estimation error.
Under a total budget of $T$ pulls, suppose each arm is sampled at least
$\lfloor T/A\rfloor$ times. Since each reward component lies in
$[-R_{\max},R_{\max}]$, Hoeffding's inequality yields that for any $\epsilon>0$ and
each $m\in\{1,2\}$,
$
\Pr \left(
\max_{a\in[A]} |\hat R_m(a)-R_m(a)| \ge \epsilon
\right)
\le
2A \exp \left(
-\frac{\lfloor T/A\rfloor \epsilon^2}{2R_{\max}^2}
\right).
$
Hence, by choosing $\epsilon_R
:=
2R_{\max}\sqrt{\frac{A\log(4A/\delta)}{2T}}$,
we obtain, with probability at least $1-\delta$,
\begin{align}
\|\hat R_m-R_m\|_\infty \le \epsilon_R,
\quad m\in\{1,2\}.
\label{eq:bandit_R_uniform_bound}
\end{align}

We now bound $\|\hatFarc-\Farc\|_\infty$. For any $w\in[0,1]$,
\begin{align*}
\big|\hatFarc(w)-\Farc(w)\big|
= & 
\left|\frac{\hat s(w)}{\hat s(1)}-\frac{s(w)}{s(1)}\right| \\
\leq &
\frac{|\hat s(w)-s(w)|}{s(1)}
+
\frac{s(w)}{s(1)}\frac{| \hat s(1)-s(1) |}{\hat s(1)}.
\end{align*}
Since $0\le s(w)\le s(1)$, we have
\begin{align*}
|\hat s(w)-s(w)|
\leq &
\int_0^w |\hat v_u(u)-v_u(u)| du
\le
\sup_{u\in[0,1]} |\hat v_u(u)-v_u(u)|,
\\
|\hat s(1)-s(1)|
\leq &
\int_0^1 |\hat v_u(u)-v_u(u)| du
\le
\sup_{u\in[0,1]} |\hat v_u(u)-v_u(u)|.
\end{align*}
Therefore,
\begin{align}
\big|\hatFarc(w)-\Farc(w)\big|
\le
\frac{1}{s(1)}\sup_{u\in[0,1]} |\hat v_u(u)-v_u(u)|
+
\frac{1}{\hat s(1)}\sup_{u\in[0,1]} |\hat v_u(u)-v_u(u)|.
\label{eq:Farc_diff_pre_event}
\end{align}

It remains to control the speed perturbation.
Since $|R_m(a)|\le R_{\max}$ for all $a\in[A]$ and $m\in\{1,2\}$,
the logits $\beta^{-1}(wR_1+(1-w)R_2)$ remain uniformly bounded in $\ell_\infty$
for all $w\in[0,1]$. Hence the softmax vector $\bx_w^*$ stays in a compact subset
of $\Delta_A^\circ$, uniformly over $w\in[0,1]$, and the map
$(R_1,R_2)\mapsto v(w)$ is uniformly Lipschitz-continuous over $w\in[0,1]$.
Therefore, there exists a constant $L_R<\infty$, depending only on
$(A,\beta,R_{\max})$, such that
\begin{align}
\sup_{w\in[0,1]} |\hat v(w)-v(w)|
\le
L_R\big(\|\hat R_1-R_1\|_\infty+\|\hat R_2-R_2\|_\infty\big).
\label{eq:bandit_speed_lipschitz}
\end{align}

On $\left\{ \sup_{u\in[0,1]} |\hat v_u(u)-v_u(u)| \le \frac{s(1)}{2} \right\}$, we have
$
\hat s(1)
\ge
s(1)-|\hat s(1)-s(1)|
\ge
s(1)-\sup_{u\in[0,1]} |\hat v_u(u)-v_u(u)|
\ge
\frac{s(1)}{2}.
$
Substituting this into \eqref{eq:Farc_diff_pre_event} gives, for all $w\in[0,1]$,
$
\big|\hatFarc(w)-\Farc(w)\big|
\le
\frac{1}{s(1)}\sup_{u} |\hat v_u(u)-v_u(u)|
+
\frac{2}{s(1)}\sup_{u} |\hat v_u(u)-v_u(u)|
=
\frac{3}{s(1)}\sup_{u\in[0,1]} |\hat v_u(u)-v_u(u)|.
$
Hence, on $\left\{ \sup_{u\in[0,1]} |\hat v_u(u)-v_u(u)| \le \frac{s(1)}{2} \right\}$,
\begin{align}
\|\hatFarc-\Farc\|_\infty
\le
\frac{3}{s(1)}\sup_{u\in[0,1]} |\hat v_u(u)-v_u(u)|.
\label{eq:Farc_bound_on_Ev}
\end{align}

Moreover, by \eqref{eq:bandit_speed_lipschitz}, the event $\left\{ \sup_{u\in[0,1]} |\hat v_u(u)-v_u(u)| \le \frac{s(1)}{2} \right\}$ is implied by
$
L_R\big(\|\hat R_1-R_1\|_\infty+\|\hat R_2-R_2\|_\infty\big)\le \frac{s(1)}{2}.
$
In particular, if
$
2L_R\epsilon_R\le \frac{s(1)}{2},
$
then \eqref{eq:bandit_R_uniform_bound} implies $\left\{ \sup_{u\in[0,1]} |\hat v_u(u)-v_u(u)| \le \frac{s(1)}{2} \right\}$.
Equivalently, it suffices that
$
\epsilon_R \le \frac{s(1)}{4L_R},
$
which holds whenever $T$ is large enough, namely
$
T
\gtrsim
\frac{A R_{\max}^2 L_R^2}{s(1)^2}\log \frac{A}{\delta}.
$

Finally, on the event \eqref{eq:bandit_R_uniform_bound}, combining
\eqref{eq:Farc_bound_on_Ev} and \eqref{eq:bandit_speed_lipschitz} yields
$
\|\hatFarc-\Farc\|_\infty
\le
\frac{3L_R}{s(1)}
\big(\|\hat R_1-R_1\|_\infty+\|\hat R_2-R_2\|_\infty\big)
\le
\frac{6L_R}{s(1)}\epsilon_R.
$
Therefore, with probability at least $1-\delta$,
$
\|\hatFarc-\Farc\|_\infty
=
O \left(
s(1)^{-1}\sqrt{\frac{A\log(A/\delta)}{T}}
\right),
$
where the hidden constant depends only on $(A,\beta,R_{\max})$ through $L_R$.
This completes the proof.
\end{proof}

\section{Convergence Analysis of SURF}
\label{app:cdf_refinement}

Building on the geometric results summarized in Figure~\ref{fig:proof_roadmap}, this appendix proves the convergence guarantees for SURF. We first analyze the exact-\textsc{InnerSolver} setting and then extend the argument to the practical warm-started finite-step \textsc{InnerSolver} used in Algorithm~\ref{alg:SURF}.

Algorithm~\ref{alg:SURF} can be implemented in the parameter space $\U$ and returns iterates
$\bu_n^{(t)}$, \textcolor{gray}{or in the hidden space $\X$ to return $\bx_{n}^{(t)}$}. Throughout this section, we work with their images in the convex hidden space $\bx_n^{(t)} := g(\bu_n^{(t)}), \bx_w^* := g(\bu_w^*)$ under Assumption~\ref{assump:hidden_regular}. Since $h_m(\bu)=f_m(g(\bu))$, we equivalently have $\mathbf h(\bu_n^{(t)})=\f(\bx_n^{(t)}),\mathbf h(\bu_w^*)=\f(\bx_w^*)$.
Hence, all approximation and chord-length bounds below are stated in the $\bx$-space.

\subsection{Proof of the First Part of Theorem~\ref{thm:cdf}: Discretization Error under Exact \textsc{InnerSolver}}
\label{appendix:Lipschitz-smoothness}
In this subsection, we first consider the exact-\textsc{InnerSolver} setting. We begin by establishing Lipschitz-smoothness properties of $\tilde\Farc_t$ and $\Farc_t$ through Lemmas~\ref{lem:discrete_growth_control_tildeF_biobj}-\ref{lem:bounds_all_t_with_interp}. We then use these properties to control the discretization error in Lemma~\ref{lem:s_tilde_combined_bound}, which completes the proof of the first part of Theorem~\ref{thm:cdf}.

\paragraph{Smoothness of $\Farc_t$ under exact \textsc{InnerSolver} solutions.}
To prove the Lipschitz-smoothness of $\Farc_t$, the key issue is that the update in Algorithm~\ref{alg:SURF} depends on an interpolated CDF constructed from finitely many sampled Pareto points. Thus, to control the outer-loop dynamics, we must show that this interpolated CDF remains monotone and sufficiently Lipschitz-smooth. To do so, we first consider the exact \textsc{InnerSolver} solutions. In this case, $\tilde s^{(t)}$ and $\tilde \Farc_t$ are built from exact sampled solutions $\bu_{n}^{(t)}=\bu^*_{w_n^{(t)}}$ \textcolor{gray}{or equivalently $\bx_{n}^{(t)}=\bx^*_{w_n^{(t)}}$}.

We establish this in three steps. First, we show that the rebuilt CDF increases at a controlled rate across the sampled weights, and that these local growth rates do not change too abruptly between neighboring intervals in Lemma~\ref{lem:discrete_growth_control_tildeF_biobj}. Next, we show that the PCHIP interpolation rule lifts these discrete controls into global derivative and Lipschitz-smoothness bounds for the reconstructed CDF in Lemma~\ref{lem:pchip_derivative_bounds}. Finally, we prove by induction that these properties are preserved by the SURF outer update throughout all iterations in Lemma~\ref{lem:bounds_all_t_with_interp}.

We begin with the discrete CDF values at the sampled weights. Before turning to the rebuilt CDF itself, we first present the following geometric lemma, which relates PF arc-length increments to chord lengths along the PF.
\begin{lemma}[Arc-chord bound]
\label{lemma:arc_chord_sqrt2_pf}
Suppose Assumptions~\ref{assump:hidden_regular}, \ref{assump:LICQ} hold.
Moreover, for any $0\le w_1<w_2\le 1$, 
\begin{align}
s(w_2)-s(w_1)
 \le \sqrt{2} 
\left\|\fPF(w_2)
-\fPF(w_1)\right\|.
\end{align}
\end{lemma}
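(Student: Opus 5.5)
The plan is to exploit the sign structure of the PF velocity established in Lemma~\ref{lem:fPFdifferentiable}. That lemma gives $\frac{\partial}{\partial w}{f_{\rm PF}}_1(w)\le 0$ and $\frac{\partial}{\partial w}{f_{\rm PF}}_2(w)\ge 0$ for all $w\in[0,1]$. It also shows that $\fPF$ is continuously differentiable on $(0,1)$, with one-sided derivatives at the endpoints that are the limits of the interior derivative. Hence $v$ is continuous and bounded on $[0,1]$. The fundamental theorem of calculus therefore applies, giving
\begin{align*}
s(w_2)-s(w_1)=\int_{w_1}^{w_2}\sqrt{\big({f_{\rm PF}}_1'(p)\big)^2+\big({f_{\rm PF}}_2'(p)\big)^2}\,dp .
\end{align*}

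The first step is the pointwise bound $\sqrt{a^2+b^2}\le |a|+|b|$. Because each coordinate derivative has a constant sign, the absolute values can be pulled out of the integrals:
\begin{align*}
\int_{w_1}^{w_2}|{f_{\rm PF}}_1'(p)|\,dp=-\int_{w_1}^{w_2}{f_{\rm PF}}_1'(p)\,dp={f_{\rm PF}}_1(w_1)-{f_{\rm PF}}_1(w_2)=|\Delta_1|,
\end{align*}
and similarly $\int_{w_1}^{w_2}|{f_{\rm PF}}_2'|=|\Delta_2|$, where $\Delta_m:={f_{\rm PF}}_m(w_2)-{f_{\rm PF}}_m(w_1)$. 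This gives $s(w_2)-s(w_1)\le |\Delta_1|+|\Delta_2|$.

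The second step is the elementary inequality $|\Delta_1|+|\Delta_2|\le\sqrt{2}\sqrt{\Delta_1^2+\Delta_2^2}$, which is Cauchy--Schwarz in $\mathbb R^2$. The right-hand side equals $\sqrt2\,\|\fPF(w_2)-\fPF(w_1)\|$, which is the claim.

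The only point requiring care is justifying the fundamental theorem of calculus and the sign conditions up to the endpoints. Both come directly from Lemma~\ref{lem:fPFdifferentiable}: Lipschitz continuity of $w\mapsto\bx_w^*$ and of $\f$ on $\mathcal K$ makes ${f_{\rm PF}}_m$ absolutely continuous, so the integrals are well defined even if one prefers not to rely on continuity of the derivative at $0$ and $1$. The argument needs neither Assumption~\ref{assump:tangent_gap} nor any lower bound on the speed; monotonicity of the coordinates is the entire mechanism.
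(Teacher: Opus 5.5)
Your proof is correct and uses the same mechanism as the paper's proof: the pointwise bound $\|\cdot\|\le\|\cdot\|_1$, then the coordinatewise monotonicity from Lemma~\ref{lem:fPFdifferentiable} to turn $\int|{f_{\rm PF}}_m'|$ into $|{f_{\rm PF}}_m(w_2)-{f_{\rm PF}}_m(w_1)|$, and finally $\|\cdot\|_1\le\sqrt2\|\cdot\|$. The paper runs this in the arc-length reparametrization $\fPF(s^{-1}(\cdot))$, which needs $s$ to be strictly increasing (i.e., Lemma~\ref{lem:v_bound_from_tangent_gap}, hence Assumption~\ref{assump:tangent_gap}); integrating directly in $w$, as you do, avoids this and proves the lemma under exactly the stated Assumptions~\ref{assump:hidden_regular} and~\ref{assump:LICQ}.
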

\begin{proof}
Denote $\tilde{\f}_{\mathrm{PF}}(s):=\fPF(s^{-1}(s))$ for $s\in[0,s(1)]$ and write
$
\fPF_s(s)
=
\begin{bmatrix}
{f_{\mathrm{PF}}}_{s,1}(s)\\
{f_{\mathrm{PF}}}_{s,2}(s)
\end{bmatrix}.
$

Under Assumptions~\ref{assump:hidden_regular} and~\ref{assump:LICQ}, Lemma~\ref{lem:fPFdifferentiable} shows that ${f_{\mathrm{PF}}}_{s,1}$ is monotone nonincreasing and ${f_{\mathrm{PF}}}_{s,2}$ is monotone nondecreasing. Thus, along the front, the first coordinate is nonincreasing and the second coordinate is nondecreasing as $w$ increases. Since $s(\cdot)$ is strictly increasing, the same monotonicity carries over to the PF parametrization.
Therefore, on the interval $[s(w_1),s(w_2)]$,
\begin{align}
\int_{s(w_1)}^{s(w_2)} \big|{f_{\mathrm{PF}}}_{s,1}'(s)\big| ds
= & 
\big|{f_{\mathrm{PF}}}_{s,1}(s(w_2)) - {f_{\mathrm{PF}}}_{s,1}(s(w_1))\big|, \\
\int_{s(w_1)}^{s(w_2)} \big|{f_{\mathrm{PF}}}_{s,2}'(s)\big| ds
= & 
\big|{f_{\mathrm{PF}}}_{s,2}(s(w_2)) - {f_{\mathrm{PF}}}_{s,2}(s(w_1))\big|.
\end{align}

Using $\|u\|\le \|u\|_1$ and then $\|u\|_1\le \sqrt{2}\|u\|$ for all $u\in\mathbb R^2$, we obtain
\begin{align*}
s(w_2)-s(w_1)
= & 
\int_{s(w_1)}^{s(w_2)} \|\f_{\mathrm{PF},s}'(s)\| ds \\
\leq &
\int_{s(w_1)}^{s(w_2)}
\left(
|{f_{\mathrm{PF}}}_{s,1}'(s)| + |{f_{\mathrm{PF}}}_{s,2}'(s)|
\right)ds \\
= & 
\big|{f_{\mathrm{PF}}}_{s,1}(s(w_2)) - {f_{\mathrm{PF}}}_{s,1}(s(w_1))\big|
+
\big|{f_{\mathrm{PF}}}_{s,2}(s(w_2)) - {f_{\mathrm{PF}}}_{s,2}(s(w_1))\big| \\
= & 
\left\|
\f_{\mathrm{PF},s}(s(w_2))
-
\f_{\mathrm{PF},s}(s(w_1))
\right\|_1 \\
\leq &
\sqrt{2} 
\left\|
\fPF_s(s(w_2))
-
\fPF_s(s(w_1))
\right\|.
\end{align*}
Finally, since $\fPF_s(s(w_i))=\fPF(w_i)$ for $i\in\{1,2\}$, we conclude the proof.
\end{proof}

This geometric relation ensures that the cumulative chord-length construction provides a faithful approximation to the true arc-length growth along the front. In particular, it allows us to control how fast the rebuilt CDF increases across sampled weights, and how much this local growth can vary from one interval to the next. We formalize this in the next lemma.

\begin{lemma}[Discrete growth control of the rebuilt CDF]
\label{lem:discrete_growth_control_tildeF_biobj}
Suppose Assumptions~\ref{assump:hidden_regular},~\ref{assump:LICQ},~\ref{assump:tangent_gap} hold. 
For any sampled weights $\{w_n^{(t)}\}_{n=0}^N$, let $\tilde \Farc_t(w_n^{(t)})=\frac{\tilde s^{(t)}(w_n^{(t)})}{\tilde s^{(t)}(1)}$ where $\tilde s^{(t)}(w_n^{(t)})$ is generated via \eqref{eq:sw_tilde} with exact \textsc{InnerSolver} solutions. Then:

\begin{enumerate}[label=(\roman*), leftmargin=*, align=left]
 \item for every $n\in\{0,\cdots,N-1\}$,
 \begin{align}
 \frac{1}{\sqrt2}\frac{v_{\min}}{v_{\max}}
 \le
 \frac{\tilde \Farc_t(w_{n+1}^{(t)})-\tilde \Farc_t(w_n^{(t)})}{w_{n+1}^{(t)}-w_n^{(t)}}
 \le
 \sqrt2 \frac{v_{\max}}{v_{\min}};
 \label{eq:tildeF_secant_bounds_biobj_merged}
 \end{align}

 \item for every $n\in\{0,\cdots,N-2\}$, there exists $l_{\tilde \Farc_t,n,1}
 \le
 \frac{1}{\tilde s^{(t)}(1)}
 \Bigl(l_{v,0}+\frac12 l_{\fPF,1}\Bigr)$
 such that
 \begin{align}
 \left|
 \frac{\tilde \Farc_t(w_{n+2}^{(t)})-\tilde \Farc_t(w_{n+1}^{(t)})}{w_{n+2}^{(t)}-w_{n+1}^{(t)}}
 -
 \frac{\tilde \Farc_t(w_{n+1}^{(t)})-\tilde \Farc_t(w_n^{(t)})}{w_{n+1}^{(t)}-w_n^{(t)}}
 \right|
 \le
 l_{\tilde \Farc_t,n,1}\bigl(w_{n+2}^{(t)}-w_n^{(t)}\bigr).
 \label{eq:tildeF_secant_variation_biobj_merged}
 \end{align}
\end{enumerate}
\end{lemma}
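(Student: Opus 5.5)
Write $w_n$ for $w_n^{(t)}$, $\Delta_n := w_{n+1}-w_n>0$, and $c_n := \|\fPF(w_{n+1})-\fPF(w_n)\|$ for the exact chord, so that $\tilde s^{(t)}(w_{n+1})-\tilde s^{(t)}(w_n)=c_n$ and $\tilde s^{(t)}(1)=\sum_n c_n$. Everything reduces to comparing chords with arc increments $s(w_{n+1})-s(w_n)=\int_{w_n}^{w_{n+1}}v$, using the sandwich
\begin{align*}
\tfrac{1}{\sqrt2}\bigl(s(w_{n+1})-s(w_n)\bigr)\le c_n\le s(w_{n+1})-s(w_n).
\end{align*}
The lower bound is Lemma~\ref{lemma:arc_chord_sqrt2_pf}; the upper bound holds because a chord never exceeds arc length. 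By Lemma~\ref{lem:v_bound_from_tangent_gap}, this gives $\tfrac{1}{\sqrt2}v_{\min}\Delta_n\le c_n\le v_{\max}\Delta_n$. Summing over $n$ and using $\sum_n\Delta_n=1$ (the endpoints are $w_0=0$, $w_N=1$ since $\Farc_t^{-1}(0)=0$ and $\Farc_t^{-1}(1)=1$), we get $\tfrac{1}{\sqrt2}v_{\min}\le\tilde s^{(t)}(1)\le v_{\max}$.

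\textbf{Part (i).} The secant ratio equals $c_n/(\Delta_n\,\tilde s^{(t)}(1))$. Dividing the per-interval bounds by the total bounds gives a lower bound of $(v_{\min}/\sqrt2)/v_{\max}$ and an upper bound of $v_{\max}/(v_{\min}/\sqrt2)=\sqrt2\,v_{\max}/v_{\min}$, which is exactly \eqref{eq:tildeF_secant_bounds_biobj_merged}.

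\textbf{Part (ii).} Multiplying through by $\tilde s^{(t)}(1)$, it suffices to show
\begin{align*}
\Bigl|\tfrac{c_{n+1}}{\Delta_{n+1}}-\tfrac{c_n}{\Delta_n}\Bigr|\le\bigl(l_{v,0}+\tfrac12 l_{\fPF,1}\bigr)(w_{n+2}-w_n).
\end{align*}
I would compare each chord slope $c_n/\Delta_n$ to a reference speed value, say $v(m_n)$ at the midpoint $m_n=(w_n+w_{n+1})/2$, or to the average speed $\bar v_n=\frac{1}{\Delta_n}\int_{w_n}^{w_{n+1}}v$.

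By Taylor with the $l_{\fPF,1}$-Lipschitz derivative from Lemma~\ref{lem:fPFdifferentiable}, $\fPF(w_{n+1})-\fPF(w_n)=\Delta_n\,\fPF'(w_n)+r_n$ with $\|r_n\|\le\tfrac12 l_{\fPF,1}\Delta_n^2$. By the reverse triangle inequality, this gives $\bigl|c_n/\Delta_n-v(w_n)\bigr|\le\tfrac12 l_{\fPF,1}\Delta_n$. A cleaner choice is to expand both chords about the shared node $w_{n+1}$:
\begin{itemize}
\item $\bigl|c_n/\Delta_n - v(w_{n+1})\bigr|\le\tfrac12 l_{\fPF,1}\Delta_n$,
\item $\bigl|c_{n+1}/\Delta_{n+1} - v(w_{n+1})\bigr|\le\tfrac12 l_{\fPF,1}\Delta_{n+1}$.
\end{itemize}
The triangle inequality then gives a bound of $\tfrac12 l_{\fPF,1}(\Delta_n+\Delta_{n+1})=\tfrac12 l_{\fPF,1}(w_{n+2}-w_n)$. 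This already fits within the budget, since the $l_{v,0}$ term is spare slack. If one instead compares to the average speeds $\bar v_n$, then $|\bar v_{n+1}-\bar v_n|\le l_{v,0}(w_{n+2}-w_n)$ by the Lipschitz continuity of $v$, and this accounts for the $l_{v,0}$ contribution in the stated constant. I would present whichever version matches the constant, noting the endpoint one-sided derivatives from Lemma~\ref{lem:fPFdifferentiable}.

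\textbf{Main obstacle.} The main subtlety is the Taylor step. Bounding the chord-slope error requires using $\bigl|\|a+r\|-\|a\|\bigr|\le\|r\|$ with the remainder controlled via the integral form $\int(\fPF'(u)-\fPF'(w_{n+1}))\,du$, to get the sharp factor $\tfrac12$. One must also check that the weights stay in $[0,1]$ where the Lipschitz constants apply, which holds since $\Farc_t$ maps onto $[0,1]$.
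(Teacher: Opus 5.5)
Your proof is correct. Part (i) is the paper's argument: the per-interval sandwich $\frac{1}{\sqrt2}v_{\min}\Delta_n\le c_n\le v_{\max}\Delta_n$ combined with matching bounds on the normalizer. Your explicit use of $\tilde s^{(t)}(1)=\sum_n c_n$ with $w_0=0$, $w_N=1$ is actually cleaner than the paper, which writes $\tilde s^{(t)}(1)=\|\fPF(1)-\fPF(0)\|$ even though the bound it then invokes is really the one for the cumulative chord length.

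Part (ii) takes a genuinely different route. The paper compares each chord slope $c_n/\Delta_n$ with its own interval-average speed $\bar v_n=v(\xi_n)$, obtained from the integral mean value theorem. It bounds $|c_n/\Delta_n-\bar v_n|\le\frac12 l_{\fPF,1}\Delta_n$ by a Jensen/triangle argument on $\fPF'$. It then has to bridge the two different reference values via $|v(\xi_{n+1})-v(\xi_n)|\le l_{v,0}(w_{n+2}-w_n)$, and this bridging step is the only source of the $l_{v,0}$ term in the constant.

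Your expansion of both chords about the shared node $w_{n+1}$ uses the same reference vector $\fPF'(w_{n+1})$ on both intervals, so no bridging is needed. The integral remainder $\int(\fPF'(u)-\fPF'(w_{n+1}))\,du$ gives exactly $\frac12 l_{\fPF,1}\Delta^2$ on each side. You therefore get $l_{\tilde\Farc_t,n,1}=\frac{l_{\fPF,1}}{2\tilde s^{(t)}(1)}$, which satisfies the stated existential upper bound a fortiori. The downstream PCHIP lemmas only use that upper bound, so nothing is lost.

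You should therefore present the shared-node version outright rather than hedge between the two. The paper's route ties chord slopes to arc-length averages, but it buys nothing extra here. Yours is shorter and sharper, and it shows the $l_{v,0}$ in the statement is superfluous. Since $l_{v,0}\le l_{\fPF,1}$, the paper's constant is at most three times yours.
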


\begin{proof}
Since $\tilde \Farc_t=\tilde s^{(t)}/\tilde s^{(t)}(1)$, we have
\begin{align}
\frac{\tilde \Farc_t(w_{n+1}^{(t)})-\tilde \Farc_t(w_n^{(t)})}{w_{n+1}^{(t)}-w_n^{(t)}}
=
\frac{1}{\tilde s^{(t)}(1)}
\cdot
\frac{\|\fPF(w_{n+1}^{(t)})-\fPF(w_n^{(t)})\|}{w_{n+1}^{(t)}-w_n^{(t)}}.
\label{eq:tildeF_secant_as_chord_merged}
\end{align}
Thus it suffices to study the chord secant slopes of $\fPF$.

We first prove \eqref{eq:tildeF_secant_bounds_biobj_merged}.
For any $0\le a<b\le 1$, Lemma~\ref{lemma:arc_chord_sqrt2_pf} gives
\begin{align}
\|\fPF(b)-\fPF(a)\|
\ge
\frac{1}{\sqrt2}\bigl(s(b)-s(a)\bigr).
\label{eq:chord_ge_arc_over_sqrt2_merged}
\end{align}
Using the bounded speed condition in Lemma~\ref{lem:v_bound_from_tangent_gap}, we obtain
\begin{align}
\|\fPF(w_{n+1}^{(t)})-\fPF(w_n^{(t)})\|
&\ge
\frac{1}{\sqrt2}\int_{w_n^{(t)}}^{w_{n+1}^{(t)}}\|\fPF'(w)\| dw
\ge
\frac{1}{\sqrt2}v_{\min}(w_{n+1}^{(t)}-w_n^{(t)}),
\label{eq:seg_lower_merged}
\\
\|\fPF(w_{n+1}^{(t)})-\fPF(w_n^{(t)})\|
\leq &
\int_{w_n^{(t)}}^{w_{n+1}^{(t)}}\|\fPF'(w)\| dw
\le
v_{\max}(w_{n+1}^{(t)}-w_n^{(t)}).
\label{eq:seg_upper_merged}
\end{align}
Hence, on each interval $[w_n^{(t)},w_{n+1}^{(t)}]$, the slope of $\tilde s^{(t)}$ lies in $\left[\frac{1}{\sqrt2}v_{\min}, v_{\max}\right]$

Moreover, by Lemma~\ref{lemma:arc_chord_sqrt2_pf} with $(w_1,w_2)=(0,1)$,
$
\tilde s^{(t)}(1)=\|\fPF(1)-\fPF(0)\|
\in
\left[\frac{1}{\sqrt2}s(1), s(1)\right],
$
while
$s(1)=\int_0^1 v(u) du\in [v_{\min},v_{\max}]$.
Therefore the slope of $\tilde \Farc_t$ on each interval $[w_n^{(t)},w_{n+1}^{(t)}]$ lies in
$\left[\frac{\frac{1}{\sqrt2}v_{\min}}{v_{\max}},\frac{v_{\max}}{\frac{1}{\sqrt2}v_{\min}}\right]=\left[\frac{1}{\sqrt2}\frac{v_{\min}}{v_{\max}},\sqrt2\frac{v_{\max}}{v_{\min}}\right]$, which proves \eqref{eq:tildeF_secant_bounds_biobj_merged}.

Next we prove \eqref{eq:tildeF_secant_variation_biobj_merged}. By the integral mean value theorem, there exist
$
\xi_n\in(w_n^{(t)},w_{n+1}^{(t)}), \quad \xi_{n+1}\in(w_{n+1}^{(t)},w_{n+2}^{(t)})
$ such that
\begin{align}
\frac{1}{w_{n+1}^{(t)}-w_n^{(t)}}\int_{w_n^{(t)}}^{w_{n+1}^{(t)}} v(u) du = v(\xi_n). \label{eq:ivt_speed_merged}
\end{align}
Hence, by Lemma~\ref{lem:fPFdifferentiable},
\begin{align}
    0 \le & \left| \frac{1}{w_{n+2}^{(t)}-w_{n+1}^{(t)}}\int_{w_{n+1}^{(t)}}^{w_{n+2}^{(t)}} v(u) du - \frac{1}{w_{n+1}^{(t)}-w_n^{(t)}}\int_{w_n^{(t)}}^{w_{n+1}^{(t)}} v(u) du \right| \nonumber\\
    = &  |v(\xi_{n+1})-v(\xi_n)| \le l_{v,0}|\xi_{n+1}-\xi_n| \le l_{v,0}\bigl(w_{n+2}^{(t)}-w_n^{(t)}\bigr).
    \label{eq:avg_speed_jump_merged}
\end{align}

Next, we bound the chord-arc gap on each interval using Lipschitzness of $\fPF'$.
Using $|\|x\|-\|y\||\le \|x-y\|$ and Jensen/triangle inequalities,
\begin{align}
0\le &
\frac{1}{w_{n+1}^{(t)}-w_n^{(t)}}\int_{w_n^{(t)}}^{w_{n+1}^{(t)}}\|\fPF'(u)\| du
-
\left\|
\frac{1}{w_{n+1}^{(t)}-w_n^{(t)}}\int_{w_n^{(t)}}^{w_{n+1}^{(t)}}\fPF'(u) du
\right\|
\nonumber\\
\le &
\frac{1}{w_{n+1}^{(t)}-w_n^{(t)}}\int_{w_n^{(t)}}^{w_{n+1}^{(t)}}
\left\|
\fPF'(u)-\frac{1}{w_{n+1}^{(t)}-w_n^{(t)}}\int_{w_n^{(t)}}^{w_{n+1}^{(t)}}\fPF'(r) dr
\right\|du.
\label{eq:norm_gap_basic_merged}
\end{align}
For any fixed $u\in[w_n^{(t)},w_{n+1}^{(t)}]$, by Lemma~\ref{lem:fPFdifferentiable} ,
\begin{align}
0 \le &\left\|
\fPF'(u)-\frac{1}{w_{n+1}^{(t)}-w_n^{(t)}}\int_{w_n^{(t)}}^{w_{n+1}^{(t)}}\fPF'(r) dr
\right\|
\nonumber\\
\le & 
\frac{1}{w_{n+1}^{(t)}-w_n^{(t)}}\int_{w_n^{(t)}}^{w_{n+1}^{(t)}}
\|\fPF'(u)-\fPF'(r)\| dr
\nonumber\\
\le & 
\frac{1}{w_{n+1}^{(t)}-w_n^{(t)}}\int_{w_n^{(t)}}^{w_{n+1}^{(t)}} l_{\fPF,1}|u-r| dr
\le
\frac{l_{\fPF,1}}{2}(w_{n+1}^{(t)}-w_n^{(t)}).
\label{eq:avg_tangent_dev_merged}
\end{align}
Averaging over $u$ and combining with \eqref{eq:norm_gap_basic_merged} yields
\begin{align}
\left|
\frac{\|\fPF(w_{n+1}^{(t)})-\fPF(w_n^{(t)})\|}{w_{n+1}^{(t)}-w_n^{(t)}}
-
\frac{1}{w_{n+1}^{(t)}-w_n^{(t)}}\int_{w_n^{(t)}}^{w_{n+1}^{(t)}} v(u) du
\right|
\le
\frac{l_{\fPF,1}}{2}(w_{n+1}^{(t)}-w_n^{(t)}).
\label{eq:chord_arc_gap_merged}
\end{align}

Combining \eqref{eq:avg_speed_jump_merged} and \eqref{eq:chord_arc_gap_merged} on the two consecutive intervals
$[w_n^{(t)},w_{n+1}^{(t)}]$ and $[w_{n+1}^{(t)},w_{n+2}^{(t)}]$ by the triangle inequality gives
\begin{align}
0 \le &\left| \frac{\|\fPF(w_{n+2}^{(t)})-\fPF(w_{n+1}^{(t)})\|}{w_{n+2}^{(t)}-w_{n+1}^{(t)}} - \frac{\|\fPF(w_{n+1}^{(t)})-\fPF(w_n^{(t)})\|}{w_{n+1}^{(t)}-w_n^{(t)}} \right| \nonumber\\
\le & 
l_{v,0}\bigl(w_{n+2}^{(t)}-w_n^{(t)}\bigr)
+\frac{l_{\fPF,1}}{2}
\Bigl((w_{n+2}^{(t)}-w_{n+1}^{(t)})+(w_{n+1}^{(t)}-w_n^{(t)})\Bigr)
\nonumber\\
= & 
\Bigl(l_{v,0}+\frac12 l_{\fPF,1}\Bigr)
\bigl(w_{n+2}^{(t)}-w_n^{(t)}\bigr).
\label{eq:chord_slope_jump_merged}
\end{align}
Multiplying \eqref{eq:chord_slope_jump_merged} by $1/\tilde s^{(t)}(1)$ and using \eqref{eq:tildeF_secant_as_chord_merged}
yields \eqref{eq:tildeF_secant_variation_biobj_merged}.
\end{proof}

Lemma~\ref{lem:discrete_growth_control_tildeF_biobj} provides the two discrete properties needed for the interpolation analysis. The bound in~\eqref{eq:tildeF_secant_bounds_biobj_merged} shows that the rebuilt CDF increases at a controlled rate at the sampled weights, so it is neither too flat nor too steep. The bound in~\eqref{eq:tildeF_secant_variation_biobj_merged} further shows that these local growth rates do not change too abruptly between neighboring intervals. Together, these properties ensure that the PCHIP reconstruction preserves a stable global shape and admits uniform derivative and Lipschitz-smoothness bounds.

\begin{lemma}[Derivative bounds for the PCHIP rebuild]
\label{lem:pchip_derivative_bounds}
Suppose Assumptions~\ref{assump:hidden_regular},~\ref{assump:LICQ},~\ref{assump:tangent_gap} hold. Consider a CDF $\Farc_t$ satisfying $|\Farc_t'|\in \left[ \frac{1}{2\sqrt2}\frac{v_{\min}}{v_{\max}}, \frac{5}{2\sqrt2}\frac{v_{\max}}{v_{\min}}\right]$. Let
$\{w_n^{(t)}\}_{n=0}^N$ be the sampled weights generated via $w_n^{(t)} = \Farc_t^{-1}(n/N)$, let $\tilde s^{(t)}(w_n^{(t)})$ be generated via \eqref{eq:sw_tilde} with exact \textsc{InnerSolver} solutions, let $\tilde s^{(t)}$ be the PCHIP interpolant built via Algorithm~\ref{alg:PCHIP}, and define $\tilde \Farc_t(w) =\frac{\tilde s^{(t)}(w)}{\tilde s^{(t)}(1)}$. 
Denote some $l_{\tilde \Farc_t,n,1} \leq \frac{1}{\tilde s^{(t)}(1)}\Bigl(l_{v,0}+\frac12 l_{\fPF,1}\Bigr)$.
Then, 
\begin{enumerate}[label=(\roman*), leftmargin=*, align=left]
 \item 
 \label{claim:grid}
 sampled weights $w_{n+1}^{(t)}-w_n^{(t)}\in\left[ \frac{2\sqrt2}{5}\frac{v_{\min}}{v_{\max}}\frac{1}{N}, 2\sqrt2\frac{v_{\max}}{v_{\min}}\frac{1}{N}
 \right]$ for all $n \in \{0,1,\cdots,N-1\}$.
 \item 
 \label{claim:secant}
 $\tilde \Farc_t$ is strictly increasing with
 $
 \tilde \Farc'_t \in \left[\frac{1}{\sqrt2}\frac{v_{\min}}{v_{\max}}-\frac{4\sqrt2\frac{v_{\max}}{v_{\min}}l_{\tilde \Farc_t,n,1}}{N}, 
 \sqrt2\frac{v_{\max}}{v_{\min}}+\frac{4\sqrt2\frac{v_{\max}}{v_{\min}}l_{\tilde \Farc_t,n,1}}{N}\right].
 $
 If
 $
 N \ge 16 \frac{v_{\max}^2}{v_{\min}^2} \frac{1}{\tilde s^{(t)}(1)}
 \Bigl(l_{v,0}+\frac12 l_{\fPF,1}\Bigr)
 $, then
 $
 \tilde \Farc_t'\in \left[ \frac{1}{2\sqrt2}\frac{v_{\min}}{v_{\max}}, \frac{5}{2\sqrt2}\frac{v_{\max}}{v_{\min}}\right].
 $
 \item \label{claim:smooth}
 $\tilde \Farc_t$ is Lipschitz-smooth with modulus
 $
 l_{\Farc_{\rm iter},1} = \Oc\left(\frac{v_{\max}^2}{s(1) v_{\min}^2} \Bigl(l_{v,0}+\frac12 l_{\fPF,1}\Bigr)\right).
 $
\end{enumerate}
\end{lemma}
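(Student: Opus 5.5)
Claim~\ref{claim:grid} follows from the mean value theorem applied to $\Farc_t$. Since $w_n^{(t)}=\Farc_t^{-1}(n/N)$, we have $\Farc_t(w_{n+1}^{(t)})-\Farc_t(w_n^{(t)})=1/N$. Hence there is $\xi\in(w_n^{(t)},w_{n+1}^{(t)})$ with $w_{n+1}^{(t)}-w_n^{(t)}=\frac{1}{N\,\Farc_t'(\xi)}$. Inverting the assumed bounds $\Farc_t'\in[\frac{1}{2\sqrt2}\frac{v_{\min}}{v_{\max}},\frac{5}{2\sqrt2}\frac{v_{\max}}{v_{\min}}]$ gives exactly the interval $[\frac{2\sqrt2}{5}\frac{v_{\min}}{v_{\max}}\frac1N,\,2\sqrt2\frac{v_{\max}}{v_{\min}}\frac1N]$. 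As a by-product, every two-interval span satisfies $w_{n+2}^{(t)}-w_n^{(t)}\le 4\sqrt2\frac{v_{\max}}{v_{\min}}\frac1N$.

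For Claim~\ref{claim:secant}, I will use the structure of the PCHIP interpolant from Algorithm~\ref{alg:PCHIP}. Denote the secants by $\delta_n$. The node slopes $d_n$ are weighted harmonic means of the neighbouring secants $\delta_{n-1},\delta_n$; these are all positive here by Lemma~\ref{lem:discrete_growth_control_tildeF_biobj}(i), and at the endpoints the slopes are one-sided and clipped. On each interval the derivative of the Hermite cubic is a quadratic in the local coordinate. It equals $d_n$ and $d_{n+1}$ at the ends, and its deviation from $\delta_n$ is controlled by $|d_n-\delta_n|+|d_{n+1}-\delta_n|$. Because $d_n$ lies between $\delta_{n-1}$ and $\delta_n$, we get $|d_n-\delta_n|\le|\delta_n-\delta_{n-1}|$. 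Lemma~\ref{lem:discrete_growth_control_tildeF_biobj}(ii) bounds this by $l_{\tilde\Farc_t,n,1}(w_{n+1}^{(t)}-w_{n-1}^{(t)})$, and the grid bound from Claim~\ref{claim:grid} turns that into at most $4\sqrt2\frac{v_{\max}}{v_{\min}}l_{\tilde\Farc_t,n,1}/N$. Combining this with the secant range in Lemma~\ref{lem:discrete_growth_control_tildeF_biobj}(i) yields the stated two-sided bound on $\tilde\Farc_t'$.

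The main obstacle is keeping the constant in this step honest. The derivative of the cubic Hermite basis overshoots, with factors up to $3/2$, so a careless bound loses a constant. The cleanest route is to write $p'(\tau)=\delta_n+(d_n-\delta_n)a(\tau)+(d_{n+1}-\delta_n)b(\tau)$ and bound the basis functions $a,b$ explicitly, possibly absorbing the resulting factor into the $4\sqrt2$. Once the bound is in place, the condition $N\ge16\frac{v_{\max}^2}{v_{\min}^2}\frac{1}{\tilde s^{(t)}(1)}(l_{v,0}+\frac12l_{\fPF,1})$ makes the perturbation at most $\frac{1}{2\sqrt2}\frac{v_{\min}}{v_{\max}}$. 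This gives the lower end $\frac{1}{2\sqrt2}\frac{v_{\min}}{v_{\max}}$, and the upper end $\sqrt2\frac{v_{\max}}{v_{\min}}+\frac{1}{2\sqrt2}\frac{v_{\min}}{v_{\max}}\le\frac{5}{2\sqrt2}\frac{v_{\max}}{v_{\min}}$. Strict monotonicity follows from the positive lower bound.

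For Claim~\ref{claim:smooth}, I will bound $\tilde\Farc_t''$ on each interval, since it is piecewise linear there. The quantity $|p''|$ is of order $(|d_n-\delta_n|+|d_{n+1}-\delta_n|)/(w_{n+1}^{(t)}-w_n^{(t)})$. The numerator is $O(l_{\tilde\Farc_t,n,1}\frac{v_{\max}}{v_{\min}}\frac1N)$ and the denominator is at least $\frac{2\sqrt2}{5}\frac{v_{\min}}{v_{\max}}\frac1N$. The $1/N$ factors cancel, giving $|\tilde\Farc_t''|=O(\frac{v_{\max}^2}{v_{\min}^2}l_{\tilde\Farc_t,n,1})$. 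Because $\tilde\Farc_t'$ is continuous across nodes (the interpolant is $C^1$), the piecewise bound extends to a global Lipschitz bound on $\tilde\Farc_t'$. Finally, using $\tilde s^{(t)}(1)\ge s(1)/\sqrt2$ from the proof of Lemma~\ref{lem:discrete_growth_control_tildeF_biobj} converts $1/\tilde s^{(t)}(1)$ into $O(1/s(1))$, which gives the stated $l_{\Farc_{\rm iter},1}$.
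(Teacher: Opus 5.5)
Your plan is correct and follows the paper's proof essentially step for step: the mean value theorem for the grid spacing, interior PCHIP slopes sandwiched between neighbouring secants, the endpoint slope $d_0=\delta_0+\lambda(\delta_0-\delta_1)$ with $\lambda\in(0,1)$ (clipping only decreases $|d_0-\delta_0|$; you mention clipping but should write this bound out), Lemma~\ref{lem:discrete_growth_control_tildeF_biobj}(ii) combined with the grid bound, a piecewise-linear second derivative divided by the spacing, $C^1$ continuity across nodes, and $\tilde s^{(t)}(1)\ge s(1)/\sqrt2$. The only difference is how $\tilde\Farc_t'$ is expanded on an interval. The paper uses the Bernstein form $(1-\theta)^2d_n+2\theta(1-\theta)c_n+\theta^2d_{n+1}$ and bounds $c_n-\delta_n$ by the \emph{sum} of the two node deviations, which, followed through, only gives $8\sqrt2$ in place of $4\sqrt2$. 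Your centered form has $a(\theta)=(1-\theta)(1-3\theta)$ and $b(\theta)=\theta(3\theta-2)$ with $|a|+|b|\le 1$ on $[0,1]$, so $|\tilde\Farc_t'-\delta_n|\le\max\{|d_n-\delta_n|,|d_{n+1}-\delta_n|\}$ and you obtain the stated $4\sqrt2$ exactly. No absorption is needed, and absorption would not be legitimate anyway, since that constant and the threshold on $N$ are fixed in the statement.
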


\begin{proof}
Since $\Farc_t$ is strictly increasing and $\Farc_t'(w)\in \left[\frac{1}{2\sqrt2}\frac{v_{\min}}{v_{\max}}, 
\frac{5}{2\sqrt2}\frac{v_{\max}}{v_{\min}}\right]$, $w\in[0,1]$,
the inverse $\Farc_t^{-1}$ is well-defined on $[0,1]$ and satisfies
$
(\Farc_t^{-1})'(q)=\frac{1}{\Farc_t'(\Farc_t^{-1}(q))}
\in
\left[
\frac{2\sqrt2}{5}\frac{v_{\min}}{v_{\max}},
2\sqrt2\frac{v_{\max}}{v_{\min}}
\right].
$
Now, since $w_n^{(t)}=\Farc_t^{-1}(n/N)$ and $w_{n+1}^{(t)}-w_n^{(t)} = \Farc_t^{-1}\Big(\frac{n+1}{N}\Big)-\Farc_t^{-1}\Big(\frac{n}{N}\Big)$,
by the mean value theorem, there exists $\xi_n\in(n/N,(n+1)/N)$ such that
$w_{n+1}^{(t)}-w_n^{(t)}= (\Farc_t^{-1})'(\xi_n)\frac{1}{N}\in \left[ \frac{2\sqrt2}{5}\frac{v_{\min}}{v_{\max}}\frac{1}{N}, 2\sqrt2\frac{v_{\max}}{v_{\min}}\frac{1}{N} \right]$. This proves claim~\ref{claim:grid}.

Suppose Assumptions~\ref{assump:hidden_regular},~\ref{assump:LICQ},~\ref{assump:tangent_gap} hold. Lemma~\ref{lem:discrete_growth_control_tildeF_biobj}.(i) yields 
\begin{align}
s_n:=\frac{\tilde \Farc_t(w_{n+1}^{(t)})-\tilde \Farc_t(w_n^{(t)})}{w_{n+1}^{(t)}-w_n^{(t)}}\in\left[\frac{1}{\sqrt2}\frac{v_{\min}}{v_{\max}},\sqrt2\frac{v_{\max}}{v_{\min}}\right],\quad n=0,\cdots,N-1,
\end{align}
and Lemma~\ref{lem:discrete_growth_control_tildeF_biobj}.(ii) ensures that the adjacent secants satisfy
\begin{align}
|s_{n+1}-s_n|\le l_{\tilde \Farc_t,n,1}(w_{n+2}^{(t)}-w_n^{(t)}), \label{eq: secant Lipschitz-smooth}
\quad n=0,\cdots,N-2,
\end{align}
where $l_{\tilde \Farc_t,n,1}\leq \frac{1}{\tilde s^{(t)}(1)}\Bigl(l_{v,0}+\frac12 l_{\fPF,1}\Bigr)$ for all $n \in\{0,\cdots,N-2\}$.

Let $d_n:=\tilde \Farc'_t(w_n^{(t)})$, and $c_n:=3s_n-d_n-d_{n+1}$ for $n=0,\cdots,N-1$.
According to Algorithm~\ref{alg:PCHIP}, on each interval $[w_n^{(t)},w_{n+1}^{(t)}]$, writing $\theta=\frac{w-w_n^{(t)}}{w_{n+1}^{(t)}-w_n^{(t)}}\in[0,1]$, there is
\begin{align}
\tilde \Farc'_t(w)
=
(1-\theta)^2 d_n
+
2\theta(1-\theta)c_n
+
\theta^2 d_{n+1}.
\end{align}
Here, the coefficients $(1-\theta)^2$, $2\theta(1-\theta)$, $\theta^2$ are nonnegative and sum to one, so $\tilde \Farc'_t(w)$ is a convex combination of $d_n,c_n,d_{n+1}$. Therefore it suffices to bound these three quantities uniformly.

For interior sampled weights, i.e., $n=1,\cdots,N-1$, the PCHIP slope rule gives
\begin{align}
d_n\in[\min(s_{n-1},s_n), \max(s_{n-1},s_n)]\subseteq\left[\frac{1}{\sqrt2}\frac{v_{\min}}{v_{\max}},\sqrt2\frac{v_{\max}}{v_{\min}}\right].
\end{align}

It remains to control the endpoints $d_0,d_N$ and the interval control values $c_n$.
By the endpoint PCHIP formula, for some $\lambda\in(0,1)$,
one has $d_0=s_0+\lambda(s_0-s_1)$. In this way, \eqref{eq: secant Lipschitz-smooth} yields
\begin{align}
|d_0-s_0|
\le
\lambda|s_1-s_0|
\le
l_{\tilde \Farc_t,0,1}(w_2^{(t)}-w_0^{(t)})
\le
\frac{4 \sqrt{2}\frac{v_{\max}}{v_{\min}}l_{\tilde \Farc_t,n,1}}{N}.
\end{align}
Hence
\begin{align}
d_0\in\Big[\frac{1}{\sqrt2}\frac{v_{\min}}{v_{\max}}-\frac{4 \sqrt{2}\frac{v_{\max}}{v_{\min}}l_{\tilde \Farc_t,n,1}}{N}, \sqrt2\frac{v_{\max}}{v_{\min}}+\frac{4 \sqrt{2}\frac{v_{\max}}{v_{\min}}l_{\tilde \Farc_t,n,1}}{N}\Big].
\end{align}
By the same argument at the right endpoint,
\begin{align}
d_N\in\Big[\frac{1}{\sqrt2}\frac{v_{\min}}{v_{\max}}-\frac{4 \sqrt{2}\frac{v_{\max}}{v_{\min}}l_{\tilde \Farc_t,n,1}}{N}, \sqrt2\frac{v_{\max}}{v_{\min}}+\frac{4 \sqrt{2}\frac{v_{\max}}{v_{\min}}l_{\tilde \Farc_t,n,1}}{N}\Big].
\end{align}

Next consider $c_n=3s_n-d_n-d_{n+1}$. For interior intervals $n=1,\cdots,N-2$, we obtain
\begin{align}
|c_n-s_n|
\le
|s_n-d_n|+|s_n-d_{n+1}|
\le
|s_n-s_{n-1}|+|s_{n+1}-s_n|
\le
\frac{4 \sqrt{2}\frac{v_{\max}}{v_{\min}}l_{\tilde \Farc_t,n,1}}{N}.
\end{align}
Since $s_n\in\left[\frac{1}{\sqrt2}\frac{v_{\min}}{v_{\max}},\sqrt2\frac{v_{\max}}{v_{\min}}\right]$, it follows that
\begin{align}
c_n\in
\Big[\frac{1}{\sqrt2}\frac{v_{\min}}{v_{\max}}-\frac{4 \sqrt{2}\frac{v_{\max}}{v_{\min}}l_{\tilde \Farc_t,n,1}}{N}, \sqrt2\frac{v_{\max}}{v_{\min}}+\frac{4 \sqrt{2}\frac{v_{\max}}{v_{\min}}l_{\tilde \Farc_t,n,1}}{N}\Big],
\quad \forall n\in\{1,\cdots,N-2\}.
\end{align}

For the left endpoint interval, similarly we have
\begin{align}
|c_0-s_0|
\le 
|d_0-s_0|+|d_1-s_0|
\le
\frac{4 \sqrt{2}\frac{v_{\max}}{v_{\min}}l_{\tilde \Farc_t,n,1}}{N},
\end{align}
hence
\begin{align}
c_0\in 
\Big[\frac{1}{\sqrt2}\frac{v_{\min}}{v_{\max}}-\frac{4 \sqrt{2}\frac{v_{\max}}{v_{\min}}l_{\tilde \Farc_t,n,1}}{N}, \sqrt2\frac{v_{\max}}{v_{\min}}+\frac{4 \sqrt{2}\frac{v_{\max}}{v_{\min}}l_{\tilde \Farc_t,n,1}}{N}\Big].
\end{align}
Similarly,
\begin{align}
c_{N-1}\in
\Big[\frac{1}{\sqrt2}\frac{v_{\min}}{v_{\max}}-\frac{4 \sqrt{2}\frac{v_{\max}}{v_{\min}}l_{\tilde \Farc_t,n,1}}{N}, \sqrt2\frac{v_{\max}}{v_{\min}}+\frac{4 \sqrt{2}\frac{v_{\max}}{v_{\min}}l_{\tilde \Farc_t,n,1}}{N}\Big].
\end{align}

Combining the above bounds, we have
\begin{align}
d_n, c_n \in\Big[\frac{1}{\sqrt2}\frac{v_{\min}}{v_{\max}}-\frac{4 \sqrt{2}\frac{v_{\max}}{v_{\min}}l_{\tilde \Farc_t,n,1}}{N}, \sqrt2\frac{v_{\max}}{v_{\min}}+\frac{4 \sqrt{2}\frac{v_{\max}}{v_{\min}}l_{\tilde \Farc_t,n,1}}{N}\Big],
\end{align}
for all relevant $n$. Since $\tilde \Farc'_t(w)$ is a convex combination of $d_n,c_n,d_{n+1}$ on each interval,
it follows that
\begin{align}
\frac{1}{\sqrt2}\frac{v_{\min}}{v_{\max}}-\frac{4 \sqrt{2}\frac{v_{\max}}{v_{\min}}l_{\tilde \Farc_t,n,1}}{N}
\le
\tilde \Farc'_t(w)
\le
\sqrt2\frac{v_{\max}}{v_{\min}}+\frac{4 \sqrt{2}\frac{v_{\max}}{v_{\min}}l_{\tilde \Farc_t,n,1}}{N},
\quad \forall w\in[0,1].
\end{align}

Finally, if $N \ge 16 \frac{v_{\max}^2}{v_{\min}^2} \frac{1}{\tilde s^{(t)}(1)}\Bigl(l_{v,0}+\frac12 l_{\fPF,1}\Bigr)$,
then $l_{\tilde \Farc_t,n,1}\le \frac{1}{\tilde s^{(t)}(1)}\Bigl(l_{v,0}+\frac12 l_{\fPF,1}\Bigr)$ implies
\begin{align}
\frac{4 \sqrt{2}\frac{v_{\max}}{v_{\min}}l_{\tilde \Farc_t,n,1}}{N}\le \frac{\frac{1}{\sqrt2}\frac{v_{\min}}{v_{\max}}}{2}.
\end{align}
Therefore
\begin{align}
\frac{1}{2\sqrt2}\frac{v_{\min}}{v_{\max}}
\le \tilde \Farc_t'(w)\le \frac{5}{2\sqrt2}\frac{v_{\max}}{v_{\min}},
\quad \forall w\in[0,1].
\end{align}
This proves claim~\ref{claim:secant}.

To prove claim~\ref{claim:smooth}, it remains to bound $\tilde \Farc''_t$ uniformly on each interval.
For $w\in[w_n^{(t)},w_{n+1}^{(t)}]$,
\begin{align}
\tilde \Farc_t''(w)
= 
\frac{1}{w_{n+1}^{(t)}-w_n^{(t)}}
\Big[
(6d_n+6d_{n+1}-12s_n)\theta
+
(6s_n-4d_n-2d_{n+1})
\Big].
\end{align}
Since $\tilde \Farc_t''$ is affine in $\theta$, its maximum absolute value on $[w_n^{(t)},w_{n+1}^{(t)}]$
is attained at an endpoint. Hence
\begin{align}
|\tilde \Farc_t''(w)|
\leq \frac{1}{w_{n+1}^{(t)}-w_n^{(t)}}
\max\Big\{|4(d_n-s_n)+2(d_{n+1}-s_n)|, |2(d_n-s_n)+4(d_{n+1}-s_n)|\Big\}.
\label{eq:pchip_second_derivative_basic_min}
\end{align}

Using $s_n\in\left[\frac{1}{\sqrt2}\frac{v_{\min}}{v_{\max}},\sqrt2\frac{v_{\max}}{v_{\min}}\right]$ and
claim~\ref{claim:grid}, 
the exact PCHIP (harmonic-mean) formula in Algorithm~\ref{alg:PCHIP} yields
\begin{align}
|d_n-s_n|
\le &
\frac{5\frac{v_{\max}^2}{v_{\min}^2}\left(2+5\frac{v_{\max}^2}{v_{\min}^2}\right)}
{\left(2+5\frac{v_{\max}^2}{v_{\min}^2}\right)+\left(1+10\frac{v_{\max}^2}{v_{\min}^2}\right)\frac12\frac{v_{\min}^2}{v_{\max}^2}}
 |s_n-s_{n-1}|,\quad \text{and}\\
|d_{n+1}-s_n|
\le &
\frac{2+5\frac{v_{\max}^2}{v_{\min}^2}}
{\left(1+10\frac{v_{\max}^2}{v_{\min}^2}\right)\frac12\frac{v_{\min}^2}{v_{\max}^2}
+\left(2+5\frac{v_{\max}^2}{v_{\min}^2}\right)}
 |s_{n+1}-s_n|.
\end{align}
By \eqref{eq: secant Lipschitz-smooth},
$
|s_n-s_{n-1}|\le l_{\tilde \Farc_t,n,1}(w_n^{(t)}-w_{n-1}),
\quad
|s_{n+1}-s_n|\le l_{\tilde \Farc_t,n,1}(w_{n+1}^{(t)}-w_n^{(t)}).
$
Substituting these into \eqref{eq:pchip_second_derivative_basic_min}, and again using
claim~\ref{claim:grid} to bound $\frac{w_n^{(t)}-w_{n-1}}{w_{n+1}^{(t)}-w_n^{(t)}}$,
we obtain
\begin{align}
\sup_{w\in[w_n^{(t)},w_{n+1}^{(t)}]}|\tilde \Farc''_t(w)|
\le
C_{\mathrm{pchip}} l_{\tilde \Farc_t,n,1},
\quad n\in\{1,\cdots,N-2\},
\end{align}
where $C_{\mathrm{pchip}} = \Oc\left(\frac{v_{\max}^2}{v_{\min}^2}\right)$.
For the first and last intervals, the endpoint PCHIP formula gives
$
\sup_{w\in[w_0,w_1]}|\tilde \Farc''_t(w)|, 
\sup_{w\in[w_{N-1},w_n^{(t)}]}|\tilde \Farc''_t(w)|
\le
6 l_{\tilde \Farc_t,n,1}.
$

Combining the interior and endpoint bounds, $\tilde \Farc_t''$ is uniformly bounded on every
open interval, and since the PCHIP interpolant is $C^1$, it follows that $\tilde \Farc_t'$ is globally Lipschitz-continuous on $[0,1]$. Therefore $\tilde \Farc_t$ is
Lipschitz-smooth on $[0,1]$ with modulus
\begin{align}
l_{\tilde \Farc_t,n,1}
:=
\max\{6,C_{\mathrm{pchip}}\}l_{\tilde \Farc_t,n,1}.
\end{align}
Using
$
l_{\tilde \Farc_t,n,1}\le \frac{1}{\tilde s^{(t)}(1)}\Bigl(l_{v,0}+\frac12 l_{\fPF,1}\Bigr),
$
we may take
\begin{align}
l_{\Farc_{\rm iter},1}
=
\max\{6,C_{\mathrm{pchip}}\}
\frac{1}{\tilde s^{(t)}(1)}\Bigl(l_{v,0}+\frac12 l_{\fPF,1}\Bigr).
\end{align}
This proves claim~\ref{claim:smooth}.
\end{proof}

\begin{algorithm}[t]
\caption{Piecewise Cubic Hermite Interpolating Polynomial~\citep{fritsch1980monotone} for $\{(w_n^{(t)},\tilde s^{(t)}(w_n^{(t)}))\}_{n=0}^N$}
\label{alg:PCHIP}
\begin{algorithmic}[1] 
\Require Increasing sampled weights $\{w_n^{(t)},\tilde s^{(t)}(w_n^{(t)})\}_{n=0}^N$. Hermite basis functions $H$, given by $H_{00}(\theta)=2\theta^3-3\theta^2+1$, $H_{10}(\theta)=\theta^3-2\theta^2+\theta$, $H_{01}(\theta)=-2\theta^3+3\theta^2$, and $H_{11}(\theta)=\theta^3-\theta^2$.

\If{$N=1$}
 \State $m_0, m_1 = \dfrac{\tilde s^{(t)}(w_{1}^{(t)})-\tilde s^{(t)}(w_0^{(t)})}{w_{1}^{(t)}-w_0^{(t)}}$
\Else
 \For{$n=0,\cdots,N-1$}
 \State $\delta_n = \dfrac{\tilde s^{(t)}(w_{n+1}^{(t)})-\tilde s^{(t)}(w_n^{(t)})}{w_{n+1}^{(t)}-w_n^{(t)}}$
 \If{$\delta_{n-1}\delta_n \le 0$}
 \State $m_n = 0$
 \Else
 \State $\omega_1 = 2(w_{n+1}^{(t)}-w_n^{(t)})+(w_n^{(t)}-w_{n-1}^{(t)})$
 \State $\omega_2 = (w_{n+1}^{(t)}-w_n^{(t)})+2(w_n^{(t)}-w_{n-1}^{(t)})$
 \State $m_n = (\omega_1+\omega_2)/\big(\tfrac{\omega_1}{\delta_{n-1}}+\tfrac{\omega_2}{\delta_n}\big)$
 \Comment{weighted harmonic mean}
 \EndIf
 \EndFor

 \State $m_0 = \dfrac{\big(2(w_1^{(t)}-w_0^{(t)})+(w_2^{(t)}-w_1^{(t)})\big)\delta_0-(w_1^{(t)}-w_0^{(t)})\delta_1}{(w_1^{(t)}-w_0^{(t)})+(w_2^{(t)}-w_1^{(t)})}$ \Comment{Endpoint}
 \If{$m_0\delta_0 \le 0$}
 \State $m_0 = 0$
 \ElsIf{$|m_0| > 3|\delta_0|$}
 \State $m_0 = 3\delta_0$
 \EndIf

 \State $m_N = \dfrac{\big(2(w_N^{(t)}-w_{N-1}^{(t)})+(w_{N-1}^{(t)}-w_{N-2}^{(t)})\big)\delta_{N-1}-(w_N^{(t)}-w_{N-1}^{(t)})\delta_{N-2}}{(w_N^{(t)}-w_{N-1}^{(t)})+(w_{N-1}^{(t)}-w_{N-2}^{(t)})}$ \Comment{Endpoint}
 \If{$m_N\delta_{N-1} \le 0$}
 \State $m_N = 0$
 \ElsIf{$|m_N| > 3|\delta_{N-1}|$}
 \State $m_N = 3\delta_{N-1}$
 \EndIf
\EndIf

\For{$n=0,\cdots,N-1$} 
 \State Build the normalized coordinate $\theta = \frac{w-w_n^{(t)}}{w_{n+1}^{(t)}-w_n^{(t)}}, ~w\in[w_n^{(t)},w_{n+1}^{(t)}]$ and
 $
 \tilde s^{(t)}(w)
 =
 H_{00}(\theta) \tilde s^{(t)}(w_n^{(t)})
 +
 H_{10}(\theta) (w_{n+1}^{(t)}-w_n^{(t)})m_n
 +
 H_{01}(\theta) \tilde s^{(t)}(w_{n+1}^{(t)})
 +
 H_{11}(\theta) (w_{n+1}^{(t)}-w_n^{(t)})m_{n+1}
 $
\EndFor

\State \Return $\tilde s^{(t)}(\cdot)$
\end{algorithmic}
\end{algorithm}
The previous lemma analyzes a single interpolation step assuming the input CDF already satisfies suitable derivative bounds. To complete the argument, it remains to show that these bounds are preserved throughout the SURF iterations. The next lemma establishes this by induction on the outer-loop index $t$. This is an extension to the statement that $\Farc_t$ is uniform $l_{\Farc_{\rm iter},1}$-Lipschitz-smooth in Theorem~\ref{thm:cdf}.
\begin{lemma}
\label{lem:bounds_all_t_with_interp}
Suppose Assumptions~\ref{assump:hidden_regular},~\ref{assump:LICQ},~\ref{assump:tangent_gap} hold. Consequently, Algorithm~\ref{alg:SURF} with exact \textsc{InnerSolver} solutions and the interpolation rule in Algorithm~\ref{alg:PCHIP} generates iterations that satisfy
\begin{enumerate}[label=(\roman*), leftmargin=*, align=left]

 \item 
 \label{claim:grid_t}
 the sampled weight $w_{n+1}^{(t)}-w_n^{(t)}\in\left[ \frac{2\sqrt2}{5}\frac{v_{\min}}{v_{\max}}\frac{1}{N}, 2\sqrt2\frac{v_{\max}}{v_{\min}}\frac{1}{N}
 \right]$
 for all $n \in \{0,1,\cdots,N-1\}$;

 \item 
 \label{claim:secant_t}
 $\Farc_t$ is strictly increasing with
 $
 \Farc_t' \in \left[\frac{1}{\sqrt2}\frac{v_{\min}}{v_{\max}}-\frac{2\sqrt2\frac{v_{\max}}{v_{\min}}l_{\Farc_{\rm iter},1}}{N},
 \sqrt2\frac{v_{\max}}{v_{\min}}+\frac{2\sqrt2\frac{v_{\max}}{v_{\min}}l_{\Farc_{\rm iter},1}}{N}\right].
 $
 If $N$ is chosen such that
 $
 N \ge 16 \frac{v_{\max}^2}{v_{\min}^2} \frac{1}{\tilde s^{(t)}(1)}
 \Bigl(l_{v,0}+\frac12 l_{\fPF,1}\Bigr),
 $
 then
 $
 \Farc_t'\in \left[ \frac{1}{2\sqrt2}\frac{v_{\min}}{v_{\max}}, \frac{5}{2\sqrt2}\frac{v_{\max}}{v_{\min}}\right];
 $

 \item 
 \label{claim:smooth_t}
 $\Farc_t$ is Lipschitz-smooth with modulus
 $
 l_{\Farc_{\rm iter},1} = \Oc\left(\frac{v_{\max}^2}{s(1) v_{\min}^2} \Bigl(l_{v,0}+\frac12 l_{\fPF,1}\Bigr)\right).
 $
\end{enumerate}
\end{lemma}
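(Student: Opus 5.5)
We argue by induction on the outer index $t$, with hypothesis $\mathcal{P}(t)$: $\Farc_t$ is strictly increasing, $\Farc_t'(w)\in\big[\tfrac{1}{2\sqrt2}\tfrac{v_{\min}}{v_{\max}},\tfrac{5}{2\sqrt2}\tfrac{v_{\max}}{v_{\min}}\big]$ for all $w\in[0,1]$, and $\Farc_t'$ is $l_{\Farc_{\rm iter},1}$-Lipschitz. Claims~\ref{claim:grid_t}--\ref{claim:smooth_t} at step $t$ will follow from $\mathcal{P}(t)$ combined with Lemma~\ref{lem:pchip_derivative_bounds}. The base case is immediate. Since $\Farc_0(w)=w$, we have $\Farc_0'\equiv1$ and $\Farc_0''\equiv0$. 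The slope $1$ lies in the required window because $\tfrac{1}{2\sqrt2}\tfrac{v_{\min}}{v_{\max}}<1<\tfrac{5}{2\sqrt2}\tfrac{v_{\max}}{v_{\min}}$, which uses only $v_{\min}\le v_{\max}$.

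For the inductive step, assume $\mathcal{P}(t)$. The hypothesis of Lemma~\ref{lem:pchip_derivative_bounds} is then exactly satisfied by $\Farc_t$. Claim~\ref{claim:grid} of that lemma gives the grid spacing bound~\ref{claim:grid_t} at step $t$. Claim~\ref{claim:secant} gives $\tilde\Farc_t'$ in the same window once $N\ge 16\tfrac{v_{\max}^2}{v_{\min}^2}\tfrac{1}{\tilde s^{(t)}(1)}(l_{v,0}+\tfrac12 l_{\fPF,1})$. To make this $N$-condition independent of $t$, we use $\tilde s^{(t)}(1)\ge s(1)/\sqrt2$, which follows from Lemma~\ref{lemma:arc_chord_sqrt2_pf} and the fact that the cumulative chord length is at least the endpoint chord. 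Claim~\ref{claim:smooth} gives that $\tilde\Farc_t'$ is $l_{\Farc_{\rm iter},1}$-Lipschitz with the stated $\Oc(\cdot)$ modulus. Again we bound $1/\tilde s^{(t)}(1)$ by $\sqrt2/s(1)$, so the constant does not depend on $t$.

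We then transfer these properties through the damped update $\Farc_{t+1}=\alpha\tilde\Farc_t+(1-\alpha)\Farc_t$ with $\alpha\in(0,1]$. Three facts carry over under convex combination. First, the derivative window is an interval, so it is preserved. Second, a convex combination of two $l_{\Farc_{\rm iter},1}$-Lipschitz derivatives is again $l_{\Farc_{\rm iter},1}$-Lipschitz. Third, both maps fix $0\mapsto0$ and $1\mapsto1$, since $\tilde\Farc_t(1)=\tilde s^{(t)}(1)/\tilde s^{(t)}(1)=1$ and the PCHIP interpolant reproduces the node value $\tilde s^{(t)}(w_0)=0$ with $w_0=0$. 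Hence $\Farc_{t+1}$ is a genuine strictly increasing CDF and $\mathcal{P}(t+1)$ holds. The Lipschitz modulus must be taken as the larger of $0$ (the base case) and the PCHIP constant; since it is fixed, the induction closes.

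The refined interval in claim~\ref{claim:secant_t}, with the $\pm 2\sqrt2\tfrac{v_{\max}}{v_{\min}}l_{\Farc_{\rm iter},1}/N$ correction, also follows by convexity. The bound for $\tilde\Farc_t'$ in claim~\ref{claim:secant} of Lemma~\ref{lem:pchip_derivative_bounds} has exactly this form, up to the constant in front of $l/N$, and $\Farc_0'=1$ lies in it. Some care is needed with that constant: the paper's $4\sqrt2$ versus $2\sqrt2$ should be reconciled using the sharper spacing bound~\ref{claim:grid_t}. \textbf{The main obstacle} is the circularity in Lemma~\ref{lem:pchip_derivative_bounds}. The grid-spacing bound requires the derivative window for $\Farc_t$, which in turn comes from the previous PCHIP rebuild. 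We must therefore ensure that the window and modulus produced by Lemma~\ref{lem:pchip_derivative_bounds} are no larger than those assumed as input. This holds only because the output window $[\tfrac{1}{2\sqrt2}\tfrac{v_{\min}}{v_{\max}},\tfrac{5}{2\sqrt2}\tfrac{v_{\max}}{v_{\min}}]$ coincides with the input window under the $t$-uniform $N$-condition. Checking that $t$-uniformity, via the lower bound on $\tilde s^{(t)}(1)$, is the step to verify carefully.
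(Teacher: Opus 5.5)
Your proposal is correct and follows the paper's own proof: induction on $t$ from $\Farc_0(w)=w$, applying Lemma~\ref{lem:pchip_derivative_bounds} to get the derivative window and smoothness modulus for $\tilde\Farc_t$, passing both through the convex combination $\Farc_{t+1}=\alpha\tilde\Farc_t+(1-\alpha)\Farc_t$, and then reading off the grid spacing at $t+1$ from Lemma~\ref{lem:pchip_derivative_bounds}(i). Your $t$-uniform $N$-condition via $\tilde s^{(t)}(1)\ge s(1)/\sqrt2$ and your check that $\Farc_{t+1}(0)=0$, $\Farc_{t+1}(1)=1$ make explicit what the paper leaves implicit.

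One small correction concerns the $4\sqrt2$ versus $2\sqrt2$ mismatch. A sharper spacing bound cannot fix it, because the $4\sqrt2$ already comes from $w_{n+2}^{(t)}-w_n^{(t)}\le 4\sqrt2\frac{v_{\max}}{v_{\min}}\frac1N$. The mismatch is nonetheless harmless, since the PCHIP window is stated in terms of $l_{\tilde\Farc_t,n,1}$ while $l_{\Farc_{\rm iter},1}\ge\max\{6,C_{\mathrm{pchip}}\}\frac{1}{\tilde s^{(t)}(1)}\bigl(l_{v,0}+\frac12 l_{\fPF,1}\bigr)\ge 6\,l_{\tilde\Farc_t,n,1}$, so $4\sqrt2\,l_{\tilde\Farc_t,n,1}\le 2\sqrt2\,l_{\Farc_{\rm iter},1}$.
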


\begin{proof}
We use induction on $t\geq 0$.

For the base case, $\Farc_0(w)=w$. Therefore,
$
w_{n+1}^{(0)}-w_n^{(0)} = \frac{1}{N},
$
so claim~\ref{claim:grid_t} holds. Additionally, $\Farc_0$ is differentiable with
$
\Farc_0'(w)=1,
$
hence $\Farc_0$ satisfies the derivative bounds in claim~\ref{claim:secant_t}. Moreover, $\Farc_0'$ is Lipschitz-continuous with modulus
$
l_{\Farc_0,1}=0\le l_{\Farc_{\rm iter},1},
$
which verifies claim~\ref{claim:smooth_t}. Hence, all claims hold for $t=0$.

Now assume the claims hold for some $t\ge 0$. We prove that they also hold at $t+1$.

By Lemma~\ref{lem:pchip_derivative_bounds}, we know that $\tilde \Farc_t$ satisfies the same derivative and Lipschitz-smoothness properties. The update rule in Algorithm~\ref{alg:SURF},
$
\Farc_{t+1}=(1-\alpha)\Farc_t+\alpha \tilde \Farc_t,
$
gives
\begin{align}
 \Farc_{t+1}'(w)
 = &
 (1-\alpha)\Farc_t'(w)+\alpha \tilde \Farc_t'(w) \nonumber \\
 &\in
 \left[\frac{1}{\sqrt2}\frac{v_{\min}}{v_{\max}}-\frac{2\sqrt2\frac{v_{\max}}{v_{\min}}l_{\Farc_{\rm iter},1}}{N},
 \sqrt2\frac{v_{\max}}{v_{\min}}+\frac{2\sqrt2\frac{v_{\max}}{v_{\min}}l_{\Farc_{\rm iter},1}}{N}\right],
\end{align}
since both $\Farc_t'$ and $\tilde \Farc_t'$ lie in this interval. This proves claim~\ref{claim:secant_t}.

Additionally, $\Farc_t'$ and $\tilde \Farc_t'$ are Lipschitz-continuous with moduli at most $l_{\Farc_{\rm iter},1}$. Therefore, for all $u,w\in[0,1]$,
\begin{align*}
|\Farc_{t+1}'(w)-\Farc_{t+1}'(u)|
\leq & (1-\alpha)|\Farc_t'(w)-\Farc_t'(u)|+\alpha |\tilde \Farc_t'(w)-\tilde \Farc_t'(u)| \\
\leq & \big((1-\alpha)l_{\Farc_{\rm iter},1}+\alpha l_{\Farc_{\rm iter},1}\big)|w-u| \\
= & l_{\Farc_{\rm iter},1}|w-u|.
\end{align*}
Thus $\Farc_{t+1}'$ is Lipschitz-continuous with constant at most $l_{\Farc_{\rm iter},1}$, proving claim~\ref{claim:smooth_t}.

Next, since $\Farc_{t+1}$ satisfies the derivative bounds in claim~\ref{claim:secant_t}, Lemma~\ref{lem:pchip_derivative_bounds}(\ref{claim:grid}) directly gives
$
w_{n+1}^{(t+1)}-w_n^{(t+1)}
\in
\left[ \frac{2\sqrt2}{5}\frac{v_{\min}}{v_{\max}}\frac{1}{N}, 2\sqrt2\frac{v_{\max}}{v_{\min}}\frac{1}{N}
\right],
$
which proves claim~\ref{claim:grid_t}. The induction is complete.
\end{proof}

\paragraph{Discretization error under exact \textsc{InnerSolver} solutions.}

The Lipschitz-smoothness results established earlier are essential for applying Landau inequality~\citep{hardy1952inequalities}, summarized in Lemma~\ref{lem:landau_inequality}.

\begin{lemma}[{Landau inequality~\citep{hardy1952inequalities}}]
\label{lem:landau_inequality}
Suppose $\Farc,\Farc_t:[0,1]\to\mathbb{R}$ are respectively $l_{\Farc,1}$- and $l_{\Farc_{\rm iter},1}$-Lipschitz-smooth on $[0,1]$. Then,
\begin{align}
\|\Farc'-\Farc_t'\|_\infty
\le
2\sqrt{2}\sqrt{(l_{\Farc,1}+l_{\Farc_{\rm iter},1})\|\Farc-\Farc_t\|_\infty}.
\label{eq:derivative_from_function_difference}
\end{align}
Moreover, if $\Farc,\Farc_t$ are both 2nd-order Lipschitz-continuous. Then, $\|\tilde\Farc_t''-\Farc''\|_\infty = \Oc(\|\tilde\Farc_t-\Farc\|_\infty^{1/3})$.
\end{lemma}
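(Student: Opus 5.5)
The proof rests on a one-sided Taylor argument applied to the difference $g:=\Farc-\Farc_t$. Its derivative $g'$ is $L$-Lipschitz on $[0,1]$ with $L:=l_{\Farc,1}+l_{\Farc_{\rm iter},1}$; write $\varepsilon:=\|g\|_\infty$. Fix $w\in[0,1]$ and a step $h\in(0,1/2]$. Step toward the interior: take $w+h$ if $w\le 1/2$ and $w-h$ otherwise, so the step stays in $[0,1]$. The integral form of Taylor's theorem with a Lipschitz first derivative gives
\begin{align*}
\bigl|g(w\pm h)-g(w)\mp h\,g'(w)\bigr|\le \tfrac{L}{2}h^2 ,
\end{align*}
hence
\begin{align*}
|g'(w)|\le \frac{2\varepsilon}{h}+\frac{L h}{2}.
\end{align*}
Minimizing over $h$ gives $h^\star=2\sqrt{\varepsilon/L}$ and the bound $|g'(w)|\le 2\sqrt{L\varepsilon}$, which is already below the claimed $2\sqrt2\sqrt{L\varepsilon}$. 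This holds whenever $h^\star\le 1/2$, i.e. $\varepsilon\le L/16$.

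The remaining case $\varepsilon>L/16$ is the only real obstacle, since there the optimal step does not fit in the interval. Here I would use the CDF structure. Both $\Farc$ and $\Farc_t$ equal $0$ at $w=0$ and $1$ at $w=1$, so $g(0)=g(1)=0$. By Rolle's theorem there is $\xi\in(0,1)$ with $g'(\xi)=0$, and the Lipschitz property of $g'$ gives $|g'(w)|\le L|w-\xi|\le L$ for every $w$. In this regime $L<16\varepsilon$, so $L=\sqrt{L}\sqrt{L}<4\sqrt{L\varepsilon}$. That is off from the target by a constant, so I expect to tighten this case along one of two routes:
\begin{itemize}
\item use the fixed step $h=1/2$, giving $|g'|\le 4\varepsilon+L/4$, together with an a priori bound $\varepsilon\lesssim L$ obtained from $g(0)=0$ and the bound on $g'$;
\item alternatively, let $h$ range up to $1$ by stepping toward whichever endpoint is farther away.
\end{itemize}
Either way, some constant massaging is needed to land exactly on $2\sqrt2$. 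If the constant cannot be recovered exactly, the order $\Oc(\sqrt{L\varepsilon})$ is all that is used downstream.

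For the second claim I would run the same interpolation one order higher. Assume $g''$ is Lipschitz with modulus $L_2$ (the second-order Lipschitz assumption), and let $M:=\|g'\|_\infty$, bounded via the first part. Expanding to second order at the two points $w\pm h$, or $w+h$ and $w+2h$ near the boundary, and solving the resulting finite-difference identity for $g''(w)$ gives
\begin{align*}
|g''(w)|\lesssim \frac{\varepsilon}{h^2}+L_2 h .
\end{align*}
Choosing $h\asymp(\varepsilon/L_2)^{1/3}$ yields $\|g''\|_\infty=\Oc(\varepsilon^{1/3})$. In the large-$\varepsilon$ regime the uniform bound on $g''$ from the Lipschitz-smoothness moduli suffices, because $\varepsilon$ is then bounded below by a constant.
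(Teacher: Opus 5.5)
Your first regime is correct: for $\varepsilon\le L/16$ the one-sided Taylor step gives $|g'(w)|\le 2\sqrt{L\varepsilon}$. The gap is the regime $\varepsilon>L/16$. There you only get $\|g'\|_\infty\le L<4\sqrt{L\varepsilon}$, which misses the stated constant $2\sqrt2$, and neither proposed repair is carried out.

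Route one, as you describe it, would not close. The inequality $4\varepsilon+L/4\le 2\sqrt{2L\varepsilon}$ requires $\varepsilon/L\le\bigl(\tfrac{\sqrt2+1}{4}\bigr)^2\approx 0.36$. But $g(0)=0$ together with $|g'|\le L$ only gives $\varepsilon\le L$.

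The factor is lost because Rolle uses a single zero of $g'$ rather than the full constraint $\int_0^1 g'=g(1)-g(0)=0$. Integrating $g'(u)\ge g'(w)-L|u-w|$ over $u\in[0,1]$ gives
\[
0=\int_0^1 g'(u)\,du\ \ge\ g'(w)-\tfrac{L}{2}\bigl(w^2+(1-w)^2\bigr)\ \ge\ g'(w)-\tfrac{L}{2}.
\]
The symmetric argument bounds $g'(w)$ from below, so $\|g'\|_\infty\le L/2$. For $\varepsilon>L/16$ this yields $\|g'\|_\infty\le L/2<2L\sqrt{\varepsilon/L}=2\sqrt{L\varepsilon}$. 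So both regimes give $2\sqrt{L\varepsilon}$, which is stronger than the claim. Alternatively, comparing $g$ with $\pm\tfrac{L}{2}w(1-w)$ (a convexity argument) gives $\varepsilon\le L/8$, and with that bound your route one does close.

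Beyond this, your approach differs from the paper, which gives no argument and simply cites Hardy--Littlewood--P\'olya. Your version is more careful in an essential way: the boundary values $g(0)=g(1)=0$ are necessary, not a convenience. For arbitrary Lipschitz-smooth functions on $[0,1]$ the inequality is false. Take $\Farc(w)=w$ and $\Farc_t(w)=(1+\delta)w$: both are Lipschitz-smooth with modulus $0$, so the right-hand side vanishes while the left-hand side equals $\delta$. The classical Landau inequality on the line or half-line does not transfer to a bounded interval by itself. In the paper's setting, $\Farc_t(0)=\tilde\Farc_t(0)=0$ and $\Farc_t(1)=\tilde\Farc_t(1)=1$ hold by construction, so your use of them is legitimate, but you should state it as an explicit hypothesis.

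Your second-order argument is correct: second differences give $|g''|\lesssim\varepsilon/h^2+L_2h$, you choose $h\asymp(\varepsilon/L_2)^{1/3}$, and you use $|g''|\le L$ in the large-$\varepsilon$ regime. This yields $\Oc(\varepsilon^{1/3})$ with constants depending on $L$ and $L_2$.
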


This tool allows us to convert function-value error in the CDF approximation into derivative error, which is the key step in controlling the discretization error induced by rebuilding the polyline approximation $\tilde s^{(t)}$ from finitely many sampled PF points. We first bound the uniform discrepancy between $s$ and its rebuilt approximation $\tilde s$ in terms of the sampled polyline error $\max_n (s(w_n)-\tilde s(w_n))$.
In this way, we can proceed to the proof of the convergence of SURF using exact-\textsc{InnerSolver} solutions in Theorem~
\ref{thm:cdf}.

\begin{lemma}[Discretization error]
\label{lem:s_tilde_combined_bound}
Suppose Assumptions~\ref{assump:hidden_regular},~\ref{assump:LICQ},
\ref{assump:tangent_gap} hold. Let $\kappa := 4\sqrt2\frac{v_{\max}}{v_{\min}} \sqrt{l_{\Farc_{\rm iter},1}+l_{\Farc,1}}$. Then
\begin{align}
\|s-\tilde s^{(t)}\|_\infty\le&\frac{\tilde s^{(t)}(1) (l_{\tilde\Farc_t\circ\Farc_t^{-1},1}+ l_{\Farc\circ\Farc_t^{-1},1})}{8} \frac{1}{N^2} + \max_{0\leq n\leq N} (s(w_n^{(t)})-\tilde s (w_n^{(t)})).
\label{eq:s_tilde_combined_bound}
\end{align}
where $l_{\tilde\Farc_t\circ\Farc_t^{-1},1}\le \left(\frac{s(1)}{v_{\min}}\right)^2\|\tilde\Farc_t''-\Farc_t''\|_\infty+2\sqrt2 l_{\Farc_{\rm iter},1}\left(\frac{s(1)}{v_{\min}}\right)^3\sqrt{2 l_{\Farc_{\rm iter},1}\|\tilde\Farc_t-\Farc_t\|_\infty}$, and $l_{\Farc\circ\Farc_t^{-1},1}\le \left(\frac{s(1)}{v_{\min}}\right)^2\|\Farc_t''-\Farc''\|_\infty+2\sqrt2 l_{\Farc_{\rm iter},1}\left(\frac{s(1)}{v_{\min}}\right)^3\sqrt{(l_{\Farc_{\rm iter},1}+l_{\Farc,1})\|\Farc_t-\Farc\|_\infty}$.
Define $e_t(q):=\Farc(\Farc_t^{-1}(q))-q$. Then
\begin{align}
& \max_{0\le n\le N} (s(w_n^{(t)})-\tilde s (w_n^{(t)})) 
\label{eq:nodal_chord_arc_bound_min}\\
\leq & 
\tfrac{l_{\tilde{\f}_{\rm PF},1}^2s(1)^3}{6}
\left(
\frac1{N^2}
+
3 \max_{0\le n\le N-1}
\left|e_t\left(\tfrac{n+1}{N}\right)-e_t\left(\tfrac{n}{N}\right)\right|^2 
+
N \max_{0\le n\le N-1}
\left|e_t\left(\tfrac{n+1}{N}\right)-e_t\left(\tfrac{n}{N}\right)\right|^3
\right).\nonumber
\end{align}
Moreover, $\|\tilde \Farc_t-\Farc\|_\infty \le \frac{2\sqrt2}{s(1)}\|\tilde s^{(t)}-s\|_\infty$ and $\max_{0\le n\le N-1}
\left|
e_t\left(\tfrac{n+1}{N}\right)
-
e_t\left(\tfrac nN\right)
\right|
\le \frac{\kappa}{N}\|\Farc_t-\Farc\|_\infty^{1/2}$.
\end{lemma}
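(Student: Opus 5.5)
The proof splits into four estimates, which I would carry out in the order below. Throughout, the key device is the reparametrization $q=\Farc_t(w)$. In this variable the sampled weights $w_n^{(t)}=\Farc_t^{-1}(n/N)$ become the uniform grid $q_n=n/N$. By Lemma~\ref{lem:bounds_all_t_with_interp}, $\Farc_t'$ is bounded above and below, so $\Farc_t^{-1}$ is a $C^{1,1}$ change of variables with controlled constants.

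\emph{Uniform bound \eqref{eq:s_tilde_combined_bound}.} Define $D(q):=s(\Farc_t^{-1}(q))-\tilde s^{(t)}(\Farc_t^{-1}(q))$ on $[0,1]$. At the nodes $q_n$, $|D(q_n)|$ is controlled by the nodal term. Between nodes, I would compare $D$ with its linear interpolant on $[q_n,q_{n+1}]$ and use the elementary bound $\tfrac{h^2}{8}\sup|D''|$ with $h=1/N$. This gives the $\tfrac{1}{8N^2}$ term.

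To bound $D''$, write $s=s(1)\Farc$ and $\tilde s^{(t)}=\tilde s^{(t)}(1)\tilde\Farc_t$, and treat $\Farc\circ\Farc_t^{-1}$ and $\tilde\Farc_t\circ\Farc_t^{-1}$ separately. The chain rule gives
\[
(F\circ\Farc_t^{-1})''=\frac{F''-\Farc_t''}{(\Farc_t')^3}\,\Farc_t'+\frac{(\Farc_t'-F')\,\Farc_t''}{(\Farc_t')^3},
\]
evaluated at $\Farc_t^{-1}(q)$. This uses the fact that $\Farc_t\circ\Farc_t^{-1}=\mathrm{id}$ has zero second derivative, so only differences appear. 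In the first term, $F''-\Farc_t''$ is kept as is. In the second term, $|\Farc_t''|\le l_{\Farc_{\rm iter},1}$, and $|\Farc_t'-F'|$ is bounded by Lemma~\ref{lem:landau_inequality}. The factor $1/\Farc_t'$ is bounded using $\Farc_t'\gtrsim v_{\min}/s(1)$. This produces exactly the two stated moduli $l_{\tilde\Farc_t\circ\Farc_t^{-1},1}$ and $l_{\Farc\circ\Farc_t^{-1},1}$.

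\emph{Nodal bound \eqref{eq:nodal_chord_arc_bound_min}.} Because $\tilde s^{(t)}$ is the cumulative chord length, $s(w_n^{(t)})-\tilde s(w_n^{(t)})=\sum_{k<n}(L_k-c_k)$. Here $L_k$ is the arc length and $c_k$ the chord length on $[w_k^{(t)},w_{k+1}^{(t)}]$. For the arc-length parametrized curve $\tilde{\f}_{\rm PF}=\fPF\circ s^{-1}$, whose derivative is $l_{\tilde{\f}_{\rm PF},1}$-Lipschitz by Lemma~\ref{lem:fPFdifferentiable}, the standard estimate is $L_k-c_k\le C\,l_{\tilde{\f}_{\rm PF},1}^2L_k^3$ for an absolute constant $C$. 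It follows from $\|\tilde{\f}_{\rm PF}'(r)-\text{mean}\|\le l|r-\bar r|$ together with $1-\sqrt{1-x}\le x$.

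Next, with $\delta_k:=e_t(\tfrac{k+1}{N})-e_t(\tfrac kN)$, I would write $L_k=s(1)\bigl(\Farc(w_{k+1}^{(t)})-\Farc(w_k^{(t)})\bigr)=s(1)(\tfrac1N+\delta_k)$. Expanding the cube gives
\[
(\tfrac1N+\delta_k)^3=\tfrac1{N^3}+\tfrac{3\delta_k}{N^2}+\tfrac{3\delta_k^2}{N}+\delta_k^3 .
\]
The crucial observation is that the linear terms telescope: $\sum_k\delta_k=e_t(1)-e_t(0)=0$. Summing the remaining terms over at most $N$ intervals yields $\tfrac1{N^2}+3\max\delta_k^2+N\max|\delta_k|^3$, which is the claimed form up to constants.

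\emph{The two "moreover" claims.} For the first, $\tilde\Farc_t-\Farc=\tilde s/\tilde s(1)-s/s(1)$. Splitting this as in the proof of Proposition~\ref{prop:bandit_R_and_Farc_err} and using $\tilde s^{(t)}(1)\ge s(1)/\sqrt2$ from Lemma~\ref{lemma:arc_chord_sqrt2_pf} gives the factor $2\sqrt2/s(1)$. For the second, apply the mean value theorem to get $|\delta_k|\le\tfrac1N\sup|e_t'|$. Here $e_t'=(\Farc'-\Farc_t')/\Farc_t'$ evaluated at $\Farc_t^{-1}(q)$. Bounding the numerator by Lemma~\ref{lem:landau_inequality} and the denominator by the lower bound $\Farc_t'\ge\tfrac{1}{2\sqrt2}\tfrac{v_{\min}}{v_{\max}}$ from Lemma~\ref{lem:bounds_all_t_with_interp} gives $\tfrac{\kappa}{N}\|\Farc_t-\Farc\|_\infty^{1/2}$.

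I expect the main obstacle to be the first step. The chain-rule expansion for the composed moduli requires tracking the lower bound on $\Farc_t'$ consistently: the statement phrases it through $s(1)/v_{\min}$ rather than the $v_{\max}/v_{\min}$ form, and the two must be reconciled. It also requires making sure that $D$ (normalization constants included) really is $C^{1,1}$ on each cell. This holds because PCHIP gives a $C^1$ interpolant with bounded piecewise second derivative, so the interpolation error bound applies cellwise.
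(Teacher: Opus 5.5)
Your proposal is correct and follows essentially the same route as the paper. The shared steps are:
\begin{itemize}
\item the $q=\Farc_t(w)$ reparametrization with the $\tfrac{1}{8N^2}$ linear-interpolation bound, plus chain-rule and Landau control of the composed moduli;
\item the cubic chord--arc gap summed over all $N$ cells, using that the nodal errors are nondecreasing in $n$ so the telescoping $\sum_n\delta_n=e_t(1)-e_t(0)=0$ applies to the full sum;
\item the normalization split with $\tilde s^{(t)}(1)\ge s(1)/\sqrt2$;
\item the bound on $e_t'=(\Farc'-\Farc_t')/\Farc_t'$ via Landau's inequality.
\end{itemize}

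Your mean-tangent (variance) derivation of the chord--arc gap is a harmless variant of the paper's projection onto $\tilde{\f}_{\rm PF}'(a)$, and it gives a constant no worse than $1/6$. The constant-level looseness you flagged, the $s(1)/v_{\min}$ form of the lower bound on $\Farc_t'$, is inherited from the paper's own argument rather than a gap in yours. The same holds for the factor $8$ rather than $4\sqrt2$ in front of $v_{\max}/v_{\min}$ that both derivations actually produce in the last claim.
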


\begin{proof}
Fix $w\in[0,1]$ and choose $n\in\{0,\dots,N-1\}$ such that
$w\in[w_n^{(t)},w_{n+1}^{(t)}]$. 
Let $q=\Farc_t(w)$ and $q_n=\Farc_t(w_n^{(t)})$. Since the knots are generated by
$w_n^{(t)}=\Farc_t^{-1}(n/N)$, we have $q_n=n/N$. Let $I_n^q$ denote the linear
interpolant in the $q$-coordinate, pulled back to the $w$-coordinate. Then
\begin{align}
|s(w)-\tilde s^{(t)}(w)|
\le &
|s(w)-I_n^q s(w)|
+
|I_n^q s(w)-I_n^q\tilde s^{(t)}(w)|
+
|I_n^q\tilde s^{(t)}(w)-\tilde s^{(t)}(w)|
\label{eq:pchip_interp_decomp} \\
\le &
|s(w)-I_n^q s(w)|
+
|I_n^q\tilde s^{(t)}(w)-\tilde s^{(t)}(w)|
+
\max_{0\le n\le N}|s(w_n^{(t)})-\tilde s^{(t)}(w_n^{(t)})|. \nonumber
\end{align}
Here, the second inequality follows because
$I_n^q s(w)-I_n^q\tilde s^{(t)}(w)$ is a convex combination of the endpoint
errors.
Since $s(w)=s(1)\Farc(w)$, we have
$s(\Farc_t^{-1}(q))=s(1)\Farc(\Farc_t^{-1}(q))$. Hence, the standard linear
interpolation analysis~\citep{suli2003introduction} gives
\begin{align}
 |s(w)-I_n^q s(w)| \le \frac{s(1)l_{\Farc\circ\Farc_t^{-1},1}}{8N^2}, \quad\text{and}\quad |\tilde s^{(t)}(w)-I_n^q\tilde s^{(t)}(w)| \le \frac{s(1)l_{\tilde\Farc_t\circ\Farc_t^{-1},1}}{8N^2}. 
\end{align}
In this way, $\frac{\partial}{\partial q}\tilde\Farc_t(\Farc_t^{-1}(q)) = \frac{\tilde\Farc_t'(\Farc_t^{-1}(q))}{\Farc_t'(\Farc_t^{-1}(q))}$ and $\frac{\partial}{\partial q}\Farc(\Farc_t^{-1}(q)) = \frac{\Farc'(\Farc_t^{-1}(q))}{\Farc_t'(\Farc_t^{-1}(q))}$ gives
\begin{align*}
 l_{\tilde\Farc_t\circ\Farc_t^{-1},1}\le & \left(\frac{s(1)}{v_{\min}}\right)^2\|\tilde\Farc_t''-\Farc_t''\|_\infty+2\sqrt2 l_{\Farc_{\rm iter},1}\left(\frac{s(1)}{v_{\min}}\right)^3\sqrt{2 l_{\Farc_{\rm iter},1}\|\tilde\Farc_t-\Farc_t\|_\infty},\quad \text{and}\\
 l_{\Farc\circ\Farc_t^{-1},1}\le & \left(\frac{s(1)}{v_{\min}}\right)^2\|\Farc_t''-\Farc''\|_\infty+2\sqrt2 l_{\Farc_{\rm iter},1}\left(\frac{s(1)}{v_{\min}}\right)^3\sqrt{(l_{\Farc_{\rm iter},1}+l_{\Farc,1})\|\Farc_t-\Farc\|_\infty}.
\end{align*}
By uniform Lipschitz-smoothness of $\tilde \Farc_t$, $\Farc_t$, and $\Farc$ (see~ Lemma~\ref{lem:fPFdifferentiable},~\ref{lem:pchip_derivative_bounds},~\ref{lem:bounds_all_t_with_interp}), we have $\| \tilde\Farc_t''-\Farc_t''\|_\infty \leq 2l_{\Farc_{\rm iter},1}$ and $\| \Farc_t''-\Farc''\|_\infty \leq l_{\Farc_{\rm iter},1}+l_{\Farc,1}$.
Plugging these in \eqref{eq:pchip_interp_decomp} proves the first inequality.

Fix any $n\in\{0,\dots,N-1\}$ and write
$a=s(w_n^{(t)})$, $b=s(w_{n+1}^{(t)})$.
Then $\tilde{\f}_{\mathrm{PF}}(b)-\tilde{\f}_{\mathrm{PF}}(a)=\int_a^b \tilde{\f}_{\mathrm{PF}}'(r) dr$.
Using $\|\tilde{\f}_{\mathrm{PF}}'(a)\|=1$, we obtain
\begin{align}
\|\tilde{\f}_{\mathrm{PF}}(b)-\tilde{\f}_{\mathrm{PF}}(a)\|
&\ge
\left\langle
\tilde{\f}_{\mathrm{PF}}(b)-\tilde{\f}_{\mathrm{PF}}(a),
\tilde{\f}_{\mathrm{PF}}'(a)
\right\rangle
\nonumber\\
= &
\int_a^b
\left\langle
\tilde{\f}_{\mathrm{PF}}'(r),
\tilde{\f}_{\mathrm{PF}}'(a)
\right\rangle dr
\nonumber\\
= &
\int_a^b
\left(
1-\frac12
\|\tilde{\f}_{\mathrm{PF}}'(r)-\tilde{\f}_{\mathrm{PF}}'(a)\|^2
\right)dr
\nonumber\\
&\ge
(b-a)-\frac{l_{\tilde{\f}_{\mathrm{PF}},1}^2}{6}(b-a)^3.
\label{eq:inline_chord_arc_upper}
\end{align}
Also, $\|\tilde{\f}_{\mathrm{PF}}(b)-\tilde{\f}_{\mathrm{PF}}(a)\| \le \int_a^b \|\tilde{\f}_{\mathrm{PF}}'(r)\|dr = b-a$.
Combining this with \eqref{eq:inline_chord_arc_upper} 
\begin{align}
0
\le&
\bigl(s(w_{n+1}^{(t)})-s(w_n^{(t)})\bigr)
-
\left\|
\tilde{\f}_{\mathrm{PF}}(s(w_{n+1}^{(t)}))
-
\tilde{\f}_{\mathrm{PF}}(s(w_n^{(t)}))
\right\| \le 
\frac{l_{\tilde{\f}_{\mathrm{PF}},1}^2}{6}
\bigl(s(w_{n+1}^{(t)})-s(w_n^{(t)})\bigr)^3.
\label{eq:inline_segment_gap}
\end{align}

By definition of $\tilde s^{(t)}$ in \eqref{eq:sw_tilde}, each summand is nonnegative by \eqref{eq:inline_segment_gap}. Hence
$
0\le s(w_0^{(t)})-\tilde s^{(t)}(w_0^{(t)})\le \cdots \le s(w_N^{(t)})-\tilde s^{(t)}(w_N^{(t)}),
$
so it suffices to bound the terminal error. Summing
\eqref{eq:inline_segment_gap} over $n=0,\dots,N-1$ gives
\begin{align}
0\le
s(w_N^{(t)})-\tilde s^{(t)}(w_N^{(t)})
\le
\frac{l_{\tilde{\f}_{\mathrm{PF}},1}^2}{6}
\sum_{n=0}^{N-1}
\bigl(s(w_{n+1}^{(t)})-s(w_n^{(t)})\bigr)^3.
\label{eq:polyline_terminal_basic_inline}
\end{align}

Moreover, by definition of $e_t(q)=\Farc(\Farc_t^{-1}(q))-q$, and since
$w_n^{(t)}=\Farc_t^{-1}(n/N)$,
\begin{align}
s(w_{n+1}^{(t)})-s(w_n^{(t)})
= &
s(1)\left(\Farc(w_{n+1}^{(t)})-\Farc(w_n^{(t)})\right)
\nonumber\\
= &
s(1)\left(
\frac1N
+
e_t\left(\frac{n+1}{N}\right)
-
e_t\left(\frac{n}{N}\right)
\right).
\label{eq:sw_increment_et_inline}
\end{align}
Substituting \eqref{eq:sw_increment_et_inline} into
\eqref{eq:polyline_terminal_basic_inline} and expanding the cube gives
\begin{align}
s(w_N^{(t)})-\tilde s^{(t)}(w_N^{(t)})
\le&
\frac{l_{\tilde{\f}_{\mathrm{PF}},1}^2 s(1)^3}{6}
\Bigg(
\frac1{N^2}
+
3 \max_{0\le n\le N-1}
\left|e_t\left(\tfrac{n+1}{N}\right)-e_t\left(\tfrac{n}{N}\right)\right|^2
\nonumber\\
&
+
N \max_{0\le n\le N-1}
\left|e_t\left(\tfrac{n+1}{N}\right)-e_t\left(\tfrac{n}{N}\right)\right|^3
\Bigg),
\label{eq:polyline_after_cancel_inline}
\end{align}
where we used
$
\sum_{n=0}^{N-1}
\left(
e_t\left(\tfrac{n+1}{N}\right)-e_t\left(\tfrac{n}{N}\right)
\right)
=
e_t(1)-e_t(0)=0.
$
Together with monotonicity of the prefix errors, this proves
\eqref{eq:nodal_chord_arc_bound_min}. It remains to bound $\|\tilde \Farc_t-\Farc\|_\infty$.
By definition,
\begin{align}
\|\tilde \Farc_t-\Farc\|_\infty = & \left\| \frac{1}{\tilde s^{(t)}(1)} \tilde s^{(t)} - \frac{1}{s(1)} s \right\|_\infty \notag\\
\leq & \left\| \frac{1}{\tilde s^{(t)}(1)}\bigl(\tilde s^{(t)}-s\bigr) \right\|_\infty + \left| \frac{1}{\tilde s^{(t)}(1)}-\frac{1}{s(1)} \right| \|s\|_\infty. \label{eq:Ftilde_split}
\end{align}
Since $\|s\|_\infty=s(1)$ and $\left|
\frac{1}{\tilde s^{(t)}(1)}-\frac{1}{s(1)} \right| = \frac{|s(1)-\tilde s^{(t)}(1)|}{\tilde s^{(t)}(1)s(1)}$, we obtain
\begin{align}
\|\tilde \Farc_t-\Farc\|_\infty \le \frac{\|\tilde s^{(t)}-s\|_\infty}{\tilde s^{(t)}(1)} + \frac{|s(1)-\tilde s^{(t)}(1)|}{\tilde s^{(t)}(1)} \frac{2\|\tilde s^{(t)}-s\|_\infty}{\tilde s^{(t)}(1)}, \label{eq:Ftilde_by_sgap_1} \end{align}
where in the last inequality we used $|s(1)-\tilde s^{(t)}(1)| \le \|\tilde s^{(t)}-s\|_\infty$. By Lemma~\ref{lemma:arc_chord_sqrt2_pf}, there is $\tilde s^{(t)}(1)\ge \frac{1}{\sqrt2}s(1)$. Substituting into \eqref{eq:Ftilde_by_sgap_1} gives $\|\tilde \Farc_t-\Farc\|_\infty \le \frac{2\sqrt2}{s(1)}\|\tilde s^{(t)}-s\|_\infty$. Finally, for each $n$,
\begin{align}
\left|e_t\left(\tfrac{n+1}{N}\right)-e_t\left(\tfrac{n}{N}\right)\right|
= &
\left|
\Farc(w_{n+1}^{(t)})-\Farc(w_n^{(t)})-\frac1N
\right|
\nonumber\\
= &
\left|
\int_{n/N}^{(n+1)/N}
\frac{
\Farc'(\Farc_t^{-1}(q))
-
\Farc_t'(\Farc_t^{-1}(q))
}{
\Farc_t'(\Farc_t^{-1}(q))
}
dq
\right|.
\label{eq:et_diff_integral_inline}
\end{align}
By Lemma~\ref{lem:bounds_all_t_with_interp},
$
\Farc_t'(w)\ge \frac{1}{2\sqrt2}\frac{v_{\min}}{v_{\max}},
\quad \forall w\in[0,1].
$
Therefore,
$
\left|e_t\left(\tfrac{n+1}{N}\right)-e_t\left(\tfrac{n}{N}\right)\right|
\le
\frac{1}{N}
\frac{2\sqrt2 v_{\max}}{v_{\min}}
\|\Farc'-\Farc_t'\|_\infty.
$
By Lemma~\ref{lem:landau_inequality},
$
\|\Farc'-\Farc_t'\|_\infty
\le
2\sqrt2
\sqrt{
(l_{\Farc_{\rm iter},1}+l_{\Farc,1})
\|\Farc_t-\Farc\|_\infty
}.
$
This finishes the proof.
\end{proof}
\paragraph{Convergence of Algorithm~\ref{alg:SURF} with exact \textsc{InnerSolver} solutions.} The earlier analysis therefore directly renders the convergence of SURF with exact \textsc{InnerSolver} solutions, which is a special case to Theorem~\ref{thm:cdf}.
\label{appendix:cdf_exact}

\begin{theorem}
\label{thm:CDF_exact_inner_solver}
Suppose \textsc{InnerSolver} is exact. Suppose Assumptions~\ref{assump:hidden_regular},~\ref{assump:LICQ},~\ref{assump:tangent_gap} hold. 
Denote $\kappa := 4\sqrt2\frac{v_{\max}}{v_{\min}} \sqrt{l_{\Farc_{\rm iter},1}+l_{\Farc,1}}$. 
Let $\alpha\in(0,1]$ and choose $N \ge 2 \sqrt{\frac{ \sqrt2 l_{\tilde{\f}_{\mathrm{PF}},1}^2 s(1)^2}{3} \left(3\kappa^2+\kappa^3\right)} = \Theta(\kappa^{3/2})$.
Then Algorithm~\ref{alg:SURF} satisfies the  convergence bound for all $t\ge 0$,
\begin{align}
    \|\Farc_t-\Farc\|_\infty \leq  & \left(1-\frac{\alpha}{2}\right)^t\|\Farc_0-\Farc\|_\infty +\Oc\left(\frac{1}{N^2}\right). \label{eq:exact_convergence}
\end{align}
\end{theorem}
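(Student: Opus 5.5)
The plan is to obtain a one-step contraction for $E_t:=\|\Farc_t-\Farc\|_\infty$ from the damped update \eqref{eq:surf_update}, and then unroll it. Since $\Farc_{t+1}-\Farc=(1-\alpha)(\Farc_t-\Farc)+\alpha(\tilde\Farc_t-\Farc)$, we have
\begin{align*}
E_{t+1}\le (1-\alpha)E_t+\alpha\|\tilde\Farc_t-\Farc\|_\infty .
\end{align*}
Everything therefore reduces to showing a bound of the form $\|\tilde\Farc_t-\Farc\|_\infty\le \tfrac12 E_t + C N^{-2}$. Substituting it gives $E_{t+1}\le(1-\tfrac{\alpha}{2})E_t+\alpha C N^{-2}$. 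Unrolling and summing the geometric series, using $\sum_k(1-\alpha/2)^k\alpha\le 2$, yields \eqref{eq:exact_convergence}.

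To bound $\|\tilde\Farc_t-\Farc\|_\infty$ I will chain the estimates of Lemma~\ref{lem:s_tilde_combined_bound}. First, $\|\tilde\Farc_t-\Farc\|_\infty\le \frac{2\sqrt2}{s(1)}\|\tilde s^{(t)}-s\|_\infty$. Next, by \eqref{eq:s_tilde_combined_bound}, $\|\tilde s^{(t)}-s\|_\infty$ is at most an interpolation term of order $\tilde s^{(t)}(1)(l_{\tilde\Farc_t\circ\Farc_t^{-1},1}+l_{\Farc\circ\Farc_t^{-1},1})/(8N^2)$ plus the nodal chord–arc error. Lemmas~\ref{lem:fPFdifferentiable} and~\ref{lem:bounds_all_t_with_interp} give uniform bounds on the Lipschitz-smoothness moduli and keep $\Farc_t'$ bounded below. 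The difference $\|\tilde\Farc_t-\Farc_t\|_\infty$ appearing inside those moduli is bounded by $2$, so the interpolation term is simply $\Oc(N^{-2})$ with a constant independent of $t$. The nodal term is controlled via \eqref{eq:nodal_chord_arc_bound_min} together with $\delta:=\max_n|e_t(\tfrac{n+1}{N})-e_t(\tfrac nN)|\le \frac{\kappa}{N}E_t^{1/2}$. This gives
\begin{align*}
\text{nodal}\le \tfrac{l^2 s(1)^3}{6}\Big(\tfrac1{N^2}+\tfrac{3\kappa^2}{N^2}E_t+\tfrac{\kappa^3}{N^2}E_t^{3/2}\Big),
\end{align*}
where $l:=l_{\tilde{\f}_{\rm PF},1}$.

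Using $E_t\le 1$, since both functions are CDFs on $[0,1]$, we get $E_t^{3/2}\le E_t$. The $E_t$-dependent part of $\|\tilde\Farc_t-\Farc\|_\infty$ is therefore at most $\frac{2\sqrt2}{s(1)}\cdot\frac{l^2s(1)^3}{6}\frac{3\kappa^2+\kappa^3}{N^2}E_t$. Requiring this coefficient to be at most $\tfrac12$ is exactly the stated condition $N\ge 2\sqrt{\sqrt2 l^2 s(1)^2(3\kappa^2+\kappa^3)/3}=\Theta(\kappa^{3/2})$. The remaining terms collect into $CN^{-2}$, which completes the contraction.

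The main obstacle is ensuring that the interpolation term in \eqref{eq:s_tilde_combined_bound} does not reintroduce a non-$N^{-2}$ dependence on $E_t$ through the square-root factors in $l_{\Farc\circ\Farc_t^{-1},1}$. I would handle this by bounding those moduli crudely by $t$-independent constants, which is permitted because Lemma~\ref{lem:bounds_all_t_with_interp} holds for all $t$ by induction under the same largeness condition on $N$. If needed, I would also verify that the PCHIP hypothesis of Lemma~\ref{lem:pchip_derivative_bounds}, namely $N\gtrsim v_{\max}^2/v_{\min}^2$, is implied by $N=\Omega(\kappa^{3/2})$.
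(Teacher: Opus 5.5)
Your proposal is correct and follows the paper's proof essentially step for step: the damped-update triangle inequality, Lemma~\ref{lem:s_tilde_combined_bound} with $E_t^{3/2}\le E_t$, the choice of $N$, and geometric unrolling. Two small points: the stated $N$ actually makes the $E_t$-coefficient at most $\tfrac14$ rather than exactly $\tfrac12$, and you bound the square-root terms in the interpolation moduli crudely by $t$-independent constants where the paper uses Young's inequality, which is fine for the stated $\Oc(N^{-2})$ floor (the paper's Young step only pushes that contribution to $\Oc(N^{-4})$ for its sharper-constant remark).

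One caution: the PCHIP largeness condition behind Lemma~\ref{lem:bounds_all_t_with_interp} is not implied by the displayed bound on $N$ in general. That bound is nearly vacuous when $l_{\tilde{\f}_{\mathrm{PF}},1}$ is small while $l_{v,0}$ is not, so rather than trying to verify the implication you should state the PCHIP condition as an additional requirement on $N$, as the paper tacitly does.
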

\begin{remark}
 In this case, we can choose $\alpha = 1$ to maximize the convergence.
\end{remark}

\begin{proof}
By the $\Farc_{t+1}$ update rule and the triangle inequality,
\begin{align}
\|\Farc_{t+1}-\Farc\|_\infty
\le
(1-\alpha)\|\Farc_t-\Farc\|_\infty
+
\alpha\|\tilde\Farc_t-\Farc\|_\infty .
\label{eq:outer_pre_basin}
\end{align}
By Lemma~\ref{lem:s_tilde_combined_bound}, there is
\begin{align} 
 & \|\tilde \Farc_t-\Farc\|_\infty \nonumber \\
 \le & \frac{\sqrt2}{4N^2}\left(l_{\tilde\Farc_t\circ\Farc_t^{-1},1}+l_{\Farc\circ\Farc_t^{-1},1}\right) + \frac{\sqrt2 l_{\tilde{\f}_{\mathrm{PF}},1}^2 s(1)^2}{3 N^2}\bigg( 1 + 3 \kappa^2 \|\Farc_t-\Farc\|_\infty + \kappa^3 \|\Farc_t-\Farc\|_\infty^{3/2} \bigg) \nonumber \\
 \le& \frac{\sqrt2 l_{\tilde{\f}_{\mathrm{PF}},1}^2 s(1)^2}{3N^2} \left(3\kappa^2+\kappa^3\right)\|\Farc_t-\Farc\|_\infty 
 +\frac{\frac{\sqrt{2}}{4} \left(l_{\tilde\Farc_t\circ\Farc_t^{-1},1}+l_{\Farc\circ\Farc_t^{-1},1}\right)+ \frac{\sqrt{2}}{3}l_{\tilde{\f}_{\mathrm{PF}},1}^2 s(1)^2}{N^2} \nonumber\\
 \leq & \frac{1}{4}|\Farc_t-\Farc\|_\infty 
 +\frac{\frac{\sqrt{2}}{4}\left(l_{\tilde\Farc_t\circ\Farc_t^{-1},1}+l_{\Farc\circ\Farc_t^{-1},1}\right) + \frac{\sqrt{2}}{3}l_{\tilde{\f}_{\mathrm{PF}},1}^2 s(1)^2}{N^2} \nonumber\\
 \leq & \frac{1}{2}|\Farc_t-\Farc\|_\infty 
 + \frac{\frac{\sqrt{2}}{4}\left(\frac{s(1)}{v_{\min}}\right)^2 \left(\|\tilde\Farc_t''-\Farc_t''\|_\infty+\|\Farc_t''-\Farc''\|_\infty\right) +\frac{\sqrt{2}}{3} l_{\tilde{\f}_{\mathrm{PF}},1}^2 s(1)^2}{N^2} + \Oc\left(\frac{1}{N^4}\right) \nonumber
\end{align}
where the second inequality holds because $\Farc_t$ and $\Farc$ are CDFs and $\|\Farc_t-\Farc\|_\infty\le1$, the third inequality follows $\frac{\sqrt2 l_{\tilde{\f}_{\mathrm{PF}},1}^2 s(1)^2}{3 N^2} \left( \kappa^2+\frac{\sqrt2}{3}\kappa^3 \right) \le \frac14$ from the $N = 2\sqrt{\frac{\sqrt2 l_{\tilde{\f}_{\mathrm{PF}},1}^2 s(1)^2}{3 N^2} \left( \kappa^2+\frac{\sqrt2}{3}\kappa^3 \right)} =  \Omega(\kappa^{1.5})$ choice, and the fourth is from the upper bound for $l_{\tilde\Farc_t\circ\Farc^{-1},1}$ in Lemma~\ref{lem:s_tilde_combined_bound} and Young's inequality on ther term $\frac{2\sqrt2 l_{\Farc,1} \left(\frac{s(1)}{v_{\min}}\right)^3 \sqrt{ (l_{\Farc_{\rm iter},1}+l_{\Farc,1}) \|\tilde\Farc_t-\Farc\|_\infty}}{N^2}$ in $l_{\Farc\circ \Farc_t}$ and its counterpart.
Substituting this into \eqref{eq:outer_pre_basin} yields
\begin{align}
 \|\Farc_{t+1}-\Farc\|_\infty \le& \left(1-\frac{\alpha}{2}\right) \|\Farc_t-\Farc\|_\infty +\alpha \Oc\left(\frac{1}{N^4}\right) \nonumber\\
 & + \alpha \frac{\frac{\sqrt{2}}{4}\left(\frac{s(1)}{v_{\min}}\right)^2 \left(\|\tilde\Farc_t''-\Farc_t''\|_\infty+\|\Farc_t''-\Farc''\|_\infty\right) +\frac{\sqrt{2}}{3} l_{\tilde{\f}_{\mathrm{PF}},1}^2 s(1)^2}{N^2}.
\label{eq:delta_recursion_exact}
\end{align}
Here, $\left(\|\tilde\Farc_t''-\Farc_t''\|_\infty+\|\Farc_t''-\Farc''\|_\infty\right) \leq 3l_{\Farc_{\rm iter},1}+l_{\Farc,1}$ by the Lipschitz-smoothness of $\Phi_t$ and $\Phi$ (see Lemma~\ref{lem:fPFdifferentiable},~\ref{lem:pchip_derivative_bounds},~\ref{lem:bounds_all_t_with_interp}). Thus, unrolling gives its convergence to the floor of  $\Oc(1/N^2)$:
\begin{align}
 & \|\Farc_t-\Farc\|_\infty \nonumber \\
 \le&
 \left(1-\frac{\alpha}{2}\right)^t
 \|\Farc_0-\Farc\|_\infty+ \frac{\frac{\sqrt{2}}{2}\left(\frac{s(1)}{v_{\min}}\right)^2 (3l_{\Farc_{\rm iter},1}+l_{\Farc,1})+\frac{2\sqrt{2}}{3} l_{\tilde{\f}_{\mathrm{PF}},1}^2 s(1)^2}{N^2}+\Oc\left(\frac{1}{N^4}\right) .
 \label{eq:delta_unrolled_exact}
\end{align}
Moreover, if $\Phi_t$ and $\Phi$ are 2nd-order Lipschitz-continuous, Landau inequality~\citep{hardy1952inequalities} gives $\|\tilde\Farc_t''-\Farc''\|_\infty = \Oc(\|\tilde\Farc_t-\Farc\|_\infty^{1/3})$ and we can similarly obtain that $\|\Farc_t-\Farc\|_\infty $ converges linearly to $\frac{\sqrt{2}l_{\tilde{\f}_{\mathrm{PF}},1}^2 s(1)^2}{3N^2} + \Oc\left(\frac{1}{N^4}\right)$.
\end{proof}
The exact \textsc{InnerSolver} analysis characterizes the error when each scalarized subproblem is solved exactly, leaving only the finite-sampling discretization floor. We next extend the analysis to the practical setting where the \textsc{InnerSolver} is run for finitely many steps and the returned scalarized solutions are inexact.

\subsection{Proof of the Second Part of Theorem~\ref{thm:cdf}: Optimization Error Control under Inexact \textsc{InnerSolver} Solutions}
\label{appendix:cdf_inexact}

In this section, we prove the convergence of SURF with inexact \textsc{InnerSolver} in Theorem~\ref{thm:cdf}. In this case, $\tilde s$ and $\tilde \Farc$ are built via inexact estimation $\bu_{n}^{(t)}$ \textcolor{gray}{or $\bx_{n}^{(t)}$}.
In contrast, we use $\hat s_t$ to denote the unknown estimate that can be built up by the result from exact \textsc{InnerSolver} solutions. That is, for $n=0,\cdots,N-1$, 
\begin{align}
\hat s^{(t)}(w_0^{(t)})&:=0,\quad
\hat s^{(t)}(w_{n+1}^{(t)}):=
\hat s^{(t)}(w_n^{(t)})
+
\left\|
\h\left(\bu_{w_{n+1}^{(t)}}^*\right)-\h\left(\bu_{w_{n}^{(t)}}^*\right)
\right\|
\label{eq:sw_tilde_exact}
\end{align}
\textcolor{gray}{or equivalently, $\hat s^{(t)}(w_{n+1}^{(t)}):=
\hat s^{(t)}(w_n^{(t)})
+
\left\|
\f\left(\bx_{w_{n+1}^{(t)}}^*\right)-\f\left(\bx_{w_{n}^{(t)}}^*\right)
\right\|$}, then build interpolation for $w\in[0,1]$. Consequently, we denote $\hat \Farc(w)= \hat s(w)/\hat s(1)$.

Algorithm~\ref{alg:SURF} may be implemented either in the hidden space $\U$, returning iterates $\bu_n^{(t)}$, or directly in the hidden space \textcolor{gray}{$\X$, returning iterates $\bx_n^{(t)}$} when \textcolor{gray}{$g$ and $f_m$} are explicitly available. Under Assumption~\ref{assump:hidden_regular}, these two viewpoints are equivalent, and standard \textsc{InnerSolver} such as gradient-based methods can be used efficiently in either formulation~\citep{fatkhullin2025stochastic}.
In this section, we analyze the algorithm in the \textcolor{gray}{convex hidden space $\X$}. Accordingly, when the \textsc{InnerSolver} returns iterates $\bu_n^{(t)}$, we define their \textcolor{gray}{hidden-space images by $\bx_n^{(t)}:=g(\bu_n^{(t)})$ and $\bx_w^*:=g(\bu_w^*)$}.
If the algorithm is formulated directly in \textcolor{gray}{$\X$}, then \textcolor{gray}{$\bx_n^{(t)}$} simply denotes the returned iterate.
Under Assumption~\ref{assump:hidden_regular}, since $h_m(\bu)=f_m(g(\bu))$, we equivalently have \textcolor{gray}{$\mathbf h(\bu_n^{(t)})=\f(\bx_n^{(t)})$, $\mathbf h(\bu_w^*)=\f(\bx_w^*)$.}
Therefore, all approximation and chord-length bounds below are stated in the \textcolor{gray}{$\bx$-space}.

We first prepare the following lemma that quantifies the optimization error and establishes the uniform smoothness claim for $\tilde\Farc_t$ in Theorem~\ref{thm:cdf}.

\begin{lemma}[Inexact \textsc{InnerSolver} perturbation and Lipschitz-smoothness]
\label{lem:exact_inexact_polyline_sampled_weights_gap}
Suppose Assumptions~\ref{assump:hidden_regular},~\ref{assump:LICQ},~\ref{assump:tangent_gap} hold. Consider Algorithm~\ref{alg:SURF} with inexact \textsc{InnerSolver} solutions that return iterates $\bu_n^{(t)}$, and define their convex-space images by $\bx_n^{(t)}:=g(\bu_n^{(t)})$ \textcolor{gray}{or simply let $\bx_n^{(t)}$ denote the returned iterate when the algorithm is formulated directly in the convex space $\X$}. Assume $\max_n\|\bx_n^{(t)}-\bx_{w_n^{(t)}}^*\|\leq \epsilon$ for all $t\geq 0$. Then,
\begin{align}
\big|\tilde s_t(w_n^{(t)})-\hat s_t(w_n^{(t)})\big| = \Oc \left(N\epsilon\right), \label{eq:sampled_weights_gap_s_exact_inexact}
\end{align}
where $\tilde s_t(w_n^{(t)})$ and $\hat s_t(w_n^{(t)})$ are generated via the inexact construction \eqref{eq:sw_tilde} and the exact construction \eqref{eq:sw_tilde_exact}, respectively. 
Additionally, suppose $\epsilon= \Oc\left(\frac{v_{\min}^2}{v_{\max}}N^{-1}\right)$.Then, $\tilde\Farc_t$ and $\Farc_t$ are uniform $l_{\Farc_{\rm iter},1}$-Lipschitz-smooth and
\begin{align}
\|\tilde \Farc_t-\hat \Farc_t\|_\infty \le & \Oc\left( \frac{1}{s(1)} N \epsilon \right).
\label{eq:sup_gap_F_exact_inexact_final}
\end{align}
\end{lemma}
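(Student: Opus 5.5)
The proof splits into three steps: a nodal perturbation bound, a stability argument that reuses the exact-solver analysis, and a transfer of the nodal bound to the normalized interpolants.

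\textbf{Nodal bound \eqref{eq:sampled_weights_gap_s_exact_inexact}.} Fix $t$ and write $\Delta_n:=\f(\bx_n^{(t)})-\f(\bx^*_{w_n^{(t)}})$. By the compactness argument in Appendix~\ref{app:PGD_trajectory}, all iterates stay in $\mathcal K$. On $\mathcal K$ each $\nabla f_m$ is bounded, say by $G$, so $\f$ is Lipschitz there and $\|\Delta_n\|\le \sqrt2 G\epsilon$. The reverse triangle inequality then bounds each chord error by
\[
\bigl|\|\f(\bx_{n+1}^{(t)})-\f(\bx_n^{(t)})\|-\|\f(\bx^*_{w_{n+1}^{(t)}})-\f(\bx^*_{w_n^{(t)}})\|\bigr|\le \|\Delta_{n+1}\|+\|\Delta_n\|\le 2\sqrt2 G\epsilon .
\]
Summing over at most $N$ consecutive segments in \eqref{eq:sw_tilde} and \eqref{eq:sw_tilde_exact} gives $|\tilde s_t(w_n^{(t)})-\hat s_t(w_n^{(t)})|\le 2\sqrt2 G N\epsilon=\Oc(N\epsilon)$ uniformly in $n$. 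In particular $|\tilde s_t(1)-\hat s_t(1)|=\Oc(N\epsilon)$.

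\textbf{Smoothness is preserved.} Here I would rerun the argument of Lemmas~\ref{lem:discrete_growth_control_tildeF_biobj}--\ref{lem:bounds_all_t_with_interp} with perturbed secants.
\begin{itemize}
\item By Lemma~\ref{lem:pchip_derivative_bounds}\ref{claim:grid} (valid inductively), the grid spacing satisfies $w_{n+1}^{(t)}-w_n^{(t)}\ge \tfrac{2\sqrt2}{5}\tfrac{v_{\min}}{v_{\max}}N^{-1}$.
\item Each inexact chord differs from the exact chord by $\Oc(\epsilon)$. Dividing by the spacing, the secant slopes $s_n$ move by $\Oc(\epsilon N v_{\max}/v_{\min})$.
\item Dividing further by $\tilde s_t(1)\gtrsim s(1)\ge v_{\min}$, the perturbation in the slopes of $\tilde\Farc_t$ is $\Oc(\epsilon N v_{\max}/v_{\min}^2)$.
\item Under the hypothesis $\epsilon=\Oc(v_{\min}^2 v_{\max}^{-1}N^{-1})$, with a small enough constant, this perturbation is at most a fraction, say half, of the lower secant bound $\tfrac{1}{\sqrt2}v_{\min}/v_{\max}$ in \eqref{eq:tildeF_secant_bounds_biobj_merged}. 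So the secants stay in a constant-factor enlargement of the same interval.
\end{itemize}

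The main obstacle is the adjacent-secant variation \eqref{eq:tildeF_secant_variation_biobj_merged}. The perturbation of $s_{n+1}-s_n$ is $\Oc(\epsilon N)$, while the allowed budget is $l\cdot(w_{n+2}-w_n)=\Theta(l/N)$, so smallness of $\epsilon N$ alone does not give the bound directly. I would handle this by exploiting the structure of PCHIP. The harmonic-mean slopes $d_n$ and the endpoint rules of Algorithm~\ref{alg:PCHIP} give $|d_n-s_n|\lesssim |s_n-s_{n-1}|$. The second derivative of the interpolant on $[w_n,w_{n+1}]$ is then bounded by $C\,|s_{n\pm1}-s_n|/(w_{n+1}-w_n)$.

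I would bound $|s_{n+1}-s_n|$ by the exact variation plus $\Oc(\epsilon N)$. This gives $\tilde\Farc_t''=\Oc(l_{\Farc_{\rm iter},1})+\Oc(\epsilon N^2 v_{\max}/v_{\min}^2)$. The second term is $\Oc(N)$ under the stated $\epsilon$, which is too large on its face. I would therefore absorb it by enlarging the constant in $l_{\Farc_{\rm iter},1}$, as the $\Oc(\cdot)$ in its definition permits. If that proves unjustified, I would instead strengthen the budget to $\epsilon=\Oc(N^{-2})$, consistent with Remark~\ref{remark:Kchoice}'s $\eta_K=\Oc(s(1)/N)$ after warm start. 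Once $\tilde\Farc_t$ has the uniform bounds, the convex-combination induction of Lemma~\ref{lem:bounds_all_t_with_interp} transfers them to $\Farc_{t+1}$.

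\textbf{Sup-norm bound \eqref{eq:sup_gap_F_exact_inexact_final}.} PCHIP is local, and its slopes are Lipschitz functions of the nodal data in the regime of bounded secants. Hence $\|\tilde s_t-\hat s_t\|_\infty$ is bounded by a constant times the maximum nodal gap plus the spacing times the slope gap, both $\Oc(N\epsilon)$. Then I split exactly as in \eqref{eq:Ftilde_split}:
\[
\|\tilde\Farc_t-\hat\Farc_t\|_\infty\le \frac{\|\tilde s_t-\hat s_t\|_\infty}{\tilde s_t(1)}+\frac{|\tilde s_t(1)-\hat s_t(1)|\,\|\hat s_t\|_\infty}{\tilde s_t(1)\,\hat s_t(1)} .
\]
Using $\tilde s_t(1),\hat s_t(1)\gtrsim s(1)/\sqrt2$ from Lemma~\ref{lemma:arc_chord_sqrt2_pf} and the perturbation bound, both terms are $\Oc(N\epsilon/s(1))$, which is the claimed bound.
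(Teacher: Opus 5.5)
Your nodal bound and your sup-norm transfer are the paper's argument: a per-chord reverse triangle inequality summed over at most $N$ segments, then Lipschitz stability of the PCHIP slopes and boundedness of the Hermite basis on the positive-secant branch, then normalization with $\hat s_t(1)\ge s(1)/\sqrt2$ and $\tilde s_t(1)\gtrsim s(1)$. One small fix there: take the gradient bound on the compact $\epsilon$-tube around $\XPF$, as the paper does. The lemma assumes only the $\epsilon$-ball, not a descent-type solver that keeps iterates in $\mathcal K$.

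The genuine gap is the uniform $l_{\Farc_{\rm iter},1}$-smoothness claim, and you located it correctly. With $\epsilon=\Theta(N^{-1})$, adjacent secants are perturbed by $\Theta(N\epsilon)=\Theta(1)$, while the variation condition of Lemma~\ref{lem:pchip_derivative_bounds} tolerates only $\Oc(N^{-1})$. Your first remedy does not work, because $l_{\Farc_{\rm iter},1}$ must be independent of $N$. It defines $\kappa$, and it multiplies $N^{-2}$ in the floor of Theorem~\ref{thm:CDF_exact_inner_solver}. Absorbing an $\Oc(\epsilon N^2)=\Oc(N)$ term into it therefore turns the advertised $\Oc(N^{-2})$ floor into $\Oc(N^{-1})$.

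Nor is the obstruction an artifact of the method. Put $\bx_n^{(t)}=\bx^*_{w_n^{(t)}+(-1)^n\eta}$ with $\eta=\epsilon/L_x$, where $L_x$ is the Lipschitz constant of $w\mapsto\bx_w^*$. This stays in the $\epsilon$-ball, yet consecutive chord errors alternate in sign with size at least about $2v_{\min}\eta$. For $\epsilon=cN^{-1}$:
\begin{itemize}
\item the normalized secants oscillate by an amount proportional to $c$;
\item the harmonic-mean slopes satisfy $|s_n-d_n|\gtrsim c$;
\item on each interval, $\sup|\tilde\Farc_t''|\approx 6|s_n-d_n|/(w_{n+1}^{(t)}-w_n^{(t)})\gtrsim cN$.
\end{itemize}
So no fixed hidden constant in $\epsilon=\Oc(v_{\min}^2v_{\max}^{-1}N^{-1})$ yields an $N$-uniform modulus.

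The paper's proof does not close this hole either. It notes that the normalized secants move by $\Oc(N\epsilon/s(1))$, then asserts that both the positive-slope and the slope-variation conditions of Lemma~\ref{lem:pchip_derivative_bounds} persist with uniform constants. That handles positivity but not the mismatch you found, and your ``enlarge the constant'' step reproduces that assertion rather than proving it. Under the stated budget, what genuinely holds is the nodal bound and \eqref{eq:sup_gap_F_exact_inexact_final}, which need only positivity and boundedness of the secants.

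Your second remedy, $\epsilon=\Oc(N^{-2})$, is the correct repair. The secant-difference perturbation becomes $\Oc(N^{-1})$, which fits the variation budget, and Lemma~\ref{lem:pchip_derivative_bounds} goes through with a larger but $N$-independent constant. It is, however, a stronger hypothesis than the lemma's, and not one the algorithm supplies from the start. Regime (i) of Theorem~\ref{thm:warm_started_two_regimes} only guarantees $\max_n\|\bx_n^{(t)}-\bx^*_{w_n^{(t)}}\|=\Oc(v_{\min}^2v_{\max}^{-1}N^{-1})$ while $\|\Farc_t-\Farc\|_\infty$ is still large. The grid moves by $\Oc(\alpha)$ per step, so the tracking error is $\Oc(\alpha\eta_K)$, and it reaches $\Oc(N^{-3})$ only near the floor. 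The coupled recursion would therefore have to be re-examined under the stronger requirement, not read off from Remark~\ref{remark:Kchoice}.
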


\begin{proof}
Fix $t$ and $n$. Under Assumptions~\ref{assump:hidden_regular} and~\ref{assump:LICQ}, the exact solution map
$w\mapsto \bx_w^*$ is continuous on $[0,1]$. Therefore $\XPF:=\{\bx_w^*:w\in[0,1]\}$ is compact. Its $\epsilon$-tube $\XPF_{\epsilon}:=\{\bx:\dist(\bx,\XPF)\le \epsilon\}$ is also compact. Thus, for each $m\in\{1,2\}$, $G_m:=\sup_{\bx\in \XPF_{\epsilon}}\|\nabla f_m(\bx)\| < \infty.$ Let $G_f:=\sqrt{G_1^2+G_2^2}$. By the mean value theorem and $\max_n\|\bx_n^{(t)}-\bx_{w_n^{(t)}}^*\|\leq \epsilon$,
\begin{align}
\|\f(\bx_n^{(t)})-\fPF(w_n^{(t)})\| = & \|\f(\bx_n^{(t)})-\f(\bx_{w_n^{(t)}}^*)\| \le G_f \|\bx_n^{(t)}-\bx_{w_n^{(t)}}^*\| \le G_f\epsilon.
\label{eq:f_gap_from_dist_gap}
\end{align}

Now apply the reverse triangle inequality to each segment:
\begin{align}
&\Big| \|\f(\bx_{n+1}^{(t)})-\f(\bx_n^{(t)})\| - \|\fPF(w_{n+1}^{(t)})-\fPF(w_n^{(t)})\| \Big| \nonumber\\
\le& \|\f(\bx_{n+1}^{(t)})-\fPF(w_{n+1}^{(t)})\| + \|\f(\bx_n^{(t)})-\fPF(w_n^{(t)})\| \nonumber\\
\le& 2G_f\epsilon = \Oc(\epsilon). \label{eq:segment_gap_eps_merged}
\end{align}
Summing \eqref{eq:segment_gap_eps_merged} over the first $m$ segments yields
\begin{align}
\big|\tilde s_t(w_n^{(t)})-\hat s_t(w_n^{(t)})\big| \le m\cdot \Oc(\epsilon) = \Oc(N\epsilon),
\end{align}
for all $m=0,\dots,N$, proving \eqref{eq:sampled_weights_gap_s_exact_inexact}.

We view Algorithm~\ref{alg:PCHIP} as applied on the uniform coordinate $\Farc_t(w)$, i.e., on the knots $\{\Farc_t(w_n^{(t)})\}_{n=0}^N=\{n/N\}_{n=0}^N$, and then pulled back to $w$ through $\Farc_t$. Since $\hat s_t$ is built from exact sampled cumulative chord lengths, Lemma~\ref{lem:discrete_growth_control_tildeF_biobj} and the derivative bounds on $\Farc_t$ give
\begin{align}
\frac{2}{5}\frac{v_{\min}^2}{v_{\max}} \le \frac{\hat s_t(w_{n+1}^{(t)})-\hat s_t(w_n^{(t)})}{1/N} \le 2\sqrt2\frac{v_{\max}^2}{v_{\min}}, \quad n=0,\dots,N-1.
\label{eq:hatdelta_bounds_localstab}
\end{align}
For every $n=0,\dots,N-1$, by the reverse triangle inequality and \eqref{eq:f_gap_from_dist_gap},
Therefore,
\begin{align}
&\left|\frac{\tilde s_t(w_{n+1}^{(t)})-\tilde s_t(w_n^{(t)})}{1/N}-\frac{\hat s_t(w_{n+1}^{(t)})-\hat s_t(w_n^{(t)})}{1/N}\right| \nonumber \\
= & N\left| \|\f(\bx_{n+1}^{(t)})-\f(\bx_n^{(t)})\|-\|\fPF(w_{n+1}^{(t)})-\fPF(w_n^{(t)})\| \right| \nonumber \\
=& N \|\f(\bx_{n+1}^{(t)})-\fPF(w_{n+1}^{(t)})\|+\|\f(\bx_n^{(t)})-\fPF(w_n^{(t)})\| = \Oc(N\epsilon).
\label{eq:delta_gap_direct}
\end{align}
Choosing some $\epsilon=\Oc\left(\frac{v_{\min}^2}{v_{\max}}N^{-1}\right)$ gives
$\left|\frac{\tilde s_t(w_{n+1}^{(t)})-\tilde s_t(w_n^{(t)})}{1/N}-\frac{\hat s_t(w_{n+1}^{(t)})-\hat s_t(w_n^{(t)})}{1/N}\right| \le \frac15\frac{v_{\min}^2}{v_{\max}}$
so combining this with \eqref{eq:hatdelta_bounds_localstab} yields
\begin{align}
\frac{1}{5}\frac{v_{\min}^2}{v_{\max}} \le \frac{\tilde s_t(w_{n+1}^{(t)})-\tilde s_t(w_n^{(t)})}{1/N} \le 3\sqrt2\frac{v_{\max}^2}{v_{\min}}, \quad n=0,\dots,N-1.
\label{eq:tildedelta_bounds_localstab}
\end{align}
Thus the exact and inexact nodal data both stay in the same strictly increasing PCHIP regime. Let $\hat m_n$ and $\tilde m_n$ be the PCHIP slopes generated by Algorithm~\ref{alg:PCHIP} from the exact and inexact nodal data in the coordinate $\Farc_t(w)$. On the positive-slope branch, the interior PCHIP slope is a smooth function of neighboring positive segment slopes, and the endpoint formulas with monotonicity clipping are Lipschitz. Hence there exists $C_m=\Oc(1)$ such that
\begin{align}
\max_{0\le n\le N}|\tilde m_n-\hat m_n| \le C_m\max_{0\le n\le N-1}\left|\frac{\tilde s_t(w_{n+1}^{(t)})-\tilde s_t(w_n^{(t)})}{1/N}-\frac{\hat s_t(w_{n+1}^{(t)})-\hat s_t(w_n^{(t)})}{1/N}\right|.
\label{eq:pchip_slope_stability_local}
\end{align}

Now fix any $w\in[0,1]$ and choose $n$ such that $\Farc_t(w)\in[n/N,(n+1)/N]$. Let $q=\Farc_t(w)$ and $\theta=N(q-n/N)\in[0,1]$. By Algorithm~\ref{alg:PCHIP}, applied in the coordinate $\Farc_t(w)$ and pulled back through $\Farc_t$, we have
\begin{align}
\tilde s_t(w) = & H_{00}(\theta)\tilde s_t(w_n^{(t)}) + H_{10}(\theta)\frac{1}{N}\tilde m_n + H_{01}(\theta)\tilde s_t(w_{n+1}^{(t)}) + H_{11}(\theta)\frac{1}{N}\tilde m_{n+1},\\
\hat s_t(w) = & H_{00}(\theta)\hat s_t(w_n^{(t)}) + H_{10}(\theta)\frac{1}{N}\hat m_n + H_{01}(\theta)\hat s_t(w_{n+1}^{(t)}) + H_{11}(\theta)\frac{1}{N}\hat m_{n+1}.
\end{align}
Since the Hermite basis functions in Algorithm~\ref{alg:PCHIP} are uniformly bounded on $[0,1]$, there exists $C_H=\Oc(1)$ such that
\begin{align}
|\tilde s_t(w)-\hat s_t(w)| \le C_H\left(\max_{0\le j\le N}|\tilde s_t(w_j^{(t)})-\hat s_t(w_j^{(t)})|+\frac1N\max_{0\le j\le N}|\tilde m_j-\hat m_j|\right).
\label{eq:pchip_hermite_stability_explicit}
\end{align}
Moreover, by the definition of the segment slopes used in Algorithm~\ref{alg:PCHIP},
\begin{align}
\left|\frac{\tilde s_t(w_{j+1}^{(t)})-\tilde s_t(w_j^{(t)})}{1/N}-\frac{\hat s_t(w_{j+1}^{(t)})-\hat s_t(w_j^{(t)})}{1/N}\right| \le 2N\max_{0\le k\le N}|\tilde s_t(w_k^{(t)})-\hat s_t(w_k^{(t)})|.
\label{eq:delta_gap_by_nodal_gap}
\end{align}
Substituting \eqref{eq:pchip_slope_stability_local} and \eqref{eq:delta_gap_by_nodal_gap} into \eqref{eq:pchip_hermite_stability_explicit}, and taking the supremum over $w\in[0,1]$, yields
\begin{align}
\|\tilde s_t-\hat s_t\|_\infty \le C_{\rm int}\max_{0\le j\le N}|\tilde s_t(w_j^{(t)})-\hat s_t(w_j^{(t)})|,
\label{eq:pchip_local_stability_exact_inexact_final}
\end{align}
where $C_{\rm int}=\Oc(1)$. By Lemma~\ref{lem:exact_inexact_polyline_sampled_weights_gap}, the sampled cumulative chord-length perturbation satisfies $\max_{0\le j\le N}|\tilde s_t(w_j^{(t)})-\hat s_t(w_j^{(t)})| = \Oc(N\epsilon)$, and therefore $\|\tilde s_t-\hat s_t\|_\infty = \Oc(N\epsilon)$.

It remains to normalize. Since $\hat s_t(1)$ is the exact cumulative chord length, Lemma~\ref{lemma:arc_chord_sqrt2_pf} gives $\hat s_t(1)\ge s(1)/\sqrt2$. Also, by Lemma~\ref{lem:exact_inexact_polyline_sampled_weights_gap} and the local choice of $\epsilon$, $\tilde s_t(1)\ge s(1)/(2\sqrt2)$. Hence, for any $w\in[0,1]$,
\begin{align}
|\tilde \Farc_t(w)-\hat \Farc_t(w)| = & \left| \frac{\tilde s_t(w)}{\tilde s_t(1)} - \frac{\hat s_t(w)}{\hat s_t(1)} \right| \nonumber\\
\le & \frac{|\tilde s_t(w)-\hat s_t(w)|}{\tilde s_t(1)}+\hat s_t(w)\left| \frac{1}{\tilde s_t(1)}-\frac{1}{\hat s_t(1)} \right| \nonumber\\
\le & \frac{C}{s(1)}\|\tilde s_t-\hat s_t\|_\infty = \Oc\left(\frac{N\epsilon}{s(1)}\right).
\end{align}
Taking the supremum over $w\in[0,1]$ proves \eqref{eq:sup_gap_F_exact_inexact_final}. 

Finally, for the Lipschitz-smoothness claim, the bounds \eqref{eq:tildedelta_bounds_localstab} show that the inexact data remain in the strictly positive-slope regime, and it remains to verify the discrete slope-variation condition required by Lemma~\ref{lem:pchip_derivative_bounds}. The exact segment slopes satisfy this condition by Lemma~\ref{lem:discrete_growth_control_tildeF_biobj}. For the inexact segment slopes, \eqref{eq:delta_gap_direct} gives an $\Oc(N\epsilon)$ perturbation, and after normalization by $\tilde s_t(1)$ this contributes $\Oc(N\epsilon/s(1))$ to the normalized segment slopes. Hence, under the local condition $\epsilon=\Oc\left(\frac{v_{\min}^2}{v_{\max}}N^{-1}\right)$ with a sufficiently small hidden constant, the positive-slope and discrete slope-variation conditions of Lemma~\ref{lem:pchip_derivative_bounds} continue to hold with uniform constants. Therefore, the PCHIP interpolant remains strictly increasing and uniformly $l_{\Farc_{\rm iter},1}$-Lipschitz-smooth. Pulling the interpolant back through the known map $\Farc_t$ preserves the stated derivative bounds, which completes the proof.
\end{proof}

In this way, we first give an idealized inexact-solver result: assuming the inner iterates remain uniformly within an $\epsilon$-tube around the exact scalarized solutions, which practically can be achieved with large $K$ choice for \textsc{InnerSolver}, SURF retains the exact-solver contraction up to an additional perturbation floor.
\begin{theorem}[SURF convergence with inexact \textsc{InnerSolver}]
    Suppose Assumptions~\ref{assump:hidden_regular},~\ref{assump:LICQ},~\ref{assump:tangent_gap} hold.
    Suppose \textsc{InnerSolver} returns $\bx_n^{(t)}$ satisfying $\max_n\|\bx_n^{(t)}-\bx_{w_n^{(t)}}^*\|\leq \epsilon= \Oc\left(\frac{v_{\min}^2}{v_{\max}}N^{-1}\right)$ for all $t$
    and it chooses $N = \Omega(\kappa^{1.5})$, with $\kappa := \frac{4 \sqrt{2}v_{\max}}{v_{\min}}\sqrt{l_{\Farc_{\rm iter},1}+l_{\Farc,1}}$.
    Then Algorithm~\ref{alg:SURF} converges to a floor, i.e., 
    \begin{align}
    \|\Farc_t-\Farc\|_\infty
    \le
    \Bigl(1-\frac{\alpha}{2}\Bigr)^t\|\Farc_0-\Farc\|_\infty + \Oc\left(\frac{1}{N^2} + \frac{N\epsilon }{s(1)}\right),\quad \forall t \geq 1.
    \label{eq:inexact_convergence_epsilon}
    \end{align}
\end{theorem}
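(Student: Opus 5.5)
The plan is to reduce the inexact case to the exact-\textsc{InnerSolver} analysis of Theorem~\ref{thm:CDF_exact_inner_solver} by inserting the unknown exact-data reconstruction $\hat\Farc_t$ (built from \eqref{eq:sw_tilde_exact} at the same weights $w_n^{(t)}$) between $\tilde\Farc_t$ and $\Farc$. Starting from the damped update \eqref{eq:surf_update} and the triangle inequality, we have
\begin{align*}
\|\Farc_{t+1}-\Farc\|_\infty\le(1-\alpha)\|\Farc_t-\Farc\|_\infty+\alpha\|\hat\Farc_t-\Farc\|_\infty+\alpha\|\tilde\Farc_t-\hat\Farc_t\|_\infty .
\end{align*}
The last term is controlled directly by Lemma~\ref{lem:exact_inexact_polyline_sampled_weights_gap}, which gives $\Oc(N\epsilon/s(1))$ under the standing assumption $\epsilon=\Oc(v_{\min}^2v_{\max}^{-1}N^{-1})$.

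For the middle term I would rerun the argument of Theorem~\ref{thm:CDF_exact_inner_solver}. That argument uses Lemma~\ref{lem:s_tilde_combined_bound} to bound $\|\hat\Farc_t-\Farc\|_\infty$ by $\tfrac14\|\Farc_t-\Farc\|_\infty$ plus an $\Oc(N^{-2})$ term, and it relies on the choice $N=\Omega(\kappa^{1.5})$. What must be checked is that its hypotheses still hold when the iterates $\Farc_t$ are now driven by the \emph{inexact} reconstructions. Specifically, $\Farc_t$ must have derivative in $[\frac{1}{2\sqrt2}\frac{v_{\min}}{v_{\max}},\frac{5}{2\sqrt2}\frac{v_{\max}}{v_{\min}}]$, and it must be uniformly $l_{\Farc_{\rm iter},1}$-Lipschitz-smooth. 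Those facts feed into the grid-spacing bound, the Landau step of Lemma~\ref{lem:landau_inequality}, and the bound on $e_t$ increments.

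I would establish this by the same induction as in Lemma~\ref{lem:bounds_all_t_with_interp}, with Lemma~\ref{lem:pchip_derivative_bounds} for $\tilde\Farc_t$ replaced by the smoothness and positive-slope conclusion of Lemma~\ref{lem:exact_inexact_polyline_sampled_weights_gap}. Convex combinations preserve both the derivative interval and the smoothness modulus, so the induction closes. Note that $\hat\Farc_t$ itself satisfies Lemma~\ref{lem:pchip_derivative_bounds} given these properties of $\Farc_t$, because it uses exact data at the grid $\Farc_t^{-1}(n/N)$.

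Combining the pieces gives the recursion
\begin{align*}
\|\Farc_{t+1}-\Farc\|_\infty\le(1-\tfrac{\alpha}{2})\|\Farc_t-\Farc\|_\infty+\alpha\,\Oc(N^{-2})+\alpha\,\Oc\big(N\epsilon/s(1)\big).
\end{align*}
Unrolling the geometric series turns the per-step additive terms into $\frac{2}{\alpha}\cdot\alpha(\cdot)$, which yields \eqref{eq:inexact_convergence_epsilon}.

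The main obstacle is the smoothness bookkeeping for the pulled-back interpolant. Lemma~\ref{lem:s_tilde_combined_bound} needs Lipschitz-smoothness constants of $\hat\Farc_t\circ\Farc_t^{-1}$ that do not degrade with $t$, even though $\Farc_t$ has absorbed perturbed data at every previous step. If this induction turns out to be delicate, I would first try shrinking the hidden constant in $\epsilon$ so that the perturbed normalized secants stay inside the same intervals as the exact ones (as in \eqref{eq:tildedelta_bounds_localstab}). The constants $l_{\Farc_{\rm iter},1}$ and $\kappa$ would then be literally unchanged from the exact case.
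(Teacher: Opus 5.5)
Your proposal is correct and follows the paper's proof essentially step for step. It uses the same split through the exact-data rebuild $\hat\Farc_t$, the bound $\|\tilde\Farc_t-\hat\Farc_t\|_\infty=\Oc(N\epsilon/s(1))$ from Lemma~\ref{lem:exact_inexact_polyline_sampled_weights_gap}, a verbatim rerun of the exact-case contraction, and geometric unrolling; your explicit induction on the derivative band and smoothness of $\Farc_t$ is exactly what the paper leaves implicit when it invokes the uniform-smoothness conclusion of that lemma.

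One caution, which applies equally to that lemma's own argument. Shrinking the hidden constant in $\epsilon=\Oc(N^{-1})$ does keep the perturbed secants inside the positive-slope band. However, the Lipschitz modulus of the PCHIP derivative is driven by adjacent-secant differences divided by the $\Theta(1/N)$ mesh. Those differences can be perturbed by $\Theta(N\epsilon/s(1))$ rather than $\Oc(1/N)$, for example when inner-solver errors alternately shift consecutive PF points forward and backward along the front. So keeping $l_{\Farc_{\rm iter},1}$ and $\kappa$ literally unchanged really requires $\epsilon=\Oc(N^{-2})$.
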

\begin{proof}
From the update rule
$
\Farc_{t+1}=(1-\alpha)\Farc_t+\alpha \tilde \Farc_t,
$
we have
\begin{align}
\|\Farc_{t+1}-\Farc\|_\infty
\leq &
(1-\alpha)\|\Farc_t-\Farc\|_\infty+\alpha\|\tilde \Farc_t-\Farc\|_\infty
\nonumber\\
\leq &
(1-\alpha)\|\Farc_t-\Farc\|_\infty
+\alpha\|\hat \Farc_t-\Farc\|_\infty
+\alpha\|\tilde \Farc_t-\hat \Farc_t\|_\infty.
\label{eq:inexact_step_split_fixed}
\end{align}
By Lemma~\ref{lem:exact_inexact_polyline_sampled_weights_gap}, when $\epsilon= \Oc\left(\frac{v_{\min}^2}{v_{\max}}N^{-1}\right)$, we have $\|\tilde \Farc_t-\hat \Farc_t\|_\infty
\le
\Oc\left(
\frac{N\epsilon}{s(1)}
\right)$.
Substituting this into \eqref{eq:inexact_step_split_fixed} gives
\begin{align}
 \|\Farc_{t+1}-\Farc\|_\infty \le (1-\alpha)\|\Farc_t-\Farc\|_\infty +\alpha\|\hat \Farc_t-\Farc\|_\infty +\alpha\Oc\left( \frac{N\epsilon}{s(1)} \right).
 \label{eq:inexact_step_split_fixed2}
\end{align}

Now, $\hat \Farc_t$ is the rebuilt CDF obtained from the exact PF points on the
same sampled weights $\{w_n^{(t)}\}_{n=0}^N$. Hence the exact-case argument from
the proof of \eqref{eq:exact_convergence} applies verbatim. In
particular, under the same condition on $N$ as required there, we have
\begin{align}
&(1-\alpha)\|\Farc_t-\Farc\|_\infty
+\alpha\|\hat \Farc_t-\Farc\|_\infty
\nonumber\\
\le&
\Bigl(1-\frac{\alpha}{2}\Bigr)\|\Farc_t-\Farc\|_\infty
+
\alpha
\frac{
\frac{\sqrt{2}}{4}
\left(\frac{s(1)}{v_{\min}}\right)^2
(3l_{\Farc_{\rm iter},1}+l_{\Farc,1})
+
\frac{\sqrt{2}}{3}
l_{\tilde{\f}_{\mathrm{PF}},1}^2s(1)^2
}{N^2}
+
\alpha\Oc\left(\frac{1}{N^4}\right).
\label{eq:inexact_recursion_fixed}
\end{align}

Combining \eqref{eq:inexact_step_split_fixed2} and \eqref{eq:inexact_recursion_fixed}, we obtain
\begin{align}
\|\Farc_{t+1}-\Farc\|_\infty
\le&
\Bigl(1-\frac{\alpha}{2}\Bigr)\|\Farc_t-\Farc\|_\infty
\nonumber\\
&+
\alpha
\frac{
\frac{\sqrt{2}}{4}
\left(\frac{s(1)}{v_{\min}}\right)^2
(3l_{\Farc_{\rm iter},1}+l_{\Farc,1})
+
\frac{\sqrt{2}}{3}
l_{\tilde{\f}_{\mathrm{PF}},1}^2s(1)^2
}{N^2}
\nonumber\\
&+
\alpha\Oc\left(
\frac{N\epsilon}{s(1)}
+
\frac{1}{N^4}
\right).
\label{eq:At_final_recursion_part1}
\end{align}

Iterating \eqref{eq:At_final_recursion_part1} for $t\ge 1$ proves the claim.
\end{proof}

We now turn to the proof of Theorem~\ref{thm:cdf}, where the \textsc{InnerSolver} is warm-started and run for a finite number of steps. The next lemma couples the outer CDF recursion with the optimization error induced by the inexact \textsc{InnerSolver}.
\begin{lemma}[Coupled recursion]
\label{lem:coupled_outer_inner_recursion}
Suppose Assumptions~\ref{assump:hidden_regular}, \ref{assump:LICQ},~\ref{assump:tangent_gap} hold.
Consider Algorithm~\ref{alg:SURF} with an inexact \textsc{InnerSolver} that returns iterates $\bu_n^{(t)}$, and define their hidden-space images by
$\bx_n^{(t)}:=g(\bu_n^{(t)})$ \textcolor{gray}{or simply let $\bx_n^{(t)}$ denote the returned
iterate when the algorithm is formulated directly in the convex space $\X$}.
Assume moreover that the inner solver is warm-started from the previous outer iterate and that, after $K$ inner steps, it satisfies
\begin{align}
\|\bx_n^{(t+1)}-\bx_{w_n^{(t+1)}}^*\| \le \eta_K \|\bx_n^{(t)}-\bx_{w_n^{(t+1)}}^*\|, \quad \forall t,n,
\label{eq:abstract_inner_rate_rhoK}
\end{align}
where $\{\eta_K\}_{K\ge1}$ satisfies $\eta_K\downarrow 0$ as $K\to\infty$.
Then there exist positive constants $C_1,C_2,C_3,C_4<\infty$, independent of $t$, $N$, and $K$, such that for all $t\ge0$,
\begin{align}
\|\Farc_{t+1}-\Farc\|_\infty
&\le \bigl(1-\tfrac{\alpha}{2}\bigr)\|\Farc_t-\Farc\|_\infty + C_1\alpha\frac{1}{N^2} + C_2\alpha\frac{N}{s(1)}\max_n\|\bx_n^{(t)}-\bx_{w_n^{(t)}}^*\|,
\label{eq:outer_recursion_rhoK}
\\
\max_n\|\bx_n^{(t+1)}-\bx_{w_n^{(t+1)}}^*\|
&\le \eta_K\max_n\|\bx_n^{(t)}-\bx_{w_n^{(t)}}^*\| + C_3\eta_K\max_n|w_n^{(t+1)}-w_n^{(t)}|,
\label{eq:inner_tracking_recursion_rhoK}
\\
\max_n|w_n^{(t+1)}-w_n^{(t)}|
&\le C_4\alpha\|\Farc_t-\Farc\|_\infty + C_4\alpha\frac{1}{N^2} + C_4\alpha\frac{N}{s(1)}\max_n\|\bx_n^{(t)}-\bx_{w_n^{(t)}}^*\|.
\label{eq:grid_motion_bound_rhoK}
\end{align}
\end{lemma}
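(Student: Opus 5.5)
We prove the three inequalities in turn. Each reuses the one-step exact-solver analysis or the perturbation lemma already established, with $\epsilon_t:=\max_n\|\bx_n^{(t)}-\bx_{w_n^{(t)}}^*\|$ playing the role of the tube radius at iteration $t$.

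\textbf{Outer recursion \eqref{eq:outer_recursion_rhoK}.} We split exactly as in \eqref{eq:inexact_step_split_fixed}:
\[
\|\Farc_{t+1}-\Farc\|_\infty\le(1-\alpha)\|\Farc_t-\Farc\|_\infty+\alpha\|\hat\Farc_t-\Farc\|_\infty+\alpha\|\tilde\Farc_t-\hat\Farc_t\|_\infty .
\]
Here $\hat\Farc_t$ is the reconstruction from exact PF points on the same grid $\{w_n^{(t)}\}$. The exact-case argument in the proof of Theorem~\ref{thm:CDF_exact_inner_solver} uses the condition $N=\Omega(\kappa^{1.5})$. It gives $\|\hat\Farc_t-\Farc\|_\infty\le\frac12\|\Farc_t-\Farc\|_\infty+C/N^2$, where the $\Oc(N^{-4})$ remainder is absorbed into $C/N^2$. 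Lemma~\ref{lem:exact_inexact_polyline_sampled_weights_gap}, applied at the single iteration $t$ with $\epsilon=\epsilon_t$, gives $\|\tilde\Farc_t-\hat\Farc_t\|_\infty\le C_2N\epsilon_t/s(1)$. That lemma is stated with a uniform-in-$t$ hypothesis, but its proof is pointwise in $t$, so this use is legitimate. It needs $\epsilon_t=\Oc(v_{\min}^2v_{\max}^{-1}N^{-1})$; this smallness (a basin condition) will be maintained by induction in the main theorem. Combining the three terms gives \eqref{eq:outer_recursion_rhoK}. The uniform smoothness constants from Lemma~\ref{lem:bounds_all_t_with_interp} and Lemma~\ref{lem:exact_inexact_polyline_sampled_weights_gap} ensure $C_1$ does not depend on $t$.

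\textbf{Grid motion \eqref{eq:grid_motion_bound_rhoK}.} We have $w_n^{(t+1)}=\Farc_{t+1}^{-1}(n/N)$ and $\Farc_{t+1}-\Farc_t=\alpha(\tilde\Farc_t-\Farc_t)$. By the mean value theorem, using $\Farc_{t+1}'\ge\frac{1}{2\sqrt2}\frac{v_{\min}}{v_{\max}}$,
\[
|w_n^{(t+1)}-w_n^{(t)}|\le 2\sqrt2\tfrac{v_{\max}}{v_{\min}}\,|\Farc_{t+1}(w_n^{(t)})-n/N| = 2\sqrt2\tfrac{v_{\max}}{v_{\min}}\,\alpha|\tilde\Farc_t(w_n^{(t)})-\Farc_t(w_n^{(t)})|.
\]
The equality holds because $\Farc_t(w_n^{(t)})=n/N$. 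We then bound $\|\tilde\Farc_t-\Farc_t\|_\infty\le\|\tilde\Farc_t-\hat\Farc_t\|_\infty+\|\hat\Farc_t-\Farc\|_\infty+\|\Farc-\Farc_t\|_\infty$ and reuse the two estimates from the previous step. This yields the three terms with constant $C_4$.

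\textbf{Inner tracking \eqref{eq:inner_tracking_recursion_rhoK}.} By the warm-start contraction \eqref{eq:abstract_inner_rate_rhoK} and the triangle inequality,
\[
\|\bx_n^{(t+1)}-\bx^*_{w_n^{(t+1)}}\|\le\eta_K\bigl(\|\bx_n^{(t)}-\bx^*_{w_n^{(t)}}\|+\|\bx^*_{w_n^{(t)}}-\bx^*_{w_n^{(t+1)}}\|\bigr).
\]
Lemma~\ref{lem:fPFdifferentiable} gives that $w\mapsto\bx_w^*$ is Lipschitz with some constant $C_3$, which bounds the second term by $C_3|w_n^{(t+1)}-w_n^{(t)}|$. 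Taking the maximum over $n$ finishes this step.

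\textbf{Main obstacle.} The delicate point is the outer step. The perturbation lemma and the PCHIP smoothness bounds require the tube radius to be small, $\epsilon_t=\Oc(N^{-1})$, so the constants $C_1,C_2,C_4$ are uniform only inside that basin. I would state the lemma under this standing smallness hypothesis and verify it by induction when the three recursions are combined into the Lyapunov quantity $\mathcal E_t$. That induction is where the choice of $K$ from Remark~\ref{remark:Kchoice} (or of $\alpha\le 1/2$) enters.
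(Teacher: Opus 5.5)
Your proposal is correct and follows the paper's proof essentially step for step. The outer recursion uses the same three-term split, combining the exact-rebuild estimate $\|\hat\Farc_t-\Farc\|_\infty\le\frac12\|\Farc_t-\Farc\|_\infty+\Oc(N^{-2})$ with the perturbation bound of Lemma~\ref{lem:exact_inexact_polyline_sampled_weights_gap}. Inner tracking uses the warm-start contraction plus Lipschitzness of $w\mapsto\bx_w^*$, and grid motion uses inverse stability together with $\Farc_{t+1}-\Farc_t=\alpha(\tilde\Farc_t-\Farc_t)$.

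The basin condition $\max_n\|\bx_n^{(t)}-\bx_{w_n^{(t)}}^*\|=\Oc(v_{\min}^2v_{\max}^{-1}N^{-1})$ that you make explicit is used only implicitly in the paper, in its step ``repeat Lemma~\ref{lem:exact_inexact_polyline_sampled_weights_gap} with $\epsilon$ replaced by $\max_n\|\bx_n^{(t)}-\bx_{w_n^{(t)}}^*\|$''. Your caveat is therefore warranted. Note that it must hold at all earlier iterations as well, not just at $t$, because the derivative bounds on $\Farc_t$ depend on the whole history of reconstructions.
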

\begin{proof}
We first prove \eqref{eq:outer_recursion_rhoK}. Let $\hat \Farc_t$ denote the rebuilt CDF obtained from the exact PF points $\{\fPF(w_n^{(t)})\}_{n=0}^N$, and let $\tilde \Farc_t$ denote the rebuilt CDF obtained from the inexact points $\{\f(\bx_n^{(t)})\}_{n=0}^N$.

By the same argument as in Lemma~\ref{lem:exact_inexact_polyline_sampled_weights_gap}, for every $n$, $\|\f(\bx_n^{(t)})-\fPF(w_n^{(t)})\|\le G_f\|\bx_n^{(t)}-\bx_{w_n^{(t)}}^*\|$. Repeating the proofs of Lemma~\ref{lem:exact_inexact_polyline_sampled_weights_gap} with $\epsilon$ replaced by $\max_n\|\bx_n^{(t)}-\bx_{w_n^{(t)}}^*\|$, and using that PCHIP is applied in the coordinate $q=\Farc_t(w)$, gives
\begin{align}
\|\tilde \Farc_t-\hat \Farc_t\|_\infty \le C_2\frac{N}{s(1)}\max_n\|\bx_n^{(t)}-\bx_{w_n^{(t)}}^*\|.
\label{eq:Ftilde_Fhat_by_innererr_rhoK}
\end{align}
Now, using the update rule $\Farc_{t+1}=(1-\alpha)\Farc_t+\alpha\tilde \Farc_t$, we have
\begin{align}
\|\Farc_{t+1}-\Farc\|_\infty
&\le (1-\alpha)\|\Farc_t-\Farc\|_\infty+\alpha\|\tilde \Farc_t-\Farc\|_\infty \nonumber\\
&\le (1-\alpha)\|\Farc_t-\Farc\|_\infty+\alpha\|\hat \Farc_t-\Farc\|_\infty+\alpha\|\tilde \Farc_t-\hat \Farc_t\|_\infty.
\end{align}
Substituting \eqref{eq:Ftilde_Fhat_by_innererr_rhoK}, and using the same exact-rebuild estimate as in the proof of \eqref{eq:exact_convergence},
\begin{align}
(1-\alpha)\|\Farc_t-\Farc\|_\infty+\alpha\|\hat \Farc_t-\Farc\|_\infty \le \Bigl(1-\frac{\alpha}{2}\Bigr)\|\Farc_t-\Farc\|_\infty+C_1\alpha\frac{1}{N^2},
\end{align}
proves \eqref{eq:outer_recursion_rhoK}.

Next we prove \eqref{eq:inner_tracking_recursion_rhoK}. By the assumed warm-started $K$-step contraction \eqref{eq:abstract_inner_rate_rhoK},
\begin{align}
\|\bx_n^{(t+1)}-\bx_{w_n^{(t+1)}}^*\| \le \eta_K\|\bx_n^{(t)}-\bx_{w_n^{(t+1)}}^*\|.
\end{align}
Adding and subtracting $\bx_{w_n^{(t)}}^*$ gives
\begin{align}
\|\bx_n^{(t+1)}-\bx_{w_n^{(t+1)}}^*\| \le \eta_K\|\bx_n^{(t)}-\bx_{w_n^{(t)}}^*\|+\eta_K\|\bx_{w_n^{(t)}}^*-\bx_{w_n^{(t+1)}}^*\|.
\end{align}
By Lemma~\ref{lem:fPFdifferentiable}, the map $w\mapsto \bx_w^*$ is continuously differentiable on the compact interval $[0,1]$, hence Lipschitz. Therefore there exists a constant $C_3>0$ such that $\|\bx_{w_n^{(t)}}^*-\bx_{w_n^{(t+1)}}^*\|\le C_3|w_n^{(t+1)}-w_n^{(t)}|$. Taking maximum over $n$ proves \eqref{eq:inner_tracking_recursion_rhoK}.

Finally we prove \eqref{eq:grid_motion_bound_rhoK}. Since $w_n^{(t)}=\Farc_t^{-1}(n/N)$, the inverse stability bound from the derivative lower bound on $\Farc_t$ implies $\max_n|w_n^{(t+1)}-w_n^{(t)}|\le C\|\Farc_{t+1}-\Farc_t\|_\infty$ for some constant $C>0$.
Also, $\Farc_{t+1}-\Farc_t=\alpha(\tilde \Farc_t-\Farc_t)$, hence
\begin{align}
\|\Farc_{t+1}-\Farc_t\|_\infty
&\le \alpha\|\tilde \Farc_t-\Farc\|_\infty+\alpha\|\Farc_t-\Farc\|_\infty \nonumber\\
&\le \alpha\|\hat \Farc_t-\Farc\|_\infty+\alpha\|\tilde \Farc_t-\hat \Farc_t\|_\infty+\alpha\|\Farc_t-\Farc\|_\infty.
\label{eq:Fmove_split_rhoK}
\end{align}
From the proof of \eqref{eq:exact_convergence}, the exact rebuilt CDF satisfies $\|\hat \Farc_t-\Farc\|_\infty\le \frac12\|\Farc_t-\Farc\|_\infty+C\frac{1}{N^2}$, where the exact-case constants are absorbed into $C$. Substituting this and \eqref{eq:Ftilde_Fhat_by_innererr_rhoK} into \eqref{eq:Fmove_split_rhoK}, we obtain
\begin{align}
\|\Farc_{t+1}-\Farc_t\|_\infty \le C\alpha\|\Farc_t-\Farc\|_\infty+C\alpha\frac{1}{N^2}+C\alpha\frac{N}{s(1)}\max_n\|\bx_n^{(t)}-\bx_{w_n^{(t)}}^*\|.
\end{align}
Absorbing the inverse-stability constant into $C_4$ proves \eqref{eq:grid_motion_bound_rhoK}.
\end{proof}

We are now ready to finish proving Theorem~\ref{thm:cdf}. Combining the preceding inexact-CDF perturbation bound with the coupled outer-inner recursion yields the following more general inexact convergence result. Theorem~\ref{thm:cdf} follows directly from the regime (i) while regime (ii) provides additional analysis.

\begin{theorem}[SURF convergence under inexact \textsc{InnerSolver}]
\label{thm:warm_started_two_regimes}
Suppose Assumptions~\ref{assump:hidden_regular},~\ref{assump:LICQ},~\ref{assump:tangent_gap} hold. Denote the notations as the same in Lemma~\ref{lem:coupled_outer_inner_recursion} holds. Choose $N=\Omega(\kappa^{1.5})$ with $\kappa := \frac{4\sqrt{2}v_{\max}}{v_{\min}}\sqrt{l_{\Farc_{\rm iter},1}+l_{\Farc,1}}$. There is

\textbf{(i)} Choose $\alpha \leq 1/2$ and $K$ is chosen such that
\begin{align}
\eta_K=\Oc\left(\min\left\{\frac{v_{\min}^2}{v_{\max}}N^{-1},\frac{s(1)}{\alpha N}\right\}\right)=\Oc(N^{-1}),
\label{eq:etaK_constant_alpha_regime}
\end{align}
then there exists $C=\left(\frac{4C_2}{s(1)}+\frac{v_{\max}}{v_{\min}^2}\right)N=\Theta(N)$ such that
\begin{align}
&\|\Farc_t-\Farc\|_\infty+C\max_n\|\bx_n^{(t)}-\bx_{w_n^{(t)}}^*\| \nonumber\\
\leq&
\left(1-\frac{\alpha}{4}\right)^t\left(\|\Farc_0-\Farc\|_\infty+C\max_n\|\bx_n^{(0)}-\bx_{w_n^{(0)}}^*\|\right)+\Oc\left(\frac1{N^2}\right).
\label{eq:constant_alpha_regime_bound}
\end{align}
Consequently, $\max_n\|\bx_n^{(t)}-\bx_{w_n^{(t)}}^*\|=\Oc\left(\frac{v_{\min}^2}{v_{\max}}N^{-1}\right)$ for all $t\ge0$, and after the transient term is below the floor, $\|\Farc_t-\Farc\|_\infty=\Oc(N^{-2})$ and $\max_n\|\bx_n^{(t)}-\bx_{w_n^{(t)}}^*\|=\Oc(N^{-3})$.

\textbf{(ii)} Use warmup initialization to ensure $\max_n\|\bx_n^{(0)}-\bx_{w_n^{(0)}}^*\|=\Oc\left(\frac{v_{\min}^2}{v_{\max}}N^{-1}\right)$, choose $\alpha=\Theta(N^{-1})$ and $K$ such that
\begin{align}
\eta_K\le \min\left\{\frac12\left(1+C_3C_4\alpha\frac{N}{s(1)}\right)^{-1},\frac14\left(\left(1+\frac{4C_2\alpha N}{s(1)}\right)C_3C_4\right)^{-1}\right\}=\Oc(1),
\label{eq:etaK_small_alpha_regime}
\end{align}
then there exists $C=1+\frac{4C_2\alpha N}{s(1)}$ such that
\begin{align}
&\|\Farc_t-\Farc\|_\infty+C\max_n\|\bx_n^{(t)}-\bx_{w_n^{(t)}}^*\| \nonumber\\
\leq&
\left(1-\frac{\alpha}{4}\right)^t\left(\|\Farc_0-\Farc\|_\infty+C\max_n\|\bx_n^{(0)}-\bx_{w_n^{(0)}}^*\|\right)+\Oc\left(\frac1{N^2}\right).
\label{eq:small_alpha_regime_bound}
\end{align}
Consequently, once the transient term is below the discretization floor, $\|\Farc_t-\Farc\|_\infty=\Oc(N^{-2})$. The inner tracking recursion then contracts to the floor $\Oc(\alpha\eta_KN^{-2})$, so after $\Oc(\log N)$ additional outer iterations, $\max_n\|\bx_n^{(t)}-\bx_{w_n^{(t)}}^*\|=\Oc(\alpha\eta_KN^{-2})=\Oc(N^{-3})$.
\end{theorem}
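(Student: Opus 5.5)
The plan is to combine the three coupled recursions of Lemma~\ref{lem:coupled_outer_inner_recursion} into a single Lyapunov-type inequality. Write $A_t:=\|\Farc_t-\Farc\|_\infty$ and $B_t:=\max_n\|\bx_n^{(t)}-\bx_{w_n^{(t)}}^*\|$. First substitute the grid-motion bound \eqref{eq:grid_motion_bound_rhoK} into the tracking recursion \eqref{eq:inner_tracking_recursion_rhoK}. This gives
\[
B_{t+1}\le \eta_K\Bigl(1+C_3C_4\alpha\tfrac{N}{s(1)}\Bigr)B_t+\eta_K C_3C_4\alpha A_t+\eta_K C_3C_4\alpha N^{-2}.
\]
Together with \eqref{eq:outer_recursion_rhoK}, the pair $(A_t,B_t)$ then satisfies a linear system of inequalities with a nonnegative $2\times 2$ matrix plus an $\Oc(\alpha N^{-2})$ forcing term. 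The target is the potential $\mathcal E_t=A_t+CB_t$ with the weight $C$ stated in each regime. I will show $\mathcal E_{t+1}\le(1-\alpha/4)\mathcal E_t+\Oc(\alpha N^{-2})$ and then unroll the geometric sum, which gives the floor $\Oc(N^{-2})$ because $\alpha$ cancels.

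For the contraction of $\mathcal E_t$ I check the two coefficients separately. The coefficient of $A_t$ in $\mathcal E_{t+1}$ is $(1-\alpha/2)+C\eta_KC_3C_4\alpha$. This is at most $1-\alpha/4$ provided $C\eta_KC_3C_4\le 1/4$. The coefficient of $B_t$ is $C_2\alpha N/s(1)+C\eta_K(1+C_3C_4\alpha N/s(1))$. This must be at most $(1-\alpha/4)C$. In regime (i), take $C=\Theta(N)$ with $C\ge 4C_2N/s(1)$, so that $C_2\alpha N/s(1)\le \alpha C/4$. With $\alpha\le 1/2$, the requirement reduces to $\eta_K(1+C_3C_4\alpha N/s(1))\le 1-\alpha/2\ge 3/4$, up to constants. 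Both this and $C\eta_K C_3C_4\le 1/4$ follow from $\eta_K=\Oc(N^{-1})$ with a small enough hidden constant, which is the stated choice \eqref{eq:etaK_constant_alpha_regime}. In regime (ii), $\alpha N=\Theta(1)$, so $C=1+4C_2\alpha N/s(1)=\Oc(1)$, and the two displayed conditions in \eqref{eq:etaK_small_alpha_regime} are exactly the two coefficient inequalities. So $\eta_K=\Oc(1)$ suffices.

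The main obstacle is circular. Lemma~\ref{lem:coupled_outer_inner_recursion} rests on Lemma~\ref{lem:exact_inexact_polyline_sampled_weights_gap} and on the exact-case bound $\|\hat\Farc_t-\Farc\|_\infty\le\frac12 A_t+\Oc(N^{-2})$ from Theorem~\ref{thm:CDF_exact_inner_solver}. Both presuppose that $B_t\le\epsilon=\Oc(v_{\min}^2v_{\max}^{-1}N^{-1})$ and that $\Farc_t$ keeps the derivative and $l_{\Farc_{\rm iter},1}$-smoothness bounds of Lemma~\ref{lem:bounds_all_t_with_interp}. I will handle this by a joint induction on $t$. The hypothesis is that $B_t\le\epsilon$ and that $\Farc_t,\tilde\Farc_t$ satisfy the derivative/smoothness bounds. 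Under it the three recursions are valid, which gives the contraction of $\mathcal E_t$ and hence $CB_{t+1}\le\mathcal E_{t+1}\le\mathcal E_0+\Oc(N^{-2})$. In regime (i) $C=\Theta(N)$ and $\mathcal E_0\le 1+CB_0$, so this yields $B_{t+1}=\Oc(N^{-1})$ and re-establishes $B_{t+1}\le\epsilon$ for a suitable constant. The initialization also needs $B_0=\Oc(N^{-1})$; I would obtain this either from one application of the contraction or from the warm-up assumed in regime (ii). The smoothness bounds then propagate through the convex-combination step, exactly as in Lemma~\ref{lem:bounds_all_t_with_interp}, now using the uniform smoothness of $\tilde\Farc_t$ from Lemma~\ref{lem:exact_inexact_polyline_sampled_weights_gap}.

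Finally, the consequences. In regime (i), once the transient is below the floor, $A_t=\Oc(N^{-2})$ and $CB_t=\Oc(N^{-2})$ with $C=\Theta(N)$, so $B_t=\Oc(N^{-3})$. In regime (ii), $C=\Oc(1)$ only yields $B_t=\Oc(N^{-2})$. To sharpen this, I reinsert $A_t=\Oc(N^{-2})$ into the $B$-recursion. It contracts at rate at most $1/2$ with forcing $\Oc(\eta_K\alpha N^{-2})$, so after $\Oc(\log N)$ further steps $B_t=\Oc(\alpha\eta_K N^{-2})=\Oc(N^{-3})$. Theorem~\ref{thm:cdf} is then regime (i), with $1-\alpha/4$ and $C(N)=\Theta(N)$.
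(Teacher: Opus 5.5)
Your proposal is correct and follows the paper's proof essentially step for step. Like the paper, you substitute \eqref{eq:grid_motion_bound_rhoK} into \eqref{eq:inner_tracking_recursion_rhoK}, add $C$ times the result to \eqref{eq:outer_recursion_rhoK}, verify the same two coefficient conditions on $A_t$ and $B_t$ in each regime, unroll to the $\Oc(N^{-2})$ floor, and in regime (ii) reinsert $A_t=\Oc(N^{-2})$ into the tracking recursion to reach $\Oc(\alpha\eta_K N^{-2})$.

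Your explicit joint induction on $B_t\le\epsilon$ is a welcome addition that the paper leaves implicit; it is what the $\frac{v_{\max}}{v_{\min}^2}N$ term in $C$ is for. However, it re-certifies $B_{t+1}\le\epsilon$ through $CB_{t+1}\le\mathcal E_0+\Oc(N^{-2})$ only in regime (i). In regime (ii), where $C=\Theta(1)$, close the induction instead with
\[
B_{t+1}\le\tfrac12 B_t+\tfrac{\alpha}{4}\bigl(A_t+N^{-2}\bigr),
\]
which follows from the combined tracking recursion and \eqref{eq:etaK_small_alpha_regime}. Then use $A_t\le 1$ and take the constant in $\alpha=\Theta(N^{-1})$ small enough.
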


\begin{proof}
By Lemma~\ref{lem:coupled_outer_inner_recursion}, substituting \eqref{eq:grid_motion_bound_rhoK} into \eqref{eq:inner_tracking_recursion_rhoK} gives
\begin{align}
&\max_n\|\bx_n^{(t+1)}-\bx_{w_n^{(t+1)}}^*\| \nonumber\\
\leq&
\eta_K\left(1+C_3C_4\alpha\frac{N}{s(1)}\right)\max_n\|\bx_n^{(t)}-\bx_{w_n^{(t)}}^*\|
+C_3C_4\alpha\eta_K\|\Farc_t-\Farc\|_\infty+C_3C_4\alpha\eta_K\frac1{N^2}.
\label{eq:inner_after_grid_two_regimes_clean}
\end{align}

For regime (i), choose $C\ge \left(\frac{4C_2}{s(1)}+\frac{v_{\max}}{v_{\min}^2}\right)N$, so $C=\Theta(N)$. Multiplying \eqref{eq:inner_after_grid_two_regimes_clean} by $C$ and adding \eqref{eq:outer_recursion_rhoK} gives
\begin{align}
&\|\Farc_{t+1}-\Farc\|_\infty+C\max_n\|\bx_n^{(t+1)}-\bx_{w_n^{(t+1)}}^*\| \nonumber\\
\leq&
\left(1-\frac{\alpha}{2}+CC_3C_4\alpha\eta_K\right)\|\Farc_t-\Farc\|_\infty \nonumber\\
&+\left(C_2\alpha\frac{N}{s(1)}+C\eta_K\left(1+C_3C_4\alpha\frac{N}{s(1)}\right)\right)\max_n\|\bx_n^{(t)}-\bx_{w_n^{(t)}}^*\| \nonumber\\
&+\alpha\left(C_1+CC_3C_4\eta_K\right)\frac1{N^2}.
\label{eq:constant_alpha_expand_clean}
\end{align}
The condition \eqref{eq:etaK_constant_alpha_regime} can be chosen with a sufficiently small hidden constant so that $CC_3C_4\eta_K\le 1/4$ and $\eta_K\left(1+C_3C_4\alpha\frac{N}{s(1)}\right)\le 1-\alpha/2$. Since $C\ge4C_2N/s(1)$, we also have $C_2\alpha N/s(1)\le \alpha C/4$. Therefore,
\begin{align}
&\|\Farc_{t+1}-\Farc\|_\infty+C\max_n\|\bx_n^{(t+1)}-\bx_{w_n^{(t+1)}}^*\| \nonumber\\
\leq&
\left(1-\frac{\alpha}{4}\right)\|\Farc_t-\Farc\|_\infty+\left(\frac{\alpha}{4}+1-\frac{\alpha}{2}\right)C\max_n\|\bx_n^{(t)}-\bx_{w_n^{(t)}}^*\|+\alpha\Oc\left(\frac1{N^2}\right) \nonumber\\
=&
\left(1-\frac{\alpha}{4}\right)\|\Farc_t-\Farc\|_\infty+\left(1-\frac{\alpha}{4}\right)C\max_n\|\bx_n^{(t)}-\bx_{w_n^{(t)}}^*\|+\alpha\Oc\left(\frac1{N^2}\right) \nonumber\\
=&
\left(1-\frac{\alpha}{4}\right)\left(\|\Farc_t-\Farc\|_\infty+C\max_n\|\bx_n^{(t)}-\bx_{w_n^{(t)}}^*\|\right)+\alpha\Oc\left(\frac1{N^2}\right).
\label{eq:constant_alpha_one_step_clean}
\end{align}
Here the first inequality uses the two bounds on $\eta_K$ and the choice $C\ge4C_2N/s(1)$, and the last two lines only collect terms. Iterating \eqref{eq:constant_alpha_one_step_clean} gives \eqref{eq:constant_alpha_regime_bound}. Since $\|\Farc_0-\Farc\|_\infty=\Oc(1)$ and $C\max_n\|\bx_n^{(0)}-\bx_{w_n^{(0)}}^*\|=\Oc(1)$, the bound implies $\max_n\|\bx_n^{(t)}-\bx_{w_n^{(t)}}^*\|=\Oc\left(\frac{v_{\min}^2}{v_{\max}}N^{-1}\right)$ for all $t\ge0$. After the transient term is below $\Oc(N^{-2})$, \eqref{eq:constant_alpha_regime_bound} gives $\|\Farc_t-\Farc\|_\infty=\Oc(N^{-2})$ and $C\max_n\|\bx_n^{(t)}-\bx_{w_n^{(t)}}^*\|=\Oc(N^{-2})$, hence $\max_n\|\bx_n^{(t)}-\bx_{w_n^{(t)}}^*\|=\Oc(N^{-3})$.

For regime (ii), choose $C\ge 1+\frac{4C_2\alpha N}{s(1)}$, so $C=\Theta(1)$ because $\alpha=\Oc(N^{-1})$. Multiplying \eqref{eq:inner_after_grid_two_regimes_clean} by $C$ and adding \eqref{eq:outer_recursion_rhoK} gives
\begin{align}
&\|\Farc_{t+1}-\Farc\|_\infty+C\max_n\|\bx_n^{(t+1)}-\bx_{w_n^{(t+1)}}^*\| \nonumber\\
\leq&
\left(1-\frac{\alpha}{2}+CC_3C_4\alpha\eta_K\right)\|\Farc_t-\Farc\|_\infty \nonumber\\
&+\left(C_2\alpha\frac{N}{s(1)}+C\eta_K\left(1+C_3C_4\alpha\frac{N}{s(1)}\right)\right)\max_n\|\bx_n^{(t)}-\bx_{w_n^{(t)}}^*\| \nonumber\\
&+\alpha\left(C_1+CC_3C_4\eta_K\right)\frac1{N^2}.
\label{eq:small_alpha_expand_clean}
\end{align}
The condition \eqref{eq:etaK_small_alpha_regime} gives $CC_3C_4\eta_K\le 1/4$ and $\eta_K\left(1+C_3C_4\alpha\frac{N}{s(1)}\right)\le 1/2$. Since $C\ge4C_2\alpha N/s(1)$, we have $C_2\alpha N/s(1)\le C/4$. Therefore,
\begin{align}
&\|\Farc_{t+1}-\Farc\|_\infty+C\max_n\|\bx_n^{(t+1)}-\bx_{w_n^{(t+1)}}^*\| \nonumber\\
\leq&
\left(1-\frac{\alpha}{4}\right)\|\Farc_t-\Farc\|_\infty+\frac34 C\max_n\|\bx_n^{(t)}-\bx_{w_n^{(t)}}^*\|+\alpha\Oc\left(\frac1{N^2}\right) \nonumber\\
\leq&
\left(1-\frac{\alpha}{4}\right)\|\Farc_t-\Farc\|_\infty+\left(1-\frac{\alpha}{4}\right)C\max_n\|\bx_n^{(t)}-\bx_{w_n^{(t)}}^*\|+\alpha\Oc\left(\frac1{N^2}\right) \nonumber\\
=&
\left(1-\frac{\alpha}{4}\right)\left(\|\Farc_t-\Farc\|_\infty+C\max_n\|\bx_n^{(t)}-\bx_{w_n^{(t)}}^*\|\right)+\alpha\Oc\left(\frac1{N^2}\right).
\label{eq:small_alpha_one_step_clean}
\end{align}
Here the first inequality uses \eqref{eq:etaK_small_alpha_regime} and $C\ge4C_2\alpha N/s(1)$, the second uses $\alpha\le1$, and the last line collects terms. Iterating \eqref{eq:small_alpha_one_step_clean} gives \eqref{eq:small_alpha_regime_bound}. Once the transient term in \eqref{eq:small_alpha_regime_bound} is below $\Oc(N^{-2})$, we have $\|\Farc_t-\Farc\|_\infty=\Oc(N^{-2})$. Substituting this into \eqref{eq:inner_after_grid_two_regimes_clean} and using \eqref{eq:etaK_small_alpha_regime} gives
\begin{align}
\max_n\|\bx_n^{(t+1)}-\bx_{w_n^{(t+1)}}^*\| 
\leq&
\frac12\max_n\|\bx_n^{(t)}-\bx_{w_n^{(t)}}^*\|+\Oc(\alpha\eta_KN^{-2}).
\label{eq:small_alpha_inner_tracking_clean}
\end{align}
Thus, after an additional logarithmic transient, $\max_n\|\bx_n^{(t)}-\bx_{w_n^{(t)}}^*\|=\Oc(\alpha\eta_KN^{-2})=\Oc(N^{-3})$, where the last equality uses $\alpha=\Oc(N^{-1})$ and $\eta_K=\Oc(1)$. This completes the proof.
\end{proof}
Additionally, we provide the connection between CDF convergence and PF uniformity metrics CV.
\subsection{Connection between SURF convergence and PF uniformity}
\label{app:SURV_CV}
\begin{lemma}[CDF error controls PF spacing CV]
\label{lem:cv_cdf_error}
Let $w_n^{(t)}=\Phi_t^{-1}(n/N)$ for $n=0,\dots,N$, and let $\tilde\Phi_t$ be the normalized cumulative chord-length CDF constructed from the sampled PF points $\{\h(\bu_n^{(t)})\}_{n=0}^N$ as in \eqref{eq:sw_tilde}. Then, for all $t\geq0$,
\begin{align}
\mathrm{CV}
:=&\frac{\mathrm{std}\big(\{\|\h(\bu_{n+1}^{(t)})-\h(\bu_n^{(t)})\|\}_{n=0}^{N-1}\big)}
{\mathrm{mean}\big(\{\|\h(\bu_{n+1}^{(t)})-\h(\bu_n^{(t)})\|\}_{n=0}^{N-1}\big)}
\leq 2N\|\tilde\Phi_t-\Phi_t\|_\infty .
\end{align}
Moreover, under the discretization and inexact-\textsc{InnerSolver} perturbation bounds used in Theorem~\ref{thm:cdf}, if $\max_{0\le n\le N}\|\bx_n^{(t)}-\bx_{w_n^{(t)}}^*\|=\Oc\big(N^{-1}\|\Phi_t-\Phi\|_\infty\big)$, then
\begin{align}
\mathrm{CV}
=&\Oc\left(N^{-1}+N\|\Phi_t-\Phi\|_\infty\right).
\end{align}
Consequently, under the conditions of Theorem~\ref{thm:cdf}, for large enough $t$ we have $\mathrm{CV}=\Oc(N^{-1})$.
\end{lemma}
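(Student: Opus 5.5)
Since $w_n^{(t)}=\Phi_t^{-1}(n/N)$, we have $\Phi_t(w_n^{(t)})=n/N$ exactly. By \eqref{eq:sw_tilde}, the chord lengths are
\[
\ell_n:=\|\h(\bu_{n+1}^{(t)})-\h(\bu_n^{(t)})\|=\tilde s^{(t)}(1)\bigl(\tilde\Phi_t(w_{n+1}^{(t)})-\tilde\Phi_t(w_n^{(t)})\bigr),
\]
and their mean is $\bar\ell=\tilde s^{(t)}(1)/N$. Write $\delta(w):=\tilde\Phi_t(w)-\Phi_t(w)$. Then
\[
\ell_n/\bar\ell = 1 + N\bigl(\delta(w_{n+1}^{(t)})-\delta(w_n^{(t)})\bigr).
\]
Because $\mathrm{std}$ is at most the root-mean-square deviation from any constant (here $\bar\ell$ itself), and $|\delta(w_{n+1}^{(t)})-\delta(w_n^{(t)})|\le 2\|\tilde\Phi_t-\Phi_t\|_\infty$, each normalized deviation is at most $2N\|\tilde\Phi_t-\Phi_t\|_\infty$. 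This gives $\mathrm{CV}\le 2N\|\tilde\Phi_t-\Phi_t\|_\infty$. This first part is purely algebraic and needs no assumptions beyond the knot placement.

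For the second claim, I would not route through $\|\tilde\Phi_t-\Phi_t\|_\infty$ via the triangle inequality with $\Phi$. That route only gives $N(\|\tilde\Phi_t-\Phi\|_\infty+\|\Phi_t-\Phi\|_\infty)$, and the $N\cdot\Oc(N^{-2})$ floor would be fine, but $\|\tilde\Phi_t-\Phi\|_\infty$ also carries an $\Oc(N\epsilon)$ inexact term. Instead I would bound the increments directly, splitting each $\ell_n$ into three contributions.

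\textbf{(a) Inner error.} By \eqref{eq:segment_gap_eps_merged}, $|\ell_n-\|\fPF(w_{n+1}^{(t)})-\fPF(w_n^{(t)})\||\le 2G_f\epsilon$ with $\epsilon=\max_n\|\bx_n^{(t)}-\bx^*_{w_n^{(t)}}\|$. Relative to $\bar\ell\asymp s(1)/N$, this contributes $\Oc(N\epsilon)$. Under the hypothesis $\epsilon=\Oc(N^{-1}\|\Phi_t-\Phi\|_\infty)$, this is $\Oc(\|\Phi_t-\Phi\|_\infty)$.

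\textbf{(b) Chord--arc gap.} By \eqref{eq:inline_segment_gap}, the exact chord equals $s(w_{n+1}^{(t)})-s(w_n^{(t)})$ up to an $\Oc(N^{-3})$ correction, using the spacing bound of Lemma~\ref{lem:bounds_all_t_with_interp}(i). The relative error is therefore $\Oc(N^{-2})$.

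\textbf{(c) Arc increments.} By \eqref{eq:sw_increment_et_inline}, $s(w_{n+1}^{(t)})-s(w_n^{(t)})=s(1)\bigl(1/N+e_t(\tfrac{n+1}{N})-e_t(\tfrac nN)\bigr)$.

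Combining (a)--(c), we get $\ell_n/\bar\ell=1+N\Delta_n+\Oc(N^{-2}+\|\Phi_t-\Phi\|_\infty)$, where $\Delta_n$ is the $e_t$ increment, up to renormalization of the mean, which is handled by the same RMS trick as above. The crude bound $|\Delta_n|\le 2\|e_t\|_\infty\le 2\|\Phi_t-\Phi\|_\infty$ yields $N\|\Phi_t-\Phi\|_\infty$. This gives
\[
\mathrm{CV}=\Oc\bigl(N^{-2}+N\|\Phi_t-\Phi\|_\infty\bigr)\subseteq\Oc\bigl(N^{-1}+N\|\Phi_t-\Phi\|_\infty\bigr).
\]

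The final claim then follows from Theorem~\ref{thm:cdf}. For $t=\Omega(\log N)$, the transient $(1-\alpha/4)^t\mathcal E_0$ falls below $N^{-2}$, so $\|\Phi_t-\Phi\|_\infty=\Oc(N^{-2})$ and $\max_n\|\bx_n^{(t)}-\bx^*_{w_n^{(t)}}\|=\Oc(N^{-3})$, which is consistent with the hypothesis on $\epsilon$. Plugging in gives $\mathrm{CV}=\Oc(N^{-1})$.

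The main obstacle is the normalization bookkeeping in step (c). The mean $\bar\ell$ is the inexact total divided by $N$, not $s(1)/N$. I plan to handle this with the RMS-around-a-constant inequality: use the comparison constant $s(1)/N$ and pay only a factor of 2, as in the first part, rather than tracking the $\tilde s^{(t)}(1)$ versus $s(1)$ ratio explicitly.
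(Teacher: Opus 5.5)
Your first part is the paper's argument. Writing $\ell_n/\bar\ell=N\bigl(\tilde\Phi_t(w_{n+1}^{(t)})-\tilde\Phi_t(w_n^{(t)})\bigr)$ and using $\Phi_t(w_{n+1}^{(t)})-\Phi_t(w_n^{(t)})=1/N$ is exactly how it obtains $\mathrm{CV}\le 2N\|\tilde\Phi_t-\Phi_t\|_\infty$.

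You diverge in the second part, and the divergence is legitimate. The paper actually takes the route you reject. It inserts $\|\tilde\Phi_t-\Phi_t\|_\infty\le\|\tilde\Phi_t-\hat\Phi_t\|_\infty+\|\hat\Phi_t-\Phi\|_\infty+\|\Phi_t-\Phi\|_\infty$ into the first inequality. It bounds $\|\tilde\Phi_t-\hat\Phi_t\|_\infty=\Oc(N\epsilon)$ by Lemma~\ref{lem:exact_inexact_polyline_sampled_weights_gap}, and $\|\hat\Phi_t-\Phi\|_\infty=\Oc(N^{-2}+\|\Phi_t-\Phi\|_\infty)$ by Lemma~\ref{lem:s_tilde_combined_bound} with $N=\Omega(\kappa^{1.5})$. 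The result is $\mathrm{CV}=\Oc(N^{-1}+N\|\Phi_t-\Phi\|_\infty+N^2\epsilon)$. Your objection to this route does not hold. The $N\cdot\Oc(N\epsilon)$ term is exactly what the hypothesis $\epsilon=\Oc(N^{-1}\|\Phi_t-\Phi\|_\infty)$ is designed to absorb.

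Your alternative splits each chord using \eqref{eq:segment_gap_eps_merged}, \eqref{eq:inline_segment_gap} and \eqref{eq:sw_increment_et_inline}, and it works. Because CV depends only on nodal chord lengths, you never touch the PCHIP interpolant, $\hat\Phi_t$ off the knots, or the Landau-inequality estimates. Your discretization and inner-error terms also avoid the factor-$N$ loss from turning a sup-norm into increments. So you get the sharper intermediate bound $\Oc(N^{-2}+N\|\Phi_t-\Phi\|_\infty+N\epsilon)$, which needs no hypothesis on $\epsilon$ at all. The paper's route is shorter because it reuses the CDF-reconstruction lemmas wholesale.

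Two minor points:
\begin{itemize}
\item Your normalization needs $\tilde s^{(t)}(1)\gtrsim s(1)$. This relies on the small constant in $\epsilon=\Oc(v_{\min}^2v_{\max}^{-1}N^{-1})$ from Theorem~\ref{thm:cdf}; the stated hypothesis alone only gives $N\epsilon=\Oc(1)$. With the paper's $\tilde s^{(t)}(1)\ge s(1)/(2\sqrt2)$, the factor is $2\sqrt2$ rather than $2$.
\item In the last step, $\epsilon=\Oc(N^{-3})$ is not automatically ``consistent with the hypothesis,'' since $\|\Phi_t-\Phi\|_\infty=\Oc(N^{-2})$ is only an upper bound. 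Your bound does not need this: plug $\epsilon=\Oc(N^{-3})$ directly into the $\Oc(N\epsilon)$ term.
\end{itemize}

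In both routes the final $\Oc(N^{-1})$ comes from the $N\|\Phi_t-\Phi\|_\infty$ term produced by the crude increment bound.
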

\begin{proof}
By construction of $\tilde\Phi_t$ from normalized cumulative chord length, each CDF increment equals the corresponding normalized PF segment length:
\begin{align}
\tilde\Phi_t(w_{n+1}^{(t)})-\tilde\Phi_t(w_n^{(t)})
=&\frac{\|\h(\bu_{n+1}^{(t)})-\h(\bu_n^{(t)})\|}
{\sum_{m=0}^{N-1}\|\h(\bu_{m+1}^{(t)})-\h(\bu_m^{(t)})\|}.
\end{align}
Since the mean segment length is $\frac1N\sum_{m=0}^{N-1}\|\h(\bu_{m+1}^{(t)})-\h(\bu_m^{(t)})\|$, the CV can be rewritten as
\begin{align}
\mathrm{CV}
=&\frac{\sqrt{\frac1N\sum_{n=0}^{N-1}\left(\|\h(\bu_{n+1}^{(t)})-\h(\bu_n^{(t)})\|-\frac1N\sum_{m=0}^{N-1}\|\h(\bu_{m+1}^{(t)})-\h(\bu_m^{(t)})\|\right)^2}}
{\frac1N\sum_{m=0}^{N-1}\|\h(\bu_{m+1}^{(t)})-\h(\bu_m^{(t)})\|} \nonumber\\
=&\sqrt{N\sum_{n=0}^{N-1}\left(\frac{\|\h(\bu_{n+1}^{(t)})-\h(\bu_n^{(t)})\|}
{\sum_{m=0}^{N-1}\|\h(\bu_{m+1}^{(t)})-\h(\bu_m^{(t)})\|}-\frac1N\right)^2} \nonumber\\
=&\sqrt{N\sum_{n=0}^{N-1}\left(\tilde\Phi_t(w_{n+1}^{(t)})-\tilde\Phi_t(w_n^{(t)})-\frac1N\right)^2}.
\end{align}
Because $w_n^{(t)}=\Phi_t^{-1}(n/N)$, we have $\Phi_t(w_{n+1}^{(t)})-\Phi_t(w_n^{(t)})=1/N$, and hence
\begin{align}
&\left|\tilde\Phi_t(w_{n+1}^{(t)})-\tilde\Phi_t(w_n^{(t)})-\frac1N\right| \nonumber\\
=&\left|\tilde\Phi_t(w_{n+1}^{(t)})-\tilde\Phi_t(w_n^{(t)})-\big(\Phi_t(w_{n+1}^{(t)})-\Phi_t(w_n^{(t)})\big)\right| \nonumber\\
=&\left|(\tilde\Phi_t-\Phi_t)(w_{n+1}^{(t)})-(\tilde\Phi_t-\Phi_t)(w_n^{(t)})\right| \nonumber\\
\leq&2\|\tilde\Phi_t-\Phi_t\|_\infty .
\end{align}
Substituting the above bound into the CV expression gives
\begin{align}
\mathrm{CV}
\leq&\sqrt{N\sum_{n=0}^{N-1}4\|\tilde\Phi_t-\Phi_t\|_\infty^2}
=2N\|\tilde\Phi_t-\Phi_t\|_\infty .
\end{align}

It remains to control $\|\tilde\Phi_t-\Phi_t\|_\infty$. By Lemma~\ref{lem:s_tilde_combined_bound}, after substituting the increment bound
\begin{align}
\max_{0\le n\le N-1}\left|e_t\left(\frac{n+1}{N}\right)-e_t\left(\frac nN\right)\right|
\leq&\frac{\kappa}{N}\|\Phi_t-\Phi\|_\infty^{1/2},
\end{align}
the exact chord-length reconstruction satisfies
\begin{align}
\|\hat\Phi_t-\Phi\|_\infty
\leq&\Oc\left(N^{-2}+\frac{\kappa^2}{N^2}\|\Phi_t-\Phi\|_\infty+\frac{\kappa^3}{N^2}\|\Phi_t-\Phi\|_\infty^{3/2}\right).
\end{align}
Since $N=\Omega(\kappa^{1.5})$, we have $\kappa^2/N^2=\Oc(1)$ and $\kappa^3/N^2=\Oc(1)$, and since $\|\Phi_t-\Phi\|_\infty\le \Oc(1)$, this gives
\begin{align}
\|\hat\Phi_t-\Phi\|_\infty
\leq&\Oc\left(N^{-2}+\|\Phi_t-\Phi\|_\infty\right).
\end{align}
By Lemma~\ref{lem:exact_inexact_polyline_sampled_weights_gap}, the inexact inner-solver perturbation satisfies
\begin{align}
\|\tilde\Phi_t-\hat\Phi_t\|_\infty
\leq&\Oc\left(\frac1{s(1)}N\max_{0\le n\le N}\|\bx_n^{(t)}-\bx_{w_n^{(t)}}^*\|\right).
\end{align}
Therefore,
\begin{align}
\|\tilde\Phi_t-\Phi_t\|_\infty
\leq&\|\tilde\Phi_t-\hat\Phi_t\|_\infty+\|\hat\Phi_t-\Phi\|_\infty+\|\Phi_t-\Phi\|_\infty \nonumber\\
\leq&\Oc\left(N^{-2}+\|\Phi_t-\Phi\|_\infty+N\max_{0\le n\le N}\|\bx_n^{(t)}-\bx_{w_n^{(t)}}^*\|\right).
\end{align}
Combining this with $\mathrm{CV}\le 2N\|\tilde\Phi_t-\Phi_t\|_\infty$ gives
\begin{align}
\mathrm{CV}
=&\Oc\left(N^{-1}+N\|\Phi_t-\Phi\|_\infty+N^2\max_{0\le n\le N}\|\bx_n^{(t)}-\bx_{w_n^{(t)}}^*\|\right).
\end{align}
Under the conditions of Theorem~\ref{thm:cdf}, for large enough $t$, $\|\Phi_t-\Phi\|_\infty=\Oc(N^{-2})$ and $C(N)\max_{0\le n\le N}\|\bx_n^{(t)}-\bx_{w_n^{(t)}}^*\|=\Oc(N^{-2})$ with $C(N)=\Theta(N)$. Hence $\max_{0\le n\le N}\|\bx_n^{(t)}-\bx_{w_n^{(t)}}^*\|=\Oc(N^{-3})$, and therefore $\mathrm{CV}=\Oc(N^{-1})$.
\end{proof}

\section{Experimental Details}
\label{app:exp_details}
This appendix provides additional experimental details for the toy examples in Appendix~\ref{app:toy}, the MO-Gymnasium benchmarks in Appendix~\ref{app:modgym}, and the LLM fine-tuning experiments in Appendix~\ref{app:llm_ft}. We describe the problem setups, optimization protocols, baselines, and hyperparameters used to reproduce the reported results.
\subsection{Toy Examples}
\label{app:toy}
We first describe the controlled toy examples used to visualize how scalarization weights map to Pareto-front coverage and the behavior of SURF.
\paragraph{Offline bandit.}
Consider a single-state bandit with $|\mathcal S|=1$ and a finite action set $\mathcal A$. 
The policy $\pi\in\Delta_{|\mathcal A|}$ coincides with the normalized occupancy measure. 
Let $x_a\in[0,1]$ be evenly spaced over $[0,1]$ for each action $a\in\mathcal A$, and define the true mean rewards
\begin{align}
R_1(a)=x_a,
\quad
R_2(a)=1-x_a^4 .
\end{align}
We use a uniform reference policy $\pi_{\rm ref}=\mathbf 1/|\mathcal A|$ and define $R_0=\log \pi_{\rm ref}$ elementwise. 
A fixed balanced offline dataset 
$\mathcal D=\{(a_t,r_{1,t},r_{2,t})\}_{t=1}^T$ is collected, where every action appears roughly equally often and rewards are observed with Gaussian noise with standard deviation $0.5$.

\paragraph{Quadratic gear toy.}
To isolate scalarization-induced traversal from changes in PF geometry, we construct a controlled gear toy in which the PF trajectory is fixed but the LS traversal speed is tunable. The fixed PF is parameterized by $\bar{\mathbf h}(x)=((1-x)^2,x^2)$ for $x\in[0,1]$. We induce different traversal speeds using raw quadratic scalarized objectives $h_1^p(x)=\frac{\sqrt p}{2}(x-1)^2$ and $h_2(x)=\frac12x^2$, for which the LS solution is $x_w^*=\frac{\sqrt p\,w}{1+(\sqrt p-1)w}$. Thus, changing the gear parameter $p$ changes how uniform weights traverse the same PF, while the PF curve and its arc length remain fixed. We run SURF with exact inner solutions and PCHIP interpolation for $p\in\{1,\ldots,10\}$ and $N\in\{25,\cdots,40\}$, and compute the induced condition number $\kappa_p=\frac{v_{\max}}{v_{\min}}\sqrt{l_{\Farc,1}}$ from the true CDF. Empirically fitting $\|\tilde\Phi_0-\Phi\|_\infty$ to $C\kappa_p^q/N^2$ gives $q$ decreasing from $1.134$ to $0.907$ as $N$ increases, showing a clear condition-number-dependent first reconstruction error. In contrast, fitting the final error $\|\Phi_T-\Phi\|_\infty$ gives $q$ decreasing from $0.042$ to $0.019$, showing that after refinement the error is essentially independent of $\kappa_p$ and reaches the expected $N^{-2}$ discretization floor. This supports Lemma~\ref{lem:s_tilde_combined_bound} and Theorem~\ref{thm:cdf}: the first CDF reconstruction may carry a large condition-number-dependent constant, while SURF converges to the final $\mathcal{O}(N^{-2})$ floor.

For $N$ sufficiently large such that, choose $n$ such that $x_{w_n}^*=\frac{1}{2}$, equivalently $w_n=\frac{1}{\sqrt p+1}$. Therefore, $\tilde\Phi_0(w_n)-\Phi(w_n)=\frac{\frac12(s(1)-\tilde s^{(0)}(1))-(s(w_n)-\tilde s^{(0)}(w_n))}{\tilde s^{(0)}(1)}$ and the exact chord-arc expansion gives $s(w_{m+1})-s(w_m)-\|\bar{\mathbf h}(x_{w_{m+1}}^*)-\bar{\mathbf h}(x_{w_m}^*)\|=\frac{(x_{w_{m+1}}^*-x_{w_m}^*)^3}{12((1-x_{w_m}^*)^2+(x_{w_m}^*)^2)^{3/2}}+o((x_{w_{m+1}}^*-x_{w_m}^*)^3)$. Substituting $x_{w_m}^*=\frac{\sqrt p\,m}{N+(\sqrt p-1)m}$ and taking integral give $s(w_n)-\tilde s^{(0)}(w_n)-\frac12(s(1)-\tilde s^{(0)}(1)) =\Theta(\frac{p}{N^2})$.
Hence $\|\tilde\Phi_0-\Phi\|_\infty=\Omega(\frac{p}{N^2})=\Omega(\frac{v_{\max}/v_{\min}}{N^2})$. 
In contrast, $s(1)=\int_0^1 2\sqrt{(1-x)^2+x^2}\,dx$ is independent of $p$, and $l_{\tilde{\mathbf f}_{\mathrm{PF}},1} = \sup_{x\in[0,1]}\frac{1}{2((1-x)^2+x^2)^{3/2}} = \sqrt{2}$. Therefore, $\|\Phi_T-\Phi\|_\infty=\Oc\left(\frac{l_{\tilde{\mathbf f}_{\mathrm{PF}},1}^2s(1)^3}{6N^2}\right)=\Oc(N^{-2})$, with both $s(1)$ and $l_{\tilde{\mathbf f}_{\mathrm{PF}},1}$ independent of $p$.

\subsection{Details of Experiments on MO-Gymnasium Benchmarks}
\label{app:modgym}
We next describe the three MO-Gymnasium environments used in Table~\ref{tab:main_comparison}. For each task, we provide the reward structure, optimization setup, and relevant implementation details.
\subsubsection{Details of DST Problem}
\label{app:dst_setup}

This section describes the experimental setup used to evaluate the proposed PF sampling method on the Deep Sea Treasure (DST) benchmark~\citep{vamplew2011empirical,yahyaa2015thompson}, provided in MO-Gymnasium~\citep{felten_toolkit_2023}. 

\begin{wrapfigure}{r}{0.33\textwidth}
 \centering
 \vspace{-0.5cm}
 \includegraphics[width=0.3\textwidth]{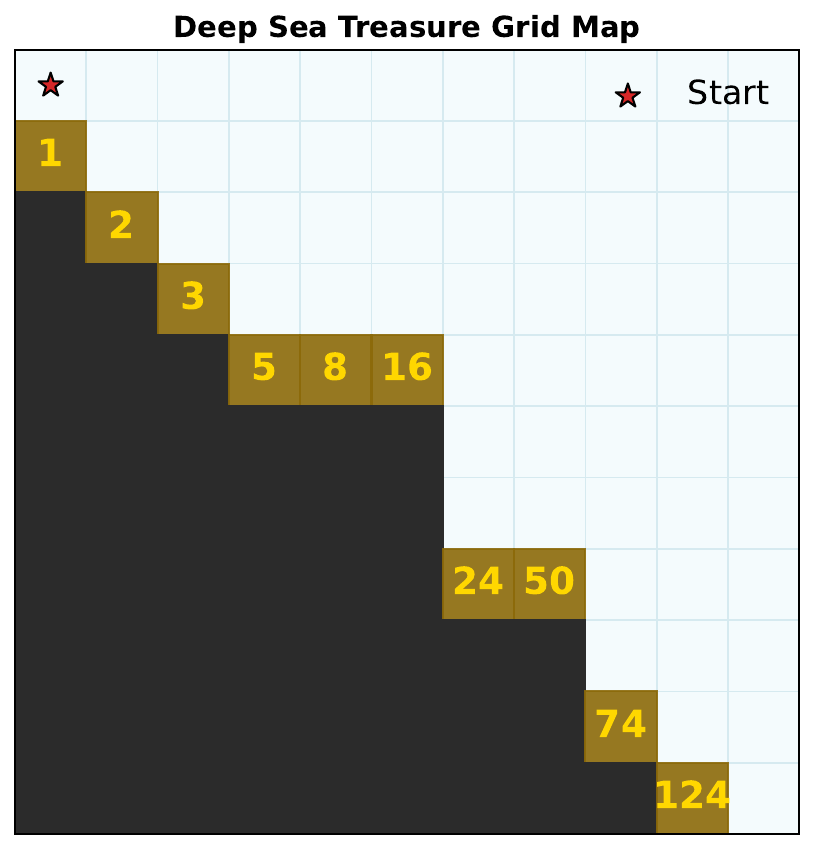}
 \vspace{-0.2cm}
 \caption{DST environment.
 }
 \vspace{-0.5cm}
 \label{fig:DST}
\end{wrapfigure}

\paragraph{Problem Setup.}

As illustrated in Figure~\ref{fig:DST}, the environment is a grid world containing obstacle cells (black), treasure cells (gold), and water (light blue). 
This problem can be modeled via
$\mathcal{M}=(\mathcal{S},\mathcal{A},\mathcal{P},\mathbf{r},\gamma,\rho,\tau)$, a regularized MDP
where $\mathcal{S}$ denotes the set of feasible states (all non-obstacle cells in Figure~\ref{fig:DST}), $\mathcal{A}$ is the action set that consists of four actions (up, down, left, and right), and $\mathcal{P}(s'|s,a)$ denotes the deterministic transition dynamics where the action moves to the adjacent cell in the chosen direction or stays at the current cell if the action would move it outside the grid or into an obstacle. The reward vector $\mathbf{r}(s,a)=[r_1(s,a),r_2(s,a)]$ consists of $r_1$ representing the time cost and $r_2$ representing the treasure reward obtained upon reaching a treasure cell. 
$\gamma = 0.999$ is the discount factor, chosen close to and smaller than $1$. $\rho$ is the initial state distribution, which is deterministic at the starting location. To encourage stochasticity~\citep{geist2019theory}, it adopts $\tau$ as the KL regulator 
with a moderate choice of $\beta = 1.5$ and $\pi_{\rm ref}$ as the uniform policy. 

The goal is to find a policy $\pi \in \Delta_{|\mathcal A|}^{|\mathcal S|}$ 
minimizing two regularized discounted objectives. Specifically, 
\begin{align}
 h_m(\pi) := \mathbb{E}_\pi \left[
 \sum_{t=0}^\infty \gamma^t \bigl(- r_m(s_t,a_t)+\tau(\pi(\cdot|s_t))\bigr)
 \right], ~~m\in [2]
 \label{eq: f1_discount}
\end{align}
where $r_1(s,a)=-1$ is the per-step time reward, so that $-r_1(s,a)=1$ corresponds to the time cost incurred at each step until a treasure is reached, and
$r_2(s,a)$ is the treasure reward obtained upon reaching a treasure cell and is zero otherwise. Thus, $h_1$ favors policies with smaller travel time, while $h_2$ favors policies reaching higher-value treasures, which tend to take longer time as they are deeper (cf. Figure~\ref{fig:DST}). The resulting trade-off defines the PF studied in our experiments.

\paragraph{Occupancy Measure Representation.}

In this problem, the active state space consists of the $72$ non-obstacle, non-terminal grid states in Figure~\ref{fig:DST}, each with $4$ actions. 
Let $\mu_\pi(s,a)$ denote the discounted occupancy measure distribution induced by policy $\pi$. For the active state-action pairs, the set of valid occupancy distributions forms a convex polytope (cf.~\eqref{eq:occupancy_measure}) and the policy optimization problem is equivalent to the occupancy distribution optimization described in Section~\ref{sec:MDP}. Specifically, the sufficient exploration condition~\citep{zhang2020variational}, given by $\sum_a \mu_\pi(s,a)\geq d_{\min}$ for all active state $s$, and Assumption~\ref{assump:tangent_gap} has been numerically verified on a dense interior grid of scalarization weights, using $1001$ uniformly spaced values in $[0,1]$. 
In particular, with a moderate regularization parameter $\beta=1.5$, there is $d_{\min}\geq 1.02\times 10^{-8}$, with minimum attained near $w\approx0.964$. Increasing the regularization to $\beta=10$ further improves it to $d_{\min}\geq 1.19\times 10^{-6}$. Additionally,
the active-state flow matrix has shape $72\times 288$ and full row rank, verifying Assumption~\ref{assump:LICQ}. 
Furthermore, the projected gradient-difference norm in Assumption~\ref{assump:tangent_gap} remains uniformly positive over the entire weight grid. In the experiment with $\beta=1.5$, the empirical lower bound is $40.3$, attained near $w\approx0.45$, confirming that the tangent-gap condition holds throughout the tested weight range.
In this way, according to the theorem of hidden convexity~\citep{fatkhullin2025stochastic}, optimizing over policies is equivalent to optimizing over occupancy measures, where the objectives become strongly convex in $\mu$ and pertain to the desirable properties to establish CDF $\Farc$ and enable the use of the proposed SURF Algorithm~\ref{alg:SURF} to obtain Pareto solutions that uniformly cover the Pareto Front.
\paragraph{Baselines.}
We compare our proposed arc-length-uniform Pareto sampling method with three scalarization-based baselines. 
\textit{uniform weighting} chooses equispaced scalarization weights on $[0,1]$.
\textit{Optimistic Linear Support (OLS)}~\citep{barrett2008learning} adaptively selects scalarization weights based on the convex coverage set discovered so far. At each step, OLS chooses a new weight vector that is optimistic with respect to the currently known solutions and solves the corresponding scalarized optimization problem to expand the supported portion of the PF.
\textit{UMOD}~\citep{zhang2024gliding} is a preference-learning approach that iteratively adapts scalarization weights using a surrogate model of the PF. It fits a parametric model mapping preferences to objective vectors from existing solutions, then selects new preferences to maximize the minimum chord length between predicted Pareto points. These preferences are converted into scalarization weights and used to query the optimization oracle for additional solutions. This approach implicitly relies on the surrogate geometry to reflect meaningful separation between solutions. However, when the mapping from scalarization weights to Pareto solutions produces strongly clustered solutions, such that many nearby weights yield nearly identical objective vectors, the surrogate spacing signal becomes weak, potentially reducing the effectiveness of the preference updates.

\paragraph{Results.}
\begin{figure}
 \centering
  \includegraphics[width=0.9\linewidth]{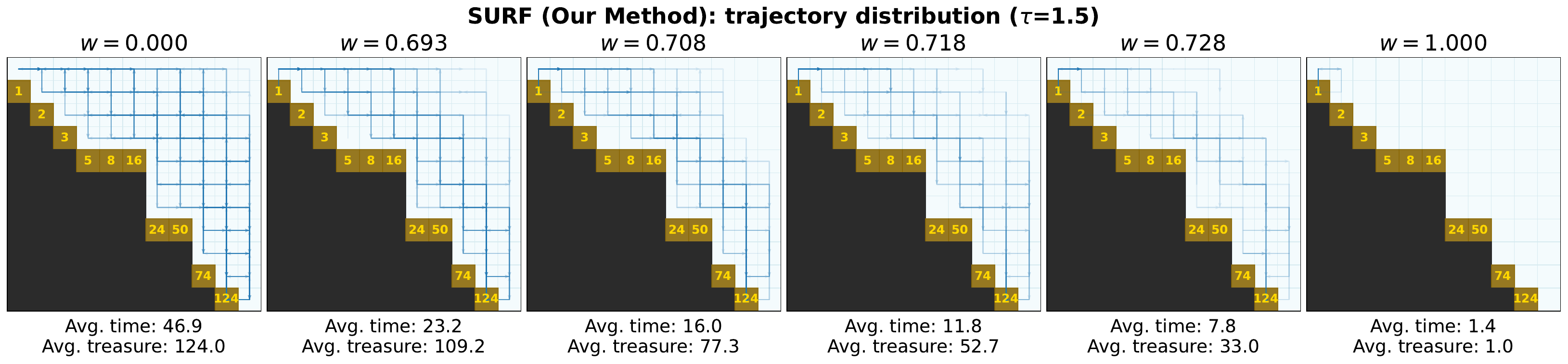}
 \includegraphics[width=0.9\linewidth]{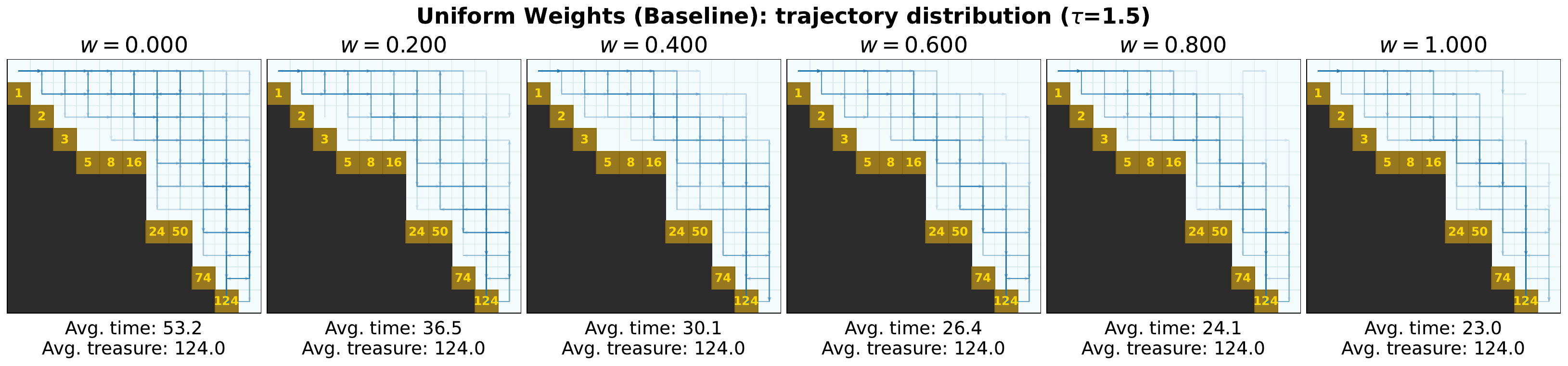}\\
 \includegraphics[width=0.9\linewidth]{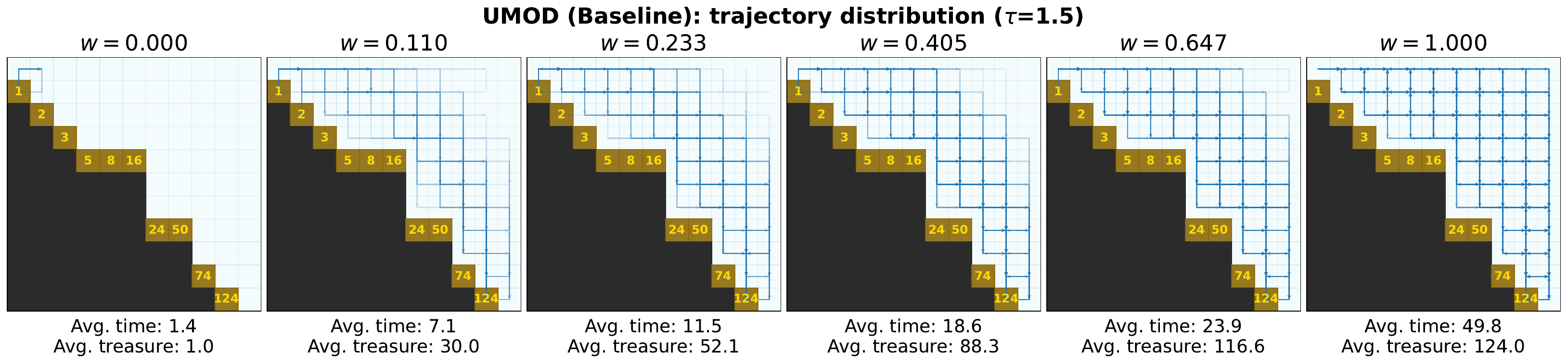}\\
 \includegraphics[width=0.9\linewidth]{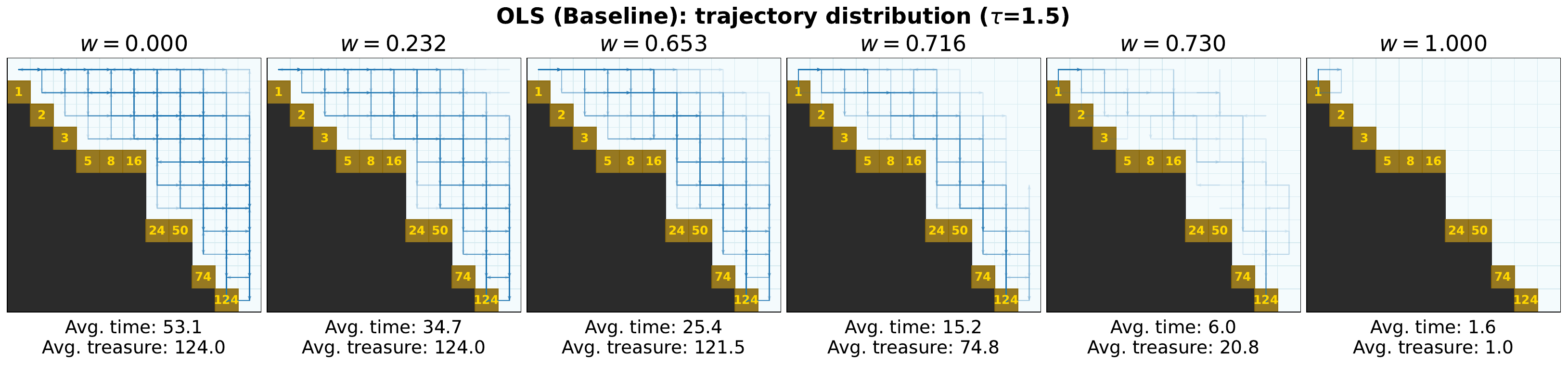}\\
 \includegraphics[width=0.9\linewidth]{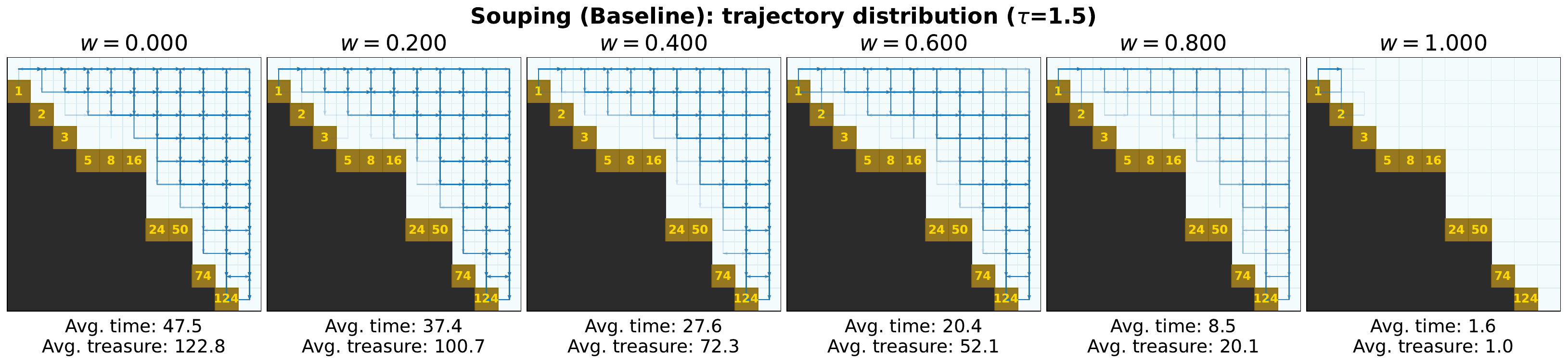}
 \caption{
 Trajectory distributions induced by policies obtained via our SURF Algorithm~\ref{alg:SURF}, \textit{Uniform Weighting}, \textit{UMOD}, \textit{Optimistic Linear Support}, and \textit{Soup} baselines with $(N+1)=6$.
 Each panel corresponds to a scalarization weight $w$ and visualizes state visitation from $50$ rollouts.
 Blue trajectories show the paths taken by the agent with darker or more opaque segments indicating higher visitation frequency, while lighter segments indicate rarer transitions.
 The grid background represents the environment, with yellow numbers indicating treasure values.
 Statistics below each panel report the average travel time and collected treasure.
 }
 \label{fig:trajectory_baselines}
\end{figure}
As shown in Figure~\ref{fig:dst_main_result}, the PF produced by the proposed SURF Algorithm~\ref{alg:SURF} method exhibits more evenly distributed solutions compared with the baselines.
This improvement is reflected quantitatively by substantially lower CV, Gap Ratio, and IGD relative to uniform weight sampling, random weight sampling, and optimistic linear support.
These results indicate that the learned scalarization weights better adapt to the geometry of the PF.

Additionally, we provide visualization of the trajectory distributions induced by policies obtained by our SURF Algorithm~\ref{alg:SURF} and the baselines. Figure~\ref{fig:trajectory_baselines} illustrates that the policies obtained by our proposed method exhibit a gradual transition from favoring high-value treasure to prioritizing shorter travel time.
In contrast, uniform weight sampling and random weight sampling both generate policies that are largely similar across many weights, often leading to nearly identical trajectory patterns and thus failing to reveal intermediate trade-offs.
OLS produces somewhat more diverse policies, but several solutions still cluster around similar behaviors, resulting in uneven coverage of the trade-off spectrum.
Overall, the SURF Algorithm~\ref{alg:SURF} method provides a more gradual progression of policies along the Pareto trade-off.

\begin{figure}[t]
 \centering
 \includegraphics[width=0.3\textwidth]{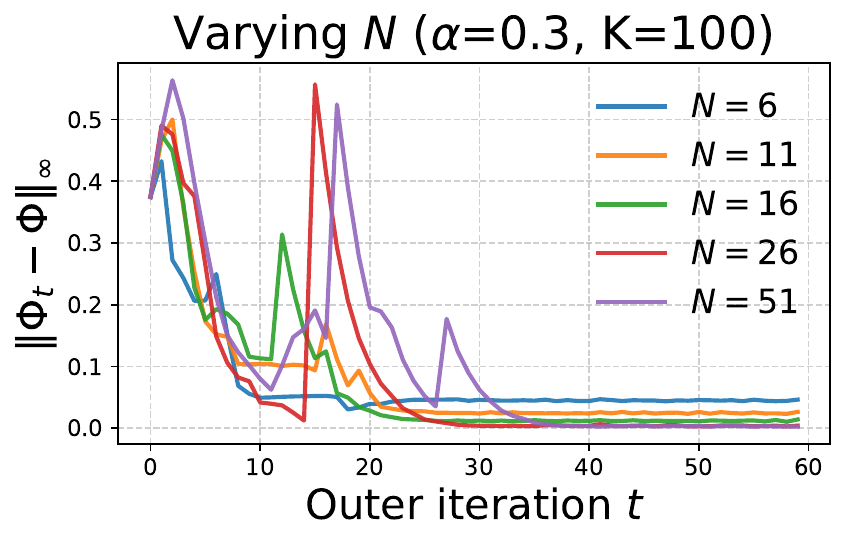}
 \includegraphics[width=0.3\textwidth]{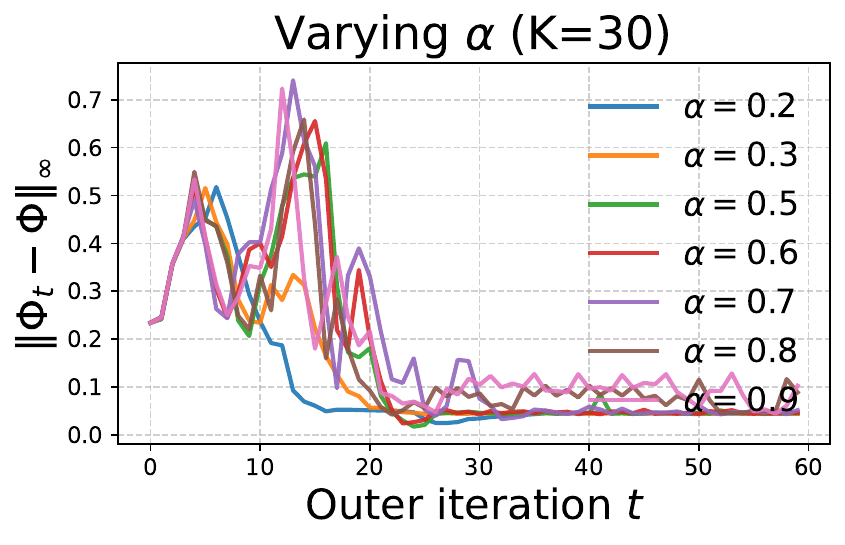}
 \includegraphics[width=0.3\textwidth]{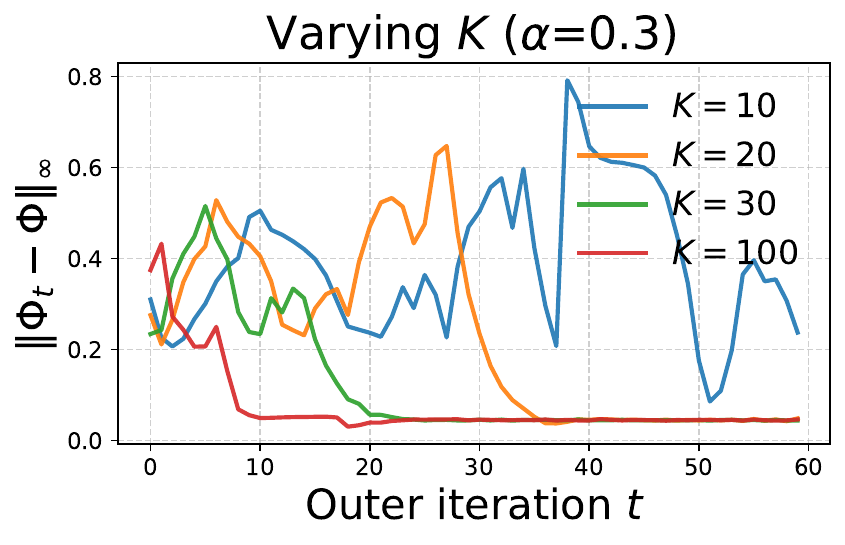}
 \caption{Convergence behavior of the SURF Algorithm~\ref{alg:SURF} under different parameter choices. 
 Left: varying the number of sampled weights $N$. 
 Middle: varying the outer update parameter $\alpha$. 
 Right: varying the number of inner optimization steps $K$.}
 \label{fig:cdf_refinement_convergence}
\end{figure}

\begin{figure}[t]
    \centering

    \begin{minipage}[t]{0.66\textwidth}
        \centering
        \includegraphics[width=0.49\linewidth]{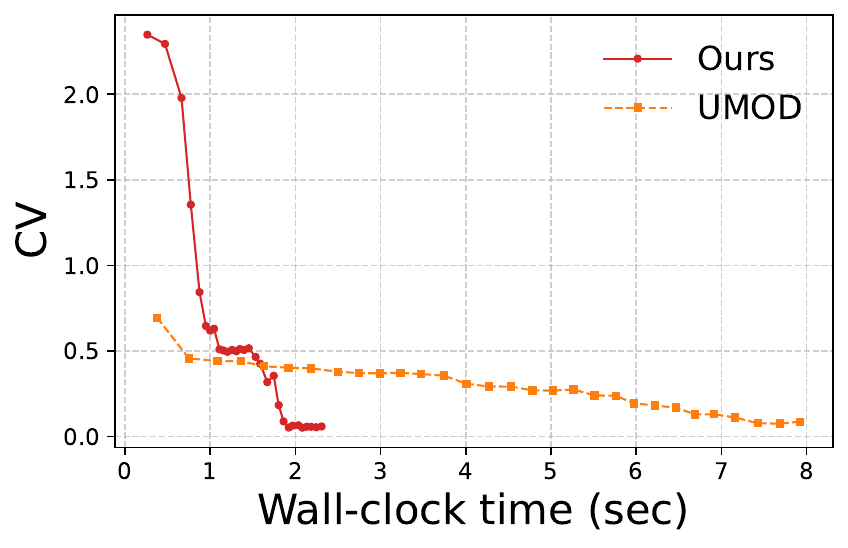}
        \includegraphics[width=0.49\linewidth]{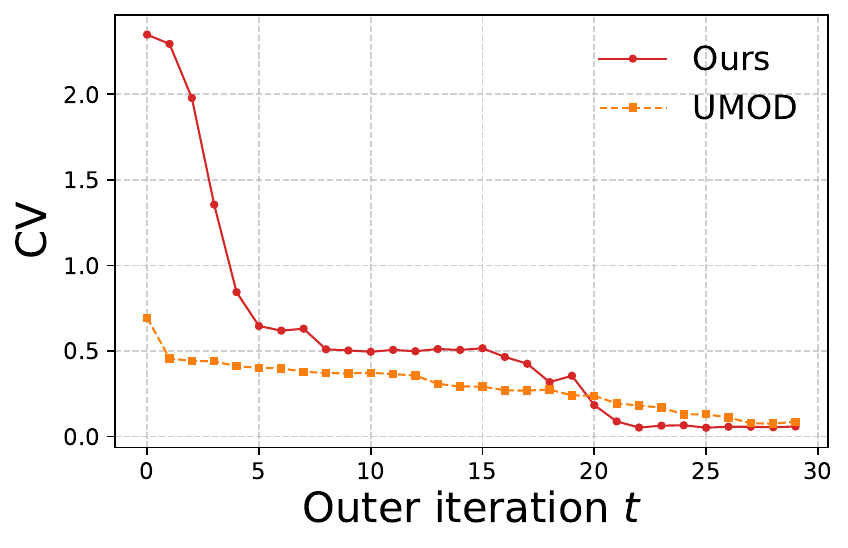}
        \caption{
        Convergence comparison between our SURF and UMOD on DST problem. Left: CV versus wall-clock time. Right: CV versus outer iteration.
        }
        \label{fig:err_convergence_compare}
    \end{minipage}
    \hfill
    \begin{minipage}[t]{0.31\textwidth}
        \centering
        \includegraphics[width=\linewidth]{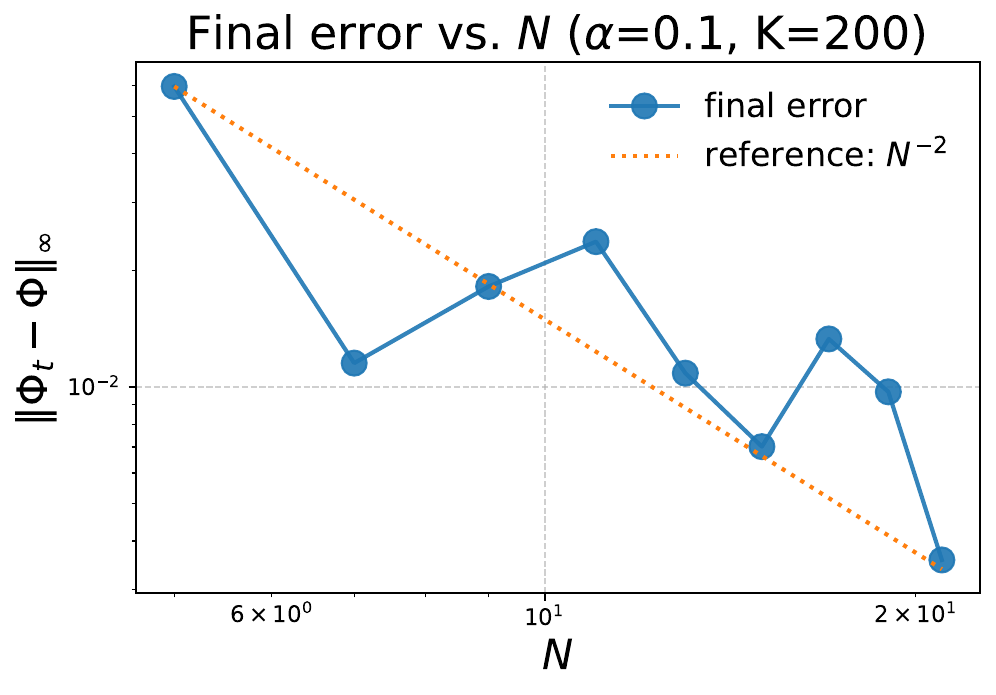}
        \vspace{-0.4cm}
        \caption{
        $\|\Phi_t-\Phi\|_\infty$ vs. $N\in\{4,6,\ldots,20\}$ on DST.
        }
        \label{fig:final_cdf_error_vs_N}
    \end{minipage}
\end{figure}
Figure~\ref{fig:cdf_refinement_convergence} empirically corroborates Theorem~\ref{thm:cdf}. With a safe choice of $\alpha=0.3$ and sufficiently many inner iterations ($K=100$),  the convergence curves in the left panel exhibit a clear linear decay in the error  $\|\Farc_t-\hatFarc\|_\infty$. The initial fluctuations arise because the contraction in Theorem~\ref{thm:cdf} applies to the combined quantity
$\|\Farc_t-\Farc\|_\infty + \frac{C}{1-\rho^K}\frac{N}{s(1)}C_{\rm int} \max_n\|\bx_n^{(t)}-\bx_{w_n^{(1)}}^*\|$, so early iterations primarily reduce the inner optimization error before the outer CDF error dominates. 
The left panel also shows that a larger $N$ leads to a smaller final error floor, consistent with the $\Oc(\kappa^2/N^2)$ discretization term predicted by the theorem. This is further corroborated in Figure~\ref{fig:final_cdf_error_vs_N}.
The middle and right panels in Figure~\ref{fig:cdf_refinement_convergence} illustrate the parameter trade-offs suggested by the theorem: $\alpha$ should be chosen moderately small to avoid overly abrupt updates of $\Farc_t$, while the number of inner iterations $K$ must be sufficiently large to ensure that the inner optimization keeps pace with the outer refinement. Although warm-starting mitigates the effect of small $K$ in practice, larger $K$ still improves stability and convergence.
\subsubsection{Details of Fishwood Problem}
\label{app:fishwood_setup}
\paragraph{Fishwood Problem Setup.}

This problem can be modeled via
$\mathcal{M}=(\mathcal{S},\mathcal{A},\mathcal{P},\mathbf{r},\gamma,\rho,\tau)$, a regularized MDP
where $\mathcal{S}=\{0,1\}$ denotes the state space with $0$ representing the fishing state and $1$ representing the woods, and $\mathcal{A}=\{0,1\}$ is the action set consisting of two actions: fishing and collecting wood. 
The transition dynamics $\mathcal{P}(s'|s,a)$ are deterministic in the sense that the next state is determined by the chosen action, i.e., taking action $0$ leads to the fishing state and action $1$ leads to the woods. The reward vector $\mathbf{r}(s,a)=[r_1(s,a),r_2(s,a)]$ is stochastic and consists of two components: $r_1$ represents the wood reward and $r_2$ represents the fish reward. Specifically, when the agent is in the woods, it receives a reward of $+1$ in $r_1$ with probability $p_{\text{wood}}$ and $0$ otherwise, while when the agent is fishing, it receives a reward of $+1$ in $r_2$ with probability $p_{\text{fish}}$ and $0$ otherwise. This induces a trade-off between maximizing wood collection and maximizing fish collection, since the agent can only occupy one state at a time.

The discount factor is chosen as $\gamma =0.995$, and $\rho$ denotes the initial state distribution, which is deterministic at the woods state. As in the DST setup, we adopt the same discounted infinite-horizon formulation~\eqref{eq: f1_discount} and KL regularizer $\tau(\pi(\cdot| s)) = \beta \mathrm{KL}( \pi(\cdot| s) \| \pi_{\rm ref})$ with a choice of $\beta=0.5$ and $\pi_{\rm ref}$ as the uniform policy. To approximate the PF, we discretize the preference space using $N=10$. Each scalarized problem is solved with $K=100$ steps in the inner loop using learning rate $0.05$. Our proposed algorithm is run for $15$ outer iterations with the update parameter $\alpha=0.3$.

\subsubsection{Details of MO-Mountaincar Problem}
\label{app:mountaincar}
\textbf{MO-Mountaincar Problem Setup.} MO-Mountaincar is a multi-objective variant of the classical Mountaincar control problem, in which an underpowered car must build momentum by moving back and forth in a valley in order to reach the goal near the top of the right hill. 
In the multi-objective setting, the agent is penalized at every time step and also incurs additional penalties for control behavior, thereby inducing a trade-off between fast completion and action efficiency.

This problem can be modeled by a multi-objective Markov decision process
$\mathcal{M}=(\mathcal{S},\mathcal{A},\mathcal{P},\mathbf{r},\gamma,\rho,\tau)$, where the state space is continuous,
$\mathcal{S}\subseteq [-1.2,0.6]\times[-0.07,0.07]$, and each state
$s_t=(x_t,v_t)$ consists of the car position $x_t$ and velocity $v_t$. The action space is
$\mathcal{A}=\{0,1,2\}$, corresponding to accelerating to the left, taking no push, and accelerating to the right, respectively. Equivalently, the control entering the dynamics is $\tilde a_t=a_t-1\in\{-1,0,1\}$. The transition dynamics are deterministic:
\begin{equation}
v_{t+1}=v_t+0.001 \tilde a_t-0.0025\cos(3x_t),
\quad
x_{t+1}=x_t+v_{t+1}, 
\end{equation}
followed by clipping $x_{t+1}$ to $[-1.2,0.6]$ and $v_{t+1}$ to $[-0.07,0.07]$; if the car hits the left boundary, the velocity is reset to zero. In our implementation, we use a two-dimensional reward vector
$\mathbf{r}(s_t,a_t)=[r_1,r_2]$,
where $r_1=-1$ is the per-step time penalty, and $r_2=-1$ if $a_t=2$ and $0$ otherwise is the reverse-action penalty.
This induces a trade-off between reaching the goal quickly and reducing reverse control effort. The discount factor is chosen as $\gamma=0.995$, and $\rho$ denotes the initial-state distribution, where the initial position is sampled uniformly from $[-0.6,-0.4]$ and the initial velocity is fixed at zero. An episode terminates once the car reaches the goal position, i.e., when $x_t\ge 0.5$, and is truncated if the trajectory length reaches $200$ steps. We use the KL 
regularizer
$
\tau(\pi(\cdot\mid s))=\beta\,\mathrm{KL}\!\bigl(\pi(\cdot\mid s)\,\|\,\pi_{\rm ref}\bigr),
$
with $\beta=0.05$ and $\pi_{\rm ref}$ taken as the uniform policy over $\mathcal{A}$. For the inner solver, we use a DQN trained with Adam optimizer and learning rate $10^{-3}$. In the reported comparison, we use $N=6$ sparse preference weights.
Our method uses $10$ outer iterations with update parameter $\alpha=0.3$ and trains each scalarized subproblem for $K=100$ steps per outer iteration.

\subsection{LLM Multi-Objective Fine-tuning}
\label{app:llm_ft}

This section describes the experimental setup used to evaluate SURF on a stochastic LLM fine-tuning problem, where each scalarized subproblem is solved approximately by PPO and the PF is estimated from reward-model feedback.

\paragraph{Problem setup.}
We study bi-objective alignment for Reddit summarization~\citep{volske2017tl} using prompts from the \texttt{openai/summarize\_from\_feedback} dataset~\citep{stiennon2020learning}. 
Following the experimental configuration, we use the training split and restrict PPO training to a subset of $4096$ training prompts. 
The policy is initialized from \texttt{Qwen/Qwen2.5-0.5B-Instruct}~\citep{qwen2025qwen25technicalreport} and generates summaries conditioned on Reddit posts. 
To make repeated scalarized fine-tuning feasible, we keep the pretrained backbone fixed and train only PEFT adapter parameters, using low-rank and weight-decomposed adapter updates~\citep{hu2022lora,liu2024dora}. 

We view LLM summarization as a contextual RL problem in which $s$ denotes the prompt and $a$ denotes the generated summary. 
For each prompt-summary pair $(s,a)$, we evaluate two reward signals:
$r_1(s,a)$ is given by \texttt{Tristan/gpt2\_reward\_summarization}, and
$r_2(s,a)$ is given by \texttt{CogComp/bart-faithful-summary-detector}. 
The corresponding reward outputs are parsed using the configured reward formats \texttt{0} and \texttt{1-0}, respectively. 
These reward models induce a trade-off between a summarization-quality reward and a factual-faithfulness reward. 
Following KL- or divergence-regularized multi-objective alignment formulations~\citep{wang2024conditional,shi2024decoding}, we instantiate
$\tau(\pi(\cdot|s))
=
\beta \mathrm{KL}\!\left(
\pi(\cdot|s) \| \pi_{\rm ref}(\cdot|s)
\right)$
where $\pi_{\rm ref}$ is the same base model before PEFT adaptation and $\beta=0.05$ is the KL coefficient. 
We define the minimization objectives
\begin{align}
 h_m(\pi)
 :=
 -\mathbb{E}_{s\sim\mathcal D, a\sim \pi(\cdot|s)}
 \left[
 r_m(s,a)
 -
 \tau(\pi(\cdot|s))
 \right],
 \quad m\in\{1,2\}.
 \label{eq:llm_objective}
\end{align}
For weight $w\in[0,1]$, the scalarized minimization objective follows the LS convention $\LS_h(\pi;w)
 =
 w h_1(\pi)+(1-w)h_2(\pi)$.
Equivalently, PPO maximizes the scalarized regularized reward
\begin{align}
 w r_1(s,a)+(1-w)r_2(s,a)
 -
 \tau(\pi(\cdot|s)).
\end{align}
All PF plots are reported in the minimization coordinates $(h_1,h_2)$, where smaller values are better.

\paragraph{Training protocol.} 
We solve each scalarized subproblem using PPO~\citep{schulman2017proximal}, updating only the PEFT adapter parameters while keeping the Qwen backbone frozen. 
The PPO runs use a fixed KL coefficient $\beta=0.05$ with adaptive KL disabled, learning rate $5\times 10^{-5}$, PPO batch size $64$, mini-batch size $16$, and generated response length in $[16,32]$. 
We first train a $w=0.5$ scalarized PPO model for $10$ epochs as a common warm start. All runs are conducted on NVIDIA A100-PCIE GPUs with 40GB memory. Each scalarized PPO job uses one A100 GPU, 4 CPU cores, and 16GB host memory.

We compare SURF Algorithm~\ref{alg:SURF} with the uniform weighting baseline using $N=5$ PF segments, hence $N+1=6$ scalarization weights. 
For uniform weighting, we train one PPO run for each fixed weight in $\{0,0.2,0.4,0.6,0.8,1\}$ for $12$ epochs, with each run initialized from the shared $w=0.5$ warm start and trained on one A100 GPU.
For SURF, we use the epoch-$5$ checkpoint from the corresponding uniform weighting run as the initial iterate $\bu_n^{(-1)}$ for each slot $n=0,\ldots,5$, which serves as a good warm start.
Then, SURF performs 7 outer updates, using one PPO epoch as the finite inner-solver budget at each outer iteration. 
To maintain a controlled comparison, the endpoint slots $w=0$ and $w=1$ are reused from the subsequent uniform weighting runs, so the distinction between LS and SURF is concentrated on how the four middle scalarization points are selected and updated. 
When reconstructing the PF points for the SURF CDF update, we apply an exponential moving average with rate $0.2$ to the per-batch reward logs from the inner PPO training, which stabilizes the stochastic estimates of $\h(\bu_n^{(t)})$ used to build $\tilde s^{(t)}$ in \eqref{eq:sw_tilde}. 
Independent middle-slot jobs are run in parallel, each on one A100 GPU.
Thus, both uniform weighting and SURF use the same total per-slot training budget of $12$ PPO epochs.

We also consider Soup, an inference baseline~\citep{rame2023rewarded}, which constructs intermediate models by interpolating the PEFT adapter parameters of the two endpoint policies. 
That is, for $\bu_0^{(12)}$ and $\bu_1^{(12)}$ corresponding to the epoch-$12$ outputs from PPO training on weights $0$ and $1$, respectively, it directly merges the PEFT parameters as $(1-w)\bu_0^{(12)}+w\bu_1^{(12)}$ for $w\in \{0,0.2,0.4,0.6,0.8,1\}$.

\section{Limitations and Broader Impacts}\label{app:lim}
Our current theory and algorithm focus primarily on the bi-objective setting. Extending SURF to problems with more than two objectives is a natural next step, but it is also more challenging because the Pareto front becomes higher-dimensional and its geometry is harder to characterize and sample uniformly. In addition, while our experiments cover several representative settings, they are still limited in scale. Future work should evaluate SURF on larger multi-objective benchmarks, more complex reinforcement-learning environments, and larger-scale language-model alignment problems to better understand its scalability and robustness.

More broadly, SURF offers a general framework for analyzing trade-offs among competing objectives. By promoting more uniform coverage of the Pareto front, it can provide a more comprehensive view of the trade-off landscape and reduce the risk of drawing conclusions from a narrowly sampled region. This can be beneficial in applications where transparent and reliable trade-off analysis is important, such as reinforcement learning, model alignment, and resource allocation.

\end{document}